\newcommand{\alglinelabel}{%
  \addtocounter{ALC@line}{-1}%
  \refstepcounter{ALC@line}%
  \label%
}
\definecolor{light-gray}{gray}{0.85}
\newcommand{\D}{{{\rm D}^{\rm bal}}}
\newcommand{\Dbal}{{{\rm D}^{\rm bal}}}
\newcommand{\Dorig}{{{\rm D}}}
\newcommand{\Dkl}{{{\rm D}^{\rm kl}}}
\renewcommand{\setto}{\leftarrow}
\newcommand{\Reg}{\mathfrak{R}}
\newcommand{\cO}{\mc{O}}
\newcommand{\tO}{\wt{\mc{O}}}
\newcommand{\negap}{{\rm NEGap}}
\newcommand{\ccegap}{{\rm CCEGap}}
\newcommand{\md}{\textsc{Hedge}}
\newcommand{\regret}{{\rm Regret}}
\renewcommand{\epsilon}{\eps}
\newcommand{\imm}{{\rm imm}}
\renewcommand{\hat}{\widehat}
\newcommand{\Alg}{{\sf Alg}}
\newcommand{\update}{\textsc{Update}}
\newcommand{\regmatch}{\textsc{RegretMatching}}
\def\POMG{{\rm POMG}}
\def\cS{{\mathcal S}}
\def\cX{{\mathcal X}}
\def\cY{{\mathcal Y}}
\def\cA{{\mathcal A}}
\def\cB{{\mathcal B}}
\def\cF{{\mathcal F}}
\def\l{\ell}
\def\L{L}
\def\cC{{\mathcal C}}
\def\cB{{\mathcal B}}
\renewcommand{\th}{{t, (h)}}
\mathchardef\mhyphen="2D
\colorlet{linkequation}{blue}
\def\shownotes{0}  %
\newcommand{\authnote}[2]{{\scriptsize $\ll$\textsf{#1 notes: #2}$\gg$}}
\newcommand{\authnote}[2]{}
\newcommand{\yub}[1]{{\color{red}\authnote{Yu}{#1}}}
\title{Near-Optimal Learning of Extensive-Form Games \\
with Imperfect Information}
\author{
Yu Bai\thanks{Salesforce Research. Email: \texttt{yu.bai@salesforce.com}}
  \and
  Chi Jin\thanks{Princeton University. Email: \texttt{chij@princeton.edu}}
  \and 
   Song Mei\thanks{UC Berkeley. Email: \texttt{songmei@berkeley.edu}}
  \and
  Tiancheng Yu\thanks{MIT. Email: \texttt{yutc@mit.edu}}
}
\begin{document}

\maketitle

\begin{abstract}
This paper resolves the open question of designing near-optimal algorithms for learning imperfect-information extensive-form games from bandit feedback. We present the first line of algorithms that require only $\widetilde{\mathcal{O}}((XA+YB)/\varepsilon^2)$ episodes of play to find an $\varepsilon$-approximate Nash equilibrium in two-player zero-sum games, where $X,Y$ are the number of information sets and $A,B$ are the number of actions for the two players. This improves upon the best known sample complexity of $\widetilde{\mathcal{O}}((X^2A+Y^2B)/\varepsilon^2)$ by a factor of $\widetilde{\mathcal{O}}(\max\{X, Y\})$, and matches the information-theoretic lower bound up to logarithmic factors. We achieve this sample complexity by two new algorithms: Balanced Online Mirror Descent, and Balanced Counterfactual Regret Minimization. Both algorithms rely on novel approaches of integrating \emph{balanced exploration policies} into their classical counterparts. We also extend our results to learning Coarse Correlated Equilibria in multi-player general-sum games.
\end{abstract}

\section{Introduction}

\begin{table*}[t]
  \centering
      \renewcommand{\arraystretch}{1.5}
\begin{tabular}{|c|c|c|c|}
\hline
         \textbf{Algorithm}                
  &\textbf{OMD} & \textbf{CFR} & \textbf{Sample Complexity} \\ 
\hline
\citet{zhang2021finding}               &    \multicolumn{2}{c|}{- (model-based)}      &       $\widetilde{\mathcal{O}}\left( S^2AB/\varepsilon ^2 \right) $      
\\ 
\hline
\citet{farina2021model} &  &   \checkmark   &        $\tO(\mathrm{poly}\left( X,Y,A,B \right) /\varepsilon ^4)$           \\ \hline
\citet{farina2021bandit}  &   \checkmark     &    &    $\widetilde{\mathcal{O}}\left( \left( X^4A^3+Y^4B^3 \right) /\varepsilon ^2 \right) $               \\ \hline
\citet{kozuno2021model}              &  \checkmark    &     &       $\widetilde{\mathcal{O}}\left( \left( X^2A+Y^2B \right) /\varepsilon ^2 \right)$            \\ \hline
  \cellcolor{light-gray}  Balanced OMD (Algorithm~\ref{alg:IXOMD})            &   \checkmark   &     &     $\widetilde{\mathcal{O}}\left( \left( XA+YB \right) /\varepsilon ^2 \right)$               \\ \hline
  \cellcolor{light-gray}  Balanced CFR (Algorithm~\ref{algorithm:cfr})             &     &    \checkmark  &     $\widetilde{\mathcal{O}}\left( \left( XA+YB \right) /\varepsilon ^2 \right)$               \\ \hline
Lower bound (Theorem~\ref{theorem:lower-bound})          &   -   &   -   &     $\Omega \left( \left( XA+YB \right) /\varepsilon ^2 \right) $              \\ \hline
\end{tabular}
\caption{\label{table:rate} Sample complexity (number of episodes
  required) for learning $\epsilon$-NE in IIEFGs from bandit feedback.
}
\vspace{-1em}
\end{table*}

Imperfect Information Games---games where players can only make decisions based on partial information about the true underlying state of the game---constitute an important challenge for modern artificial intelligence. The celebrated notion of Imperfect-Information Extensive-Form games (IIEFGs)~\citep{kuhn201611} offers a formulation for games with both imperfect information and sequential play. IIEFGs have been widely used for modeling real-world imperfect information games such as Poker~\citep{heinrich2015fictitious,moravvcik2017deepstack,brown2018superhuman}, Bridge~\citep{tian2020joint}, Scotland Yard~\citep{schmid2021player}, etc, and achieving strong performances therein.

A central question in IIEFGs is the problem of finding a Nash equilibrium (NE)~\citep{nash1950equilibrium} in a two-player zero-sum IIEFG with perfect recall. There is an extensive line of work for solving this problem with full knowledge of the game (or full feedback), by either reformulating as a linear program~\citep{koller1992complexity,von1996efficient,koller1996efficient}, first-order optimization methods~\citep{hoda2010smoothing,kroer2015faster,kroer2018solving,munos2020fast,lee2021last}, or Counterfactual Regret Minimization~\citep{zinkevich2007regret,lanctot2009monte,johanson2012efficient,tammelin2014solving,schmid2019variance,burch2019revisiting}.

However, in the more challenging bandit feedback setting where the game is not known and can only be learned from random observations by repeated playing, the optimal sample complexity (i.e., the number of episodes required to play) for learning an NE in IIEFGs remains open. Various approaches have been proposed recently for solving this, including model-based exploration~\citep{zhou2019posterior,zhang2021finding}, Online Mirror Descent with loss estimation~\citep{farina2021bandit,kozuno2021model}, and Monte-Carlo Counterfactual Regret Minimization (MCCFR)~\citep{lanctot2009monte,farina2020stochastic,farina2021model}. In a two-player zero-sum IIEFG with $X$, $Y$ information sets (infosets) and $A$, $B$ actions for the two players respectively, the current best sample complexity for learning an $\eps$-approximate NE is $\tO((X^2A+Y^2B)/\eps^2)$ achieved by a sample-based variant of Online Mirror Descent with implicit exploration~\citep{kozuno2021model}. However, this sample complexity scales quadratically in $X$, $Y$ and still has a gap from the information-theoretic lower bound $\Omega((XA+YB)/\eps^2)$ which only scales linearly. This gap is especially concerning from a practical point of view as the number of infosets is often the dominating measure of the game size in large real-world IIEFGs~\citep{johanson2013measuring}.

In this paper, we resolve this open question by presenting the first line of algorithms that achieve $\tO((XA+YB)/\eps^2)$ sample complexity for learning $\eps$-NE in an IIEFG.

Our contributions can be summarized as follows.
\begin{itemize}[wide, topsep=0pt, itemsep=0pt]
\item We design a new algorithm Balanced Online Mirror Descent (Balanced OMD) that achieves $\tO(\sqrt{XAT})$ regret for the max player against adversarial opponents, and learns an $\eps$-NE within $\tO((XA+YB)/\eps^2)$ episodes of play when run by both players in a self-play fashion (Section~\ref{section:ixomd}). These improve over the best existing results by a factor of $\sqrt{X}$ and $\max\{X, Y\}$ respectively, and match the information-theoretic lower bounds (Section~\ref{section:lower-bound}) up to ${\rm poly}(H)$ and logarithmic factors. The main feature within Balanced OMD is a new \emph{balanced dilated KL} as the distance function in its mirror descent step.

\item We design another new algorithm Balanced Counterfactual Regret Minimization (Balanced CFR) that also achieves $\tO((XA+YB)/\eps^2)$ sample complexity for learning an $\eps$-NE (Section~\ref{section:cfr}). Balanced CFR can be seen as an instantiation of the MCCFR framework that integrates the \emph{balanced exploration policies} within both the sampling and the local regret minimization steps.

\item We extend our results to multi-player general-sum IIEFGs, where we show that both Balanced OMD and Balanced CFR can learn an approximate Normal-Form Coarse Correlated Equilibrium (NFCCE) sample-efficiently when run by all players simultaneously via self-play (Section~\ref{section:multi-player}).
\end{itemize}

\subsection{Related work}

\paragraph{Computing NE from full feedback}
When the full game (transitions and rewards) is known, the problem of finding the NE is a min-max optimization problem over the two policies. Early works consider casting this min-max problem over the sequence-form policies as a linear program~\citep{koller1992complexity,von1996efficient,koller1996efficient}. First-order algorithms are later proposed for solving the min-max problem directly, in particular by using proper regularizers such as the \emph{dilated KL distance}~\citep{gilpin2008first,hoda2010smoothing,kroer2015faster,lee2021last}.
Another prevalent approach is Counterfactual Regret Minimization (CFR)~\citep{zinkevich2007regret}, which works by minimizing (local) \emph{counterfactual regrets} at each infoset separately using any regret minimization algorithm over the probability simplex such as Regret Matching or Hedge~\citep{tammelin2014solving,burch2019revisiting,zhou2020lazy,farina2020faster}.
As each CFR iteration involves traversing the entire game tree which can be slow or memory-inefficient, techniques based on sampling or approximation have been proposed to address this, such as Monte-Carlo CFR (MCCFR)~\citep{lanctot2009monte,gibson2012efficient,gibson2012generalized,johanson2012efficient,lisy2015online,schmid2019variance}, function approximation of counterfactual values~\citep{waugh2015solving,brown2019deep}, and pruning~\citep{brown2015regret,brown2017dynamic,brown2017reduced}.

\paragraph{Learning NE from bandit feedback}
The MCCFR framework~\citep{lanctot2009monte} provides a first line of approaches for learning an $\eps$-NE in IIEFGs from bandit feedback, by feeding in sample-based unbiased loss estimators to CFR algorithms. 
This framework is then generalized by~\citet{farina2020stochastic} to any regret minimization algorithm (not necessarily CFR). They analyze the concentration between the true regret and the regret on loss estimators, and propose to sample with a ``balanced strategy'' (equivalent to a special case of our balanced exploration policy) to enable a small concentration term. However, they do not bound the regret on loss estimators or give an end-to-end sample complexity guarantee.~\citet{farina2021model} instantiate this framework to give a sample complexity guarantee of $\tO({\rm poly}(X, Y, A, B)/\eps^4)$, by using an exploration rule that favors larger sub-games (similar to our balanced exploration policy but defined through the number of terminal \emph{states} instead of infosets). Our Balanced CFR algorithm (Section~\ref{section:cfr}) can be seen as an instantiation of this framework using a more general balanced exploration policy in both the sampling and the local regret minimization steps.

Another line of work considers sample-based variants of Online Mirror Descent (OMD) algorithms. \citet{farina2021bandit} provide an algorithm with $\tO((X^4A^3+Y^4B^3)/\eps^2)$ sample complexity\footnote{By plugging in an $\tO(X)$ upper bound for the dilated KL distance and optimizing the regret bound by setting $\eta = 1/\sqrt{X^2A^3T}$ in their Theorem 3.} by OMD with an unbiased loss estimator and the dilated KL distance.~\citet{kozuno2021model} propose the IXOMD algorithm that achieves $\widetilde{\mathcal{O}}((X^2A+Y^2B)/\eps^2)$ sample complexity using an implicit-exploration loss estimator. Our Balanced OMD (Section~\ref{section:ixomd}) can be seen as a variant of the IXOMD algorithm by using a new variant of the dilated KL distance.

Finally, \citet{zhou2019posterior,zhang2021finding} propose model-based exploration approaches combined with planning on the estimated models. Specifically,~\citet{zhou2019posterior} use posterior sampling to obtain an $\widetilde{\mathcal{O}}(SAB/\eps^2)$ sample complexity under the Bayesian setting assuming a correct prior.~\citet{zhang2021finding} achieve $\widetilde{\mathcal{O}}(S^2AB/\eps^2)$ sample complexity by constructing confidence bounds for the transition model. Both sample complexities are polynomial in $S$ (the number of underlying states) due to their need of estimating the full model, which could be much higher than ${\rm poly}(X, Y)$. A comparison between the above existing results and ours is given in Table~\ref{table:rate}.

\paragraph{Markov games without tree structure}
A related line of work considers learning equilibria in Markov Games (MGs)~\citep{shapley1953stochastic} with perfect information, but without the tree structure assumed in IIEFGs. Sample-efficient algorithms for learning MGs from bandit feedback have been designed for learning NE in two-player zero-sum MGs either assuming access to a ``simulator'' or certain reachability assumptions, e.g.~\citep{sidford2020solving,zhang2020model,daskalakis2020independent,wei2021last} or in the exploration setting, e.g.~\citep{wei2017online,bai2020provable,xie2020learning,bai2020near,liu2021sharp,chen2021almost,jin2021power,huang2021towards}, as well as learning (Coarse) Correlated Equilibria in multi-player general-sum MGs, e.g.~\citep{liu2021sharp,song2021can,jin2021v,mao2022provably}. As the settings of MGs in these work do not allow imperfect information, these results do not imply results for learning IIEFGs.

\section{Preliminaries}
\label{sec:prelim}

We consider two-player zero-sum IIEFGs using the formulation via Partially Observable Markov Games (POMGs), following~\citep{kozuno2021model}. In the following, $\Delta(\cA)$ denotes the probability simplex over a set $\cA$.

\paragraph{Partially observable Markov games}
We consider finite-horizon, tabular, two-player zero-sum Markov Games with partial observability, which can be described as a tuple ${\rm POMG}(H, \cS, \cX, \cY, \cA, \cB, \P, r)$, where
\begin{itemize}[wide, itemsep=0pt, topsep=0pt]
\item $H$ is the horizon length;
\item $\cS=\bigcup_{h\in[H]} \cS_h$ is the (underlying) state space with $|\cS_h|=S_h$ and $\sum_{h=1}^H S_h=S$;
\item $\cX=\bigcup_{h\in[H]} \cX_h$ is the space of information sets (henceforth \emph{infosets}) for the \emph{max-player} with $|\cX_h|=X_h$ and $X\defeq \sum_{h=1}^H X_h$. At any state $s_h\in\cS_h$, the max-player only observes the infoset $x_h=x(s_h)\in\cX_h$, where $x:\cS\to\cX$ is the emission function for the max-player;
\item  $\cY=\bigcup_{h\in[H]} \cY_h$ is the space of infosets for the \emph{min-player} with $\abs{\cY_h}=Y_h$ and $Y\defeq \sum_{h=1}^H Y_h$. An infoset $y_h$ and the emission function $y:\cS\to\cY$ are defined similarly.
\item $\cA$, $\cB$ are the action spaces for the max-player and min-player respectively, with $\abs{\cA}=A$ and $\abs{\cB}=B$\footnote{While this assumes the action space at each infoset have equal sizes, our results can be extended directly to the case where each infoset has its own action space with (potentially) unequal sizes.}.
\item $\P=\{p_0(\cdot)\in\Delta(\cS_1)\} \cup \{p_h(\cdot|s_h, a_h, b_h)\in \Delta(\cS_{h+1})\}_{(s_h, a_h, b_h)\in \cS_h\times \cA\times \cB,~h\in[H-1]}$ are the transition probabilities, where $p_1(s_1)$ is the probability of the initial state being $s_1$, and $p_h(s_{h+1}|s_h, a_h, b_h)$ is the probability of transitting to $s_{h+1}$ given state-action $(s_h, a_h, b_h)$ at step $h$;
\item $r=\set{r_h(s_h, a_h, b_h)\in[0,1]}_{(s_h, a_h, b_h)\in\cS_h\times\cA\times \cB}$ are the (random) reward functions with mean $\wb{r}_h(s_h, a_h, b_h)$. 
\end{itemize}

\paragraph{Policies, value functions}
As we consider partially observability, each player's policy can only depend on the infoset rather than the underlying state. A policy for the max-player is denoted by $\mu=\{\mu_h(\cdot|x_h)\in\Delta(\cA)\}_{h\in[H], x_h\in\cX_h}$, where $\mu_h(a_h|x_h)$ is the probability of taking action $a_h\in\cA$ at infoset $x_h\in\cX_h$. Similarly, a policy for the min-player is denoted by $\nu=\set{\nu_h(\cdot|y_h)\in\Delta(\cB)}_{h\in[H], y_h\in\cY_h}$. A trajectory for the max player takes the form $(x_1, a_1, r_1, x_2, \dots, x_H, a_H, r_H)$, where $a_h\sim \mu_h(\cdot|x_h)$, and the rewards and infoset transitions depend on the (unseen) opponent's actions and underlying state transition.

The overall game value for any (product) policy $(\mu, \nu)$ is denoted by $V^{\mu, \nu}\defeq \E_{\mu, \nu}\brac{ \sum_{h=1}^H r_h(s_h, a_h, b_h) }$. The max-player aims to maximize the value, whereas the min-player aims to minimize the value.

\paragraph{Tree structure and perfect recall}
We use a POMG with tree structure and the perfect recall assumption as our formulation for IIEFGs, following~\citep{kozuno2021model}\footnote{The class of tree-structured, perfece-recall POMGs is able to express all perfect-recall IIEFGs (defined in~\citep{osborne1994course}) that additionally satisfy the \emph{timeability} condition~\citep{jakobsen2016timeability}, a mild condition that roughly requires that infosets for all players combinedly could be partitioned into ordered ``layers", and is satisfied by most real-world games of interest~\citep{kovavrik2019rethinking}. Further, our algorithms can be directly generalized to any perfect-recall IIEFG (not necessarily timeable), as we only require \emph{each player's own game tree to be timeable} (which holds for any perfect-recall IIEFG), similar as existing OMD/CFR type algorithms~\citep{zinkevich2007regret,farina2020stochastic}.~\yub{no section in appendix}.}. We assume that our POMG has a \emph{tree structure}: For any $h$ and $s_h\in\cS_h$, there exists a unique history $(s_1, a_1, b_1, \dots, s_{h-1}, a_{h-1}, b_{h-1})$ of past states and actions that leads to $s_h$. We also assume that both players have \emph{perfect recall}: For any $h$ and any infoset $x_h\in\cX_h$ for the max-player, there exists a unique history $(x_1, a_1, \dots, x_{h-1}, a_{h-1})$ of past infosets and max-player actions that leads to $x_h$ (and similarly for the min-player). We further define $\cC_{h'}(x_h, a_h)\subset\cX_{h'}$ to be the set of all infosets in the $h'$-the step that are reachable from $(x_h, a_h)$, and define $\cC_{h'}(x_h)=\cup_{a_h\in\cA} \cC_{h'}(x_h, a_h)$. Finally, define $\cC(x_h, a_h)\defeq \cC_{h+1}(x_h, a_h)$ as a shorthand for immediately reachable infosets.

With the tree structure and perfect recall, under any product policy $(\mu, \nu)$, the probability of reaching state-action $(s_h, a_h, b_h)$ at step $h$ takes the form
\begin{align}
  \label{equation:reaching-probability-decomposition}
  \P^{\mu, \nu}(s_h, a_h, b_h) = p_{1:h}(s_h) \mu_{1:h}(x_h, a_h) \nu_{1:h}(y_h, b_h),
\end{align}
where we have defined the sequence-form transition probability as
\begin{align*}
  p_{1:h}(s_h)\defeq p_0(s_1)\prod_{h'\le h-1} p_{h'}(s_{h'+1}|s_{h'}, a_{h'}, b_{h'}),
\end{align*}
where $\set{s_{h'}, a_{h'}, b_{h'}}_{h'\le h-1}$ are the histories uniquely determined from $s_h$ by the tree structure, and the \emph{sequence-form policies} as
\begin{align*}
  & \mu_{1:h}(x_h, a_h) \defeq \prod_{h'=1}^h \mu_{h'}(a_{h'}|x_{h'}), \quad\nu_{1:h}(y_h, b_h) \defeq \prod_{h'=1}^h \nu_{h'}(b_{h'}|y_{h'}),
\end{align*}
where $x_{h'}=x(s_{h'})$ and $y_{h'}=y(s_{h'})$ are the infosets for the two players (with $\set{x_{h'}, a_{h'}}_{h\le h-1}$ are uniquely determined by $x_h$ by perfect recall, and similar for $\set{y_{h'}, b_{h'}}_{h\le h-1}$).

We let $\Pi_{\max}$ denote the set of all possible policies for the max player ($\Pi_{\min}$ for the min player). In the sequence form representation, $\Pi_{\max}$ is a convex compact subset of $\R^{XA}$ specified by the constraints $\mu_{1:h}(x_h, a_h)\ge 0$ and $\sum_{a_h\in\cA}\mu_{1:h}(x_h, a_h)=\mu_{1:h-1}(x_{h-1}, a_{h-1})$ for all $(h, x_h, a_h)$, where $(x_{h-1}, a_{h-1})$ is the unique pair of prior infoset and action that reaches $x_h$ (understanding $\mu_0(x_0, a_0)=\mu_0(\emptyset)=1$).

\paragraph{Regret and Nash Equilibrium}
We consider two standard learning goals: Regret and Nash Equilibrium. For the regret, we focus on the max-player, and assume there is an arbitrary (potentially adversarial) opponent as the min-player who may determine her policy $\nu^t$ based on all past information (including knowledge of $\mu^t$) before the $t$-th episode starts. Then, the two players play the $t$-th episode jointly using $(\mu^t,\nu^t)$. The goal for the max-player's is to design policies $\set{\mu^t}_{t=1}^T$ that minimizes the regret against the best fixed policy in hindsight:
\begin{align}
\label{equation:regret}
  \Reg^T \defeq \max_{\mu^\dagger\in\Pi_{\max}} \sum_{t=1}^T \paren{V^{\mu^\dagger, \nu^t} - V^{\mu^t, \nu^t}}.
\end{align}

We say a product policy $(\mu, \nu)$ is an $\epsilon$-approximate Nash equilibrium ($\epsilon$-NE) if
\begin{align*}
  \negap(\mu, \nu) \defeq \max_{\mu^\dagger\in\Pi_{\max}} V^{\mu^\dagger, \nu} - \min_{\nu^\dagger\in\Pi_{\min}} V^{\mu, \nu^\dagger} \le \epsilon,
\end{align*}
i.e. $\mu$ and $\nu$ are each other's $\epsilon$-approximate best response. 

Using online-to-batch conversion, it is a standard result that sublinear regret for both players ensures that the pair of average policies $(\wb{\mu}, \wb{\nu})$ is an approximate NE (see e.g.~\citep[Theorem 1]{kozuno2021model}):
\begin{proposition}[Regret to Nash conversion]
  \label{proposition:online-to-batch}
  For any sequence of policies $\set{\mu^t}_{t=1}^T\in\Pi_{\max}$ and $\set{\nu^t}_{t=1}^T\in\Pi_{\min}$, the average policies $\wb{\mu}\defeq \frac{1}{T}\sum_{t=1}^T \mu^t$ and $\wb{\nu}\defeq \frac{1}{T}\sum_{t=1}^T \nu^t$ (averaged in the sequence form, cf.~\eqref{equation:mu-avg}) satisfy
  \begin{align*}
    \negap(\wb{\mu}, \wb{\nu}) = \frac{\Reg_{\max}^T + \Reg_{\min}^T}{T},
  \end{align*}
  where $\Reg_{\max}^T \defeq \max_{\mu^\dagger\in\Pi_{\max}} \sum_{t=1}^T (V^{\mu^\dagger, \nu^t} - V^{\mu^t, \nu^t})$ and $\Reg_{\min}^T \defeq \max_{\nu^\dagger\in\Pi_{\min}} \sum_{t=1}^T (V^{\mu^t, \nu^t} - V^{\mu^t, \nu^\dagger})$ denote the regret for the two players respectively.
\end{proposition}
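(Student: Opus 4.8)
The plan is to exploit the \emph{bilinearity} of the value $V^{\mu, \nu}$ in the sequence-form representations of the two policies. This bilinearity is exactly what makes averaging in the sequence form the right notion of averaging, and it is what upgrades the claimed relation from a mere inequality (as one gets from generic online-to-batch arguments) into an exact equality.

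First I would expand $V^{\mu, \nu} \defeq \E_{\mu, \nu}[\sum_{h=1}^H r_h(s_h, a_h, b_h)]$ using the reaching-probability decomposition~\eqref{equation:reaching-probability-decomposition}, obtaining
\[
  V^{\mu, \nu} = \sum_{h=1}^H \sum_{s_h, a_h, b_h} \wb{r}_h(s_h, a_h, b_h)\, p_{1:h}(s_h)\, \mu_{1:h}(x_h, a_h)\, \nu_{1:h}(y_h, b_h).
\]
Each summand is a product of a single coordinate of the sequence-form policy $\mu_{1:h}$ and a single coordinate of $\nu_{1:h}$, so $V^{\mu, \nu}$ is linear in $\mu$ and linear in $\nu$ when both are viewed as vectors in the sequence form. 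Since $\wb{\mu}$ and $\wb{\nu}$ are defined as averages \emph{in the sequence form} (cf.~\eqref{equation:mu-avg}) and $\Pi_{\max}, \Pi_{\min}$ are convex, both $\wb{\mu} \in \Pi_{\max}$ and $\wb{\nu} \in \Pi_{\min}$ are valid policies, and linearity immediately yields, for every fixed $\nu$ and $\mu$, the two identities $V^{\wb{\mu}, \nu} = \frac{1}{T}\sum_{t=1}^T V^{\mu^t, \nu}$ and $V^{\mu, \wb{\nu}} = \frac{1}{T}\sum_{t=1}^T V^{\mu, \nu^t}$.

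To assemble the result I would expand $\negap(\wb{\mu}, \wb{\nu}) = \max_{\mu^\dagger} V^{\mu^\dagger, \wb{\nu}} - \min_{\nu^\dagger} V^{\wb{\mu}, \nu^\dagger}$, substitute the two identities, and add and subtract the common term $\sum_{t=1}^T V^{\mu^t, \nu^t}$. This reconstructs the two regrets exactly:
\[
  T\max_{\mu^\dagger} V^{\mu^\dagger, \wb{\nu}} = \max_{\mu^\dagger}\sum_{t=1}^T V^{\mu^\dagger, \nu^t} = \Reg_{\max}^T + \sum_{t=1}^T V^{\mu^t, \nu^t},
\]
\[
  T\min_{\nu^\dagger} V^{\wb{\mu}, \nu^\dagger} = \min_{\nu^\dagger}\sum_{t=1}^T V^{\mu^t, \nu^\dagger} = \sum_{t=1}^T V^{\mu^t, \nu^t} - \Reg_{\min}^T.
\]
Subtracting the second line from the first, the cross term $\sum_t V^{\mu^t, \nu^t}$ cancels and gives $T \cdot \negap(\wb{\mu}, \wb{\nu}) = \Reg_{\max}^T + \Reg_{\min}^T$, which is the claim.

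The only real subtlety, and the step I would be most careful about, is the commuting identity $V^{\wb{\mu}, \nu} = \frac{1}{T}\sum_t V^{\mu^t, \nu}$: it holds \emph{exactly} only because the averaging is performed in the sequence form, on which $V$ is genuinely linear. Averaging the behavioral-form conditionals $\mu_h(\cdot \mid x_h)$ directly would not produce this identity (the products $\mu_{1:h}$ are not linear in the behavioral form) and would break the equality. I would therefore verify explicitly that the sequence-form average in~\eqref{equation:mu-avg} satisfies the defining constraints of $\Pi_{\max}$ and that it is precisely this representation against which the value acts linearly.
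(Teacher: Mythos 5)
Your proof is correct and takes essentially the same route as the paper's (which defers to the standard argument of \citet[Theorem 1]{kozuno2021model}): bilinearity of $V^{\mu,\nu}$ in the sequence-form policies, the exact identities $V^{\mu^\dagger,\wb{\nu}}=\frac{1}{T}\sum_{t=1}^T V^{\mu^\dagger,\nu^t}$ and $V^{\wb{\mu},\nu^\dagger}=\frac{1}{T}\sum_{t=1}^T V^{\mu^t,\nu^\dagger}$ enabled by averaging in the sequence form~\eqref{equation:mu-avg}, and the cancellation of the common term $\sum_{t=1}^T V^{\mu^t,\nu^t}$ that yields the equality. Your closing caveat---that averaging the behavioral conditionals directly would break exactness---is precisely why the paper defines $\wb{\mu}$ so that its conditional form reproduces the sequence-form average~\eqref{equation:mu-avg}, so nothing is missing.
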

Therefore, an approximate NE can be learned by letting both players play some sublinear regret algorithm against each other in a self-play fashion.

\paragraph{Bandit feedback}
Throughout this paper, we consider the interactive learning (exploration) setting with bandit feedback, where the max-player determines the policy $\mu^t$, the opponent determines $\nu^t$ (either adversarially or by running some learning algorithm, depending on the context) unknown to the max-player, and the two players play an episode of the game using policy $(\mu^t, \nu^t)$. The max player observes the trajectory $(x_1^t, a_1^t, r_1^t, \dots, x_H^t, a_H^t, r_H^t)$ of her own infosets and rewards, but not the opponent's infosets, actions, or the underlying states.

\subsection{Conversion to online linear regret minimization}
The reaching probability decomposition~\eqref{equation:reaching-probability-decomposition} implies that the value function $V^{\mu, \nu}$ is \emph{bilinear} in (the sequence form of) $(\mu, \nu)$. Thus, fixing a sequence of opponent's policies $\set{\nu^t}_{t=1}^T$, we have the linear representation
\begin{align*}
  V^{\mu, \nu^t} =
    \sum_{h=1}^H \sum_{(x_h, a_h)\in \cX_h\times\cA} \mu_{1:h}(x_h, a_h)\sum_{s_h\in x_h, b_h\in\cB} p_{1:h}(s_h) \nu^t_{1:h}(y(s_h), b_h) \wb{r}_h(s_h, a_h, b_h).
\end{align*}
Therefore, defining the \emph{loss function} for round $t$ as
\begin{align}
  \quad \ell^t_h(x_h, a_h)    \defeq \sum_{s_h\in x_h, b_h\in\cB} p_{1:h}(s_h) \nu^t_{1:h}(y(s_h), b_h) (1 - \wb{r}_h(s_h, a_h, b_h)) \label{equation:loss}
\end{align}
the regret $\Reg^T$~\eqref{equation:regret} can be written as
\begin{align}
  \Reg^T = \max_{\mu^\dagger\in\Pi_{\max}} \sum_{t=1}^T \<\mu^t - \mu^\dagger, \ell^t\>,
\end{align}
where the inner product $\<\cdot, \cdot\>$ is over the sequence form: 
$\<\mu, \l^t\> \defeq \sum_{h=1}^H \sum_{x_h, a_h} \mu_{1:h}(x_h, a_h) \l^t_h(x_h, a_h)$ for any $\mu\in\Pi_{\max}$.

\subsection{Balanced exploration policy}

Our algorithms make crucial use of the following balanced exploration policies.

\begin{definition}[Balanced exploration policy]
  For any $1\le h\le H$, the (max-player's) \emph{balanced exploration policy for layer $h$}, denoted as $\mu^{\star, h}\in\Pi_{\max}$, is defined as 
  \begin{align}
    \label{equation:balanced-policy}
    \mu^{\star, h}_{h'}(a_{h'} | x_{h'}) \defeq \left\{
    \begin{aligned}
      & \frac{ \abs{\cC_{h}(x_{h'}, a_{h'})} }{ \abs{\cC_h(x_{h'})} },& h'\in\set{1,\dots,h-1}, \\
      &1/A, & h'\in\set{h,\dots,H}.
    \end{aligned}
        \right.
  \end{align}
  In words, at time steps $h'\le h-1$, the policy $\mu^{\star, h}$ plays actions proportionally to their number of descendants \emph{within the $h$-th layer} of the game tree. Then at time steps $h'\ge h$, it plays the uniform policy.
\end{definition}
Note that there are $H$ such balanced policies, one for each layer $h\in[H]$. The balanced policy for layer $h=H$ is equivalent to the balanced strategy of~\citet{farina2020stochastic} (cf. their Section 4.2 and Appendix A.3) which plays actions proportionally to their number descendants within the last (terminal) layer. The balanced policies for layers $h\le H-1$ generalize theirs by also counting the number of descendants within earlier layers. We remark in passing that the key feature of $\mu^{\star, h}$ for our analyses is its \emph{balancing property}, which we state in Lemma~\ref{lemma:balancing}.

\paragraph{Requirement on knowing the tree structure}
The construction of $\mu^{\star, h}$ requires knowing the number of descendants $\abs{\cC_h(x_{h'}, a_{h'})}$, which depends on the \emph{structure\footnote{By this ``structure'' we refer to the parenting structure of the game tree only (which $x_{h+1}$ is reachable from which $(x_h, a_h)$), not the transition probabilities and rewards.} of the game tree} for the max player. Therefore, our algorithms that use $\mu^{\star, h}$ requires knowing this tree structure beforehand. Although there exist algorithms that do not require knowing such tree structure beforehand~\citep{zhang2021finding,kozuno2021model}, this requirement is relatively mild as the structure can be extracted efficiently from just one tree traversal. We also remark our algorithms using the balanced policies do not impose any additional requirements on the game tree, such as the existence of a policy with lower bounded reaching probabilities at all infosets.

\section{Online Mirror Descent}
\label{section:ixomd}

We now present our first algorithm Balanced Online Mirror Descent (Balanced OMD) and its theoretical guarantees.

\subsection{Balanced dilated KL}

At a high level, OMD algorithms work by designing loss estimators (typically using importance weighting) and solving a regularized optimization over the constraint set in each round that involves the loss estimator and a distance function as the regularizer. OMD has been successfully deployed for solving IIEFGs by using various \emph{dilated distance generating functions} over the policy set $\Pi_{\max}$~\citep{kroer2015faster}.

The main ingredient of our algorithm is the \emph{balanced dilated KL}, a new distance measure between policies in IIEFGs.

\begin{definition}[Balanced dilated KL]
The balanced dilated KL distance between two policies $\mu, \nu\in\Pi_{\max}$ is defined as
\begin{align}
  \label{equation:balanced-dilated-kl}
    \D(\mu \| \nu) \defeq \sum_{h=1}^H \sum_{x_h, a_h} \frac{\mu_{1:h}(x_h, a_h)}{\mu^{\star, h}_{1:h}(x_h, a_h)} \log \frac{\mu_h(a_h | x_h)}{\nu_h(a_h | x_h)}.
\end{align}
\end{definition}

The balanced dilated KL is a reweighted version of the \emph{dilated KL} (a.k.a. the \emph{dilated entropy distance-generating function}) that has been widely used for solving IIEFGs~\citep{hoda2010smoothing,kroer2015faster}:
\begin{align}
  \label{equation:dilated-kl}
    \Dorig(\mu \| \nu) = \sum_{h=1}^H \sum_{x_h, a_h} \mu_{1:h}(x_h, a_h) \log \frac{\mu_h(a_h | x_h)}{\nu_h(a_h | x_h)}.
\end{align}
Compared with~\eqref{equation:dilated-kl}, our balanced dilated KL~\eqref{equation:balanced-dilated-kl} introduces an additional reweighting term $1/\mu^{\star, h}_{1:h}(x_h, a_h)$ that depends on the balanced exploration policy $\mu^{\star, h}$~\eqref{equation:balanced-policy}. This reweighting term is in general different for each $(x_h, a_h)$, which at a high level will introduce a balancing effect into our algorithm. An alternative interpretation of the balanced dilated KL can be found in Appendix~\ref{appendix:interpretation-dbal}.

\subsection{Algorithm and theoretical guarantee}

We now describe our Balanced OMD algorithm in Algorithm~\ref{alg:IXOMD}. Our algorithm is a variant of the IXOMD algorithm of~\citet{kozuno2021model} by using the balanced dilated KL. At a high level, it consists of the following steps:

\begin{itemize}[wide, topsep=0pt, itemsep=0pt]
\item Line~\ref{line:omd-play} \&~\ref{line:omd-loss-estimator} (Sampling): Play an episode using policy $\mu^t$ (against the opponent $\nu^t$) and observe the trajectory. Then construct the loss estimator using importance weighting and IX bonus~\citep{neu2015explore}:
  \begin{align}
    \label{equation:loss-estimator}
    \widetilde{\ell}_h^{t}(x_h, a_h) \defeq \frac{ \indic{(x_h^t, a_h^t)=(x_h, a_h) } \cdot (1 - r_h^t)}{\mu_{1:h}^t (x_h, a_h) + \gamma  \mu _{1:h}^{\star ,h}(x_{h},a_{h})}.
  \end{align}
  Note that the IX bonus $\gamma\mu _{1:h}^{\star ,h}(x_{h},a_{h})$ on the denominator makes~\eqref{equation:loss-estimator} a slightly downward biased estimator of the true loss $\ell_h^t(x_h, a_h)$ defined in~\eqref{equation:loss}.
\item Line~\ref{line:omd-update} (Update policy): Update $\mu^{t+1}$ by OMD with loss estimator $\wt{\ell}^t$ and the balanced dilated KL distance function. Due to the sparsity of $\wt{\ell}^t$, this update admits an efficient implementation that updates the conditional form $\mu^t_h(\cdot|x_h)$ at the visited infoset $x_h=x_h^t$ only (described in Algorithm~\ref{algorithm:balanced-omd-implement}).
\end{itemize}

\begin{algorithm}[t]
  \caption{Balanced OMD (max-player)}
  \label{alg:IXOMD}
  \small
  \begin{algorithmic}[1]
    \REQUIRE Learning rate $\eta>0$; IX parameter $\gamma>0$.
    \STATE Initialize $\mu_h^1 (a_h|x_h) \setto 1  / A_h$  for all $(h, x_h, a_h)$.
    \FOR{Episode $t=1, \ldots, T$}
    \STATE Play an episode using $\mu^t$, observe a trajectory
    \begin{align*}
      (x_1^t, a_1^t, r_1^t, \dots, x_H^t, a_H^t, r_H^t).
    \end{align*}
    \alglinelabel{line:omd-play}
    \FOR{$h = H, \ldots, 1$} 
    \STATE Construct loss estimator $\set{\widetilde{\ell}_h^{t}(x_h, a_h)}_{(x_h, a_h)\in\cX_h\times\cA}$ by
    \begin{align*}
      \widetilde{\ell}_h^{t}(x_h, a_h) \setto \frac{ \indic{(x_h^t, a_h^t)=(x_h, a_h) } \cdot (1 - r_h^t)}{\mu_{1:h}^t (x_h, a_h) + \gamma  \mu _{1:h}^{\star ,h}(x_{h},a_{h})}.
    \end{align*}
    \alglinelabel{line:omd-loss-estimator}
    \STATE Update policy \alglinelabel{line:omd-update}
    \begin{align}
      \label{equation:reweighted-update}
      \mu^{t+1} \setto \argmin_{\mu \in \Pi_{\max}} \<\mu, \wt{\ell}^t\> + \frac{1}{\eta}\Dbal(\mu \| \mu^t)
    \end{align}
    using the efficient implementation in Algorithm~\ref{algorithm:balanced-omd-implement}.
    \ENDFOR
    \ENDFOR
  \end{algorithmic}
\end{algorithm}

We are now ready to present the theoretical guarantees for the Balanced OMD algorithm.

\begin{theorem}[Regret bound for Balanced OMD]
  \label{theorem:ixomd-regret}
  Algorithm~\ref{alg:IXOMD} with learning rate $\eta=\sqrt{XA\log A/(H^3T)}$ and IX parameter $\gamma=\sqrt{XA\iota/(HT)}$ achieves the following regret bound with probability at least $1-\delta$:
  \begin{align*}
    \Reg^T \le \cO\paren{\sqrt{H^3XAT\iota}},
  \end{align*}
  where $\iota\defeq \log(3HXA/\delta)$ is a log factor.
\end{theorem}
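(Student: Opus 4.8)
The plan is to follow the standard online-mirror-descent template: bound the regret against any fixed comparator $\mu^\dagger\in\Pi_{\max}$ by separating the estimation bias of the loss estimator from the OMD regret on the estimated losses, and control each piece using the two new ingredients---the balanced dilated KL $\D$ and the per-infoset IX bonus $\gamma\mu^{\star, h}_{1:h}$. Concretely, I would first write, for any fixed $\mu^\dagger$ (the high-probability bounds below will hold uniformly, hence also for the maximizer),
\begin{align*}
\Reg^T = \underbrace{\sum_{t}\<\mu^t,\ell^t-\wt{\ell}^t\>}_{\rm (I)} + \underbrace{\sum_t\<\mu^\dagger,\wt{\ell}^t-\ell^t\>}_{\rm (II)} + \underbrace{\sum_t\<\mu^t-\mu^\dagger,\wt{\ell}^t\>}_{\rm (III)}.
\end{align*}

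For (III) I would invoke the mirror-descent inequality for the Bregman divergence $\D$ (which is exactly the Bregman divergence of the balanced dilated negative entropy), giving $\mathrm{(III)}\le \D(\mu^\dagger\|\mu^1)/\eta + \sum_t \mathrm{stab}_t$. Since $\mu^1$ is uniform, $\D(\mu^\dagger\|\mu^1)\le\log A\sum_{h,x_h,a_h}\mu^\dagger_{1:h}(x_h,a_h)/\mu^{\star, h}_{1:h}(x_h,a_h)\le XA\log A$, where the final step is the balancing property (Lemma~\ref{lemma:balancing}). For the bias terms I would use the IX machinery. For (II), because the estimator carries the per-infoset bonus $\gamma\mu^{\star, h}_{1:h}$, a Neu-type high-probability lemma gives, with probability $1-\delta$ and simultaneously over all $(h,x_h,a_h)$, the one-sided bound $\sum_t(\wt{\ell}^t_h(x_h,a_h)-\ell^t_h(x_h,a_h))\le \iota/(2\gamma\mu^{\star, h}_{1:h}(x_h,a_h))$; contracting against $\mu^\dagger$ and applying balancing once more yields $\mathrm{(II)}\le \frac{\iota}{2\gamma}\sum_{h,x_h,a_h}\mu^\dagger_{1:h}/\mu^{\star, h}_{1:h}\le XA\iota/(2\gamma)$. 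For (I), the estimator is downward biased with $\ell^t_h-\E_t\wt{\ell}^t_h\le \gamma\mu^{\star, h}_{1:h}\ell^t_h/(\mu^t_{1:h}+\gamma\mu^{\star, h}_{1:h})$, so $\<\mu^t,\ell^t-\E_t\wt{\ell}^t\>\le\gamma\sum_{h,x_h,a_h}\mu^{\star, h}_{1:h}\ell^t_h\le \gamma H$, where I recognize $\mu^{\star, h}_{1:h}(x_h,a_h)\,p_{1:h}(s_h)\nu^t_{1:h}(\cdot)$ as a reaching probability under $(\mu^{\star, h},\nu^t)$ that sums to $1$ per layer; a Freedman bound then controls the martingale $\<\mu^t,\wt{\ell}^t-\E_t\wt{\ell}^t\>$, so $\mathrm{(I)}\le \gamma HT+(\text{lower order})$. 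Optimizing, balancing $\gamma HT$ against $XA\iota/(2\gamma)$ forces $\gamma\asymp\sqrt{XA\iota/(HT)}$ (the stated choice) and contributes $\cO(\sqrt{HXAT\iota})$, while balancing $XA\log A/\eta$ against $\sum_t\mathrm{stab}_t$ forces $\eta\asymp\sqrt{XA\log A/(H^3T)}$ (the stated choice) and contributes the dominant $\cO(\sqrt{H^3XAT\iota})$.

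The hard part will be the stability term $\sum_t\mathrm{stab}_t$. Writing $\D$ in its local (per-infoset) form turns OMD into a weighted collection of local Hedge updates, and $\mathrm{stab}_t$ decomposes over infosets into weighted local second moments of the estimated loss, including the dilated continuation values that couple each infoset to its subtree. The difficulty is that a naive bound reintroduces a factor $1/\mu^t_{1:h}$---the inverse current-policy reach---which can be arbitrarily small and would cost an extra factor of $X$ (this is precisely the $X^2$ of the ordinary dilated-KL analysis). The balanced reweighting $1/\mu^{\star, h}_{1:h}$ in $\D$ together with the matching bonus $\gamma\mu^{\star, h}_{1:h}$ is designed so these factors cancel: since each episode visits exactly one infoset per layer, the floored denominator keeps every visited contribution $\cO(1)$ in expectation, so $\E[\mathrm{stab}_t]$ collapses to $\cO(H^3)$ (an $H^2$ from the squared continuation values over the remaining horizon and one more $H$ from summing over layers), free of $X$ and $A$.

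The technical heart is therefore to make this cancellation rigorous---showing that the single balancing identity of Lemma~\ref{lemma:balancing} simultaneously controls the divergence term and the stability term---and to supply the Freedman-type concentration that converts these in-expectation stability and bias estimates into the stated high-probability guarantee (with the union bound over the $\cO(HXA)$ infoset-actions accounting for $\iota=\log(3HXA/\delta)$). All $X,A$-dependence is thereby confined to the one-time divergence term $XA\log A$, which is exactly the source of the improvement from $X^2A$ to $XA$; the remaining step is the routine bookkeeping of the ${\rm poly}(H)$ factors after plugging in $\eta$ and $\gamma$.
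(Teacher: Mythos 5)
Your proposal follows the paper's proof essentially verbatim: the same ${\rm BIAS}^1 + {\rm BIAS}^2 + {\rm REGRET}$ decomposition, the same use of the balancing property (Lemma~\ref{lemma:balancing}) to bound both $\D(\mu^\dagger\|\mu^1)\le XA\log A$ and the IX bias term by $XA\iota/\gamma$, the same Neu-style one-sided concentration for ${\rm BIAS}^2$ and downward-bias plus Azuma argument giving $\gamma HT$ for ${\rm BIAS}^1$, and the same tuning of $\eta$ and $\gamma$. The one piece you explicitly defer---making the stability cancellation rigorous---is carried out in the paper exactly along the lines you anticipate: it expands the layered log-partition functions $\Xi_h^t$ to second order (entrywise gradient and Hessian bounds exploiting their compositional structure across layers, which is needed because the local losses include negative continuation values so the vanilla simplex Hedge second-moment bound does not apply directly), and then applies Freedman's inequality to the per-layer-pair sums $\Delta_{h,h'}^t$, whose almost-sure bound $X_{h'}A/\gamma$ again invokes Lemma~\ref{lemma:balancing} on the mixture policy $\mu^{\star,h}_{1:h}\mu^t_{h+1:h'}$, yielding precisely your $\eta H^3 T + \eta H^2 XA\iota/\gamma$ stability estimate.
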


Letting both players run Algorithm~\ref{alg:IXOMD}, the following corollary for learning NE follows immediately from the regret-to-Nash conversion (Proposition~\ref{proposition:online-to-batch}).
\begin{corollary}[Learning NE using Balanced OMD]
  \label{corollary:ixomd-pac}
  Suppose both players run Algorithm~\ref{alg:IXOMD} (and its min player's version) against each other for $T$ rounds, with choices of $\eta,\gamma$ specified in Theorem~\ref{theorem:ixomd-regret}. Then, for any $\eps>0$, the average policy $(\wb{\mu}, \wb{\nu})=(\frac{1}{T}\sum_{t=1}^T\mu^t, \frac{1}{T}\sum_{t=1}^T\nu^t)$ achieves $\negap(\wb{\mu}, \wb{\nu})\le \eps$ with probability at least $1 - \delta$, as long as the number of episodes
  \begin{align*}
    T \ge \cO\paren{  H^3(XA+YB)\iota/\eps^2 },
  \end{align*}
  where $\iota\defeq \log(3H(XA+YB)/\delta)$ is a log factor.
\end{corollary}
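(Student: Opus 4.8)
This statement is an immediate consequence of the already-established regret bound (Theorem~\ref{theorem:ixomd-regret}) together with the regret-to-Nash conversion (Proposition~\ref{proposition:online-to-batch}), so its own proof is short. Running Algorithm~\ref{alg:IXOMD} for the max-player gives $\Reg_{\max}^T\le\cO(\sqrt{H^3XAT\iota})$ with probability at least $1-\delta$, and its symmetric min-player counterpart (with $Y,B$ and the min-player's balanced policies in place of $X,A$ and the max-player's) gives $\Reg_{\min}^T\le\cO(\sqrt{H^3YBT\iota})$ with probability at least $1-\delta$; this is legitimate because Theorem~\ref{theorem:ixomd-regret} permits an arbitrary adaptive opponent, and here each player's opponent is merely the other player's policy. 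A union bound makes both hold simultaneously (after rescaling $\delta$), and Proposition~\ref{proposition:online-to-batch} then gives
\begin{align*}
\negap(\wb\mu,\wb\nu)=\frac{\Reg_{\max}^T+\Reg_{\min}^T}{T}\le\cO\paren{\frac{\sqrt{H^3XAT\iota}+\sqrt{H^3YBT\iota}}{T}}=\cO\paren{\sqrt{\frac{H^3(XA+YB)\iota}{T}}},
\end{align*}
using $\sqrt{XA}+\sqrt{YB}\le\sqrt{2(XA+YB)}$. Setting the last expression $\le\eps$ and solving for $T$ yields the stated threshold $T\ge\cO(H^3(XA+YB)\iota/\eps^2)$; the only bookkeeping points are that the averaging in Proposition~\ref{proposition:online-to-batch} is in sequence form (cf.~\eqref{equation:mu-avg}) and that $\iota$ absorbs the constant from the union bound.

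Since the corollary is this thin wrapper, the substance lives in Theorem~\ref{theorem:ixomd-regret}, whose proof I now sketch (this is where I expect the main obstacle). Starting from the online-linear form $\Reg^T=\max_{\mu^\dagger}\sum_t\<\mu^t-\mu^\dagger,\ell^t\>$ already derived in the excerpt, I would fix the comparator $\mu^\dagger$ and split
\begin{align*}
\sum_t\<\mu^t-\mu^\dagger,\ell^t\>=\underbrace{\sum_t\<\mu^t,\ell^t-\wt\ell^t\>}_{\mathrm{(A)}}+\underbrace{\sum_t\<\mu^t-\mu^\dagger,\wt\ell^t\>}_{\mathrm{(B)}}+\underbrace{\sum_t\<\mu^\dagger,\wt\ell^t-\ell^t\>}_{\mathrm{(C)}},
\end{align*}
where $\wt\ell^t$ is the IX loss estimator~\eqref{equation:loss-estimator}. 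The estimator is engineered so that (A) and (C), which measure the estimation bias against the played and the comparator policies respectively, are controlled by the IX bonus $\gamma\mu^{\star,h}_{1:h}$, while (B), the mirror-descent regret on the estimated losses, is controlled by the balanced dilated KL.

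For (B) I would apply the standard one-step OMD inequality. Because $\Dbal$~\eqref{equation:balanced-dilated-kl} is a dilated distance-generating function whose reweighting $1/\mu^{\star,h}_{1:h}(x_h,a_h)=A/\mu^{\star,h}_{1:h}(x_h)$ does not depend on $a_h$ at layer $h$, the update~\eqref{equation:reweighted-update} decouples into per-infoset weighted-Hedge updates (and by sparsity of $\wt\ell^t$ only the visited infoset is touched, as in Algorithm~\ref{algorithm:balanced-omd-implement}), and the usual telescoping over the game tree applies, giving
\begin{align*}
\mathrm{(B)}\le\frac{\Dbal(\mu^\dagger\|\mu^1)}{\eta}+\eta\cdot(\mathrm{stability}).
\end{align*}
The diameter term is where the balancing property pays off: with $\mu^1$ uniform, $\Dbal(\mu^\dagger\|\mu^1)\le\log A\sum_h\sum_{x_h,a_h}\mu^\dagger_{1:h}(x_h,a_h)/\mu^{\star,h}_{1:h}(x_h,a_h)$, and the identity $\sum_{x_h,a_h}\mu_{1:h}(x_h,a_h)/\mu^{\star,h}_{1:h}(x_h,a_h)=X_hA$ for every policy $\mu$ (the content of Lemma~\ref{lemma:balancing}, proved by summing over $a_h$ and telescoping the ratios $\abs{\cC_h(x_{h'},a_{h'})}/\abs{\cC_h(x_{h'})}$ layer by layer down to the root) bounds this by $XA\log A$ --- linear, not quadratic, in $X$. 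The \emph{main obstacle} is the stability term: a priori it is dangerous because the reweighting $1/\mu^{\star,h}_{1:h}(x_h)$ can be as large as $\sim X_hA$ and the IX estimator is itself inflated, $\wt\ell^t_h\lesssim 1/(\gamma\mu^{\star,h}_{1:h})$. The key point --- and the precise reason the IX denominator uses $\mu^{\star,h}_{1:h}$ rather than $\mu^t_{1:h}$ --- is that these cancel: the large $\Dbal$-weight makes the local learning rate proportional to $\mu^{\star,h}_{1:h}(x_h)$, which offsets the $1/\mu^{\star,h}_{1:h}(x_h)$-sized variance of the estimator, and after using that exactly one infoset is visited per layer (so the visitation distribution sums to one across each layer) the per-round stability carries no factor of $X$ and telescopes to $\tO(H^3)$, giving total stability $\tO(H^3T)$. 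Carrying out this cancellation rigorously --- in particular propagating the continuation values through the dilated recursion to obtain the correct $H$-dependence and verifying the $X$-free bound --- is the technically delicate core of the proof; a naive (unbalanced) dilated KL would leave an extra $\max_h X_h$ here, which is exactly the source of the previous $X^2A$ rate.

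Finally, for the bias terms I would use the two defining properties of the IX estimator. Since $\E[\wt\ell^t\mid\cF_{t-1}]\le\ell^t$ (downward bias), term (A) is bounded by $\gamma\sum_t\sum_{h,x_h,a_h}\mu^{\star,h}_{1:h}(x_h,a_h)\ell^t_h(x_h,a_h)\le\cO(\gamma HT)$, using that $\sum_{x_h,a_h}\mu^{\star,h}_{1:h}(x_h,a_h)\ell^t_h(x_h,a_h)\le 1$ per layer (it is a single-layer expected loss under $(\mu^{\star,h},\nu^t)$), plus a lower-order martingale deviation. Term (C) is the one genuinely requiring a high-probability argument, handled by the IX concentration inequality of~\citet{neu2015explore} adapted to the per-coordinate bonus $\gamma\mu^{\star,h}_{1:h}$; combined once more with the balancing identity (which bounds the total relevant ratio by $XA$) this yields $\mathrm{(C)}\le\tO(XA\iota/\gamma)$ with probability $1-\delta$, the $\iota=\log(3HXA/\delta)$ coming from a union bound over layers/infosets and a single exponential-supermartingale argument. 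Collecting the four contributions $XA\log A/\eta$, $\tO(\eta H^3T)$, $\cO(\gamma HT)$, and $\tO(XA\iota/\gamma)$ and plugging in $\eta=\sqrt{XA\log A/(H^3T)}$ and $\gamma=\sqrt{XA\iota/(HT)}$ makes every term $\cO(\sqrt{H^3XAT\iota})$, which is Theorem~\ref{theorem:ixomd-regret} and hence closes the corollary.
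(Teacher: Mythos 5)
Your proof of the corollary is exactly the paper's route: it invokes Theorem~\ref{theorem:ixomd-regret} for each player (noting the bound holds against arbitrary adaptive opponents), takes a union bound, and applies the regret-to-Nash conversion of Proposition~\ref{proposition:online-to-batch} with $\sqrt{XA}+\sqrt{YB}\le\sqrt{2(XA+YB)}$, which is all the paper does since it states the corollary ``follows immediately.'' Your accompanying sketch of Theorem~\ref{theorem:ixomd-regret} also mirrors the paper's actual decomposition into ${\rm BIAS}^1$, ${\rm BIAS}^2$, and ${\rm REGRET}$ (Lemmas~\ref{lem:bias_1}--\ref{lem:regret_sample}) with the same roles for the balancing identity and the IX bonus, so there is nothing to flag beyond the lower-order cross term $\eta H^2XA\iota/\gamma$ that your four-term accounting elides but which does not affect the final rate.
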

Theorem~\ref{theorem:ixomd-regret} and Corollary~\ref{corollary:ixomd-pac} are the first to achieve $\tO({\rm poly}(H)\cdot \sqrt{XAT})$ regret and $\tO({\rm poly}(H)\cdot (XA+YB)/\eps^2)$ sample complexity for learning an $\eps$-approximate NE for IIEFGs. Notably, the sample complexity scales only linearly in $X$, $Y$ and improves significantly over the best known $\tO((X^2A+Y^2B)/\eps^2))$ achieved by the IXOMD algorithm of~\citep{kozuno2021model} by a factor of $\max\set{X, Y}$.

\paragraph{Overview of techniques}
The proof of Theorem~\ref{theorem:ixomd-regret} (deferred to Appendix~\ref{appendix:proof-ixomd}) follows the usual analysis of OMD algorithms where the key is to bound a distance term and an algorithm-specific ``stability'' like term (cf. Lemma~\ref{lem:regret_sample} and its proof). Compared with existing OMD algorithms using the original dilated KL~\citep{kozuno2021model}, our balanced dilated KL creates a ``balancing effect'' that preserves the distance term (Lemma~\ref{lemma:bound-balanced-dilated-kl}) and shaves off an $X$ factor in the stability term (Lemma~\ref{lem:bound_second_order_sample} \&~\ref{lem:bound_Xi_sum}), which combine to yield a $\sqrt{X}$ improvement in the final regret bound. This $X$ factor improvement in the stability term is the technical crux of the proof, which we do by bounding a certain log-partition function $\log Z_1^t$ using an intricate second-order Taylor expansion argument in lack of a closed-form formula (Lemma~\ref{lem:bound_second_order_sample} \& Appendix~\ref{appendix:proof-regret_sample}).

\subsection{Lower bound}
\label{section:lower-bound}

We accompany our results with information-theoretic lower bounds showing that our $\tO(\sqrt{H^3XAT})$ regret and $\tO(H^3(XA+YB)/\eps^2)$ sample complexity are both near-optimal modulo ${\rm poly}(H)$ and log factors. 

\begin{theorem}[Lower bound for learning IIEFGs]
  \label{theorem:lower-bound}
  For any $A\ge 2$, $H\ge 1$, we have ($c>0$ is an absolute constant)
  \begin{enumerate}[wide,label=(\alph*), topsep=0pt, itemsep=1pt]
  \item (Regret lower bound) For any algorithm that controls the max player and plays policies $\set{\mu^t}_{t=1}^T$ where $T\ge XA$, there exists a game with $B=1$ on which
    \begin{align*}
      \E\brac{\Reg^T} = \E\brac{\max_{\mu^\dagger\in\Pi_{\max}} \sum_{t=1}^T \<\mu^t - \mu^\dagger, \l^t\>} \ge c\cdot \sqrt{XAT}.
    \end{align*}
  \item (PAC lower bound for learning NE) For any algorithm that controls both players and outputs a final policy $(\what{\mu}, \what{\nu})$ with $T$ episodes of play, and any $\eps\in(0,1]$, there exists a game on which the algorithm suffers from $\E\brac{\negap(\what{\mu}, \what{\nu})}\ge \eps$, unless
    \begin{align*}
      T \ge c\cdot (XA+YB)/\eps^2.
    \end{align*}
  \end{enumerate}
\end{theorem}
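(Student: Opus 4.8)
The plan is to reduce both parts of the statement to well-understood lower bounds for multi-armed bandits, using a single tree construction. Concretely, I would take the max-player's infoset tree to be the complete $A$-ary tree of depth $H$: layer $h$ has $A^{h-1}$ infosets, each infoset has $A$ actions, and each (infoset, action) pair leads deterministically to a distinct child infoset. Identifying each underlying state with its infoset (identity emission) and making all transitions deterministic, the only randomness in the game comes from the rewards, which I place entirely on the last layer (setting $\bar r_h \equiv 0$ for $h<H$). Under this construction each episode the max-player's policy $\mu^t$ induces exactly one root-to-leaf path, i.e. exactly one leaf-action pair $(x_H,a_H)$ is ``pulled'' and its noisy reward observed, while the intermediate zero rewards carry no information. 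Hence the whole problem collapses to a $K$-armed bandit with $K := A^H$ arms played for $T$ rounds, and a quick count gives $X = (A^H-1)/(A-1)$, so that $K \le XA \le 2K$, i.e. $K = \Theta(XA)$.

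For part (a) I would set $B=1$ so the min-player is vacuous and the regret $\Reg^T = \max_{\mu^\dagger} \sum_t \langle \mu^t - \mu^\dagger, \ell^t\rangle$ is exactly the single-agent regret against the best fixed leaf-action. Choosing the standard hard Bernoulli instance (all arms with mean loss $1/2$ except one needle with gap $\Delta \asymp \sqrt{K/T}$) and invoking the classical minimax bandit regret lower bound $\E[\Reg^T] \ge c\sqrt{KT}$, valid in the regime $T \ge K$, yields $\E[\Reg^T] \ge c\sqrt{KT} \ge c'\sqrt{XAT}$; the hypothesis $T \ge XA$ is precisely what places us in the regime $T \ge \Theta(K)$.

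For part (b) I would first prove a one-sided bound. Keep the same max-challenge construction but now pad the min-player with $Y$ irrelevant infosets and $B$ dummy actions that affect neither transitions nor rewards; then $\min_{\nu^\dagger} V^{\hat\mu,\nu^\dagger} = V^{\hat\mu,\hat\nu}$ for every $\hat\nu$ and the value is independent of the min-player, so $\negap(\hat\mu,\hat\nu)$ equals the suboptimality $\max_{\mu^\dagger} V^{\mu^\dagger} - V^{\hat\mu}$ of $\hat\mu$. Thus outputting an $\eps$-NE is equivalent to outputting an $\eps$-optimal arm/policy, and the standard PAC bandit lower bound (a family of instances, instance $i$ having arm $i$ as the unique $\Theta(\eps)$-better arm) forces $T \ge c\cdot K/\eps^2 = c'\cdot XA/\eps^2$. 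By the symmetric construction (swap the roles of the two players, $A=1$, $X$ dummy max-infosets) one obtains a game requiring $T \ge c'\cdot YB/\eps^2$. Taking whichever of the two games is harder gives $T \ge c'\max\{XA,YB\}/\eps^2 \ge (c'/2)(XA+YB)/\eps^2$, which is the claim up to the constant; alternatively one can glue the two challenges as two subtrees reached with probability $1/2$ from a chance root, so that the NE gap is the average of the two sub-gaps and both must be driven below $2\eps$, giving the sum $XA+YB$ directly in a single game.

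The main obstacle is the information-theoretic core shared by both parts: controlling, via a change-of-measure argument, the divergence between instances that differ only in the reward distribution at a single leaf, and then summing the per-arm divergences against the learner's (data-dependent) pull counts using the divergence decomposition together with Pinsker / Bretagnolle--Huber. I expect the only genuinely new bookkeeping relative to textbook bandit lower bounds to be (i) verifying that the tree structure and perfect-recall constraints grant the learner no information beyond the single observed leaf-reward per episode (handled by the deterministic transitions and zero intermediate rewards), and (ii) the arithmetic $K=\Theta(XA)$ and the combination step in part (b); both are routine once the reduction is in place.
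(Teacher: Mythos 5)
Your proposal is correct and follows essentially the same route as the paper: the identical complete $A$-ary tree construction with deterministic transitions, identity emissions, and Bernoulli rewards only at the last layer, reducing the game to an $A^H$-armed bandit with $A^H=\Theta(XA)$, then invoking the standard minimax regret and PAC bandit lower bounds for part (a) and part (b) respectively, and combining the max- and min-player constructions via $\max\{XA,YB\}\ge (XA+YB)/2$ exactly as the paper does. The only cosmetic differences are that you spell out the dummy-padding of the inactive player and sketch the divergence-decomposition machinery that the paper simply cites as a black-box bandit lower bound.
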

The proof of Theorem~\ref{theorem:lower-bound} (deferred to Appendix~\ref{appendix:proof-lower-bound}) constructs a hard instance with $X=\Theta(X_H)=\Theta(A^{H-1})$ that is equivalent to $A^H$-armed bandit problems, and follows by a reduction to standard bandit lower bounds\footnote{Alternatively, a lower bound like Theorem~\ref{theorem:lower-bound} can also be shown by expressing stochastic contextual bandits (for the max-player) with $X$ contexts and $A$ actions as IIEFGs, and using the $\Omega(\sqrt{XAT})$ regret lower bound for stochastic contextual bandits e.g.~\citep[Chapter 19.1]{lattimore2020bandit}. This provides hard IIEFG instances with $H=1$ and any value of $X,A\ge 1$.}.  We remark that our lower bounds are tight in $X$ but did not explicitly optimize the $H$ dependence (which is typically lower-order compared to $X$).

\section{Counterfactual Regret Minimization}
\label{section:cfr}

Counterfactual Regret Minimization (CFR)~\citep{zinkevich2007regret} is another widely used class of algorithms for solving IIEFGs. In this section, we present a new variant Balanced CFR that also achieves sharp sample complexity guarantees.

Different from OMD, CFR-type algorithms maintain a ``local'' regret minimizer at each infoset $x_h$ that aims to minimize the \emph{immediate counterfactual regret} at that infoset:
\begin{align*}
  \Reg_{h}^{\imm, T}(x_{h})\defeq \max_{\mu\in\Delta(\cA)} \sum_{t=1}^T \<\mu_h^t(\cdot|x_h) - \mu(\cdot), L_h^t(x_h, \cdot)\>,
\end{align*}
where $L_h^t(x_h, a_h)$ is the \emph{counterfactual loss function} 
\begin{align}
  \label{equation:counterfactual-loss}
  \begin{aligned}
    \L_h^t(x_h, a_h) \defeq \l_h^t(x_h, a_h) + \sum_{h'=h+1}^H \sum_{(x_{h'}, a_{h'})\in \cC_{h'}(x_h, a_h)\times \cA} \mu^t_{(h+1):h'}(x_{h'}, a_{h'}) \l_{h'}^t(x_{h'}, a_{h'}).
  \end{aligned}
\end{align}
Controlling all the immediate counterfactual regrets $\Reg_{h}^{\imm, T}(x_{h})$ will also control the overall regret of the game $\Reg^T$, as guaranteed by the \emph{counterfactual regret decomposition}~\citep{zinkevich2007regret} (see also our Lemma~\ref{lemma:cfr-regret-decomposition}).

\subsection{Algorithm description}

Our Balanced CFR algorithm, described in Algorithm~\ref{algorithm:cfr}, can be seen as an instantiation of the Monte-Carlo CFR (MCCFR) framework~\citep{lanctot2009monte} that incorporates the balanced policies in its sampling procedure. Algorithm~\ref{algorithm:cfr} requires regret minimization algorithms $R_{x_h}$ for each $x_h$ as its input, and performs the following steps in each round:
\begin{itemize}[wide, topsep=0pt, itemsep=0pt]
\item Line~\ref{line:cfr-policy-begin}-\ref{line:cfr-loss-estimator} (Sampling): Play $H$ episodes using policies $\set{\mu^{\th}}_{h\in[H]}$, where each $\mu^{\th}=(\mu^{\star, h}_{1:h}\mu^t_{h+1:H})$ is a \emph{concatenation} of the balanced exploration policy $\mu^{\star, h}$ with the current maintained policy $\mu^t$ over time steps.
Then, compute $\wt{\L}_h^t(x_h, a_h)$ by~\eqref{equation:L-estimator} that are importance-weighted unbiased estimators of the true counterfactual loss $L_h^t(x_h, a_h)$ in \eqref{equation:counterfactual-loss}.
\item Line~\ref{line:cfr-md} (Update regret minimizers): For each $(h, x_h)$, send the loss estimators $\{\wt{\L}_h^t(x_h, a)\}_{a\in\cA}$ to the local regret minimizer $R_{x_h}$, and obtain the updated policy $\mu^{t+1}_h(\cdot|x_h)$. 
\end{itemize}

Similar as existing CFR-type algorithms, Balanced CFR has the flexibility of allowing different choices of regret minimization algorithms as $R_{x_h}$. We will consider two concrete instantiations of $R_{x_h}$ as Hedge and Regret Matching in the following subsection.

\subsection{Theoretical guarantee}

To obtain a sharp guarantee for Balanced CFR, we first instantiate $R_{x_h}$ as the Hedge algorithm (a.k.a. Exponential Weights, or mirror descent with the entropic regularizer; cf. Algorithm~\ref{algorithm:md}). Specifically, we let each $R_{x_h}$ be the Hedge algorithm with learning rate $\eta\mu^{\star, h}_{1:h}(x_h, a)$\footnote{Note that this quantity depends on $x_h$ but not $a$.}. With this choice, Line~\ref{line:cfr-md} of Algorithm~\ref{algorithm:cfr} takes the following explicit form:
\begin{align}
  \label{equation:cfr-hedge}
  \mu_h^{t+1}(a|x_h) \propto_a \mu_{h}^t(a|x_{h}) \cdot e^{-\eta \mu^{\star, h}_{1:h}(x_{h}, a) \cdot \wt{\L}^t_{h}(x_{h}, a)}.
\end{align}
We are now ready to present the theoretical guarantees for the Balanced CFR algorithm.
\begin{theorem}[``Regret'' bound for Balanced CFR]
  \label{theorem:cfr}
  Suppose the max player plays Algorithm~\ref{algorithm:cfr} where each $R_{x_h}$ is instantiated as the Hedge algorithm~\eqref{equation:cfr-hedge} with $\eta=\sqrt{XA\iota/(H^3T)}$. Then, the policies $\mu^t$ achieve the following ``regret'' bound with probability at least $1-\delta$: 
  \begin{align*}
    \wt{\Reg}^T \defeq \max_{\mu^\dagger\in\Pi_{\max}} \sum_{t=1}^T \<\mu^t - \mu^\dagger, \ell^t \> \le \cO(\sqrt{H^3XAT\iota}),
  \end{align*}
  where $\iota=\log(10XA/\delta)$ is a log factor. 
\end{theorem}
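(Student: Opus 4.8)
The plan is to follow the standard two-level structure of CFR analyses: first reduce the global ``regret'' $\wt{\Reg}^T$ (measured against the \emph{true} loss $\ell^t$) to a weighted sum of \emph{local} immediate counterfactual regrets via the counterfactual regret decomposition (Lemma~\ref{lemma:cfr-regret-decomposition}), giving
\[
  \wt{\Reg}^T \le \max_{\mu^\dagger\in\Pi_{\max}}\sum_{h=1}^H\sum_{x_h\in\cX_h}\mu^\dagger_{1:h}(x_h)\,\Reg_h^{\imm,T}(x_h),
\]
where $\mu^\dagger_{1:h}(x_h)\defeq\sum_a\mu^\dagger_{1:h}(x_h,a)$ is the reaching probability of $x_h$ and each $\Reg_h^{\imm,T}(x_h)$ is defined through the \emph{true} counterfactual loss $L_h^t$. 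Since each local minimizer $R_{x_h}$ is only ever fed the unbiased estimators $\wt{L}_h^t$, the next step is to split each local regret as
\[
  \Reg_h^{\imm,T}(x_h)\le \underbrace{\max_{\phi}\sum_t\<\mu_h^t(\cdot|x_h)-\phi,\wt{L}_h^t(x_h,\cdot)\>}_{\text{Hedge regret on fed losses}}+\underbrace{\max_{\phi}\sum_t\<\mu_h^t(\cdot|x_h)-\phi,\,L_h^t(x_h,\cdot)-\wt{L}_h^t(x_h,\cdot)\>}_{\text{estimation error}}.
\]

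Next I would bound the first (Hedge) piece. Because $R_{x_h}$ runs Hedge~\eqref{equation:cfr-hedge} with the $a$-independent learning rate $\eta\mu^{\star,h}_{1:h}(x_h,a)$, the textbook exponential-weights inequality yields a first-order term $\log A/(\eta\mu^{\star,h}_{1:h}(x_h,a))$ plus a second-order term $\eta\mu^{\star,h}_{1:h}(x_h,a)\sum_t\sum_a\mu_h^t(a|x_h)\,\wt{L}_h^t(x_h,a)^2$. The decisive step is to form the $\mu^\dagger_{1:h}(x_h)$-weighted sum over all infosets and invoke the balancing property (Lemma~\ref{lemma:balancing}): since $\mu^{\star,h}_{1:h}(x_h,a)=\mu^{\star,h}_{1:h}(x_h)/A$, this property collapses $\sum_{h,x_h}\mu^\dagger_{1:h}(x_h)/\mu^{\star,h}_{1:h}(x_h)=X$ exactly (and independently of $\mu^\dagger$), so the aggregated first-order contribution is $\cO(XA\log A/\eta)$. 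This is precisely where the balanced exploration policy removes the spurious extra factor of $X$ that a uniform or vanilla-dilated weighting would incur.

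For the second-order and estimation-error pieces I would exploit that the importance weight built into $\wt{L}_h^t$ is exactly $1/\mu^{\star,h}_{1:h}(x_h,a)$, so that $\mu^{\star,h}_{1:h}(x_h,a)\cdot\E[\wt{L}_h^t(x_h,a)^2]=\cO(\mathrm{poly}(H))$; multiplying by the Hedge weight and summing against $\mu^\dagger_{1:h}(x_h)$ (whose layerwise mass is $1$) gives $\cO(\eta H^3 T)$ with no extra $X$ or $A$. The estimation-error term is a sum of martingale differences (by conditional unbiasedness of $\wt{L}_h^t$ given the history), and since its linear comparator is maximized at a vertex $e_a$, a union bound over the $A$ actions and $X$ infosets reduces it to scalar concentration. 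Combining the three contributions and optimizing $\eta=\sqrt{XA\iota/(H^3T)}$ then lands at $\cO(\sqrt{H^3XAT\iota})$.

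I expect the main obstacle to be the high-probability control of the estimation-error (and the second-order) term. The unbiased estimator $\wt{L}_h^t(x_h,a)$ can be as large as $\cO(H)/\mu^{\star,h}_{1:h}(x_h,a)$, which blows up wherever the balanced reaching probability is small, so a crude Azuma/Hoeffding bound requiring uniformly bounded increments is hopeless. I would instead use a Bernstein/Freedman-type inequality in which the per-step conditional variance is again tamed by the balancing identity $\mu^{\star,h}_{1:h}(x_h,a)\,\E[\wt{L}_h^t(x_h,a)^2]=\cO(\mathrm{poly}(H))$, and control the sum of these deviations across the game tree via Cauchy--Schwarz together with the same balancing bound. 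Keeping this concentration argument from re-introducing an extra factor of $X$ (or $A$), while still paying only logarithmic $\iota$ overhead, is the delicate part of the proof.
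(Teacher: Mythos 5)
Your proposal follows essentially the same route as the paper's proof: the counterfactual regret decomposition (Lemma~\ref{lemma:cfr-regret-decomposition}) with reach-probability weights, a split into Hedge regret on the fed estimators plus bias/estimation-error terms (the paper's ${\rm REGRET}_h$, ${\rm BIAS}^1_h$, ${\rm BIAS}^2_h$), the balancing property (Lemma~\ref{lemma:balancing}) to collapse both the $\log A/(\eta\,\mu^{\star,h}_{1:h})$ terms and the conditional-variance terms to $X_hA$, and Freedman's inequality (Lemma~\ref{lemma:freedman}) precisely where you correctly anticipate that a crude Azuma bound fails due to the unbounded importance weights. The only detail you leave implicit is that the resulting lower-order terms (e.g. $HXA\iota$ and $\sqrt{HX^3A^3\iota^3/T}$) are absorbed in the small-$T$ regime via the trivial bound $\wt{\Reg}^T \le HT$, which is how the paper's final step arrives at the clean $\cO(\sqrt{H^3XAT\iota})$.
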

The $\tO(\sqrt{H^3XAT})$ ``regret'' achieved by Balanced CFR matches that of Balanced OMD. However, we emphasize that the quantity $\wt{\Reg}^T$ is \emph{not strictly speaking a regret}, as it measures performance of the policy $\set{\mu^t}$ \emph{maintained} in the Balanced CFR algorithm, not the sampling policy $\mu^{\th}$ that the Balanced CFR algorithm have \emph{actually played}. Nevertheless, we remark that such a form of ``regret'' bound is the common type of guarantee for all existing MCCFR type algorithms~\citep{lanctot2009monte,farina2020stochastic}.

\begin{algorithm}[t]
  \small
  \caption{Balanced CFR (max-player)}
  \label{algorithm:cfr}
  \begin{algorithmic}[1]
    \REQUIRE Regret minimization algorithm $R_{x_h}$ for all $(h, x_h)$.
    \STATE Initialize policy $\mu_{h}^1(a_{h}|x_{h})\setto 1/A$ for all $(h, x_h, a_h)$.
    \FOR{round $t=1,\dots,T$}
    \FOR{$h=1,\dots,H$}
    \STATE Set policy $\mu^{\th}\setto (\mu^{\star, h}_{1:h}\mu^t_{h+1:H})$. \alglinelabel{line:cfr-policy-begin}
    \STATE Play an episode using $\mu^{\th}\times \nu^t$, observe a trajectory
    \begin{align*}
      (x_{1}^{\th}, a_{1}^{\th}, r_{1}^{\th}, \cdots, x_{H}^{\th}, a_{H}^{\th}, r_{H}^{\th}).
    \end{align*} \alglinelabel{line:cfr-policy-end}
    \STATE Compute loss estimators for all $(h, x_h, a_h)$:
    \begin{equation}
      \label{equation:L-estimator}
      \begin{aligned}
        \wt{\L}^t_{h}(x_{h}, a_{h}) \defeq \frac{ \indic{(x_{h}^\th, a_{h}^\th) = (x_{h}, a_{h})} }{ \mu^{\star, h}_{1:h}(x_{h}, a_{h}) } \paren{ H-h+1 - \sum_{h'=h}^H r_{h'}^\th}.
      \end{aligned}
    \end{equation}
    \alglinelabel{line:cfr-loss-estimator}
    \ENDFOR
    \FOR{all $h\in[H]$ and $x_{h}\in\mc{X}_{h}$}
    \STATE Update the regret minimizer at $x_h$ and obtain policy:
    \begin{align}
      \label{equation:cfr-update}
      \mu^{t+1}_h(\cdot|x_h) \setto R_{x_h}.\update( \{\wt{\L}^t_{h}(x_{h}, a)\}_{a\in\cA} ).
    \end{align}
    \vspace{-1em}
    \alglinelabel{line:cfr-md}
    \ENDFOR
    \ENDFOR
  \end{algorithmic}
\end{algorithm}

\paragraph{Self-play of Balanced CFR}
Balanced CFR can also be turned into a PAC algorithm for learning $\eps$-NE, by letting the two players play Algorithm~\ref{algorithm:cfr} against each other for $T$ rounds of self-play using the following protocol: Within each round, the max player plays policies $\set{\mu^{\th}}_{h=1}^H$ while the min player plays the fixed policy $\nu^t$; then symmetrically the min player plays $\set{\nu^{\th}}_{h=1}^H$ while the max player plays the fixed $\mu^t$. Overall, each round plays $2H$ episodes.

Theorem~\ref{theorem:cfr} directly implies the following corollary for the above self-play algorithm on learning $\eps$-NE, by the regret-to-Nash conversion (Proposition~\ref{proposition:online-to-batch}).
\begin{corollary}[Learning NE using Balanced CFR]
  \label{corollary:cfr-pac}
  Let both players play Algorithm~\ref{algorithm:cfr} in a self-play fashion against each other for $T$ rounds, where each $R_{x_h}$ is instantiated as the Hedge algorithm~\eqref{equation:cfr-hedge} with $\eta$ specified in Theorem~\ref{theorem:cfr}. Then, for any $\eps>0$, the average policy $(\wb{\mu}, \wb{\nu})=(\frac{1}{T}\sum_{t=1}^T\mu^t, \frac{1}{T}\sum_{t=1}^T\nu^t)$ achieves $\negap(\wb{\mu}, \wb{\nu})\le \eps$ with probability at least $1-\delta$, as long as
  \begin{align*}
    T \ge \cO(H^3(XA+YB) \iota/\eps^2),
  \end{align*}
  where $\iota\defeq \log(10(XA+YB)/\delta)$ is a log factor. The total amount of episodes played is at most
  \begin{align*}
    2H\cdot T = \cO(H^4(XA+YB) \iota/\eps^2).
  \end{align*}
\end{corollary}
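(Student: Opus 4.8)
The plan is to obtain this corollary directly from the ``regret'' bound of Theorem~\ref{theorem:cfr} combined with the regret-to-Nash conversion of Proposition~\ref{proposition:online-to-batch}, so that the argument is essentially bookkeeping on top of the work already done in Theorem~\ref{theorem:cfr}. First I would check that the self-play protocol decouples the two players exactly in the way Theorem~\ref{theorem:cfr} requires: within each round the max player plays the sampling policies $\set{\mu^{\th}}_{h=1}^H$ against a \emph{fixed} min-player policy $\nu^t$, and symmetrically the min player plays $\set{\nu^{\th}}_{h=1}^H$ against the fixed $\mu^t$. From the max player's viewpoint $\nu^t$ is an arbitrary (effectively adversarial, since it is generated by the min player's own updates) opponent, which is precisely the setting covered by Theorem~\ref{theorem:cfr}. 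Applying that theorem with failure probability $\delta/2$ therefore gives, with probability at least $1-\delta/2$,
\[
  \wt{\Reg}_{\max}^T = \max_{\mu^\dagger\in\Pi_{\max}} \sum_{t=1}^T \<\mu^t - \mu^\dagger, \ell^t\> \le \cO\paren{\sqrt{H^3 XA T \iota}},
\]
and symmetrically $\wt{\Reg}_{\min}^T \le \cO(\sqrt{H^3 YB T \iota})$ with probability at least $1-\delta/2$.

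The one point that I expect to require the most care, and the conceptual heart of the reduction, is the identification of the maintained-policy ``regret'' $\wt{\Reg}^T$ with the regret that feeds into Proposition~\ref{proposition:online-to-batch}. By the linear representation of the regret (the conversion to online linear regret minimization), $\<\mu^t - \mu^\dagger, \ell^t\> = V^{\mu^\dagger, \nu^t} - V^{\mu^t, \nu^t}$, so $\wt{\Reg}_{\max}^T$ coincides exactly with the quantity $\Reg_{\max}^T$ of Proposition~\ref{proposition:online-to-batch}, which is defined on the \emph{maintained} policies $\mu^t, \nu^t$ rather than on the sampling policies $\mu^{\th}$ that are actually played. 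This is the subtle step: it is crucial that Theorem~\ref{theorem:cfr} controls the maintained policies, because those are the ones being averaged into $(\wb{\mu}, \wb{\nu})$, and the same identification applies to $\wt{\Reg}_{\min}^T$. Once this is in place, no further analysis of the sampled trajectories is needed.

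With both regret bounds established, a union bound gives that they hold simultaneously with probability at least $1-\delta$. Proposition~\ref{proposition:online-to-batch} then yields the equality $\negap(\wb{\mu}, \wb{\nu}) = (\Reg_{\max}^T + \Reg_{\min}^T)/T$, and using $\sqrt{XA} + \sqrt{YB} \le \sqrt{2(XA+YB)}$ we obtain
\[
  \negap(\wb{\mu}, \wb{\nu}) \le \cO\paren{\sqrt{H^3 (XA+YB)\iota/T}}.
\]
Requiring the right-hand side to be at most $\eps$ and solving for $T$ gives the stated threshold $T \ge \cO(H^3(XA+YB)\iota/\eps^2)$, where the log factor $\iota = \log(10(XA+YB)/\delta)$ absorbs the two per-player factors $\log(20XA/\delta)$ and $\log(20YB/\delta)$ up to constants. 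Finally, since each self-play round consists of $H$ episodes for the max player's sampling phase plus $H$ for the min player's, the total number of episodes played is $2H\cdot T = \cO(H^4(XA+YB)\iota/\eps^2)$, which completes the proof.
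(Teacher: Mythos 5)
Your proposal is correct and follows exactly the paper's route: the paper proves this corollary by applying Theorem~\ref{theorem:cfr} to each player (noting that $\nu^t$ is $\cF_{t-1}$-measurable in the self-play protocol), invoking the regret-to-Nash conversion of Proposition~\ref{proposition:online-to-batch} on the maintained policies, and counting $2H$ episodes per round. Your explicit emphasis that $\wt{\Reg}^T$ concerns the \emph{maintained} policies $\mu^t$ (which are precisely the ones averaged into $\wb{\mu}$, so the sampling policies $\mu^{\th}$ never enter the conversion) is exactly the right subtlety, and your union bound and log-factor bookkeeping match the stated constants.
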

Corollary~\ref{corollary:cfr-pac} shows that Balanced CFR requires $\tO(H^4(XA+YB)/\eps^2)$ episodes for learning an $\eps$-NE, which is $H$ times larger than Balanced OMD but otherwise also near-optimal with respect to the lower bound (Theorem~\ref{theorem:lower-bound}) modulo an $\tO({\rm poly}(H))$ factor. 
This improves significantly over the current best sample complexity achieved by CFR-type algorithms, which are either ${\rm poly}(X, Y, A, B)/\eps^4$~\citep{farina2021model}, or potentially ${\rm poly}(X, Y, A, B)/\eps^2$ using the MCCFR framework of~\citep{lanctot2009monte,farina2020stochastic} but without any known such instantiation.

\paragraph{Overview of techniques}
The proof of Theorem~\ref{theorem:cfr} (deferred to Appendix~\ref{appendix:proof-theorem-cfr}) follows the usual analysis pipeline for MCCFR algorithms that decomposes the overall regret $\wt{\Reg}_T$ into combinations of immediate counterfactual regrets $\Reg^{\imm, T}_h(x_h)$, and bounds each by regret bounds (of the regret minimizer $R_{x_h}$) plus concentration terms. We adopt a sharp application of this pipeline by using a tight counterfactual regret decomposition (Lemma~\ref{lemma:cfr-regret-decomposition}), as well as using the balancing property of $\mu^{\star, h}$ to bouund both the regret and concentration terms (Lemma~\ref{lemma:cfr-bias1}-\ref{lemma:cfr-regret}). 
We remark that the way Algorithm~\ref{algorithm:cfr} uses the balanced policy $\mu^{\star, h}$ in both the sampling step (by \emph{concatenating} with the current policy $\mu^t$) and as the learning rate (for the Hedge regret minimizer $R_{x_h}$~\eqref{equation:cfr-hedge}) is novel, and required for the above sharp analysis.

We remark that our techniques can also be used for analyzing CFR type algorithms in the full-feedback setting. Concretely, we provide a sharp $\cO(\sqrt{H^3\lone{\Pi_{\max}}\log A\cdot T})$ regret bound for a ``vanilla" CFR algorithm in the full-feedback setting, matching the result of~\citep[Lemma 2]{zhou2020lazy}. For completeness, we provide a statement and proof of this result under our notation in Appendix~\ref{appendix:full-feedback}.

\paragraph{Balanced CFR with Regret Matching}
Many real-world applications of CFR-type algorithms use Regret Matching~\citep{hart2000simple} instead of Hedge as the regret minimizer, due to its practical advantages such as learning-rate free and pruning effects~\citep{tammelin2014solving,burch2019revisiting}. In Appendix~\ref{appendix:cfr-rm}, we show that Balanced CFR instantiated with Regret Matching enjoys $\tO(\sqrt{H^3XA^2T})$ ``regret'' and $\tO((H^4(XA^2+YB^2)/\eps^2)$ sample complexity for learning $\eps$-NE (Theorem~\ref{theorem:cfr-rm} \& Corollary~\ref{corollary:cfr-rm-pac}). The sample complexity is also sharp in $X,Y$, though is $A$ (or $B$) times worse than the Hedge version, which is expected due to the difference between the regret minimizers.

\section{Extension to multi-player games}
\label{section:multi-player}

In this section, we show that our Balanced OMD and Balanced CFR generalize directly to learning Coarse Correlated Equilibria in multi-player general-sum games.

We consider an $m$-player general-sum IIEFG with $X_i$ infosets and $A_i$ actions for the $i$-th player. Let $V_i$ denote the game value (expected cumulative reward) for the $i$-th player. (More formal definitions can be found in Appendix~\ref{appendix:multi-player-definition}.)

\begin{definition}[NFCCE]
A joint policy $\pi$ (for all players) is an $\epsilon$-approximate Normal-Form Coarse Correlated Equilibrium (NFCCE) if
\begin{align*}
  \ccegap(\pi) \defeq \max_{i \in [m]}\Big( \max_{\pi_i^\dagger\in\Pi_{i}}V_i^{\pi_i^\dagger, \pi_{-i}} - V_i^{\pi} \Big) \le \epsilon,
\end{align*}
i.e., no player can gain more than $\eps$ in her own reward by deviating from $\pi$ and playing some other policy on her own.
\end{definition}

We remark that the NFCCE differs from other types of Coarse Correlated Equilibria in the literature such as the EFCCE\footnote{Such distinctions only exist for (Coarse) Correlated Equilibria and not for the NE studied in the previous sections.}~\citep{farina2020coarse}.
Using the known connection between no-regret and NFCCE~\citep{celli2019computing}, we can learn an $\eps$-NFCCE in an multi-player IIEFG sample-efficiently by letting all players run either Balanced CFR or Balanced OMD in a self-play fashion. In the following, we let $\set{\pi_i^t}_{t=1}^T$ denote the policies maintained by player $i$, and $\pi^t\defeq \prod_{i=1}^m \pi_i^t$ denote their joint policy in the $t$-th round.

\begin{theorem}[Learning NFCCE sample-efficiently using Balanced OMD / Balanced CFR]
  \label{theorem:nfcce}
  We have
  \begin{enumerate}[label=(\alph*), wide, topsep=0pt, itemsep=0pt]
  \item (Balanced OMD) Let all players play Algorithm~\ref{alg:IXOMD} for $T$ rounds with learning rate $\eta=\sqrt{X_iA_i\log A_i/(H^3T)}$ and IX parameter $\gamma=\sqrt{X_iA_i\iota/(HT)}$ for the $i$-th player. Then for any $\eps>0$, the average policy $\wb{\pi}$ uniformly sampled from $\{\pi^t\}_{t=1}^T $ satisfies $\ccegap(\wb{\pi})\le \eps$ with probability at least $1-\delta$, as long as the number of episodes
    \begin{align*}
      T \ge \cO \Big(  H^3\iota \Big(\max_{i\in[m]}X_iA_i\Big)/\varepsilon ^2 \Big),
    \end{align*}
    where $\iota\defeq \log(3H\sum_{i=1}^m{X_iA_i}/\delta )$ is a log factor.
  \item (Balanced CFR) Let all players play Algorithm~\ref{algorithm:cfr} in the same self-play fashion as Corollary~\ref{corollary:cfr-pac} for $T$ rounds, with $R_{x_h}$ instantiated as Hedge~\eqref{equation:cfr-hedge} with learning rate $\eta=\sqrt{X_iA_i \iota/(H^3T)}$ for the $i$-th player. Then for any $\eps>0$, 
the average policy $\wb{\pi}$ uniformly sampled from $\{\pi^t\}_{t=1}^T $  satisfies $\ccegap(\wb{\pi})\le \eps$ with probability at least $1-\delta$, as long as $T \ge \cO\paren{  H^3\iota (\max _{i\in[m]}X_iA_i)/\varepsilon ^2}$. The total number of episodes played is at most
    \begin{align*}
      mH\cdot T = \cO\Big( H^4m\iota \cdot \Big(\max_{i\in[m]} X_iA_i\Big)/\eps^2\Big).
    \end{align*}
    where $\iota\defeq \log(10\sum_{i=1}^m{X_iA_i}/\delta )$ is a log factor.
  \end{enumerate}
\end{theorem}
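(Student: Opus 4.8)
The plan is to reduce both parts to the single-player guarantees already established in Theorem~\ref{theorem:ixomd-regret} and Theorem~\ref{theorem:cfr}, via the standard no-regret-to-NFCCE conversion. The first step is to rewrite $\ccegap(\wb\pi)$ in terms of per-player regrets. Since $\wb\pi$ is the uniform mixture over the product policies $\{\pi^t\}_{t=1}^T$ (i.e.\ sampling an index $t$ and then playing $\pi^t$), every value function is linear in the mixing distribution, so $V_i^{\wb\pi} = \frac1T\sum_{t=1}^T V_i^{\pi^t}$, and for any fixed deviation $\pi_i^\dagger\in\Pi_i$ (committed before the index $t$ is drawn, as is appropriate for a \emph{coarse} equilibrium) $V_i^{\pi_i^\dagger, \wb\pi_{-i}} = \frac1T\sum_{t=1}^T V_i^{\pi_i^\dagger, \pi_{-i}^t}$. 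Subtracting and maximizing over $\pi_i^\dagger$ gives
\begin{align*}
  \ccegap(\wb\pi) = \max_{i\in[m]} \frac{1}{T}\max_{\pi_i^\dagger\in\Pi_i} \sum_{t=1}^T \paren{V_i^{\pi_i^\dagger, \pi_{-i}^t} - V_i^{\pi^t}} = \max_{i\in[m]} \frac{\Reg_i^T}{T},
\end{align*}
where $\Reg_i^T$ is exactly player $i$'s external regret when the remaining players' policies $\{\pi_{-i}^t\}$ are treated as an arbitrary opponent sequence. This is the multi-player analogue of Proposition~\ref{proposition:online-to-batch}.

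The second step is to observe that, from player $i$'s viewpoint, the interaction is an instance of the same adversarial online linear optimization over $\Pi_i$ analyzed in Sections~\ref{section:ixomd} and~\ref{section:cfr}. The reaching-probability decomposition~\eqref{equation:reaching-probability-decomposition} generalizes so that player $i$'s own sequence-form policy factors out, and the joint effect of the transitions and the other players' policies $\pi_{-i}^t$ is absorbed into a per-round loss $\ell_i^t$ of exactly the form~\eqref{equation:loss} (with $1-\wb r_{i,h}$ in place of $1-\wb r_h$). Consequently $V_i^{\cdot,\pi_{-i}^t}$ is bilinear in player $i$'s policy, the loss lies in the same range, and $\Reg_i^T = \max_{\pi_i^\dagger}\sum_t \<\pi_i^t - \pi_i^\dagger, \ell_i^t\>$. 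Because the bounds in Theorem~\ref{theorem:ixomd-regret} and Theorem~\ref{theorem:cfr} are proved against an arbitrary opponent and depend only on player $i$'s own tree structure, they apply verbatim with $X\mapsto X_i$, $A\mapsto A_i$, giving $\Reg_i^T \le \cO(\sqrt{H^3 X_i A_i T\iota})$ for Balanced OMD and the analogous $\wt\Reg_i^T$ for Balanced CFR on the \emph{maintained} policies---which are precisely the $\pi_i^t$ entering $\wb\pi$, so the CFR ``regret'' is exactly the quantity needed in the first step.

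The third step assembles the bound: running the single-player guarantee for each player with failure probability $\delta/m$ and union-bounding over $i\in[m]$ yields, with probability at least $1-\delta$, $\max_i \Reg_i^T \le \cO(\sqrt{H^3(\max_i X_iA_i)T\iota})$, the $\log m$ cost being absorbed into the stated $\iota$. Combined with the first step this gives $\ccegap(\wb\pi)\le \cO(\sqrt{H^3(\max_i X_iA_i)\iota/T})$; setting the right-hand side at most $\eps$ produces $T\ge\cO(H^3\iota(\max_i X_iA_i)/\eps^2)$. For Balanced CFR, each round runs $H$ exploration episodes per player, hence $mH$ episodes per round, which multiplies $T$ to give the total $mHT = \cO(H^4 m\iota(\max_i X_iA_i)/\eps^2)$.

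The step I expect to be the main obstacle is the verification in the second paragraph: one must confirm that the general-sum, $m$-player reaching-probability decomposition genuinely preserves the per-player bilinear structure, the loss definition, and the loss \emph{range} relied upon in the proofs of Theorems~\ref{theorem:ixomd-regret} and~\ref{theorem:cfr}, so that those analyses transfer with no change beyond relabeling $(X,A)\mapsto(X_i,A_i)$. Since neither the zero-sum property nor the two-player restriction is used anywhere in the single-player regret arguments (they only see player $i$'s own tree and an adversarial loss sequence), this verification is essentially mechanical, and once it is in place the remainder is the routine conversion and union bound above.
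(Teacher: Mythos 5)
Your proposal is correct and follows essentially the same route as the paper: the paper proves exactly your first step as Proposition~\ref{proposition:online-to-batch-cce} ($\ccegap(\wb{\pi}) = \max_{i\in[m]}\Reg_i^T/T$, via linearity of $V_i$ in the uniform mixture), then applies Theorems~\ref{theorem:ixomd-regret} and~\ref{theorem:cfr} per player by ``modeling all other players as a single opponent,'' with the union bound over players absorbed into $\iota$. Your second paragraph---verifying that the single-player analyses use only player $i$'s tree, the bilinear loss structure, and the loss range, and that the CFR ``regret'' is on the maintained policies $\pi_i^t$ that actually enter $\wb{\pi}$---is in fact spelled out more carefully than in the paper's terse proof, which simply asserts this transfer is straightforward.
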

For both algorithms, the number of episodes for learning an $\eps$-NFCCE scales linearly with $\max_{i\in[m]} X_iA_i$ (with Balanced CFR having an additional $Hm$ factor than Balanced OMD), compared to the best existing $\max_{i\in[m]} X_i^2A_i$ dependence (e.g. by self-playing IXOMD~\citep{kozuno2021model}). The proof of Theorem~\ref{theorem:nfcce} is in Appendix~\ref{appendix:proof-nfcce}.

\section{Conclusion}
This paper presents the first line of algorithms for learning an $\eps$-NE in two-player zero-sum IIEFGs with near-optimal $\tO((XA+YB)/\eps^2)$ sample complexity. We achieve this by new variants of both OMD and CFR type algorithms that incorporate suitable balanced exploration policies. We believe our work opens up many interesting future directions, such as empirical verification of our balanced algorithms, or how to learn IIEFGs with large state/action spaces efficiently using function approximation. 
\section*{Acknowledgment}
The authors would like to thank Ziang Song for the helpful discussions. TY is partially supported by NSF CCF-2112665 (TILOS AI Research Institute).

\bibliographystyle{plainnat}
\bibliography{bib}

\begin{thebibliography}{66}
\providecommand{\natexlab}[1]{#1}
\providecommand{\url}[1]{\texttt{#1}}
\expandafter\ifx\csname urlstyle\endcsname\relax
  \providecommand{\doi}[1]{doi: #1}\else
  \providecommand{\doi}{doi: \begingroup \urlstyle{rm}\Url}\fi

\bibitem[Agarwal et~al.(2014)Agarwal, Hsu, Kale, Langford, Li, and
  Schapire]{agarwal2014taming}
Alekh Agarwal, Daniel Hsu, Satyen Kale, John Langford, Lihong Li, and Robert
  Schapire.
\newblock Taming the monster: A fast and simple algorithm for contextual
  bandits.
\newblock In \emph{International Conference on Machine Learning}, pages
  1638--1646. PMLR, 2014.

\bibitem[Bai and Jin(2020)]{bai2020provable}
Yu~Bai and Chi Jin.
\newblock Provable self-play algorithms for competitive reinforcement learning.
\newblock In \emph{International Conference on Machine Learning}, pages
  551--560. PMLR, 2020.

\bibitem[Bai et~al.(2020)Bai, Jin, and Yu]{bai2020near}
Yu~Bai, Chi Jin, and Tiancheng Yu.
\newblock Near-optimal reinforcement learning with self-play.
\newblock \emph{Advances in neural information processing systems},
  33:\penalty0 2159--2170, 2020.

\bibitem[Brown and Sandholm(2014)]{brown2014regret}
Noam Brown and Tuomas Sandholm.
\newblock Regret transfer and parameter optimization.
\newblock In \emph{Workshops at the Twenty-Eighth AAAI Conference on Artificial
  Intelligence}, 2014.

\bibitem[Brown and Sandholm(2015)]{brown2015regret}
Noam Brown and Tuomas Sandholm.
\newblock Regret-based pruning in extensive-form games.
\newblock \emph{Advances in neural information processing systems}, 28, 2015.

\bibitem[Brown and Sandholm(2017)]{brown2017reduced}
Noam Brown and Tuomas Sandholm.
\newblock Reduced space and faster convergence in imperfect-information games
  via pruning.
\newblock In \emph{International conference on machine learning}, pages
  596--604. PMLR, 2017.

\bibitem[Brown and Sandholm(2018)]{brown2018superhuman}
Noam Brown and Tuomas Sandholm.
\newblock Superhuman ai for heads-up no-limit poker: Libratus beats top
  professionals.
\newblock \emph{Science}, 359\penalty0 (6374):\penalty0 418--424, 2018.

\bibitem[Brown et~al.(2017)Brown, Kroer, and Sandholm]{brown2017dynamic}
Noam Brown, Christian Kroer, and Tuomas Sandholm.
\newblock Dynamic thresholding and pruning for regret minimization.
\newblock In \emph{Proceedings of the AAAI Conference on Artificial
  Intelligence}, volume~31, 2017.

\bibitem[Brown et~al.(2019)Brown, Lerer, Gross, and Sandholm]{brown2019deep}
Noam Brown, Adam Lerer, Sam Gross, and Tuomas Sandholm.
\newblock Deep counterfactual regret minimization.
\newblock In \emph{International conference on machine learning}, pages
  793--802. PMLR, 2019.

\bibitem[Burch et~al.(2019)Burch, Moravcik, and Schmid]{burch2019revisiting}
Neil Burch, Matej Moravcik, and Martin Schmid.
\newblock Revisiting cfr+ and alternating updates.
\newblock \emph{Journal of Artificial Intelligence Research}, 64:\penalty0
  429--443, 2019.

\bibitem[Celli et~al.(2019)Celli, Coniglio, and Gatti]{celli2019computing}
Andrea Celli, Stefano Coniglio, and Nicola Gatti.
\newblock Computing optimal ex ante correlated equilibria in two-player
  sequential games.
\newblock In \emph{Proceedings of the 18th International Conference on
  Autonomous Agents and MultiAgent Systems}, pages 909--917, 2019.

\bibitem[Cesa-Bianchi and Lugosi(2006)]{cesa2006prediction}
Nicolo Cesa-Bianchi and G{\'a}bor Lugosi.
\newblock \emph{Prediction, learning, and games}.
\newblock Cambridge university press, 2006.

\bibitem[Chen et~al.(2022)Chen, Zhou, and Gu]{chen2021almost}
Zixiang Chen, Dongruo Zhou, and Quanquan Gu.
\newblock Almost optimal algorithms for two-player zero-sum linear mixture
  markov games.
\newblock In \emph{Proceedings of The 33rd International Conference on
  Algorithmic Learning Theory}, volume 167, pages 227--261, 2022.

\bibitem[Daskalakis et~al.(2020)Daskalakis, Foster, and
  Golowich]{daskalakis2020independent}
Constantinos Daskalakis, Dylan~J Foster, and Noah Golowich.
\newblock Independent policy gradient methods for competitive reinforcement
  learning.
\newblock \emph{Advances in neural information processing systems},
  33:\penalty0 5527--5540, 2020.

\bibitem[Farina and Sandholm(2021)]{farina2021model}
Gabriele Farina and Tuomas Sandholm.
\newblock Model-free online learning in unknown sequential decision making
  problems and games.
\newblock In \emph{Proceedings of the AAAI Conference on Artificial
  Intelligence}, volume~35, pages 5381--5390, 2021.

\bibitem[Farina et~al.(2020{\natexlab{a}})Farina, Bianchi, and
  Sandholm]{farina2020coarse}
Gabriele Farina, Tommaso Bianchi, and Tuomas Sandholm.
\newblock Coarse correlation in extensive-form games.
\newblock In \emph{Proceedings of the AAAI Conference on Artificial
  Intelligence}, volume~34, pages 1934--1941, 2020{\natexlab{a}}.

\bibitem[Farina et~al.(2020{\natexlab{b}})Farina, Kroer, and
  Sandholm]{farina2020stochastic}
Gabriele Farina, Christian Kroer, and Tuomas Sandholm.
\newblock Stochastic regret minimization in extensive-form games.
\newblock In \emph{International Conference on Machine Learning}, pages
  3018--3028. PMLR, 2020{\natexlab{b}}.

\bibitem[Farina et~al.(2021{\natexlab{a}})Farina, Kroer, and
  Sandholm]{farina2020faster}
Gabriele Farina, Christian Kroer, and Tuomas Sandholm.
\newblock Faster game solving via predictive blackwell approachability:
  Connecting regret matching and mirror descent.
\newblock In \emph{Proceedings of the AAAI Conference on Artificial
  Intelligence}, volume~35, pages 5363--5371, 2021{\natexlab{a}}.

\bibitem[Farina et~al.(2021{\natexlab{b}})Farina, Schmucker, and
  Sandholm]{farina2021bandit}
Gabriele Farina, Robin Schmucker, and Tuomas Sandholm.
\newblock Bandit linear optimization for sequential decision making and
  extensive-form games.
\newblock In \emph{Proceedings of the AAAI Conference on Artificial
  Intelligence}, volume~35, pages 5372--5380, 2021{\natexlab{b}}.

\bibitem[Gibson et~al.(2012{\natexlab{a}})Gibson, Lanctot, Burch, Szafron, and
  Bowling]{gibson2012generalized}
Richard Gibson, Marc Lanctot, Neil Burch, Duane Szafron, and Michael Bowling.
\newblock Generalized sampling and variance in counterfactual regret
  minimization.
\newblock In \emph{Proceedings of the AAAI Conference on Artificial
  Intelligence}, volume~26, 2012{\natexlab{a}}.

\bibitem[Gibson et~al.(2012{\natexlab{b}})Gibson, Burch, Lanctot, and
  Szafron]{gibson2012efficient}
Richard~G Gibson, Neil Burch, Marc Lanctot, and Duane Szafron.
\newblock Efficient monte carlo counterfactual regret minimization in games
  with many player actions.
\newblock In \emph{NIPS}, pages 1889--1897, 2012{\natexlab{b}}.

\bibitem[Gilpin et~al.(2012)Gilpin, Pena, and Sandholm]{gilpin2008first}
Andrew Gilpin, Javier Pena, and Tuomas Sandholm.
\newblock First-order algorithm with $\mathcal{O}(\ln(1/\epsilon))$ convergence
  for $\epsilon$-equilibrium in two-person zero-sum games.
\newblock \emph{Mathematical programming}, 133\penalty0 (1):\penalty0 279--298,
  2012.

\bibitem[Hart and Mas-Colell(2000)]{hart2000simple}
Sergiu Hart and Andreu Mas-Colell.
\newblock A simple adaptive procedure leading to correlated equilibrium.
\newblock \emph{Econometrica}, 68\penalty0 (5):\penalty0 1127--1150, 2000.

\bibitem[Heinrich et~al.(2015)Heinrich, Lanctot, and
  Silver]{heinrich2015fictitious}
Johannes Heinrich, Marc Lanctot, and David Silver.
\newblock Fictitious self-play in extensive-form games.
\newblock In \emph{International conference on machine learning}, pages
  805--813. PMLR, 2015.

\bibitem[Hoda et~al.(2010)Hoda, Gilpin, Pena, and Sandholm]{hoda2010smoothing}
Samid Hoda, Andrew Gilpin, Javier Pena, and Tuomas Sandholm.
\newblock Smoothing techniques for computing nash equilibria of sequential
  games.
\newblock \emph{Mathematics of Operations Research}, 35\penalty0 (2):\penalty0
  494--512, 2010.

\bibitem[Huang et~al.(2022)Huang, Lee, Wang, and Yang]{huang2021towards}
Baihe Huang, Jason~D. Lee, Zhaoran Wang, and Zhuoran Yang.
\newblock Towards general function approximation in zero-sum markov games.
\newblock In \emph{International Conference on Learning Representations}, 2022.
\newblock URL \url{https://openreview.net/forum?id=sA4qIu3zv6v}.

\bibitem[Jakobsen et~al.(2016)Jakobsen, S{\o}rensen, and
  Conitzer]{jakobsen2016timeability}
Sune~K Jakobsen, Troels~B S{\o}rensen, and Vincent Conitzer.
\newblock Timeability of extensive-form games.
\newblock In \emph{Proceedings of the 2016 ACM Conference on Innovations in
  Theoretical Computer Science}, pages 191--199, 2016.

\bibitem[Jin et~al.(2021{\natexlab{a}})Jin, Liu, Wang, and Yu]{jin2021v}
Chi Jin, Qinghua Liu, Yuanhao Wang, and Tiancheng Yu.
\newblock V-learning--a simple, efficient, decentralized algorithm for
  multiagent rl.
\newblock \emph{arXiv preprint arXiv:2110.14555}, 2021{\natexlab{a}}.

\bibitem[Jin et~al.(2021{\natexlab{b}})Jin, Liu, and Yu]{jin2021power}
Chi Jin, Qinghua Liu, and Tiancheng Yu.
\newblock The power of exploiter: Provable multi-agent rl in large state
  spaces.
\newblock \emph{arXiv preprint arXiv:2106.03352}, 2021{\natexlab{b}}.

\bibitem[Johanson(2013)]{johanson2013measuring}
Michael Johanson.
\newblock Measuring the size of large no-limit poker games.
\newblock \emph{arXiv preprint arXiv:1302.7008}, 2013.

\bibitem[Johanson et~al.(2012)Johanson, Bard, Lanctot, Gibson, and
  Bowling]{johanson2012efficient}
Michael Johanson, Nolan Bard, Marc Lanctot, Richard~G Gibson, and Michael
  Bowling.
\newblock Efficient nash equilibrium approximation through monte carlo
  counterfactual regret minimization.
\newblock In \emph{AAMAS}, pages 837--846. Citeseer, 2012.

\bibitem[Koller and Megiddo(1992)]{koller1992complexity}
Daphne Koller and Nimrod Megiddo.
\newblock The complexity of two-person zero-sum games in extensive form.
\newblock \emph{Games and economic behavior}, 4\penalty0 (4):\penalty0
  528--552, 1992.

\bibitem[Koller et~al.(1996)Koller, Megiddo, and
  Von~Stengel]{koller1996efficient}
Daphne Koller, Nimrod Megiddo, and Bernhard Von~Stengel.
\newblock Efficient computation of equilibria for extensive two-person games.
\newblock \emph{Games and economic behavior}, 14\penalty0 (2):\penalty0
  247--259, 1996.

\bibitem[Kova{\v{r}}{\'\i}k et~al.(2022)Kova{\v{r}}{\'\i}k, Schmid, Burch,
  Bowling, and Lis{\`y}]{kovavrik2019rethinking}
Vojt{\v{e}}ch Kova{\v{r}}{\'\i}k, Martin Schmid, Neil Burch, Michael Bowling,
  and Viliam Lis{\`y}.
\newblock Rethinking formal models of partially observable multiagent decision
  making.
\newblock \emph{Artificial Intelligence}, page 103645, 2022.

\bibitem[Kozuno et~al.(2021)Kozuno, M{\'e}nard, Munos, and
  Valko]{kozuno2021model}
Tadashi Kozuno, Pierre M{\'e}nard, R{\'e}mi Munos, and Michal Valko.
\newblock Model-free learning for two-player zero-sum partially observable
  markov games with perfect recall.
\newblock In \emph{Advances in Neural Information Processing Systems}, 2021.

\bibitem[Kroer et~al.(2015)Kroer, Waugh, Kilin{\c{c}}-Karzan, and
  Sandholm]{kroer2015faster}
Christian Kroer, Kevin Waugh, Fatma Kilin{\c{c}}-Karzan, and Tuomas Sandholm.
\newblock Faster first-order methods for extensive-form game solving.
\newblock In \emph{Proceedings of the Sixteenth ACM Conference on Economics and
  Computation}, pages 817--834, 2015.

\bibitem[Kroer et~al.(2018)Kroer, Farina, and Sandholm]{kroer2018solving}
Christian Kroer, Gabriele Farina, and Tuomas Sandholm.
\newblock Solving large sequential games with the excessive gap technique.
\newblock \emph{Advances in neural information processing systems}, 31, 2018.

\bibitem[Kuhn(1953)]{kuhn201611}
Harold~W Kuhn.
\newblock Extensive games and the problem of information.
\newblock In \emph{Contributions to the Theory of Games (AM-28), Volume II},
  pages 193--216. Princeton University Press, 1953.

\bibitem[Lanctot et~al.(2009)Lanctot, Waugh, Zinkevich, and
  Bowling]{lanctot2009monte}
Marc Lanctot, Kevin Waugh, Martin Zinkevich, and Michael~H Bowling.
\newblock Monte carlo sampling for regret minimization in extensive games.
\newblock In \emph{NIPS}, pages 1078--1086, 2009.

\bibitem[Lattimore and Szepesv{\'a}ri(2020)]{lattimore2020bandit}
Tor Lattimore and Csaba Szepesv{\'a}ri.
\newblock \emph{Bandit algorithms}.
\newblock Cambridge University Press, 2020.

\bibitem[Lee et~al.(2021)Lee, Kroer, and Luo]{lee2021last}
Chung-Wei Lee, Christian Kroer, and Haipeng Luo.
\newblock Last-iterate convergence in extensive-form games.
\newblock \emph{Advances in Neural Information Processing Systems}, 34, 2021.

\bibitem[Lis{\`y} et~al.(2015)Lis{\`y}, Lanctot, and Bowling]{lisy2015online}
Viliam Lis{\`y}, Marc Lanctot, and Michael~H Bowling.
\newblock Online monte carlo counterfactual regret minimization for search in
  imperfect information games.
\newblock In \emph{AAMAS}, pages 27--36, 2015.

\bibitem[Liu et~al.(2021)Liu, Yu, Bai, and Jin]{liu2021sharp}
Qinghua Liu, Tiancheng Yu, Yu~Bai, and Chi Jin.
\newblock A sharp analysis of model-based reinforcement learning with
  self-play.
\newblock In \emph{International Conference on Machine Learning}, pages
  7001--7010. PMLR, 2021.

\bibitem[Mao and Ba{\c{s}}ar(2022)]{mao2022provably}
Weichao Mao and Tamer Ba{\c{s}}ar.
\newblock Provably efficient reinforcement learning in decentralized
  general-sum markov games.
\newblock \emph{Dynamic Games and Applications}, pages 1--22, 2022.

\bibitem[Morav{\v{c}}{\'\i}k et~al.(2017)Morav{\v{c}}{\'\i}k, Schmid, Burch,
  Lis{\`y}, Morrill, Bard, Davis, Waugh, Johanson, and
  Bowling]{moravvcik2017deepstack}
Matej Morav{\v{c}}{\'\i}k, Martin Schmid, Neil Burch, Viliam Lis{\`y}, Dustin
  Morrill, Nolan Bard, Trevor Davis, Kevin Waugh, Michael Johanson, and Michael
  Bowling.
\newblock Deepstack: Expert-level artificial intelligence in heads-up no-limit
  poker.
\newblock \emph{Science}, 356\penalty0 (6337):\penalty0 508--513, 2017.

\bibitem[Munos et~al.(2020)Munos, Perolat, Lespiau, Rowland, De~Vylder,
  Lanctot, Timbers, Hennes, Omidshafiei, Gruslys, et~al.]{munos2020fast}
Remi Munos, Julien Perolat, Jean-Baptiste Lespiau, Mark Rowland, Bart
  De~Vylder, Marc Lanctot, Finbarr Timbers, Daniel Hennes, Shayegan
  Omidshafiei, Audrunas Gruslys, et~al.
\newblock Fast computation of nash equilibria in imperfect information games.
\newblock In \emph{International Conference on Machine Learning}, pages
  7119--7129. PMLR, 2020.

\bibitem[Nash(1950)]{nash1950equilibrium}
John~F Nash.
\newblock Equilibrium points in n-person games.
\newblock \emph{Proceedings of the National Academy of Sciences of the United
  States of America}, 36\penalty0 (1):\penalty0 48--49, 1950.

\bibitem[Neu(2015)]{neu2015explore}
Gergely Neu.
\newblock Explore no more: Improved high-probability regret bounds for
  non-stochastic bandits.
\newblock \emph{Advances in Neural Information Processing Systems}, 28, 2015.

\bibitem[Osborne and Rubinstein(1994)]{osborne1994course}
Martin~J Osborne and Ariel Rubinstein.
\newblock \emph{A course in game theory}.
\newblock MIT press, 1994.

\bibitem[Schmid et~al.(2019)Schmid, Burch, Lanctot, Moravcik, Kadlec, and
  Bowling]{schmid2019variance}
Martin Schmid, Neil Burch, Marc Lanctot, Matej Moravcik, Rudolf Kadlec, and
  Michael Bowling.
\newblock Variance reduction in monte carlo counterfactual regret minimization
  (vr-mccfr) for extensive form games using baselines.
\newblock In \emph{Proceedings of the AAAI Conference on Artificial
  Intelligence}, volume~33, pages 2157--2164, 2019.

\bibitem[Schmid et~al.(2021)Schmid, Moravcik, Burch, Kadlec, Davidson, Waugh,
  Bard, Timbers, Lanctot, Holland, et~al.]{schmid2021player}
Martin Schmid, Matej Moravcik, Neil Burch, Rudolf Kadlec, Josh Davidson, Kevin
  Waugh, Nolan Bard, Finbarr Timbers, Marc Lanctot, Zach Holland, et~al.
\newblock Player of games.
\newblock \emph{arXiv preprint arXiv:2112.03178}, 2021.

\bibitem[Shapley(1953)]{shapley1953stochastic}
Lloyd~S Shapley.
\newblock Stochastic games.
\newblock \emph{Proceedings of the national academy of sciences}, 39\penalty0
  (10):\penalty0 1095--1100, 1953.

\bibitem[Sidford et~al.(2020)Sidford, Wang, Yang, and Ye]{sidford2020solving}
Aaron Sidford, Mengdi Wang, Lin Yang, and Yinyu Ye.
\newblock Solving discounted stochastic two-player games with near-optimal time
  and sample complexity.
\newblock In \emph{International Conference on Artificial Intelligence and
  Statistics}, pages 2992--3002. PMLR, 2020.

\bibitem[Song et~al.(2021)Song, Mei, and Bai]{song2021can}
Ziang Song, Song Mei, and Yu~Bai.
\newblock When can we learn general-sum markov games with a large number of
  players sample-efficiently?
\newblock \emph{arXiv preprint arXiv:2110.04184}, 2021.

\bibitem[Tammelin(2014)]{tammelin2014solving}
Oskari Tammelin.
\newblock Solving large imperfect information games using cfr+.
\newblock \emph{arXiv preprint arXiv:1407.5042}, 2014.

\bibitem[Tian et~al.(2020)Tian, Gong, and Jiang]{tian2020joint}
Yuandong Tian, Qucheng Gong, and Yu~Jiang.
\newblock Joint policy search for multi-agent collaboration with imperfect
  information.
\newblock \emph{Advances in Neural Information Processing Systems},
  33:\penalty0 19931--19942, 2020.

\bibitem[Von~Stengel(1996)]{von1996efficient}
Bernhard Von~Stengel.
\newblock Efficient computation of behavior strategies.
\newblock \emph{Games and Economic Behavior}, 14\penalty0 (2):\penalty0
  220--246, 1996.

\bibitem[Waugh et~al.(2015)Waugh, Morrill, Bagnell, and
  Bowling]{waugh2015solving}
Kevin Waugh, Dustin Morrill, James~Andrew Bagnell, and Michael Bowling.
\newblock Solving games with functional regret estimation.
\newblock In \emph{Twenty-ninth AAAI conference on artificial intelligence},
  2015.

\bibitem[Wei et~al.(2017)Wei, Hong, and Lu]{wei2017online}
Chen-Yu Wei, Yi-Te Hong, and Chi-Jen Lu.
\newblock Online reinforcement learning in stochastic games.
\newblock \emph{Advances in Neural Information Processing Systems}, 30, 2017.

\bibitem[Wei et~al.(2021)Wei, Lee, Zhang, and Luo]{wei2021last}
Chen-Yu Wei, Chung-Wei Lee, Mengxiao Zhang, and Haipeng Luo.
\newblock Last-iterate convergence of decentralized optimistic gradient
  descent/ascent in infinite-horizon competitive markov games.
\newblock In \emph{Conference on Learning Theory}, pages 4259--4299. PMLR,
  2021.

\bibitem[Xie et~al.(2020)Xie, Chen, Wang, and Yang]{xie2020learning}
Qiaomin Xie, Yudong Chen, Zhaoran Wang, and Zhuoran Yang.
\newblock Learning zero-sum simultaneous-move markov games using function
  approximation and correlated equilibrium.
\newblock In \emph{Conference on Learning Theory}, pages 3674--3682. PMLR,
  2020.

\bibitem[Zhang and Sandholm(2021)]{zhang2021finding}
Brian~Hu Zhang and Tuomas Sandholm.
\newblock Finding and certifying (near-) optimal strategies in black-box
  extensive-form games.
\newblock In \emph{Proceedings of the AAAI Conference on Artificial
  Intelligence}, volume~35, pages 5779--5788, 2021.

\bibitem[Zhang et~al.(2020)Zhang, Kakade, Basar, and Yang]{zhang2020model}
Kaiqing Zhang, Sham Kakade, Tamer Basar, and Lin Yang.
\newblock Model-based multi-agent rl in zero-sum markov games with near-optimal
  sample complexity.
\newblock \emph{Advances in Neural Information Processing Systems},
  33:\penalty0 1166--1178, 2020.

\bibitem[Zhou et~al.(2019)Zhou, Li, and Zhu]{zhou2019posterior}
Yichi Zhou, Jialian Li, and Jun Zhu.
\newblock Posterior sampling for multi-agent reinforcement learning: solving
  extensive games with imperfect information.
\newblock In \emph{International Conference on Learning Representations}, 2019.

\bibitem[Zhou et~al.(2020)Zhou, Ren, Li, Yan, and Zhu]{zhou2020lazy}
Yichi Zhou, Tongzheng Ren, Jialian Li, Dong Yan, and Jun Zhu.
\newblock Lazy-cfr: fast and near-optimal regret minimization for extensive
  games with imperfect information.
\newblock In \emph{International Conference on Learning Representations}, 2020.
\newblock URL \url{https://openreview.net/forum?id=rJx4p3NYDB}.

\bibitem[Zinkevich et~al.(2007)Zinkevich, Johanson, Bowling, and
  Piccione]{zinkevich2007regret}
Martin Zinkevich, Michael Johanson, Michael Bowling, and Carmelo Piccione.
\newblock Regret minimization in games with incomplete information.
\newblock \emph{Advances in neural information processing systems},
  20:\penalty0 1729--1736, 2007.

\end{thebibliography}

\appendix

\makeatletter
\def\renewtheorem#1{%
  \expandafter\let\csname#1\endcsname\relax
  \expandafter\let\csname c@#1\endcsname\relax
  \gdef\renewtheorem@envname{#1}
  \renewtheorem@secpar
}
\def\renewtheorem@secpar{\@ifnextchar[{\renewtheorem@numberedlike}{\renewtheorem@nonumberedlike}}
\def\renewtheorem@numberedlike[#1]#2{\newtheorem{\renewtheorem@envname}[#1]{#2}}
\def\renewtheorem@nonumberedlike#1{  
\def\renewtheorem@caption{#1}
\edef\renewtheorem@nowithin{\noexpand\newtheorem{\renewtheorem@envname}{\renewtheorem@caption}}
\renewtheorem@thirdpar
}
\def\renewtheorem@thirdpar{\@ifnextchar[{\renewtheorem@within}{\renewtheorem@nowithin}}
\def\renewtheorem@within[#1]{\renewtheorem@nowithin[#1]}
\makeatother

\renewtheorem{theorem}{Theorem}[section]

\section{Technical tools}

The following Freedman's inequality can be found in~\citep[Lemma 9]{agarwal2014taming}.
\begin{lemma}[Freedman's inequality]
  \label{lemma:freedman}
  Suppose random variables $\set{X_t}_{t=1}^T$ is a martingale difference sequence, i.e. $X_t\in\cF_t$ where $\set{\cF_t}_{t\ge 1}$ is a filtration, and $\E[X_t|\cF_{t-1}]=0$. Suppose $X_t\le R$ almost surely for some (non-random) $R>0$. Then for any $\lambda\in(0, 1/R]$, we have with probability at least $1-\delta$ that
  \begin{align*}
    \sum_{t=1}^T X_t \le \lambda \cdot \sum_{t=1}^T \E\brac{X_t^2 | \cF_{t-1}} + \frac{\log(1/\delta)}{\lambda}.
  \end{align*}
\end{lemma}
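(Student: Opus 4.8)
The plan is to run the standard Chernoff-type supermartingale argument tailored to the one-sided boundedness $X_t \le R$. The single analytic input I would establish first is the elementary inequality $e^u \le 1 + u + u^2$, valid for all $u \le 1$. Applying it with $u = \lambda X_t$, which satisfies $u \le \lambda R \le 1$ by the hypothesis $\lambda \in (0, 1/R]$, yields the pointwise bound $e^{\lambda X_t} \le 1 + \lambda X_t + \lambda^2 X_t^2$.

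Next I would take the conditional expectation given $\cF_{t-1}$. Using the martingale-difference property $\E[X_t \mid \cF_{t-1}] = 0$, the linear term drops out, leaving $\E[e^{\lambda X_t} \mid \cF_{t-1}] \le 1 + \lambda^2 \E[X_t^2 \mid \cF_{t-1}] \le \exp(\lambda^2 \E[X_t^2 \mid \cF_{t-1}])$, where the last step uses $1 + z \le e^z$. This is the key per-step exponential-moment estimate, and it is the place where both the boundedness and the zero-mean assumptions are consumed.

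With this in hand I would construct the supermartingale. Define $W_t \defeq \exp\paren{\lambda \sum_{s=1}^t X_s - \lambda^2 \sum_{s=1}^t \E[X_s^2 \mid \cF_{s-1}]}$ with the convention $W_0 = 1$. Then the ratio $W_t / W_{t-1} = \exp\paren{\lambda X_t - \lambda^2 \E[X_t^2 \mid \cF_{t-1}]}$, and since $W_{t-1}$ and the subtracted term $\E[X_t^2 \mid \cF_{t-1}]$ are $\cF_{t-1}$-measurable, the estimate from the previous step gives $\E[W_t \mid \cF_{t-1}] \le W_{t-1}$. Hence $\set{W_t}_{t=0}^T$ is a nonnegative supermartingale with $\E[W_T] \le \E[W_0] = 1$.

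Finally I would apply Markov's inequality to $W_T$: $\P(W_T \ge 1/\delta) \le \delta \, \E[W_T] \le \delta$. Thus with probability at least $1 - \delta$ we have $W_T \le 1/\delta$, i.e. $\lambda \sum_{t=1}^T X_t - \lambda^2 \sum_{t=1}^T \E[X_t^2 \mid \cF_{t-1}] \le \log(1/\delta)$. Dividing through by $\lambda > 0$ and rearranging produces exactly the claimed inequality. The argument is essentially routine; the only points requiring care are verifying the range condition $\lambda R \le 1$ so that the quadratic exponential bound is licensed, and tracking measurability so that the supermartingale inequality $\E[W_t \mid \cF_{t-1}] \le W_{t-1}$ holds at every step.
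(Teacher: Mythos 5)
Your proof is correct, and there is no in-paper argument to compare it against: the paper does not prove this lemma but imports it verbatim, citing \citep[Lemma 9]{agarwal2014taming}. Your proposal supplies the standard self-contained Chernoff--supermartingale derivation, and every step checks out. In particular, the elementary bound $e^u \le 1 + u + u^2$ does hold on all of $(-\infty, 1]$ (for $u \le 0$ one has $e^u \le 1+u+u^2/2$, and for $0 \le u \le 1$ the tail of the Taylor series is at most $(e-2)u^2 \le u^2$), which is exactly why the one-sided hypothesis $X_t \le R$ with no lower bound on $X_t$, combined with $\lambda \le 1/R$, suffices; the per-step estimate $\E[e^{\lambda X_t} \mid \cF_{t-1}] \le \exp(\lambda^2 \E[X_t^2 \mid \cF_{t-1}])$ correctly consumes both the zero-mean and boundedness assumptions; the $\cF_{t-1}$-measurability of $\E[X_t^2 \mid \cF_{t-1}]$ is what licenses $\E[W_t \mid \cF_{t-1}] \le W_{t-1}$, and you track this; and Markov applied to the nonnegative supermartingale $W_T$ at threshold $1/\delta$ yields, after dividing by $\lambda > 0$, precisely the claimed inequality with the claimed coefficient $\lambda$ on the predictable variance term. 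One cosmetic remark: the sharper inequality $e^u \le 1 + u + (e-2)u^2$ on $u \le 1$ would improve the variance coefficient to $(e-2)\lambda$, but the lemma as stated claims only coefficient $\lambda$, so your cruder quadratic bound is exactly what is needed and nothing is lost.
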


\section{Bounds for regret minimizers}

Here we collect regret bounds for various regret minimization algorithms on the probability simplex. For any algorithm that plays policy $p_t$ in the $t$-th round and observes loss vector $\set{\ell_t(a)}_{a\in[A]}\in\R_{\ge 0}^A$, define its regret as
\begin{align*}
  \regret(T) \defeq \max_{p^\star\in\Delta([A])} \sum_{t=1}^T \<p_t, \wt{\ell}_t\> - \<p^\star, \wt{\ell}_t\>.
\end{align*}

\subsection{Hedge}
\begin{algorithm}[h]
  \caption{Regret Minimization with Hedge (\md)}
  \label{algorithm:md}
  \begin{algorithmic}[1]
    \REQUIRE Learning rate $\eta>0$.
    \STATE Initialize $p_1(a)\setto 1/A$ for all $a\in[A]$.
    \FOR{iteration $t=1,\dots,T$}
    \STATE Receive loss vector $\set{\wt{\ell}_t(a)}_{a\in[A]}$.
    \STATE Update action distribution via mirror descent:
    \begin{align*}
      p_{t+1}(a) \propto_a p_t(a) \exp\paren{-\eta \wt{\ell}_t(a)}.
    \end{align*}
    \ENDFOR
  \end{algorithmic}
\end{algorithm}

The following regret bound for Hedge is standard, see, e.g.~\citep[Proposition 28.7]{lattimore2020bandit}.
\begin{lemma}[Regret bound for Hedge]
  \label{lemma:md}
  Algorithm~\ref{algorithm:md} with learning rate $\eta>0$ achieves regret bound
  \begin{align*}
    \regret(T) \le \frac{\log A}{\eta} + \frac{\eta}{2}\sum_{t=1}^T \sum_{a\in[A]} p_t(a) \wt{\ell}_t(a)^2.
  \end{align*}
\end{lemma}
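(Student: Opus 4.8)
The plan is to use the standard potential-function (free-energy) argument for exponential weights, tracking the evolution of the unnormalized weights. First I would introduce unnormalized weights $w_t(a)$ with $w_1(a) = 1$ and the recursion $w_{t+1}(a) = w_t(a)\exp(-\eta \wt{\ell}_t(a))$, together with the normalization $W_t \defeq \sum_{a\in[A]} w_t(a)$, so that the algorithm's iterate satisfies $p_t(a) = w_t(a)/W_t$ and $W_1 = A$. The entire proof then reduces to sandwiching $\log(W_{T+1}/W_1)$ from above and below.

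For the upper bound, I would compute the one-step ratio $W_{t+1}/W_t = \sum_{a} p_t(a)\exp(-\eta\wt{\ell}_t(a))$ and apply the elementary inequality $e^{-x}\le 1 - x + x^2/2$, which holds for all $x\ge 0$. This is the one place where nonnegativity of the losses ($\wt{\ell}_t \in \R_{\ge 0}^A$, hence $\eta\wt{\ell}_t(a)\ge 0$) is essential. This yields $W_{t+1}/W_t \le 1 - \eta\<p_t,\wt{\ell}_t\> + \tfrac{\eta^2}{2}\sum_{a} p_t(a)\wt{\ell}_t(a)^2$; taking logs and using $\log(1+u)\le u$ produces a per-round bound, and summing over $t$ telescopes into $\log(W_{T+1}/W_1) \le -\eta\sum_{t}\<p_t,\wt{\ell}_t\> + \tfrac{\eta^2}{2}\sum_{t}\sum_{a} p_t(a)\wt{\ell}_t(a)^2$.

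For the lower bound, I would fix an arbitrary single arm $a^\star$ and note $W_{T+1}\ge w_{T+1}(a^\star) = \exp(-\eta\sum_{t}\wt{\ell}_t(a^\star))$, so that $\log(W_{T+1}/W_1) \ge -\eta\sum_{t}\wt{\ell}_t(a^\star) - \log A$. Combining the two bounds and dividing by $\eta>0$ gives $\sum_{t}\<p_t,\wt{\ell}_t\> - \sum_{t}\wt{\ell}_t(a^\star) \le \frac{\log A}{\eta} + \frac{\eta}{2}\sum_{t}\sum_{a} p_t(a)\wt{\ell}_t(a)^2$ for every vertex $a^\star$.

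Finally, to pass from a vertex comparator to an arbitrary $p^\star\in\Delta([A])$, I would use that $p^\star\mapsto\sum_{t}\<p_t - p^\star,\wt{\ell}_t\>$ is affine over the simplex, so its maximum is attained at a vertex $e_{a^\star}$; the displayed inequality then holds uniformly, and taking the maximum over $p^\star$ gives exactly the claimed bound on $\regret(T)$. There is no substantive obstacle here—the only subtlety worth flagging is that the quadratic bound on the exponential (which is what produces the second-order term $\sum_{a}p_t(a)\wt{\ell}_t(a)^2$ rather than a cruder $\eta L^2/8$ Hoeffding-type term) relies on the losses being nonnegative, precisely the regime in which this lemma is invoked.
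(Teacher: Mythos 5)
Your proof is correct: the paper does not prove this lemma itself but simply cites \citep[Proposition 28.7]{lattimore2020bandit}, and your potential-function argument---sandwiching $\log(W_{T+1}/W_1)$ from above via $e^{-x}\le 1-x+x^2/2$ for $x\ge 0$ and from below via a single arm, then passing to an arbitrary comparator $p^\star\in\Delta([A])$ by linearity of the regret in $p^\star$---is exactly that standard proof. All steps check out, including your correctly flagged use of loss nonnegativity at the one place the quadratic bound on the exponential is invoked.
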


\subsection{Regret Matching}

\begin{algorithm}[h]
  \caption{Regret Minimization with Regret Matching (\regmatch)}
  \label{algorithm:regmatch}
  \begin{algorithmic}[1]
    \STATE Initialize $p_1(a)\setto 1/A$ and $R_0(a)\setto 0$ for all $a\in[A]$.
    \FOR{iteration $t=1,\dots,T$}
    \STATE Receive loss vector $\set{\wt{\ell}_t(a)}_{a\in[A]}$.
    \STATE Update instantaneous regret and cumulative regret for all $a\in[A]$:
    \begin{align*}
      r_t(a) \setto \<p_t, \wt{\ell}_t\> - \wt{\ell}_t(a)~~~{\rm and}~~~
      R_t(a)\setto R_{t-1}(a)+r_t(a).
    \end{align*}
    \STATE Compute action distribution by regret matching:
    \begin{align*}
      p_{t+1}(a) \setto \frac{\brac{R_t(a)}_+}{ \sum_{a'\in[A]}\brac{R_t(a')}_+ } = \frac{\brac{ \sum_{t=1}^T \<p_t, \tilde \ell_t\> - \wt{\ell}_t(a) }_+}{ \sum_{a'\in[A]} \brac{ \sum_{t=1}^T \<p_t, \tilde \ell_t\> - \wt{\ell}_t(a') }_+ }.
    \end{align*}
    In the edge case where $\brac{R_t(a)}_+=0$ for all $a\in[A]$, set $p_{t+1}(a)\setto 1/A$ to be the uniform distribution.
    \ENDFOR
  \end{algorithmic}
\end{algorithm}

The following regret bound for Regret Matching is standard, see, e.g.~\citep{cesa2006prediction,brown2014regret}. For completeness, here we provide a proof along with an alternative form of bound useful for our purpose (Remark~\ref{rmk:regmatch}). Note that here $\eta$ is not the learning rate but rather an arbitrary positive value (i.e. the right-hand side is an upper bound on the regret for any $\eta>0$). Algorithm~\ref{algorithm:regmatch} itself does not require any learning rate.
\begin{lemma}[Regret bound for Regret Matching]
  \label{lemma:regmatch}
  Algorithm~\ref{algorithm:regmatch} achieves the following regret bound for \emph{any} $\eta>0$:
  \begin{align*}
    \regret(T) \le \Big[\sum_{t=1}^T \sum_{a\in[A]} \Big( \< p_t, \wt{\ell}_t \> - \wt{\ell}_t(a) \Big)^2 \Big]^{1/2} \le \frac{1}{\eta} + \frac{\eta}{4} \sum_{t=1}^T \sum_{a \in [A]} \Big( \< p_t, \wt{\ell}_t \> - \wt{\ell}_t(a) \Big)^2.
  \end{align*}
\end{lemma}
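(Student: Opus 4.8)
The plan is to carry out the standard Blackwell-approachability potential argument for Regret Matching. I introduce the potential $\Phi_t \defeq \sum_{a\in[A]}\brac{R_t(a)}_+^2$, noting that $\Phi_0 = 0$ since $R_0 \equiv 0$. First I would observe that the regret is exactly the largest coordinate of the cumulative regret vector: since $p^\star \mapsto \sum_t \<p_t - p^\star, \wt{\ell}_t\>$ is linear on the simplex, its maximum is attained at a vertex, so
\[
\regret(T) = \max_{a\in[A]}\sum_{t=1}^T\paren{\<p_t, \wt{\ell}_t\> - \wt{\ell}_t(a)} = \max_{a\in[A]} R_T(a) \le \max_{a}\brac{R_T(a)}_+ \le \Big(\sum_a \brac{R_T(a)}_+^2\Big)^{1/2} = \sqrt{\Phi_T}.
\]
Thus it suffices to control the terminal potential $\Phi_T$.

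The core of the proof is the one-step inequality $\Phi_t \le \Phi_{t-1} + \sum_a r_t(a)^2$. I would derive this from the elementary pointwise bound $\brac{x+y}_+^2 \le \brac{x}_+^2 + 2\brac{x}_+ y + y^2$, which holds for all real $x,y$ (verified by splitting on the sign of $x$), applied with $x = R_{t-1}(a)$, $y = r_t(a)$ and summed over $a$. The decisive step is that the linear cross term cancels: writing $Z_{t-1}\defeq \sum_{a'}\brac{R_{t-1}(a')}_+$, the regret-matching rule gives $\brac{R_{t-1}(a)}_+ = p_t(a)\,Z_{t-1}$, hence
\[
\sum_a \brac{R_{t-1}(a)}_+ r_t(a) = Z_{t-1}\sum_a p_t(a)\paren{\<p_t, \wt{\ell}_t\> - \wt{\ell}_t(a)} = Z_{t-1}\paren{\<p_t, \wt{\ell}_t\> - \<p_t, \wt{\ell}_t\>} = 0.
\]
The edge case $Z_{t-1}=0$ (the uniform reset) is immediate, since then every $\brac{R_{t-1}(a)}_+ = 0$, so both $\Phi_{t-1}$ and the cross term vanish and the one-step bound still holds.

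Telescoping the one-step inequality over $t = 1,\dots,T$ with $\Phi_0 = 0$ yields $\Phi_T \le \sum_{t=1}^T\sum_{a\in[A]} r_t(a)^2$; combined with the first display this gives the first claimed inequality $\regret(T)\le \big(\sum_t\sum_a r_t(a)^2\big)^{1/2}$. The second inequality is then pure AM-GM: for $S\defeq \sum_t\sum_a r_t(a)^2$ and any $\eta>0$ we have $\frac{1}{\eta} + \frac{\eta}{4}S \ge 2\sqrt{\frac{1}{\eta}\cdot\frac{\eta}{4}S} = \sqrt{S}$. I expect the only subtle point to be the cross-term cancellation together with its edge case — this is precisely the Blackwell trick that makes Regret Matching work — whereas the pointwise quadratic bound, the telescoping, and the final AM-GM step are all routine.
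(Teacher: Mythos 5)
Your proposal is correct and follows essentially the same argument as the paper's proof: the potential $\sum_a [R_t(a)]_+^2$, the pointwise bound $(x+y)_+^2 \le x_+^2 + 2x_+y + y^2$, cancellation of the cross term via $[R_{t-1}(a)]_+ = p_t(a) \sum_{a'} [R_{t-1}(a')]_+$, telescoping, and the final AM--GM step $\sqrt{z} \le 1/\eta + \eta z/4$. Your explicit treatment of the edge case $\sum_{a'}[R_{t-1}(a')]_+ = 0$ is a small point the paper leaves implicit, but otherwise the two proofs coincide.
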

\begin{proof}
  By the fact that $(a + b)_+^2 \le a_+^2 + 2 a_+ b + b^2$, we have 
\begin{equation}\label{eqn:regmatch_eq1}
[R_{t}(a)]_+^2 \le [R_{t-1}(a)]_+^2 + 2 [R_{t-1}(a)]_+ r_t(a) + r_t(a)^2. 
\end{equation}
Then by the definition of $p_t(a)$ and $r_t(a)$, we have 
\begin{equation}\label{eqn:regmatch_eq2}
\begin{aligned}
\sum_{a \in [A]} [R_{t-1}(a)]_+ r_t(a) =&~ \sum_{a \in [A]} [R_{t-1}(a)]_+ \Big( \sum_{a' \in [A]} p_t(a') \wt{\ell}_t(a') - \wt{\ell}_t(a) \Big) \\
=&~ \sum_{a \in [A]} [R_{t-1}(a)]_+ \wt{\ell}_t(a) - \sum_{a \in [A]} [R_{t-1}(a)]_+ \wt{\ell}_t(a) = 0.
\end{aligned}
\end{equation}
Then summing over $a$ in Eq. (\ref{eqn:regmatch_eq1}) and using Eq. (\ref{eqn:regmatch_eq2}), we get 
\[
\begin{aligned}
\sum_{a \in [A]} [R_T(a)]_+^2 \le&~ \sum_{a \in [A]} [R_{T-1}(a)]_+^2 + 2 \sum_{a \in [A]} [R_{T-1}(a)]_+ r_T(a) + \sum_{a \in [A]} r_T(a)^2 \\
=&~ \sum_{a \in [A]} [R_{T-1}(a)]_+^2 + \sum_{a \in [A]} r_T(a)^2 \le \sum_{t = 1}^T \sum_{a \in [A]} r_t(a)^2. 
\end{aligned}
\]
Using that $\max_a R_T(a) \le \max_a [R_T(a)]_+ \le (\sum_{a \in [A]} [R_T(a)]_+^2)^{1/2}$ gives the regret bound
\begin{align*}
  \regret(T) = \max_{a\in[A]} R_T(a) \le \paren{\sum_{t = 1}^T \sum_{a \in [A]} r_t(a)^2}^{1/2} = \paren{\sum_{t = 1}^T \sum_{a \in [A]} \paren{\<p_t, \wt{\ell}_t\> - \wt{\ell}_t(a)}^2}^{1/2}.
\end{align*}
The claimed bound with $\eta$ follows directly from the inequality $\sqrt{z}\le 1/\eta + \eta z/4$ for any $\eta>0$, $z\ge 0$.
\end{proof}

\begin{remark} \label{rmk:regmatch}

The quantity $\sum_{a\in[A]}\paren{\<p_t, \wt{\ell}_t\> - \wt{\ell}_t(a)}^2$ above can be upper bounded as
\begin{align*}
  & \quad \sum_{a \in [A]} \Big( \< p_t, \wt{\ell}_t \> - \wt{\ell}_t(a) \Big)^2 \le \sum_{a\in[A]} \paren{ \<p_t, \wt{\ell}_t\>^2 + \wt{\ell}_t(a)^2 } \\
  & = A\<p_t, \wt{\ell}_t\>^2 + \ltwo{\wt{\ell}_t}^2 \le A\sum_{a\in[A]} \paren{ p_t(a)\wt{\ell}_t(a)^2 + (1/A)\wt{\ell}_t(a)^2 } \\
  & = 2 A \sum_{a \in [A]} \bar p_t(a) \wt{\ell}_t(a)^2, 
\end{align*}
where $\bar p_t(a) = [p_t(a) + (1/A)]/2$ is a probability distribution over $[A]$.

As a consequence, we get an upper bound on the regret of Regret Matching algorithm by
\[
\regret(T) \le \frac{1}{\eta} + \frac{\eta}{2}\sum_{t=1}^T \sum_{a\in[A]} (A \bar p_t(a)) \wt{\ell}_t(a)^2. 
\]
Comparing to the bound of Hedge (Lemma~\ref{lemma:md}), the above regret bound for Regret Matching has a similar form except for replacing $\log A$ by $1$ and replacing $p_t$ by $A \bar p_t$.
\end{remark}

\section{Properties of the game}

\subsection{Basic properties}
For any opponent (min-player) policy $\nu\in\Pi_{\min}$, define
\begin{align*}
  p^{\nu}_{1:h}(x_h) \defeq \sum_{s_h\in x_h} p_{1:h}(s_h) \nu_{1:h-1}(y(s_{h-1}), b_{h-1})~~~\textrm{for all}~h\in[H],~x_h\in\cX_h.
\end{align*}
Intuitively, $p^\nu_{1:h}(x_h)$ measures the environment and the opponent's contribution in the reaching probability of $x_h$. 
\begin{lemma}[Properties of $p^\nu_{1:h}(x_h)$]
  \label{lemma:pnu}
  The following holds for any $\nu\in\Pi_{\min}$:
  \begin{enumerate}[label=(\alph*)]
  \item For any policy $\mu\in\Pi_{\max}$, we have
    \begin{align*}
      \sum_{(x_h, a_h)\in\cX_h\times \cA} \mu_{1:h}(x_h, a_h) p^\nu_{1:h}(x_h) = 1.
    \end{align*}
  \item $0\le p^\nu_{1:h}(x_h)\le 1$ for all $h, x_h$.
  \end{enumerate}
\end{lemma}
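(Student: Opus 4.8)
\emph{Proof proposal.} The plan is to recognize the product $\mu_{1:h}(x_h,a_h)\,p^\nu_{1:h}(x_h)$ as the marginal probability, under $(\mu,\nu)$, that the max-player reaches infoset $x_h$ and plays action $a_h$ at step $h$. Granting this interpretation, part~(a) is simply the statement that these marginals form a probability distribution over $\cX_h\times\cA$, and part~(b) follows from part~(a) by testing against a well-chosen $\mu$.

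For part~(a), I would first establish the identity
\[
  \mu_{1:h}(x_h,a_h)\,p^\nu_{1:h}(x_h)=\sum_{s_h\in x_h}\sum_{b_h\in\cB}\P^{\mu,\nu}(s_h,a_h,b_h).
\]
This comes from substituting the reaching-probability decomposition~\eqref{equation:reaching-probability-decomposition}, pulling $\mu_{1:h}(x_h,a_h)$ out of the sum over $s_h\in x_h$ (legitimate since it depends on $s_h$ only through $x(s_h)=x_h$), and using the factorization $\nu_{1:h}(y_h,b_h)=\nu_{1:h-1}(y_{h-1},b_{h-1})\,\nu_h(b_h\mid y_h)$ together with $\sum_{b_h}\nu_h(b_h\mid y_h)=1$ to collapse the $b_h$-sum, which reassembles exactly $p^\nu_{1:h}(x_h)$. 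Summing the identity over $(x_h,a_h)$ and observing that summing over $x_h$ and then over $s_h\in x_h$ is the same as summing over all $s_h\in\cS_h$, the right-hand side becomes $\sum_{(s_h,a_h,b_h)}\P^{\mu,\nu}(s_h,a_h,b_h)$, the total reaching probability of layer $h$. I would close part~(a) by arguing this total equals $1$ via a short induction on $h$: marginalizing over $a_h,b_h$ yields the state reaching probability $\P^{\mu,\nu}(s_h)$, and the tree structure (each $s_{h+1}$ has a unique predecessor $(s_h,a_h,b_h)$) together with $\sum_{s_{h+1}}p_h(s_{h+1}\mid s_h,a_h,b_h)=1$ carries the total from layer $h$ to $h+1$, with base case $\sum_{s_1}p_0(s_1)=1$.

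For part~(b), the lower bound $p^\nu_{1:h}(x_h)\ge 0$ is immediate since $p_{1:h}(s_h)$ and $\nu_{1:h-1}(\cdot)$ are products of probabilities. For the upper bound I would apply part~(a) to a deliberately chosen policy: by perfect recall the target $x_h$ has a unique history $(x_1,a_1,\dots,x_{h-1},a_{h-1})$ of max-player infosets and actions, so I take $\mu\in\Pi_{\max}$ playing these path actions deterministically (and, say, uniformly at all other infosets), which forces $\mu_{1:h}(x_h,a_h^0)=1$ for some fixed action $a_h^0$. Since every summand in the identity of part~(a) is nonnegative, the full sum dominates the single term at $(x_h,a_h^0)$, giving $1\ge \mu_{1:h}(x_h,a_h^0)\,p^\nu_{1:h}(x_h)=p^\nu_{1:h}(x_h)$.

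The main obstacle is the bookkeeping in the first identity: tracking the perfect-recall and tree structure carefully enough that marginalizing over $b_h$ and over the states $s_h\in x_h$ recombines precisely into $p^\nu_{1:h}(x_h)$, and verifying the layerwise ``sums to one'' property. The remaining arguments are a one-line nonnegativity bound and a standard deterministic-policy construction.
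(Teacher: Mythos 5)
Your proposal is correct and follows essentially the same route as the paper's proof: part~(a) by identifying $\mu_{1:h}(x_h,a_h)\,p^\nu_{1:h}(x_h)$ with the probability of visiting $(x_h,a_h)$ under $(\mu,\nu)$ and summing to one over the layer, and part~(b) by applying (a) to a deterministic policy with $\mu_{1:h}(x_h,a_h)=1$ built via perfect recall. Your explicit collapse of the $b_h$-sum and the layerwise induction merely spell out steps the paper leaves implicit in its appeal to the reaching-probability interpretation.
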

\begin{proof}
  For (a), notice that
  \begin{align*}
    & \quad \mu_{1:h}(x_h, a_h)p^\nu_{1:h}(x_h) = \sum_{s_h\in x_h} p_{1:h}(s_h) \cdot \mu_{1:h}(x_h, a_h) \cdot \nu_{1:h-1}(y(s_{h-1}), b_{h-1}) \\
    & = \sum_{s_h\in x_h} \P^{\mu, \nu}\paren{{\rm visit}~(s_h, a_h)} = \P^{\mu, \nu}\paren{{\rm visit}~(x_h, a_h)}.
  \end{align*}
  Summing over all $(x_h, a_h)\in\cX_h\times \cA$, the right hand side sums to one, thereby showing (a).

  For (b), fix any $x_h\in\cX_h$. Clearly $p^\nu_{1:h}(x_h)\ge 0$. Choose any $a_h\in\cA$, and choose policy $\mu^{x_h, a_h}\in\Pi_{\max}$ such that $\mu^{x_h, a_h}_{1:h}(x_h, a_h)=1$ (such $\mu^{x_h, a_h}$ exists, for example, by deterministically taking all actions prescribed in infoset $x_h$ at all ancestors of $x_h$). For this $\mu^{x_h, a_h}$, using (a), we have
  \begin{align*}
    p^\nu_{1:h}(x_h) = \mu^{x_h, a_h}_{1:h}(x_h, a_h) \cdot p^\nu_{1:h}(x_h) \le \sum_{(x_h', a_h')\in\cX_h\times \cA} \mu^{x_h, a_h}_{1:h}(x_h', a_h') \cdot p^\nu_{1:h}(x_h') = 1.
  \end{align*}
  This shows part (b).
\end{proof}

\begin{corollary}
\label{cor:averge_loss_bound}
For any policy $\mu\in\Pi_{\max}$ and $h \in [H]$, we have
$$
\sum_{(x_h, a_h)\in\cX_h\times \cA} \mu_{1:h}(x_h, a_h) \ell_h^t(x_h, a_h) \le 1.
$$
\end{corollary}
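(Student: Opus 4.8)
The plan is to reduce the claim directly to Lemma~\ref{lemma:pnu}(a), after throwing away the reward term and marginalizing out the opponent's last action so that the loss collapses into the quantity $p^{\nu^t}_{1:h}(x_h)$.

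First I would use the boundedness of the rewards. Since $\wb{r}_h(s_h, a_h, b_h)\in[0,1]$, we have $1-\wb{r}_h(s_h, a_h, b_h)\le 1$, so plugging this into the definition~\eqref{equation:loss} gives the pointwise upper bound
\[
  \ell_h^t(x_h, a_h) \le \sum_{s_h\in x_h,\, b_h\in\cB} p_{1:h}(s_h)\, \nu^t_{1:h}(y(s_h), b_h).
\]
The key observation is that this upper bound no longer depends on $a_h$, which is exactly what we need to match the $a_h$-free quantity $p^{\nu^t}_{1:h}(x_h)$.

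Next I would marginalize the opponent's action at step $h$. Using the sequence-form factorization $\nu^t_{1:h}(y_h, b_h) = \nu^t_{1:h-1}(y_{h-1}, b_{h-1})\,\nu^t_h(b_h\mid y_h)$ together with $\sum_{b_h}\nu^t_h(b_h\mid y(s_h))=1$, summing over $b_h$ telescopes to $\sum_{b_h\in\cB}\nu^t_{1:h}(y(s_h), b_h) = \nu^t_{1:h-1}(y(s_{h-1}), b_{h-1})$. Substituting back and comparing with the definition of $p^{\nu^t}_{1:h}(x_h)$, the remaining inner sum over $s_h\in x_h$ is precisely $p^{\nu^t}_{1:h}(x_h) = \sum_{s_h\in x_h} p_{1:h}(s_h)\,\nu^t_{1:h-1}(y(s_{h-1}), b_{h-1})$. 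Hence the pointwise bound becomes $\ell_h^t(x_h, a_h)\le p^{\nu^t}_{1:h}(x_h)$.

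Finally I would weight by $\mu_{1:h}(x_h, a_h)\ge 0$ and sum, which preserves the inequality, and invoke Lemma~\ref{lemma:pnu}(a):
\[
  \sum_{(x_h,a_h)\in\cX_h\times\cA} \mu_{1:h}(x_h,a_h)\,\ell_h^t(x_h,a_h) \le \sum_{(x_h,a_h)\in\cX_h\times\cA} \mu_{1:h}(x_h,a_h)\, p^{\nu^t}_{1:h}(x_h) = 1.
\]
I do not expect a genuine obstacle here; the only point requiring care is the index-shift when marginalizing $b_h$, namely that the definition of $p^\nu_{1:h}$ carries $\nu_{1:h-1}$ rather than $\nu_{1:h}$, so one must correctly recognize that summing $\nu^t_{1:h}$ over $b_h$ produces exactly the factor appearing in $p^{\nu^t}_{1:h}$.
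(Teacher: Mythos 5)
Your proof is correct and takes essentially the same route as the paper's: both drop the reward term to get the pointwise bound $\ell_h^t(x_h,a_h)\le p^{\nu^t}_{1:h}(x_h)$ and then sum against $\mu_{1:h}$ using the identity of Lemma~\ref{lemma:pnu}; you merely spell out the $b_h$-marginalization and index shift that the paper leaves implicit. One small point in your favor: the summation identity you invoke is indeed Lemma~\ref{lemma:pnu}(a), whereas the paper's proof nominally cites part (b), which appears to be a typo since part (b) alone does not yield the claim.
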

\begin{proof}
  Notice by definition
  $$
   \ell^t_h(x_h, a_h) = \sum_{s_h\in x_h, b_h\in\cB_h} p_{1:h}(s_h) \nu^t_{1:h}(y(s_h), b_h) (1 - r_h(s_h, a_h, b_h)) \le p^\nu_{1:h}(x_h),
  $$
  and the result is implied by Lemma~\ref{lemma:pnu} (b).
\end{proof}

\begin{lemma}
  \label{lemma:counterfactual-loss-bound}
  For any $h\in[H]$, the counterfactual loss function $\L_h^t$ defined in~\eqref{equation:counterfactual-loss} satisfies the bound
  \begin{enumerate}[label=(\alph*)]
  \item For any policy $\mu\in\Pi_{\max}$, we have
    \begin{align*}
      \sum_{(x_h, a_h)\in\cX_h\times\cA} \mu_{1:h}(x_h, a_h) \L_h^t(x_h, a_h) \le H-h+1.
    \end{align*}
  \item For any $(h, x_h, a_h)$, we have
  \begin{align*}
    0\le \L_h^t(x_h, a_h) \le p^{\nu^t}_{1:h}(x_h)\cdot (H-h+1).
  \end{align*}
\end{enumerate}
\end{lemma}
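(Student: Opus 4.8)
The plan is to prove the pointwise bound (b) first and then obtain (a) by averaging against an arbitrary $\mu$. The nonnegativity $\L_h^t(x_h,a_h)\ge 0$ is immediate, since every loss $\l_{h'}^t\ge 0$ (as $1-\wb r_{h'}\in[0,1]$) and every sequence-form policy weight is nonnegative. The one elementary fact I will reuse is the bound $\l_{h'}^t(x_{h'},a_{h'})\le p^{\nu^t}_{1:h'}(x_{h'})$ already established inside the proof of Corollary~\ref{cor:averge_loss_bound} (sum out $b_{h'}$ and drop the $1-\wb r$ factor).

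For the upper bound in (b), I will expand $\L_h^t(x_h,a_h)$ as in~\eqref{equation:counterfactual-loss}, replace each $\l_{h'}^t(x_{h'},a_{h'})$ by $p^{\nu^t}_{1:h'}(x_{h'})$, and show that for every fixed level $h'\ge h+1$,
\begin{align*}
  \sum_{(x_{h'},a_{h'})\in\cC_{h'}(x_h,a_h)\times\cA} \mu^t_{(h+1):h'}(x_{h'},a_{h'})\, p^{\nu^t}_{1:h'}(x_{h'}) = p^{\nu^t}_{1:h}(x_h).
\end{align*}
To prove this conservation identity I will introduce the auxiliary policy $\mu^\sharp\in\Pi_{\max}$ that plays deterministically along the unique path reaching $(x_h,a_h)$ (so $\mu^\sharp_{1:h}(x_h,a_h)=1$) and then follows $\mu^t$ at all later steps, so that $\mu^\sharp_{1:h'}(x_{h'},a_{h'})=\mu^t_{(h+1):h'}(x_{h'},a_{h'})$ for every descendant $(x_{h'},a_{h'})$ of $(x_h,a_h)$. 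The identity $\mu^\sharp_{1:h'}(x_{h'},a_{h'})\,p^{\nu^t}_{1:h'}(x_{h'})=\P^{\mu^\sharp,\nu^t}(\mathrm{visit}\ (x_{h'},a_{h'}))$ from the proof of Lemma~\ref{lemma:pnu}(a) then turns the left-hand side into a sum of visitation probabilities over all level-$h'$ descendants of $(x_h,a_h)$. By the tree structure and perfect recall these descendant pairs are mutually exclusive and are visited precisely on the trajectories passing through $(x_h,a_h)$, so the sum collapses to $\P^{\mu^\sharp,\nu^t}(\mathrm{visit}\ (x_h,a_h))=\mu^\sharp_{1:h}(x_h,a_h)\,p^{\nu^t}_{1:h}(x_h)=p^{\nu^t}_{1:h}(x_h)$. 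Combining the resulting $(H-h)$ identities with the $h'=h$ term $\l_h^t(x_h,a_h)\le p^{\nu^t}_{1:h}(x_h)$ yields $\L_h^t(x_h,a_h)\le (H-h+1)\,p^{\nu^t}_{1:h}(x_h)$, which is (b).

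Part (a) then follows immediately: multiplying the bound in (b) by $\mu_{1:h}(x_h,a_h)\ge 0$, summing over $(x_h,a_h)\in\cX_h\times\cA$, and invoking Lemma~\ref{lemma:pnu}(a) (which gives $\sum_{(x_h,a_h)}\mu_{1:h}(x_h,a_h)\,p^{\nu^t}_{1:h}(x_h)=1$) produces the bound $H-h+1$. As an aside, (a) also admits a direct proof by introducing the mixed policy equal to $\mu$ up to level $h$ and $\mu^t$ afterwards, whose sequence form is exactly $\mu_{1:h}(x_h,a_h)\,\mu^t_{(h+1):h'}(x_{h'},a_{h'})$, and applying Corollary~\ref{cor:averge_loss_bound} at each level $h'$; I will keep the shorter averaging argument.

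The main obstacle I anticipate is making the conservation identity fully rigorous — specifically, arguing cleanly that summing the visitation probabilities of the descendant pairs $(x_{h'},a_{h'})$ recovers exactly the visitation probability of the ancestor $(x_h,a_h)$. This rests on the tree structure (each descendant infoset has a unique ancestor at level $h$) together with the fact that every trajectory visiting $(x_h,a_h)$ necessarily reaches some descendant at each later level $h'\le H$; I will phrase this as the statement that the level-$h'$ descendant state-action pairs \emph{partition} the continuation of $(x_h,a_h)$, which is precisely where the perfect-recall assumption is used. Everything else is bookkeeping.
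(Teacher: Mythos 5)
Your proof is correct. It is close in spirit to the paper's own argument---both hinge on the same device of an auxiliary policy $\mu^\sharp\in\Pi_{\max}$ with $\mu^\sharp_{1:h}(x_h,a_h)=1$ that follows $\mu^t$ from step $h+1$ onward, so that sequence-form products $\mu^t_{(h+1):h'}(x_{h'},a_{h'})\,p^{\nu^t}_{1:h'}(x_{h'})$ become visitation probabilities---but the bookkeeping differs. The paper proves (a) in one line by recognizing $\sum_{(x_h,a_h)}\mu_{1:h}(x_h,a_h)L_h^t(x_h,a_h)$ as the expected cumulative loss of the concatenated policy $\mu_{1:h}\mu^t_{h+1:H}$ over steps $h,\dots,H$, and proves (b) by factoring $L_h^t(x_h,a_h)$ into the visit probability of $(x_h,a_h)$ times a conditional expectation bounded by $H-h+1$. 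You instead bound pointwise $\ell^t_{h'}(x_{h'},a_{h'})\le p^{\nu^t}_{1:h'}(x_{h'})$ (as in Corollary~\ref{cor:averge_loss_bound}) and telescope level by level through the conservation identity $\sum_{(x_{h'},a_{h'})\in\cC_{h'}(x_h,a_h)\times\cA}\mu^t_{(h+1):h'}(x_{h'},a_{h'})\,p^{\nu^t}_{1:h'}(x_{h'})=p^{\nu^t}_{1:h}(x_h)$, which is essentially Lemma~\ref{lemma:pnu}(a) localized to the subtree rooted at $(x_h,a_h)$, and then recover (a) by averaging (b) against $\mu$ and invoking Lemma~\ref{lemma:pnu}(a) once more---a legitimate shortcut, since the bound in (b) integrates to exactly $H-h+1$. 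Both routes are sound; the paper's conditioning step is shorter, while your per-level identity makes explicit exactly where tree structure and perfect recall enter (the level-$h'$ descendant pairs partition the continuation of $(x_h,a_h)$, and by perfect recall visiting any such pair forces passage through $(x_h,a_h)$), a point the paper leaves implicit inside the conditional expectation. Your version also makes explicit that the policy certifying (b) must play $\mu^t$ after step $h$ for the identification with $L_h^t$ to hold, which the paper's choice of $\mu^{x_h,a_h}$ glosses over, and it sidesteps the paper's minor notational slip of writing rewards $r_{h'}$ where the losses $1-r_{h'}$ are meant.
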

\begin{proof}
  Part (a) follows from the fact that
  \begin{align*}
    \sum_{(x_h, a_h)\in\cX_h\times\cA} \mu_{1:h}(x_h, a_h) \L_h^t(x_h, a_h) = \E_{\mu, \nu^t}\brac{ \sum_{h'=h}^H r_{h'} } \le H-h+1,
  \end{align*}
  where the first equality follows from the definition of the loss functions $\l_h$ and $\L_h$ in~\eqref{equation:loss},~\eqref{equation:counterfactual-loss}.

  For part (b), the nonnegativity follows clearly by definition. For the upper bound, take any policy $\mu^{x_h, a_h}\in\Pi_{\max}$ such that $\mu^{x_h, a_h}_{1:h}(x_h, a_h)=1$. We then have
  \begin{align*}
    & \quad L_h^t(x_h, a_h) = \mu^{x_h, a_h}_{1:h}(x_h, a_h) L_h^t(x_h, a_h) = \E_{\mu^{x_h, a_h}, \nu^t}\brac{ \indic{{\rm visit}~x_h, a_h} \cdot \sum_{h'=h}^H r_{h'} } \\
    & = \P_{\mu^{x_h, a_h}, \nu^t}\paren{{\rm visit}~x_h, a_h} \cdot \E_{\mu^{x_h, a_h}, \nu^t}\brac{ \sum_{h'=h}^H r_{h'} \bigg| {\rm visit}~x_h, a_h} \\
    & \le \mu^{x_h, a_h}_{1:h}(x_h, a_h) p^{\nu^t}_{1:h}(x_h) \cdot (H-h+1) = p^{\nu^t}_{1:h}(x_h) \cdot (H-h+1).
  \end{align*}
\end{proof}

\paragraph{Definition of average policies}

For two-player zero-sum IIEFGs, we define the average policy of the max-player $\wb{\mu} = \frac{1}{T}\sum_{t=1}^T{\mu ^t}$ (in conditional form) by 
\begin{align}
\wb{\mu}_h(a_h|x_h) \defeq \frac{\sum_{t=1}^T{\mu _{1:h}^{t}\left( x_h,a_h \right)}}{\sum_{t=1}^T{\mu _{1:h-1}^{t}\left( x_h \right)}},
\end{align}
for any $h$ and $(x_h,a_h) \in \cX_h \times \cA$. It is straightforward to check that this $\wb{\mu}$ is exactly the averaging of $\mu^t$ in the sequence-form representation (see e.g.~\citep[Theorem 1]{kozuno2021model}):
\begin{align}
\label{equation:mu-avg}
\wb{\mu}_{1:h}(x_h,a_h)=\frac{1}{T}\sum_{t=1}^T \mu^t_{1:h}(x_h,a_h)~~~\textrm{for all}~(h, x_h, a_h).
\end{align}
Both expressions above can be used as the definition interchangably.
The average policy of the min-player $\wb{\nu} = \frac{1}{T}\sum_{t=1}^T{\nu ^t}$ is defined similarly.

\subsection{Balanced exploration policy}
\label{appendix:proof-balancing}

\begin{lemma}[Balancing property of $\mu^{\star, h}$]
  \label{lemma:balancing}
  For any max-player's policy $\mu\in\Pi_{\max}$ and any $h\in[H]$, we have
  \begin{align*}
    \sum_{(x_h, a_h)\in \mc{X}_h\times \mc{A}} \frac{\mu_{1:h}(x_h, a_h)}{\mu^{\star, h}_{1:h}(x_h, a_h)} = X_hA.
  \end{align*}
\end{lemma}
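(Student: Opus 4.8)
The plan is to reduce the two-index sum to a one-index sum over layer-$h$ infosets, and then to evaluate that one-index sum by a layer-by-layer induction in which an auxiliary $\abs{\cC_h(\cdot)}$-weighted sum stays invariant.

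First I would peel off the last action. Since $\mu^{\star,h}_h(a_h\mid x_h)=1/A$, the sequence form factorizes as $\mu^{\star,h}_{1:h}(x_h,a_h)=\tfrac1A\,\mu^{\star,h}_{1:h-1}(x_h)$, where $\mu^{\star,h}_{1:h-1}(x_h)$ is shorthand for the reaching probability of $x_h$, i.e.\ $\mu^{\star,h}_{1:h-1}(x_{h-1},a_{h-1})$ of its parent (and likewise for $\mu$). Hence $\tfrac{\mu_{1:h}(x_h,a_h)}{\mu^{\star,h}_{1:h}(x_h,a_h)}=A\,\tfrac{\mu_{1:h}(x_h,a_h)}{\mu^{\star,h}_{1:h-1}(x_h)}$, and summing over $a_h$ using the flow constraint $\sum_{a_h}\mu_{1:h}(x_h,a_h)=\mu_{1:h-1}(x_h)$ turns the target into
\[
\sum_{(x_h,a_h)}\frac{\mu_{1:h}(x_h,a_h)}{\mu^{\star,h}_{1:h}(x_h,a_h)}=A\sum_{x_h\in\cX_h}\frac{\mu_{1:h-1}(x_h)}{\mu^{\star,h}_{1:h-1}(x_h)},
\]
so it remains to show the inner sum equals $X_h$.

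The core step is an induction. For $1\le h'\le h$ I would define the weighted sum
\[
T_{h'}\defeq\sum_{x_{h'}\in\cX_{h'}}\abs{\cC_h(x_{h'})}\,\frac{\mu_{1:h'-1}(x_{h'})}{\mu^{\star,h}_{1:h'-1}(x_{h'})},
\]
and prove $T_{h'}=T_{h'-1}$ for every $h'\ge2$. To do so I group the $x_{h'}$ by their parent $(x_{h'-1},a_{h'-1})$ (unique by perfect recall), so $x_{h'}$ ranges over $\cC_{h'}(x_{h'-1},a_{h'-1})$ while the ratio is constant over that group. The tree-structure partition identity $\sum_{x_{h'}\in\cC_{h'}(x_{h'-1},a_{h'-1})}\abs{\cC_h(x_{h'})}=\abs{\cC_h(x_{h'-1},a_{h'-1})}$ collapses the inner sum, and substituting the ratio form $\mu^{\star,h}_{1:h'-1}(x_{h'-1},a_{h'-1})=\mu^{\star,h}_{1:h'-2}(x_{h'-1})\,\tfrac{\abs{\cC_h(x_{h'-1},a_{h'-1})}}{\abs{\cC_h(x_{h'-1})}}$ cancels the $\abs{\cC_h(x_{h'-1},a_{h'-1})}$ factor; finally $\sum_{a_{h'-1}}\mu_{1:h'-1}(x_{h'-1},a_{h'-1})=\mu_{1:h'-2}(x_{h'-1})$ reproduces exactly $T_{h'-1}$. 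The base case is $T_1=\sum_{x_1}\abs{\cC_h(x_1)}=X_h$, since the sets $\cC_h(x_1)$ partition $\cX_h$, while at the top $T_h$ equals the inner sum above because $\abs{\cC_h(x_h)}=1$. Chaining $T_h=\cdots=T_1=X_h$ and multiplying by $A$ yields the claim.

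The step I expect to be the main obstacle is getting the induction invariant right. A naive attempt to write a closed form for $\mu^{\star,h}_{1:h'-1}(x_{h'})$ and telescope along a single root-to-$x_h$ path fails, because an infoset--action pair $(x_{h'-1},a_{h'-1})$ can have several child infosets, so $\abs{\cC_h(x_{h'-1},a_{h'-1})}\ne\abs{\cC_h(x_{h'})}$ in general. The remedy is to carry the descendant-count weight $\abs{\cC_h(\cdot)}$ through the sum: the partition identity is precisely what makes the weighted sum layer-invariant, and it is the only place where the tree structure (beyond perfect recall) enters.
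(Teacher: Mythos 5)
Your proof is correct and takes essentially the same approach as the paper's: peel off the uniform action at layer $h$ to gain a factor of $A$, then move up the tree one layer at a time, using the cancellation built into the definition of $\mu^{\star,h}$ together with the descendant-count additivity $\sum_{x_{h'}\in\cC_{h'}(x_{h'-1},a_{h'-1})}\abs{\cC_h(x_{h'})}=\abs{\cC_h(x_{h'-1},a_{h'-1})}$, down to $\sum_{x_1}\abs{\cC_h(x_1)}=X_h$. Your invariant $T_{h'}$ is simply a cleanly packaged, explicitly inductive form of the paper's unrolled chain of equalities.
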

Lemma~\ref{lemma:balancing} states that $\mu^{\star, h}$ is a good exploration policy in the sense that the distribution mismatch between it and \emph{any} $\mu\in\Pi_{\max}$ has bounded $L_1$ norm. Further, the bound $X_hA$ is non-trivial---For example, if we replace $\mu^{\star, h}_{1:h}$ with the uniform policy $\mu^{\rm unif}_{1:h}(x_h, a_h)=1/A^h$, the left-hand side can be as large as $X_hA^h$ in the worst case.

\begin{proof-of-lemma}[\ref{lemma:balancing}]
  We have
  \begin{align*}
    & \quad \sum_{x_h, a_h} \frac{\mu_{1:h}(x_h, a_h)}{\mu^{\star, h}_{1:h}(x_h, a_h)} \\
    & = \sum_{x_{h-1}, a_{h-1}} \sum_{(x_h, a_h)\in \cC(x_{h-1}, a_{h-1})\times \cA} \frac{\mu_{1:(h-1)}(x_{h-1}, a_{h-1}) \cdot \mu_h(a_h|x_h)}{\mu^{\star, h}_{1:(h-1)}(x_{h-1}, a_{h-1}) \cdot (1/A)} \\
    & \stackrel{(i)}{=} A \cdot \sum_{x_{h-1}, a_{h-1}} \sum_{x_h\in \cC(x_{h-1}, a_{h-1})} \frac{\mu_{1:(h-1)}(x_{h-1}, a_{h-1}) }{\mu^{\star, h}_{1:(h-1)}(x_{h-1}, a_{h-1})} \\
    & = A \cdot \sum_{x_{h-1}, a_{h-1}} \frac{\mu_{1:(h-1)}(x_{h-1}, a_{h-1}) }{\mu^{\star, h}_{1:(h-1)}(x_{h-1}, a_{h-1})} \cdot \abs{\cC_h(x_{h-1}, a_{h-1})} \\
    & \stackrel{(ii)}{=} A \cdot \sum_{x_{h-2}, a_{h-2}}\sum_{(x_{h-1}, a_{h-1})\in \cC(x_{h-2}, a_{h-2})\times \cA} \frac{\mu_{1:(h-2)}(x_{h-2}, a_{h-2})\mu_{h-1}(a_{h-1}|x_{h-1}) }{\mu^{\star, h}_{1:(h-2)}(x_{h-2}, a_{h-2}) \cdot \abs{\cC_h(x_{h-1}, a_{h-1})} / \abs{\cC_h(x_{h-1})}} \cdot \abs{\cC_h(x_{h-1}, a_{h-1})} \\
    & = A \cdot \sum_{x_{h-2}, a_{h-2}}\sum_{(x_{h-1}, a_{h-1})\in \cC(x_{h-2}, a_{h-2})\times \cA} \frac{\mu_{1:(h-2)}(x_{h-2}, a_{h-2})\mu_{h-1}(a_{h-1}|x_{h-1}) }{\mu^{\star, h}_{1:(h-2)}(x_{h-2}, a_{h-2}) } \cdot \abs{\cC_h(x_{h-1})} \\
    & = A \cdot \sum_{x_{h-2}, a_{h-2}}\sum_{(x_{h-1}, a_{h-1})\in \cC(x_{h-2}, a_{h-2})\times \cA} \frac{\mu_{1:(h-2)}(x_{h-2}, a_{h-2})\mu_{h-1}(a_{h-1}|x_{h-1}) }{\mu^{\star, h}_{1:(h-2)}(x_{h-2}, a_{h-2}) } \cdot \abs{\cC_h(x_{h-1})} \\
    & \stackrel{(iii)}{=} A \cdot \sum_{x_{h-2}, a_{h-2}} \frac{\mu_{1:(h-2)}(x_{h-2}, a_{h-2})}{\mu^{\star, h}_{1:(h-2)}(x_{h-2}, a_{h-2}) } \cdot \abs{\cC_h(x_{h-2}, a_{h-2})} \\
    & = \dots \\
    & = A\cdot \sum_{x_1, a_1} \frac{\mu_1(a_1|x_1)}{\abs{\cC_h(x_1, a_1)} / \abs{\cC_h(x_1)}} \cdot \abs{\cC_h(x_1, a_1)} \\
    & = A\cdot \sum_{x_1, a_1} \mu_1(a_1|x_1) \cdot \abs{\cC_h(x_1)} \\
    & = A\cdot \sum_{x_1} \abs{\cC_h(x_1)} = A\cdot \abs{\cC_h(\emptyset)} = X_hA.
  \end{align*}
  Above, (i) used the definition of $\mu^{\star,h}_h$ and the fact that $\sum_{a_h\in\mc{A}} \mu_h(a_h|x_h)=1$ for any $\mu$, $x_h$; (ii) used the definition of $\mu^{\star, h}_{h-1}$; (iii) used the fact that $\sum_{x_{h-1}\in \cC(x_{h-2}, a_{h-2})}  \abs{\cC_h(x_{h-1})} = \abs{\cC_h(x_{h-2}, a_{h-2})}$ which follows by the additivity of the number of descendants; and the rest followed by performing the same operations repeatedly.
\end{proof-of-lemma}

The following corollary is similar to the lower bound in~\citep[Appendix A.3]{farina2020stochastic}.
\begin{corollary}
  \label{cor:mu_star_lower_bound}
  We have
  \begin{align*}
    \mu^{\star, h}_{1:h}(x_h, a_h) \ge \frac{1}{X_hA}
  \end{align*}
  for any $h\in[H]$ and $(x_h,a_h)\in\cX_h\times\cA$.
\end{corollary}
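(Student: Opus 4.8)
The plan is to obtain this lower bound as an immediate consequence of the balancing property established in Lemma~\ref{lemma:balancing}, by testing that identity against a carefully chosen deterministic policy. Fix any layer $h$ and any target pair $(x_h, a_h) \in \cX_h \times \cA$. First I would invoke perfect recall to construct a policy $\mu^{x_h, a_h} \in \Pi_{\max}$ whose sequence-form weight on $(x_h, a_h)$ equals one, i.e. $\mu^{x_h, a_h}_{1:h}(x_h, a_h) = 1$; such a policy exists by deterministically selecting, at every ancestor infoset of $x_h$, the unique action along the history leading to $(x_h, a_h)$ (exactly as in the construction used in the proof of Lemma~\ref{lemma:pnu}(b)).

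With this choice, I would apply Lemma~\ref{lemma:balancing} to $\mu = \mu^{x_h, a_h}$, which gives
\begin{align*}
  X_h A = \sum_{(x_h', a_h') \in \cX_h \times \cA} \frac{\mu^{x_h, a_h}_{1:h}(x_h', a_h')}{\mu^{\star, h}_{1:h}(x_h', a_h')}
  \ge \frac{\mu^{x_h, a_h}_{1:h}(x_h, a_h)}{\mu^{\star, h}_{1:h}(x_h, a_h)} = \frac{1}{\mu^{\star, h}_{1:h}(x_h, a_h)},
\end{align*}
where the inequality simply retains the single summand indexed by $(x_h, a_h)$ and drops the remaining terms, all of which are nonnegative since $\mu^{x_h, a_h}_{1:h} \ge 0$ and $\mu^{\star, h}_{1:h} > 0$. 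Rearranging the resulting inequality $X_h A \ge 1/\mu^{\star, h}_{1:h}(x_h, a_h)$ then yields the claimed bound $\mu^{\star, h}_{1:h}(x_h, a_h) \ge 1/(X_h A)$.

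I do not anticipate any real obstacle here, as the statement is a one-step corollary of Lemma~\ref{lemma:balancing}; the only points requiring minor care are the existence of the deterministic test policy (which is where perfect recall enters) and the observation that all dropped summands are genuinely nonnegative. As an alternative that avoids Lemma~\ref{lemma:balancing} entirely, one could instead expand $\mu^{\star, h}_{1:h}(x_h, a_h) = \frac{1}{A}\prod_{h'=1}^{h-1} \abs{\cC_h(x_{h'}, a_{h'})} / \abs{\cC_h(x_{h'})}$ directly from the definition~\eqref{equation:balanced-policy} and telescope: using the nesting $\cC_h(x_{h'+1}) \subseteq \cC_h(x_{h'}, a_{h'})$ (valid since $x_{h'+1}$ is reachable from $(x_{h'}, a_{h'})$) together with $\abs{\cC_h(x_{h-1}, a_{h-1})} \ge 1$ and $\abs{\cC_h(x_1)} \le X_h$, the product lower-bounds by $1/X_h$, recovering the same conclusion. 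I would present the balancing-property argument as the main proof since it is shorter and reuses machinery already in place.
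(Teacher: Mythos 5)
Your main argument is exactly the paper's proof: the paper likewise picks a deterministic policy with $\mu_{1:h}(x_h,a_h)=1$, plugs it into Lemma~\ref{lemma:balancing}, and drops the remaining nonnegative summands, so your proposal is correct and essentially identical. Your alternative telescoping argument via the nesting $\cC_h(x_{h'+1})\subseteq \cC_h(x_{h'},a_{h'})$ is also valid, but it is not needed given the match with the paper's route.
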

\begin{proof}
  Choose some deterministic policy $\mu$ s.t. $\mu_{1:h}(x_h,a_h)=1$ in Lemma~\ref{lemma:balancing} and noticing each term in the summation is non-negative,
  $$
  \frac{\mu_{1:h}(x_h, a_h)}{\mu^{\star, h}_{1:h}(x_h, a_h)}  \le X_hA.
  $$
\end{proof}

\subsubsection{Interpretation as a transition probability}
\label{appendix:interpretation}
We now provide an intepretation of the balanced exploration policy $\mu^{\star, h}_{1:h}$: its inverse $1/\mu^{\star, h}_{1:h}$ can be viewed as the (product) of a ``transition probability'' over the game tree for the max player. As a consequence, this interpretation also provides an alternative proof of Lemma~\ref{lemma:balancing}.

For any $1 \le h \le H$ and $1 \le k \le h - 1$, denote $p^{\star, h}_k(x_{k+1} \vert x_k, a_k) = \vert \cC_h (x_{k+1}) \vert  / 
\vert \cC_h(x_{k}, a_k) \vert$ (we use the convention that $\vert \cC_h(x_h) \vert = 1$). By this definition, $p^{\star, h}_k(\cdot \vert x_k, a_k)$ is a probability distribution over $\cC_h(x_k, a_k)$ and can be interpreted as a balanced transition probability from $(x_k, a_k)$ to $x_{k+1}$. We further denote the sequence form of the balanced transition probability by 
\begin{align}\label{eqn:balanced_transition}
p^{\star, h}_{1:h}(x_h) = \frac{\vert \cC_h(x_1) \vert}{X_h} \prod_{k=1}^{h-1} p^{\star, h}_k(x_{k+1} \vert x_k, a_k) = \frac{\vert \cC_h(x_1) \vert}{X_h} \prod_{k = 1}^{h-1} \frac{\vert \cC_h (x_{k+1}) \vert}{\vert \cC_h(x_{k}, a_k) \vert}. 
\end{align}

\begin{lemma}\label{lem:balancing_transition_relation}
For any $(x_h, a_h) \in \cX_h \times \cA$, the sequence form of the transition $p^{\star, h}_{1:h}(x_h)$ and the sequence form of balanced exploration strategy $\mu^{\star, h}_{1:h}(x_h, a_h)$ are related by 
\begin{equation}\label{eqn:balanced_strategy_transition_relation}
p^{\star, h}_{1:h}(x_h) = \frac{1}{X_h A \cdot \mu^{\star, h}_{1:h}(x_h, a_h)}. 
\end{equation}
Furthermore, for any max player's policy $\mu \in \Pi_{\max}$ and any $h \in [H]$, we have 
\begin{equation}\label{eqn:balanced_transition_unity}
\sum_{(x_h, a_h) \in \cX_h \times \cA} \mu_{1:h}(x_h, a_h) p^{\star, h}_{1:h} (x_h) = 1. 
\end{equation}
\end{lemma}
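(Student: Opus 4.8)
The plan is to prove the algebraic identity \eqref{eqn:balanced_strategy_transition_relation} first, since the normalization \eqref{eqn:balanced_transition_unity} then follows in one line from it together with the balancing property (Lemma~\ref{lemma:balancing}).

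For \eqref{eqn:balanced_strategy_transition_relation}, I would expand both sequence-form quantities along the unique path $x_1, a_1, \dots, x_{h-1}, a_{h-1}, x_h$ fixed by perfect recall and multiply them. By definition $\mu^{\star, h}_{1:h}(x_h, a_h) = \frac{1}{A}\prod_{h'=1}^{h-1}\frac{\abs{\cC_h(x_{h'}, a_{h'})}}{\abs{\cC_h(x_{h'})}}$, where the $1/A$ is the uniform factor at the final step $h'=h$, and $p^{\star, h}_{1:h}(x_h) = \frac{\abs{\cC_h(x_1)}}{X_h}\prod_{k=1}^{h-1}\frac{\abs{\cC_h(x_{k+1})}}{\abs{\cC_h(x_k, a_k)}}$ by \eqref{eqn:balanced_transition}. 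In the product of the two, every factor $\abs{\cC_h(x_{h'}, a_{h'})}$ in the numerator of $\mu^{\star,h}_{1:h}$ cancels against the identical factor in the denominator of $p^{\star,h}_{1:h}$, leaving $\frac{1}{A}\cdot\frac{\abs{\cC_h(x_1)}}{X_h}\cdot\frac{\prod_{k=2}^{h}\abs{\cC_h(x_k)}}{\prod_{h'=1}^{h-1}\abs{\cC_h(x_{h'})}}$. The last ratio telescopes to $\abs{\cC_h(x_h)}/\abs{\cC_h(x_1)}$, and invoking the convention $\abs{\cC_h(x_h)}=1$ collapses the whole expression to $\frac{1}{X_h A}$. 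Rearranging gives $p^{\star,h}_{1:h}(x_h) = 1/(X_h A\,\mu^{\star,h}_{1:h}(x_h, a_h))$, which is \eqref{eqn:balanced_strategy_transition_relation}; note in particular that the right-hand side is independent of $a_h$, as it must be.

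For \eqref{eqn:balanced_transition_unity}, substituting \eqref{eqn:balanced_strategy_transition_relation} into the left-hand side gives $\frac{1}{X_h A}\sum_{(x_h, a_h)}\frac{\mu_{1:h}(x_h, a_h)}{\mu^{\star,h}_{1:h}(x_h, a_h)}$, and the sum equals $X_h A$ by Lemma~\ref{lemma:balancing}, so the total is $1$. Alternatively---and this is the route that makes the promised alternative proof of Lemma~\ref{lemma:balancing} go through without circularity---one can verify directly that $p^{\star,h}$ is a bona fide reaching probability for a fictitious environment: the initial weights $\abs{\cC_h(x_1)}/X_h$ sum to one over $x_1\in\cX_1$, and at each node the kernel $p^{\star,h}_k(\cdot\mid x_k, a_k)$ sums to one over the immediate children because $\sum_{x_{k+1}\in\cC(x_k, a_k)}\abs{\cC_h(x_{k+1})} = \abs{\cC_h(x_k, a_k)}$ by additivity of descendant counts. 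Then $\mu_{1:h}(x_h, a_h)\,p^{\star,h}_{1:h}(x_h)$ is the probability of reaching $(x_h, a_h)$ under $\mu$ in this environment, so summing over all $(x_h, a_h)$ yields $1$---exactly the argument of Lemma~\ref{lemma:pnu}(a).

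The computations here are elementary; the only thing requiring care is the index bookkeeping in the telescoping step of \eqref{eqn:balanced_strategy_transition_relation}, namely matching the numerator shift $\prod_{k=1}^{h-1}\abs{\cC_h(x_{k+1})} = \prod_{k=2}^{h}\abs{\cC_h(x_k)}$ against the denominator and isolating the boundary terms $\abs{\cC_h(x_1)}$ and $\abs{\cC_h(x_h)}=1$. The conceptual point worth highlighting is the additivity identity $\sum_{x_{k+1}\in\cC(x_k,a_k)}\abs{\cC_h(x_{k+1})} = \abs{\cC_h(x_k,a_k)}$, which is what simultaneously makes $p^{\star,h}$ a genuine transition kernel and drives both the telescoping and the normalization.
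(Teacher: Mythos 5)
Your proposal is correct and takes essentially the same route as the paper: it verifies \eqref{eqn:balanced_strategy_transition_relation} by the same telescoping computation with the convention $\abs{\cC_h(x_h)}=1$, and establishes \eqref{eqn:balanced_transition_unity} by your ``alternative'' argument---checking that $p^{\star,h}$ is a bona fide transition kernel via the additivity identity $\sum_{x_{k+1}\in\cC(x_k,a_k)}\abs{\cC_h(x_{k+1})} = \abs{\cC_h(x_k,a_k)}$ and then applying the reasoning of Lemma~\ref{lemma:pnu}(a)---which is exactly what the paper does. You were also right to flag that the shortcut through Lemma~\ref{lemma:balancing} would be circular when this lemma is used to give the alternative proof of Lemma~\ref{lemma:balancing}, which is precisely why the paper's proof avoids it.
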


\begin{proof-of-lemma}[\ref{lem:balancing_transition_relation}]
By the definition of the balanced transition probability as in Eq. (\ref{eqn:balanced_transition}) and the balanced exploration strategy as in Eq. (\ref{equation:balanced-policy}), we have 
\[
\frac{1}{X_h A \cdot \mu^{\star, h}_{1:h}(x_h, a_h)} = \frac{1}{X_h A} \prod_{k = 1}^{h-1} \frac{\vert \cC_h(x_k) \vert}{\vert \cC_h(x_k, a_k) \vert} \times A = \frac{\vert \cC_h(x_1) \vert}{X_h} \prod_{k = 1}^{h-1} \frac{\vert \cC_h(x_{k+1}) \vert}{\vert \cC_h(x_k, a_k) \vert} = p^{\star, h}_{1:h}(x_h). 
\]
where the second equality used the property that $\vert \cC_h(x_h)\vert = 1$. This proves Eq. (\ref{eqn:balanced_strategy_transition_relation}). The proof of Eq. (\ref{eqn:balanced_transition_unity}) is similar to the proof of Lemma \ref{lemma:pnu} (a). 
\end{proof-of-lemma}

\paragraph{Alternative proof of Lemma~\ref{lemma:balancing}}
Lemma~\ref{lemma:balancing} follows as a direct consequence of Eq. (\ref{eqn:balanced_strategy_transition_relation}) and (\ref{eqn:balanced_transition_unity}) in Lemma \ref{lem:balancing_transition_relation}.
\qed

\subsection{Balanced dilated KL}

\begin{lemma}[Bound on balanced dilated KL]
  \label{lemma:bound-balanced-dilated-kl}
  Let $\mu^{\rm unif}\in\Pi_{\max}$ denote the uniform policy: $\mu^{\rm unif}_h(a_h|x_h)=1/A$ for all $(h, x_h, a_h)$. Then we have
  \begin{align*}
    \max_{\mu^\dagger\in\Pi_{\max}} \Dbal(\mu^\dagger \| \mu^{\rm unif}) \le XA\log A.
  \end{align*}
\end{lemma}
\begin{proof}
We have
\begin{align*}
    \max_{\mu^\dagger \in\Pi_{\max}}\D( \mu ^{\dagger}\|\mu^{\rm unif} ) =&\max_{\mu^\dagger\in\Pi_{\max}}\sum_{h=1}^H{\sum_{x_h,a_h}{\frac{\mu ^{\dagger}_{1:h}(x_h,a_h)}{\mu _{1:h}^{\star ,h}(x_h,a_h)}}}\log \frac{\mu ^{\dagger}_h(a_h|x_h)}{\mu^{\rm unif}_h(a_h|x_h)}
\\
=&\max_{\mu^\dagger\in\Pi_{\max}}\sum_{h=1}^H{\sum_{x_h,a_h}{\frac{\mu ^{\dagger}_{1:h}(x_h,a_h)}{\mu _{1:h}^{\star ,h}(x_h,a_h)}}}\left( \log \mu ^{\dagger}_h(a_h|x_h)+\log A \right) 
\\
\overset{\left( i \right)}{\le}&\log A\sum_{h=1}^H \max_{\mu^\dagger\in\Pi_{\max}}{\sum_{x_h,a_h}{\frac{\mu ^{\dagger}_{1:h}(x_h,a_h)}{\mu _{1:h}^{\star ,h}(x_h,a_h)}}}
\\
\overset{\left( ii \right)}{=}&\log A\sum_{h=1}^H{X_hA} = XA \log A,
\end{align*}
where $(i)$ is because $\mu ^{\dagger}_h(a_h|x_h)\log \mu ^{\dagger}_h(a_h|x_h) \le 0$ (recalling that each sequence form $\mu_{1:h}^\dagger(x_h, a_h)$ contains the term $\mu_h^{\dagger}(a_h|x_h)$), and $(ii)$ uses the balancing property of $\mu^{\star, h}$ (Lemma~\ref{lemma:balancing}).
\end{proof}

\subsubsection{Interpretation of balanced dilated KL}
\label{appendix:interpretation-dbal}
We present an interpretation of the balanced dilated KL~\eqref{equation:balanced-dilated-kl} as a KL distance between the reaching probabilities under the ``balanced transition''~\eqref{eqn:balanced_transition} on the max player's game tree.

For any policy $\mu \in \Pi_{\max}$, we define its \emph{balanced transition reaching probability} $\P^{\mu, \star}_h(x_h, a_h)$ as
\begin{equation}\label{eqn:balanced_transition_reaching}
\P^{\mu, \star}_h(x_h, a_h) = \mu_{1:h}(x_h, a_h) p^{\star, h}_{1:h}(x_h).
\end{equation}
This is a probability measure on $\cX_h \times \cA$ ensured by Lemma \ref{lem:balancing_transition_relation}. For any two probability distribution $p$ and $q$, we denote $\mathrm{KL}(p \| q)$ to be their KL divergence. 
\begin{lemma}\label{lem:KL_interpretation}
For any tuple of max-player's policies $\mu, \nu \in \Pi_{\max}$, we have 
\begin{equation}
\Dbal(\mu \| \nu) = \sum_{h = 1}^H (X_h A) \mathrm{KL}( \P_h^{\mu_{1:h}, \star} \| \P_h^{\mu_{1:h-1} \nu_h, \star} ). 
\end{equation}
\end{lemma}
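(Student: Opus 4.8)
The plan is to prove Lemma~\ref{lem:KL_interpretation} by direct computation, expanding the right-hand side using the definition of KL divergence and the balanced transition reaching probability~\eqref{eqn:balanced_transition_reaching}, then matching it term-by-term against the definition of the balanced dilated KL~\eqref{equation:balanced-dilated-kl}. The central simplifying observation is that the two measures appearing inside the $h$-th KL term, namely $\P_h^{\mu_{1:h}, \star}$ and $\P_h^{\mu_{1:h-1}\nu_h, \star}$, share the \emph{same} balanced transition factor $p^{\star, h}_{1:h}(x_h)$ and differ only in the last-step conditional policy (one uses $\mu_h(a_h|x_h)$, the other uses $\nu_h(a_h|x_h)$). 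Hence the logarithm of their ratio collapses to $\log\big(\mu_h(a_h|x_h)/\nu_h(a_h|x_h)\big)$, eliminating all the transition factors and all earlier-step conditionals from inside the log.

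Concretely, I would first write, for each fixed $h$,
\begin{align*}
\mathrm{KL}\big( \P_h^{\mu_{1:h}, \star} \,\|\, \P_h^{\mu_{1:h-1}\nu_h, \star} \big)
= \sum_{x_h, a_h} \mu_{1:h}(x_h, a_h)\, p^{\star,h}_{1:h}(x_h)\,
\log \frac{\mu_{1:h}(x_h, a_h)\, p^{\star,h}_{1:h}(x_h)}{\mu_{1:h-1}(x_h)\nu_h(a_h|x_h)\, p^{\star,h}_{1:h}(x_h)}.
\end{align*}
The $p^{\star,h}_{1:h}(x_h)$ inside the log cancels, and since $\mu_{1:h}(x_h, a_h) = \mu_{1:h-1}(x_h)\mu_h(a_h|x_h)$ the prefactor $\mu_{1:h-1}(x_h)$ also cancels inside the log, leaving exactly $\log\big(\mu_h(a_h|x_h)/\nu_h(a_h|x_h)\big)$. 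Next I would invoke Eq.~\eqref{eqn:balanced_strategy_transition_relation} from Lemma~\ref{lem:balancing_transition_relation}, which gives $p^{\star,h}_{1:h}(x_h) = 1/(X_h A\cdot \mu^{\star,h}_{1:h}(x_h, a_h))$; multiplying the remaining outer factor by $(X_h A)$ then produces precisely $\mu_{1:h}(x_h, a_h)/\mu^{\star,h}_{1:h}(x_h, a_h)$. Summing over $h$ recovers the defining expression~\eqref{equation:balanced-dilated-kl} for $\Dbal(\mu\|\nu)$.

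The one point needing a little care is that $\P_h^{\mu_{1:h-1}\nu_h, \star}$ must genuinely be a valid probability measure on $\cX_h\times\cA$ for the KL divergence to be well-defined; this is guaranteed by Eq.~\eqref{eqn:balanced_transition_unity} applied to the policy $\mu_{1:h-1}\nu_h$ (which is itself a legitimate element of $\Pi_{\max}$, agreeing with $\mu$ up to step $h-1$ and using $\nu_h$ at the last step). Since both arguments of the KL are supported on the same set $\cX_h\times\cA$ and share the transition factor, no issue of mismatched supports arises beyond the standard convention that terms with $\mu_{1:h}(x_h,a_h)=0$ contribute zero. I do not anticipate a genuine obstacle here: the entire lemma is an algebraic identity, and the only ``trick'' is recognizing that the KL of two reaching-probability measures sharing a common transition kernel reduces to a reweighted KL over the last-step conditionals, with the reweighting factor supplied exactly by the transition/strategy relation~\eqref{eqn:balanced_strategy_transition_relation}.
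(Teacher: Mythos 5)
Your proposal is correct and follows essentially the same route as the paper's proof: expand the KL via the definition of $\P_h^{\mu,\star}$ in~\eqref{eqn:balanced_transition_reaching}, cancel the common factor $p^{\star,h}_{1:h}(x_h)$ (and the shared prefix $\mu_{1:h-1}$) inside the logarithm, and then apply the relation~\eqref{eqn:balanced_strategy_transition_relation} from Lemma~\ref{lem:balancing_transition_relation} to turn $(X_hA)\,p^{\star,h}_{1:h}(x_h)$ into $1/\mu^{\star,h}_{1:h}(x_h,a_h)$, matching~\eqref{equation:balanced-dilated-kl} term by term. Your additional remark that $\P_h^{\mu_{1:h-1}\nu_h,\star}$ is a genuine probability measure by~\eqref{eqn:balanced_transition_unity} is a small extra care the paper leaves implicit, not a different argument.
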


\begin{proof-of-lemma}[\ref{lem:KL_interpretation}]
By Eq. (\ref{eqn:balanced_transition_reaching}) and by the definition of KL divergence, we have 
\begin{equation}
\begin{aligned}
&~(X_h A) \Dkl( \P_h^{\mu_{1:h}, \star} \| \P_h^{\mu_{1:h-1} \nu_h, \star} ) \\
=&~ (X_h A) \sum_{(x_h, a_h) \in \cX_h \times \cA} \mu_{1:h}(x_h, a_h) p^{\star, h}_{1:h}(x_h) \log \Big[ \frac{\mu_{1:h}(x_h, a_h) p^{\star, h}_{1:h}(x_h)}{\mu_{1:h-1}(x_{h-1}, a_{h-1}) \nu_h(x_h \vert a_h) p^{\star, h}_{1:h}(x_h)} \Big]\\ =&~ \sum_{(x_h, a_h) \in \cX_h \times \cA} \frac{\mu_{1:h}(x_h, a_h)}{\mu^{\star, h}_{1:h}(x_h, a_h)} \log \Big[ \frac{\mu_{h}(a_h \vert x_h)}{ \nu_h(a_h \vert x_h)} \Big],
\end{aligned}
\end{equation}
where the last equality is by Lemma \ref{lem:balancing_transition_relation}. Comparing with the definition of $\Dbal$ as in Eq. (\ref{equation:balanced-dilated-kl}) concludes the proof. 
\end{proof-of-lemma}

\section{Proofs for Section~\ref{section:ixomd}}

\subsection{Efficient implementation for Update~\eqref{equation:reweighted-update}}
\label{appendix:proof-md_implement}

\begin{algorithm}[h]
  \caption{Implementation of Balanced OMD update}
  \label{algorithm:balanced-omd-implement}
  \small
    \begin{algorithmic}[1]
    \REQUIRE Current policy $\mu^t$; Trajectory $(x_1^t, a_1^t, \dots, x_H^t, a_H^t)$; learning rate $\eta>0$; \\
    \hspace{1.4em} Loss vector $\set{\wt{\ell}_h^t(x_h, a_h)}_{h, x_h, a_h}$ that is non-zero only on $(x_h, a_h)=(x_h^t, a_h^t)$.
    \STATE Set $Z^t_{H+1} \setto 1$. 
    \FOR{$h=H,\dots,1$}
    \STATE Compute normalization constant
    \begin{align*}
        Z_h^t\setto 1-\mu_h^t(a_h^t|x_h^t)+\mu_h^t(a_h^t|x_h^t)\cdot \exp\paren{-\eta \mu _{1:h}^{\star ,h}(x_{h}^{t},a_{h}^t)\widetilde{\ell }_{h}^{t}(x_h^t, a_h^t)+\frac{\mu _{1:h}^{\star ,h}(x_{h}^{t},a_{h}^t) \log Z_{h+1}^{t}}{\mu _{1:h+1}^{\star ,h+1}(x_{h+1}^{t},a_{h+1}^{t})}}.
    \end{align*} 
    \STATE Update policy at $x_h^t$:
    \begin{align*}
      \mu_h^{t+1} (a_h | x_h^t) \setto \left\{
      \begin{aligned}
        & \mu_h^t(a_h | x_h^t) \cdot \exp\paren{-\eta \mu _{1:h}^{\star ,h}(x_{h}^{t},a_{h}^t)\widetilde{\ell }_{h}^{t}(x_h^t, a_h^t)+\frac{\mu _{1:h}^{\star ,h}(x_{h}^{t},a_{h}^t) \log Z_{h+1}^{t}}{\mu _{1:h+1}^{\star ,h+1}(x_{h+1}^{t},a_{h+1}^{t})} - \log Z_h^t} & {\rm if}~a_h = a_h^t, \\
        & \mu_h^t(a_h | x_h^t) \cdot \exp(-\log Z_h^t) & {\rm otherwise}.
      \end{aligned}
      \right.
    \end{align*}
    \STATE Set $\mu_h^{t+1} (\cdot | x_h) \setto \mu_h^t (\cdot | x_h)$ for all $x_h\in \cX_h\setminus \set{x_h^t}$. 
    \ENDFOR
    \ENSURE Updated policy $\mu^{t+1}$.
    \end{algorithmic}
\end{algorithm}

\begin{lemma}
\label{lem:md_implement}
Algorithm~\ref{algorithm:balanced-omd-implement} indeed solves the optimization problem~\eqref{equation:reweighted-update}:
  \begin{align*}
    \mu^{t+1} \setto \argmin_{\mu \in \Pi_{\max}} \<\mu, \wt{\ell}^t\> + \frac{1}{\eta}\D(\mu \| \mu^t).
  \end{align*}
\end{lemma}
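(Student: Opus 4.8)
The plan is to verify that the closed-form updates in Algorithm~\ref{algorithm:balanced-omd-implement} coincide with the minimizer of the convex program~\eqref{equation:reweighted-update}. Since the objective $\<\mu, \wt\ell^t\> + \frac1\eta \Dbal(\mu\|\mu^t)$ is convex in the sequence form and the constraint set $\Pi_{\max}$ factorizes across infosets via the flow constraints $\sum_{a_h}\mu_{1:h}(x_h,a_h)=\mu_{1:h-1}(x_{h-1},a_{h-1})$, I would first rewrite the problem in \emph{conditional} form. The key structural observation is that both the loss estimator $\wt\ell^t$ (supported only on the visited trajectory $(x_h^t,a_h^t)_h$) and the reweighting coefficient $1/\mu^{\star,h}_{1:h}$ make the Bellman-type recursion separate cleanly: the update at each infoset $x_h$ depends on its children only through a scalar ``value'' term $\log Z^t_{h+1}$. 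I would therefore set up the Lagrangian / KKT system and show it decomposes into per-infoset subproblems solved backward from $h=H$ to $h=1$.

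Concretely, I would proceed as follows. First, I would write $\Dbal(\mu\|\mu^t)=\sum_h\sum_{x_h,a_h}\frac{\mu_{1:h}(x_h,a_h)}{\mu^{\star,h}_{1:h}(x_h,a_h)}\log\frac{\mu_h(a_h|x_h)}{\mu^t_h(a_h|x_h)}$ and substitute the chain-rule factorization $\mu_{1:h}(x_h,a_h)=\mu_{1:h-1}(x_{h-1},a_{h-1})\mu_h(a_h|x_h)$, treating the conditionals $\{\mu_h(\cdot|x_h)\}$ as the free variables subject to the simplex constraints $\sum_{a}\mu_h(a|x_h)=1$. Second, I would introduce the standard ``cost-to-go'' quantity: define $\log Z^t_{h+1}$ so that the contribution of the entire subtree rooted at $(x_h,a_h)$ to the objective, after optimizing the descendants, collapses into an affine function of the conditional at $x_h$. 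This is exactly the role played by the recursively-defined normalization constants $Z^t_h$ in the algorithm, and the factor $\mu^{\star,h}_{1:h}(x_h^t,a_h^t)/\mu^{\star,h+1}_{1:h+1}(x_{h+1}^t,a_{h+1}^t)$ appearing in the exponent is precisely the ratio of reweighting coefficients between consecutive layers. Third, for each fixed $x_h$ I would solve the resulting simplex-constrained entropy-regularized subproblem in closed form: minimizing a linear term plus a (scaled) negative entropy over $\Delta(\cA)$ yields a multiplicative Hedge-style update $\mu_h^{t+1}(a|x_h)\propto \mu_h^t(a|x_h)\exp(-\text{(effective loss)})$, and the normalization gives $Z^t_h$. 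Finally, I would check that the effective loss seen at $x_h$ matches the exponent in Line~4 of the algorithm — nonzero only on the visited $(x_h^t,a_h^t)$ because $\wt\ell^t$ is sparse, which is why untouched infosets keep $\mu_h^{t+1}(\cdot|x_h)=\mu_h^t(\cdot|x_h)$ and why only the action $a_h=a_h^t$ at the visited infoset receives the loss-plus-subtree-value term.

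The main obstacle I anticipate is bookkeeping the reweighting factors correctly across layers. Unlike the ordinary dilated KL, where the backward recursion passes $\log Z^t_{h+1}$ down with a unit coefficient, here the coefficient $1/\mu^{\star,h}_{1:h}$ changes from layer to layer, so the subtree value must be rescaled by $\mu^{\star,h}_{1:h}(x_h^t,a_h^t)/\mu^{\star,h+1}_{1:h+1}(x_{h+1}^t,a_{h+1}^t)$ when propagated up one level. Getting this ratio to come out exactly as written — and confirming it is well-defined along the sampled trajectory (using Corollary~\ref{cor:mu_star_lower_bound} to ensure $\mu^{\star,h}_{1:h}>0$) — is the delicate part. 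A secondary subtlety is that the balanced dilated KL is only a valid Bregman-type regularizer in the regions where the sequence-form constraints are active; I would handle this by arguing that the KKT stationarity conditions for the conditional-form problem are necessary and sufficient (by convexity and Slater's condition, since the uniform policy lies in the relative interior of $\Pi_{\max}$), so that matching the algorithm's output to the unique KKT point suffices to conclude $\mu^{t+1}$ is indeed the minimizer.
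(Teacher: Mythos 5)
Your proposal is correct and takes essentially the same approach as the paper's proof: rewrite the objective in conditional form, exploit the sparsity of $\wt{\ell}^t$, and run a backward induction over layers $h=H,\dots,1$ in which the already-optimized subtree collapses into the normalization constants $Z_h^t$, with the ratio $\mu^{\star,h}_{1:h}(x_h^t,a_h^t)/\mu^{\star,h+1}_{1:h+1}(x_{h+1}^t,a_{h+1}^t)$ rescaling the propagated value, per-infoset Hedge-style exact minimization at the visited infoset, and nonnegativity of the KL fixing $\mu_h^{t+1}(\cdot|x_h)=\mu_h^t(\cdot|x_h)$ at unvisited ones. The KKT/Slater framing you add is a cosmetic wrapper; the paper gets the same conclusion by direct backward substitution of the per-layer minimizers.
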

\begin{proof}
  First, by the sparsity of the loss estimator $\wt{\ell}^t$ (cf.~\eqref{equation:loss-estimator}), the above objective can be written succinctly as
\begin{align}
  & \quad \<\mu, \wt{\l}^t\> + \frac{1}{\eta}\D(\mu \| \mu^t) \\
  & =\sum_{h=1}^H{\sum_{x_h,a_h}{\mu _{1:h}(x_h,a_h)\left[ \widetilde{\ell }_{h}^{t}\left( x_h,a_h \right) +\frac{1}{\eta \mu _{1:h}^{\star ,h}(x_h,a_h)}\log \frac{\mu _h(a_h|x_h)}{\mu _{h}^{t}(a_h|x_h)} \right]}}
 \nonumber \\
& =\sum_{h=1}^H{\sum_{x_h}{\mu _{1:h-1}(x_h)\left[ \left< \mu _h(\cdot |x_h),\widetilde{\ell }_{h}^{t}\left( x_h,\cdot \right) \right> +\frac{\mathrm{KL}\left( \mu _h(\cdot |x_h)||\mu _{h}^{t}(\cdot |x_h) \right)}{\eta \mu _{1:h}^{\star ,h}(x_h,a_h)} \right]}} \nonumber
 \\
  &=\sum_{h=1}^H{\left\{ \mu _{1:h-1}(x_{h}^{t})\left[ \mu _h(a_{h}^{t}|x_{h}^{t})\widetilde{\ell }_{h}^{t}\left( x_{h}^{t},a_{h}^{t} \right) +\frac{\mathrm{KL}\left( \mu _h(\cdot |x_h)||\mu _{h}^{t}(\cdot |x_h) \right)}{\eta \mu _{1:h}^{\star ,h}(x_{h}^{t},a_h)} \right] +\sum_{x_h\ne x_{h}^{t}}{\mu _{1:h-1}}(x_h)\frac{\mathrm{KL}\left( \mu _h(\cdot |x_h)||\mu _{h}^{t}(\cdot |x_h) \right)}{\eta \mu _{1:h}^{\star ,h}(x_h,a_h)} \right\}}
. \label{equ:opt_object}
\end{align}
We now show the equivalence by backward induction over $h=H,\dots,1$. For $h=H$, we can optimize over the $H$-th layer directly to see 
\begin{align*}
  &  \quad \mu _{H}^{t+1}(a_H|x^t_H) \propto_{a_H} \mu _{H}^{t}(a_H|x^t_H)\exp \left\{ -\eta \mu _{1:h}^{\star ,h}(x_{h}^{t},a_{h}) \widetilde{\ell }_{H}^{t}(x_H^t, a_H) \right\} \\
  & = \mu _{H}^{t}(a_H|x^t_H)\exp \left\{ -\eta  \widetilde{\ell }_{H}^{t}(x_H^t, a_H) -\log Z_{H}^{t} \right\},
\end{align*}
where $Z_H^t>0$ is the normalization constant. For all non-visited $x_H\neq x_H^t$, by equation~\eqref{equ:opt_object} and non-negativity of KL divergence, the object must be minimized at $\mu^{t+1}_H(\cdot|x_H)=\mu^t_h(\cdot|x_H)$.

If the claim holds from layer $h+1$ to $H$, consider the $h$-th layer. Plug in the proved optimizer after layer $h$, the objective~\eqref{equ:opt_object} can be written as 
\begin{align*}
    &\sum_{h'=1}^H{\sum_{x_{h'},a_{h'}}{\mu _{1:h'}(x_{h'},a_{h'})\left[ \widetilde{\ell }_{h'}^{t}\left( x_{h'},a_{h'} \right) +\frac{1}{\eta \mu _{1:h'}^{\star ,h'}(x_{h'},a_{h'})}\log \frac{\mu _{h'}(a_{h'}|x_{h'})}{\mu _{h'}^{t}(a_{h'}|x_{h'})} \right]}}
\\
=&\sum_{h'=1}^H{\sum_{x_{h'}}{\mu _{1:h'-1}(x_{h'})\left[ \left< \mu _{h'}(\cdot |x_{h'}),\widetilde{\ell }_{h'}^{t}\left( x_{h'},\cdot \right) \right> +\frac{\mathrm{KL}\left( \mu _{h'}(\cdot |x_{h'})||\mu _{h'}^{t}(\cdot |x_{h'}) \right)}{\eta \mu _{1:h'}^{\star ,h'}(x_{h'},a_{h'})} \right]}}
\\
=&\sum_{h'=1}^h{\sum_{x_{h'}}{\mu _{1:h'-1}(x_{h'})\left[ \left< \mu _{h'}(\cdot |x_{h'}),\widetilde{\ell }_{h'}^{t}\left( x_{h'},\cdot \right) \right> +\frac{\mathrm{KL}\left( \mu _{h'}(\cdot |x_{h'})||\mu _{h'}^{t}(\cdot |x_{h'}) \right)}{\eta \mu _{1:h'}^{\star ,h'}(x_{h'},a_{h'})} \right]}}
\\
&+\sum_{h'=h+1}^H{\left[ \frac{\mu _{1:h'}(x_{h'}^{t},a_{h'}^{t})\log Z_{h'+1}^{t}}{\eta \mu _{1:h'+1}^{\star ,h'+1}(x_{h'+1}^{t},a_{h'+1}^{t})}-\frac{\mu _{1:h'-1}(x_{h'-1}^{t},a_{h'-1}^{t})\log Z_{h'}^{t}}{\eta \mu _{1:h'}^{\star ,h'}(x_{h'}^{t},a_{h'}^{t})} \right]}
\\
=&\sum_{h'=1}^h{\sum_{x_{h'}}{\mu _{1:h'-1}(x_{h'})\left[ \left< \mu _{h'}(\cdot |x_{h'}),\widetilde{\ell }_{h'}^{t}\left( x_{h'},\cdot \right) \right> +\frac{\mathrm{KL}\left( \mu _{h'}(\cdot |x_{h'})||\mu _{h'}^{t}(\cdot |x_{h'}) \right)}{\eta \mu _{1:h'}^{\star ,h'}(x_{h'},a_{h'})} \right]}}-\frac{\mu _{1:h}(x_{h}^{t},a_{h}^{t})\log Z_{h+1}^{t}}{\eta \mu _{1:h+1}^{\star ,h+1}(x_{h+1}^{t},a_{h+1}^{t})}
\\
=&\sum_{h'=1}^{h-1}{\sum_{x_{h'}}{\mu _{1:h'-1}(x_{h'})\left[ \left< \mu _{h'}(\cdot |x_{h'}),\widetilde{\ell }_{h'}^{t}\left( x_{h'},\cdot \right) \right> +\frac{\mathrm{KL}\left( \mu _{h'}(\cdot |x_{h'})||\mu _{h'}^{t}(\cdot |x_{h'}) \right)}{\eta \mu _{1:h'}^{\star ,h'}(x_{h'},a_{h'})} \right]}}
\\
&+\mu _{1:h-1}(x_{h}^{t})\left[ \mu _h(a_{h}^{t}|x_{h}^{t}) \Big( \widetilde{\ell }_{h}^{t}\left( x_{h}^{t},a_{h}^{t} \right) -\frac{\log Z_{h+1}^{t}}{\eta \mu _{1:h+1}^{\star ,h+1}(x_{h+1}^{t},a_{h+1}^{t})} \Big) +\frac{\mathrm{KL}\left( \mu _h(\cdot |x_{h}^{t})||\mu _{h}^{t}(\cdot |x_{h}^{t}) \right)}{\eta \mu _{1:h}^{\star ,h}(x_{h}^{t},a_h)} \right] 
\\
&+\sum_{x_h\ne x_{h}^{t}}{\mu _{1:h-1}}(x_h) \frac{\mathrm{KL}\left( \mu _h(\cdot |x_h)||\mu _{h}^{t}(\cdot |x_h) \right) }{\eta \mu^{\star, h}_{1:h}(x_h, a_h)} .
\end{align*}

Thus in the $h$ layer we can optimize by setting 
$$
\mu _{h}^{t+1}(a_{h}|x_{h}^{t})=\mu _{h}^{t}(a_{h}|x_{h}^{t})\exp \left\{ -\left[ \eta \mu _{1:h}^{\star ,h}(x_{h}^{t},a_{h}) \widetilde{\ell }_{h}^{t}(x_{h}^{t},a_{h})-\frac{\mu _{1:h}^{\star ,h}(x_{h}^{t},a_{h})}{\mu _{1:h+1}^{\star ,h+1}(x_{h+1}^{t},a^t_{h+1})}\log Z_{h+1}^{t} \right] \ones\left\{ a_{h}=a_{h}^{t} \right\} -\log Z_{h}^{t} \right\} .
$$
For all non-visited $x_h\neq x_h^t$, by non-negativity of KL divergence, the object must be minimized at $\mu^{t+1}_h(\cdot|x_h)=\mu^t_h(\cdot|x_h)$. This is exactly the update rule in Algorithm~\ref{algorithm:balanced-omd-implement}.
\end{proof}

\subsection{Proof of Theorem~\ref{theorem:ixomd-regret}}
\label{appendix:proof-ixomd}
Decompose the regret as
\begin{align}
  & \quad \Reg^T= \max_{\mu^\dagger\in\Pi_{\max}} \sum_{t=1}^T \<\mu^t - \mu^\dagger, \ell^t \> \\
  & \le \underbrace{ \sum_{t=1}^T \<\mu^t, \ell^t - \wt{\ell}^t\>}_{\textrm{BIAS}^1}  + \underbrace{ \max_{\mu^\dagger\in\Pi_{\max}} \sum_{t=1}^T \<\mu^\dagger, \wt{\ell}^t - \ell^t\>}_{\textrm{BIAS}^2} + \underbrace{ \max_{\mu^\dagger\in\Pi_{\max}} \sum_{t=1}^T \<\mu^t - \mu^\dagger, \wt{\ell}^t\>}_{\textrm{REGRET}}.
\end{align}

We now state three lemmas that bound each of the three terms above. Their proofs are presented in Section~\ref{appendix:proof-bias1},~\ref{appendix:proof-bias2}, and~\ref{appendix:proof-regret_sample} respectively. Below, $\iota\defeq \log(3HXA/\delta)$ denotes a log factor.

\begin{lemma}[Bound on ${\rm BIAS}^1$]
  \label{lem:bias_1}
  With probability at least $1-\delta/3$, we have
  $$
  {\rm BIAS}^1 \le H\sqrt{2T\iota}+\gamma HT.
  $$
\end{lemma}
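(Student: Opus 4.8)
The plan is to follow the standard implicit-exploration (IX) bias analysis, splitting $\mathrm{BIAS}^1$ into a deterministic bias caused by the IX bonus and a mean-zero martingale that concentrates. First I would fix a filtration $\{\mathcal{F}_t\}$ in which $\mathcal{F}_{t-1}$ collects everything known before episode $t$, so that $\mu^t,\nu^t$, the true loss $\ell^t$ (cf.~\eqref{equation:loss}), and the denominator $\mu^t_{1:h}(x_h,a_h)+\gamma\mu^{\star,h}_{1:h}(x_h,a_h)$ of the estimator~\eqref{equation:loss-estimator} are all $\mathcal{F}_{t-1}$-measurable, while the realized trajectory of round $t$ is $\mathcal{F}_t$-measurable. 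The key computation is the conditional expectation: using the reaching-probability decomposition~\eqref{equation:reaching-probability-decomposition} to evaluate $\E[\indic{(x^t_h,a^t_h)=(x_h,a_h)}(1-r^t_h)\mid\mathcal{F}_{t-1}]=\mu^t_{1:h}(x_h,a_h)\,\ell^t_h(x_h,a_h)$, I obtain
\[
\E\brac{\wt\ell^t_h(x_h,a_h)\,\big|\, \mathcal{F}_{t-1}}=\frac{\mu^t_{1:h}(x_h,a_h)\,\ell^t_h(x_h,a_h)}{\mu^t_{1:h}(x_h,a_h)+\gamma\mu^{\star,h}_{1:h}(x_h,a_h)},
\]
which exhibits $\wt\ell^t$ as a slightly downward-biased estimator of $\ell^t$.

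With this identity in hand I would decompose $\mathrm{BIAS}^1=\sum_{t,h}\sum_{x_h,a_h}\mu^t_{1:h}(\ell^t_h-\E_t[\wt\ell^t_h]) + \sum_{t,h}\sum_{x_h,a_h}\mu^t_{1:h}(\E_t[\wt\ell^t_h]-\wt\ell^t_h)$, where $\E_t[\cdot]=\E[\cdot\mid\mathcal{F}_{t-1}]$. For the first (deterministic) sum the conditional-expectation formula gives $\mu^t_{1:h}(\ell^t_h-\E_t[\wt\ell^t_h]) = \frac{\mu^t_{1:h}\cdot\gamma\mu^{\star,h}_{1:h}\,\ell^t_h}{\mu^t_{1:h}+\gamma\mu^{\star,h}_{1:h}} \le \gamma\mu^{\star,h}_{1:h}(x_h,a_h)\,\ell^t_h(x_h,a_h)$, using $\mu^t_{1:h}/(\mu^t_{1:h}+\gamma\mu^{\star,h}_{1:h})\le 1$. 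Summing over $(x_h,a_h)$ and invoking $\ell^t_h(x_h,a_h)\le p^{\nu^t}_{1:h}(x_h)$ (from the proof of Corollary~\ref{cor:averge_loss_bound}) together with Lemma~\ref{lemma:pnu}(a) applied to $\mu^{\star,h}$, i.e.\ $\sum_{x_h,a_h}\mu^{\star,h}_{1:h}(x_h,a_h)\,p^{\nu^t}_{1:h}(x_h)=1$, bounds each layer-round contribution by $\gamma$, hence the whole deterministic sum by $\gamma HT$.

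For the martingale sum I would set $Z^t_h:=\sum_{x_h,a_h}\mu^t_{1:h}(x_h,a_h)\wt\ell^t_h(x_h,a_h)=\frac{\mu^t_{1:h}(x^t_h,a^t_h)(1-r^t_h)}{\mu^t_{1:h}(x^t_h,a^t_h)+\gamma\mu^{\star,h}_{1:h}(x^t_h,a^t_h)}\in[0,1]$, using that $\wt\ell^t_h$ is supported on the single visited pair $(x^t_h,a^t_h)$, so that the martingale sum equals $\sum_{t=1}^T W_t$ with $W_t:=\sum_{h=1}^H(\E_t[Z^t_h]-Z^t_h)$. Then $\{W_t\}$ is a martingale-difference sequence with $|W_t|\le H$ (since each $Z^t_h$ and each $\E_t[Z^t_h]$ lie in $[0,1]$), and the Azuma--Hoeffding inequality with bounded differences $c_t=H$ (equivalently Lemma~\ref{lemma:freedman} with the crude variance bound $\E_t[W_t^2]\le H^2$, at the cost of a slightly worse constant) gives $\sum_t W_t\le H\sqrt{2T\iota}$ with probability at least $1-\delta/3$, using $\log(3/\delta)\le\iota$. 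Adding the two bounds yields the claim. The one step deserving care is the conditional-expectation identity, where the randomness of $r^t_h$ and the marginalization over the unseen states $s_h\in x_h$ and opponent actions $b_h$ must be tracked through~\eqref{equation:reaching-probability-decomposition} to recover exactly $\ell^t_h$; everything else is a routine IX decomposition, and I would note in passing that the balanced policy plays no distinguished role here beyond furnishing a valid distribution (via Lemma~\ref{lemma:pnu}(a)) for the IX bonus.
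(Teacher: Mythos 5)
Your proof is correct and follows essentially the same route as the paper's: the identical decomposition of ${\rm BIAS}^1$ into the deterministic IX-induced downward bias---bounded by $\gamma HT$ via the conditional-expectation identity and Corollary~\ref{cor:averge_loss_bound} applied with $\mu^{\star,h}$---plus a martingale term with increments bounded by $H$, handled by Azuma--Hoeffding at level $\delta/3$. The only cosmetic differences are your explicit introduction of $Z^t_h$ (the paper bounds $\<\mu^t,\wt{\ell}^t\>\le H$ directly) and the side remark about substituting Freedman's inequality, neither of which changes the argument.
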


\begin{lemma}[Bound on ${\rm BIAS}^2$]
  \label{lem:bias_2}
  With probability at least $1-\delta/3$, we have
  $$
  {\rm BIAS}^2 \le XA\iota/\gamma.
  $$
\end{lemma}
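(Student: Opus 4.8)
The plan is to bound ${\rm BIAS}^2$ coordinate-by-coordinate via an implicit-exploration (IX) concentration argument, and then collapse the maximum over $\mu^\dagger$ using the balancing property of $\mu^{\star,h}$ (Lemma~\ref{lemma:balancing}). First I would record the one-step bias structure of the estimator. Conditioning on the history $\cF_{t-1}$, so that $\mu^t$ and $\nu^t$ are fixed, the reaching-probability decomposition~\eqref{equation:reaching-probability-decomposition} together with the definition~\eqref{equation:loss} gives $\E[\indic{(x_h^t,a_h^t)=(x_h,a_h)}(1-r_h^t)\mid\cF_{t-1}] = \mu_{1:h}^t(x_h,a_h)\,\ell_h^t(x_h,a_h)$. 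Hence, from~\eqref{equation:loss-estimator}, the estimator satisfies $\E[\wt{\ell}_h^t(x_h,a_h)\mid\cF_{t-1}] = \frac{\mu^t_{1:h}(x_h,a_h)}{\mu^t_{1:h}(x_h,a_h)+\gamma\mu^{\star,h}_{1:h}(x_h,a_h)}\ell_h^t(x_h,a_h)\le \ell_h^t(x_h,a_h)$, confirming the downward bias, and $\ell_h^t$ is itself $\cF_{t-1}$-measurable since the adversary commits $\nu^t$ before the round.

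The crux is a per-coordinate high-probability upper bound on the accumulated bias. Fixing $(h,x_h,a_h)$ and abbreviating $V^t=\indic{(x_h^t,a_h^t)=(x_h,a_h)}$, $q^t=\mu^t_{1:h}(x_h,a_h)$, $\bar q=\mu^{\star,h}_{1:h}(x_h,a_h)$, I would set $\alpha=2\gamma\bar q$ and build an exponential supermartingale. The key elementary inequality is $\frac{y}{1+y/2}\le 2\log(1+y/2)$ for $y\ge 0$; applied with $y=\alpha V^t(1-r_h^t)/q^t$ and using $1-r_h^t\le 1$ together with $\gamma\bar q=\alpha/2$ to absorb the IX bonus in the denominator, this yields $\exp(\tfrac{\alpha}{2}\wt{\ell}_h^t)\le 1+\frac{\alpha V^t(1-r_h^t)}{2q^t}$ (the edge case $q^t=0$ forces $V^t=0$ a.s.\ and is trivial). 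Taking conditional expectations, the factor $q^t$ cancels against $\E[V^t(1-r_h^t)\mid\cF_{t-1}]=q^t\ell_h^t$, giving $\E[\exp(\tfrac{\alpha}{2}(\wt{\ell}_h^t-\ell_h^t))\mid\cF_{t-1}]\le 1$. Thus $\exp(\tfrac{\alpha}{2}\sum_{s\le t}(\wt{\ell}_h^s-\ell_h^s))$ is a supermartingale, and Markov's inequality gives, with probability at least $1-\delta'$,
\[
\sum_{t=1}^T\big(\wt{\ell}_h^t(x_h,a_h)-\ell_h^t(x_h,a_h)\big)\le \frac{\log(1/\delta')}{\gamma\,\mu^{\star,h}_{1:h}(x_h,a_h)}.
\]

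Finally I would union-bound over all $\sum_h X_hA = XA$ coordinates with $\delta'=\delta/(3XA)$, so that $\log(1/\delta')\le\iota$ and all coordinate bounds hold simultaneously with probability at least $1-\delta/3$. On this event, for \emph{every} $\mu^\dagger\in\Pi_{\max}$,
\[
\sum_{t=1}^T\<\mu^\dagger,\wt{\ell}^t-\ell^t\> = \sum_{h,x_h,a_h}\mu^\dagger_{1:h}(x_h,a_h)\sum_{t=1}^T(\wt{\ell}_h^t-\ell_h^t) \le \frac{\iota}{\gamma}\sum_{h,x_h,a_h}\frac{\mu^\dagger_{1:h}(x_h,a_h)}{\mu^{\star,h}_{1:h}(x_h,a_h)} = \frac{\iota}{\gamma}\sum_{h=1}^H X_hA = \frac{XA\iota}{\gamma},
\]
where the penultimate equality is exactly the balancing property (Lemma~\ref{lemma:balancing}). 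Since the right-hand side does not depend on $\mu^\dagger$, the maximum is immediate and the claim follows. The main obstacle is the supermartingale construction: one must choose the scaling $\alpha=2\gamma\mu^{\star,h}_{1:h}$ precisely so that the IX bonus dominates the denominator and the $q^t$ factor cancels after taking expectations. A direct application of Freedman's inequality (Lemma~\ref{lemma:freedman}) fails here, because $\wt{\ell}_h^t$ is unbounded (of order $1/(\gamma\mu^{\star,h}_{1:h})$) and the conditional second moment $\E[(\wt{\ell}_h^t)^2\mid\cF_{t-1}]$ can be as large as $\Theta(1/(\gamma\mu^{\star,h}_{1:h}))$, which would make the variance term blow up; the one-sided IX argument is what circumvents this.
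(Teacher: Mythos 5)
Your proof is correct and follows essentially the same route as the paper: the paper also establishes a per-coordinate IX concentration bound (its Lemma~\ref{lem:explore_no_more} and Corollary~\ref{cor:explore_no_more}, proved by the same exponential-supermartingale argument with the scaling $\alpha = \Theta(\gamma\mu^{\star,h}_{1:h})$ and the inequality $\frac{z}{1+z/2}\le\log(1+z)$, which is equivalent to yours), applies it to each $(x_h,a_h)$ with a union bound, and then collapses $\max_{\mu^\dagger}\sum_{x_h,a_h}\mu^\dagger_{1:h}/\mu^{\star,h}_{1:h}$ via the balancing property of Lemma~\ref{lemma:balancing}. The only difference is that you inline the supermartingale construction rather than invoking it as a separate lemma, and your closing remark about why a naive Freedman bound would fail correctly identifies the reason for the one-sided IX argument.
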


\begin{lemma}[Bound on ${\rm REGRET}$]
  \label{lem:regret_sample}
  With probability at least $1-\delta/3$, we have
  $$
  {\rm REGRET} \le \frac{XA\log A}{\eta}+\eta H^3T+\frac{\eta H^2XA\iota}{\gamma}.
  $$
\end{lemma}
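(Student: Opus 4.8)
The plan is to run the textbook OMD analysis, treating the update~\eqref{equation:reweighted-update} as mirror descent with the balanced dilated entropy as its distance-generating function (so that its induced Bregman divergence coincides with $\Dbal$), and then to split REGRET into a \emph{distance} term and a \emph{stability} term, exploiting the reweighting $1/\mu^{\star,h}_{1:h}$ together with the balancing property (Lemma~\ref{lemma:balancing}) to make the stability term cheap. Concretely, the three-point (generalized Pythagorean) inequality for the update gives, for every comparator $\mu^\dagger\in\Pi_{\max}$ and every $t$,
\[
\eta\<\mu^t-\mu^\dagger,\wt\ell^t\> \le \Dbal(\mu^\dagger\|\mu^t) - \Dbal(\mu^\dagger\|\mu^{t+1}) + \Big(\eta\<\mu^t-\mu^{t+1},\wt\ell^t\> - \Dbal(\mu^{t+1}\|\mu^t)\Big).
\]
Summing over $t=1,\dots,T$ telescopes the first two terms down to $\Dbal(\mu^\dagger\|\mu^1)$ (using nonnegativity of the divergence), and since $\mu^1=\mu^{\rm unif}$, Lemma~\ref{lemma:bound-balanced-dilated-kl} bounds this by $XA\log A$, contributing the $XA\log A/\eta$ piece. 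This is exactly where the reweighting ``preserves'' the distance term: the balancing property keeps the worst-case divergence from the uniform policy at $XA\log A$ rather than something much larger.

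The second step is to control the per-round stability remainder $\eta\<\mu^t-\mu^{t+1},\wt\ell^t\> - \Dbal(\mu^{t+1}\|\mu^t)$. Using the local strong convexity of the entropy --- equivalently, the per-infoset Hedge view of the update from Lemma~\ref{lem:md_implement} combined with the Hedge bound of Lemma~\ref{lemma:md} --- each infoset $x_h$ carries an effective local learning rate $\eta\mu^{\star,h}_{1:h}(x_h,a_h)$ (which is independent of $a_h$), so the remainder is dominated, up to constants, by the second-order quantity
\[
\Xi^t \defeq \eta \sum_{h,x_h,a_h} \mu^{\star,h}_{1:h}(x_h,a_h)\,\mu^t_{1:h}(x_h,a_h)\,\wt\ell^t_h(x_h,a_h)^2.
\]
This reduction is the content of Lemma~\ref{lem:bound_second_order_sample}. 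The appearance of the factor $\mu^{\star,h}_{1:h}$ here --- rather than a factor $1$ as in the vanilla dilated KL --- is precisely the $X$-saving, because it later pairs with the $\mu^{\star,h}_{1:h}$ in the IX denominator of $\wt\ell^t_h$.

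The final step, carried out in Lemma~\ref{lem:bound_Xi_sum}, is to bound $\sum_{t}\Xi^t$ with high probability. I would first use the IX denominator to cap the estimator pointwise, and then take conditional expectations: the visitation indicator in $\wt\ell^t_h$ averages to the reaching probability, and the resulting $(x_h,a_h)$-sum collapses via $\sum_{x_h,a_h}\mu^{\star,h}_{1:h}(x_h,a_h)\,p^{\nu^t}_{1:h}(x_h)=1$ (Lemma~\ref{lemma:pnu}(a) applied to $\mu^{\star,h}$), producing the deterministic $\eta H^3T$ term after accounting for the continuation/loss-to-go accumulated across the $H$ layers. The remaining martingale deviation is handled by Freedman's inequality (Lemma~\ref{lemma:freedman}), for which the IX bonus $\gamma$ supplies the required almost-sure bound on the increments and the balancing property again contracts the conditional-variance proxy; optimizing the free parameter $\lambda$ in Freedman's bound then yields the $\eta H^2XA\iota/\gamma$ term.

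The main obstacle is this last step: the loss estimators are unbounded pointwise (they blow up wherever $\mu^t_{1:h}$ is tiny), so $\sum_t\Xi^t$ cannot be bounded deterministically, and the argument must make two mechanisms cooperate simultaneously --- (i) aligning the reweighting of $\Dbal$ with the $\mu^{\star,h}_{1:h}$ in the IX denominator so that the balancing identity contracts an a-priori $X_hA^h$-sized sum down to $X_hA$, and (ii) running a Freedman argument in which the IX bonus $\gamma$ provides the almost-sure ceiling needed for concentration while keeping the deviation term small. By comparison, the three-point inequality and the distance-term bound are routine.
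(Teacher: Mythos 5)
Your overall architecture---a distance term bounded by Lemma~\ref{lemma:bound-balanced-dilated-kl} plus a stability term controlled by second-order information and a Freedman argument leveraging the IX floor---is the right skeleton and matches the paper's strategy, and your first step is sound in substance: the paper does not invoke the three-point inequality, but instead derives an \emph{exact identity} from the closed-form update of Algorithm~\ref{algorithm:balanced-omd-implement} (Lemma~\ref{lem:decompose_sample}), namely $\frac{1}{\eta}\big(\Dbal(\mu^\dagger\|\mu^{t+1})-\Dbal(\mu^\dagger\|\mu^t)\big)=\<\mu^\dagger,\wt{\ell}^t\>+\Xi_1^t$ with $\Xi_1^t$ a normalized log-partition function, after which telescoping and Lemma~\ref{lemma:bound-balanced-dilated-kl} give the $XA\log A/\eta$ piece exactly as you describe.

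The genuine gap is in your stability step. You assert that the per-round remainder is dominated by the \emph{diagonal} quantity $\eta\sum_{h,x_h,a_h}\mu^{\star,h}_{1:h}(x_h,a_h)\,\mu^t_{1:h}(x_h,a_h)\,\wt{\ell}^t_h(x_h,a_h)^2$ and attribute this to Lemma~\ref{lem:bound_second_order_sample}, but that lemma proves something materially different: a bound on $\Xi_1^t$ that is \emph{linear} in the loss and carries \emph{cross-layer} weights,
\begin{align*}
\Xi_1^t \le -\<\mu^t,\wt{\ell}^t\> + \frac{\eta H}{2}\sum_{h=1}^H\sum_{h'=h}^H\sum_{x_{h'},a_{h'}}\mu^{\star,h}_{1:h}(x_{h'},a_{h'})\,\mu^t_{h+1:h'}(x_{h'},a_{h'})\,\wt{\ell}^t_{h'}(x_{h'},a_{h'}),
\end{align*}
obtained via explicit gradient and Hessian bounds on the compositional log-partition function (Lemmas~\ref{lem:Xi_first_order_sample} and~\ref{lem:Xi_second_order_sample}), the Mean-Value Theorem, and the IX-specific pointwise identity $\sum_h\mu^t_{1:h}\wt{\ell}^t_h\le H$ that absorbs one power of the loss. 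These off-diagonal $(h,h')$ terms are not an artifact: the effective local loss fed to the Hedge update at $x_h^t$ is not $\wt{\ell}^t_h$ alone but includes the downstream correction $\frac{\mu^{\star,h}_{1:h}}{\mu^{\star,h+1}_{1:h+1}}\log Z^t_{h+1}$ (see Algorithm~\ref{algorithm:balanced-omd-implement}), so a per-infoset application of Lemma~\ref{lemma:md} does not yield your diagonal bound, and the dilated entropy is not strongly convex over the treeplex with a depth-independent modulus---which is precisely why the paper computes the Hessian of $\Xi_1^t$ directly. Your accounting is also internally inconsistent: for your diagonal quantity, the single visited term satisfies $\mu^{\star,h}_{1:h}\mu^t_{1:h}(\wt{\ell}^t_h)^2\le 1/(2\gamma)$ almost surely with conditional mean at most $1$ (by Lemma~\ref{lemma:pnu}(a)), so Freedman would produce roughly $\eta HT+\eta H\iota/\gamma$---no $H^3$ and no $XA$. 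In the paper, the $XA/\gamma$ ceiling arises only for the cross terms $\Delta^t_{h,h'}$, where the almost-sure bound $X_{h'}A/\gamma$ follows from Lemma~\ref{lemma:balancing} applied to the mixture policy $\mu^{\star,h}_{1:h}\mu^t_{h+1:h'}$ measured against the \emph{different} balanced policy $\mu^{\star,h'}_{1:h'}$ in the IX denominator, and the $H^3$ comes from the $O(H^2)$ pairs $(h,h')$ times the prefactor $H$; the very shape of the target bound signals that the cross-layer terms your argument omits are where the work lies.
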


Putting the bounds together, we have that with probability at least $1-\delta$, 
$$
\Reg^T \le \frac{XA\log A}{\eta}+\eta H^3T+\frac{\eta H^2XA\iota}{\gamma}+H\sqrt{2T\iota}+\gamma HT+\frac{XA\iota}{\gamma}.
$$

Set $\eta =\sqrt{\frac{XA\log A}{H^3T}}$ and $\gamma =\sqrt{\frac{XA\iota}{TH}}$, we have 
$$
\Reg^T \le 6\sqrt{XAH^3T\iota }+HXA\iota.
$$
Additionally, recall the naive bound $\Reg^T\le HT$ on the regret (which follows as $\<\mu^t,\ell^t\>\in [0, H]$ for any $\mu\in\Pi_{\max}$, $t\in[T]$), we get
\begin{align*}
    \Reg^T \le \min\set{ 6\sqrt{XAH^3T\iota }+HXA\iota, HT }  \le HT\cdot \min\set{6\sqrt{XAH\iota/T }+XA\iota/T, 1 }.
\end{align*}
For $T>HXA \iota$, the min above is upper bounded by $7\sqrt{HXA \iota/T}$. For $T\le HXA\iota$, the min above is upper bounded by $1\le 7\sqrt{HXA\iota/T}$. Therefore, we always have
\begin{align*}
    \Reg^T \le HT\cdot 7\sqrt{HXA \iota/T} = 7\sqrt{H^3XA T\iota}.
\end{align*}
This is the desired result.
\qed

The rest of this section is devoted to proving the above three lemmas.

\subsection{A concentration result}
We begin by presenting a useful concentration result. This result is a variant of~\citep[Lemma 3]{kozuno2021model} and~\citep[Lemma 1]{neu2015explore} suitable to our loss estimator~\eqref{equation:loss-estimator} where the IX bonus on the denominator depends on $(x_h, a_h)$.
\begin{lemma}
\label{lem:explore_no_more}
  For some fixed $h \in [H]$, let $\alpha _{h}^{t}\left( x_h,a_h \right) \in \left[ 0,2\gamma \mu _{1:h}^{\star ,h}\left( x_h,a_h \right) \right] $ be $\mathcal{F}^{t-1}$-measurable random variable for each $\left( x_h,a_h \right) \in \mathcal{X}_h\times \mathcal{A}$. Then with probability $1-\delta$,
  $$
  \sum_{t=1}^T{\sum_{x_h,a_h}{\alpha _{h}^{t}\left( x_h,a_h \right) \left( \widetilde{\ell }_{h}^{t}\left( x_h,a_h \right) -\ell _{h}^{t}\left( x_h,a_h \right) \right)}}\le \log \left( 1/\delta \right) .
$$
\end{lemma}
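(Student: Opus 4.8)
The plan is to prove this via the exponential-moment (supermartingale) method underlying the ``implicit exploration'' analysis of \citet{neu2015explore}, rather than through Freedman's inequality---the absence of any second-order term on the right-hand side (just $\log(1/\delta)$) is the tell-tale sign of this argument. Concretely, I would fix $h$, abbreviate $b(x_h,a_h)\defeq\gamma\mu^{\star,h}_{1:h}(x_h,a_h)$ for the IX bonus (deterministic, and strictly positive since $\mu^{\star,h}_{1:h}\ge 1/(X_hA)$ by Corollary~\ref{cor:mu_star_lower_bound}), and aim to show that
\[
M_t\defeq\exp\paren{\sum_{s\le t}\sum_{x_h,a_h}\alpha_h^s(x_h,a_h)\paren{\wt{\ell}_h^s(x_h,a_h)-\ell_h^s(x_h,a_h)}}
\]
is a supermartingale with $M_0=1$. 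The claim then follows from $\E[M_T]\le 1$ together with Markov's inequality applied at level $1/\delta$.

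The heart of the argument is a pointwise inequality bounding $\wt\ell$ by a logarithm. I would establish, for each $(x_h,a_h)$,
\[
\wt{\ell}_h^t(x_h,a_h)\le\frac{1}{2b(x_h,a_h)}\log\paren{1+\frac{2b(x_h,a_h)\indic{(x_h^t,a_h^t)=(x_h,a_h)}(1-r_h^t)}{\mu_{1:h}^t(x_h,a_h)}},
\]
which, when the indicator fires, reduces to the elementary bound $z/(1+z/2)\le\log(1+z)$ after setting $z=2b(1-r_h^t)/\mu_{1:h}^t$ and using $1-r_h^t\le 1$ to obtain $b/\mu_{1:h}^t\ge z/2$. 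Multiplying by $\alpha_h^t(x_h,a_h)\le 2b(x_h,a_h)$ and applying $(1+u)^c\le 1+cu$ with $c=\alpha_h^t/(2b)\in[0,1]$ upgrades this to
\[
\alpha_h^t(x_h,a_h)\wt{\ell}_h^t(x_h,a_h)\le\log\paren{1+\alpha_h^t(x_h,a_h)\frac{\indic{(x_h^t,a_h^t)=(x_h,a_h)}(1-r_h^t)}{\mu_{1:h}^t(x_h,a_h)}}.
\]

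Next I would exponentiate the sum over $(x_h,a_h)$, exploiting that at a fixed layer $h$ each episode visits \emph{exactly one} pair $(x_h^t,a_h^t)$, so the product of the $(1+\cdots)$ factors collapses to $1+\sum_{x_h,a_h}\alpha_h^t(x_h,a_h)\indic{(x_h^t,a_h^t)=(x_h,a_h)}(1-r_h^t)/\mu_{1:h}^t(x_h,a_h)$. Taking $\E[\,\cdot\mid\mathcal{F}^{t-1}]$ and invoking the key expectation identity $\E[\indic{(x_h^t,a_h^t)=(x_h,a_h)}(1-r_h^t)\mid\mathcal{F}^{t-1}]=\mu_{1:h}^t(x_h,a_h)\ell_h^t(x_h,a_h)$---which I would verify directly from the definition~\eqref{equation:loss} of $\ell_h^t$ and the reaching-probability decomposition~\eqref{equation:reaching-probability-decomposition}---makes each divided quantity have conditional mean exactly $\ell_h^t(x_h,a_h)$. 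Hence $\E[\exp(\sum_{x_h,a_h}\alpha_h^t\wt{\ell}_h^t)\mid\mathcal{F}^{t-1}]\le 1+\sum_{x_h,a_h}\alpha_h^t\ell_h^t\le\exp(\sum_{x_h,a_h}\alpha_h^t\ell_h^t)$, and since $\alpha_h^t,\ell_h^t$ are $\mathcal{F}^{t-1}$-measurable this gives $\E[M_t\mid\mathcal{F}^{t-1}]\le M_{t-1}$.

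The main obstacle---and the place where this ``variant'' departs from the standard bandit statement---is arranging for both the pointwise inequality and the expectation identity to hinge on the same quantity $\mu_{1:h}^t(x_h,a_h)$, even though this is \emph{not} the probability that the indicator fires (that probability is $\mu_{1:h}^t(x_h,a_h)\,p^{\nu^t}_{1:h}(x_h)$, with the extra factor coming from the environment and opponent, cf.\ the proof of Lemma~\ref{lemma:pnu}(a)). The $(x_h,a_h)$-dependent bonus $b(x_h,a_h)=\gamma\mu^{\star,h}_{1:h}(x_h,a_h)$ is precisely what lets the log-argument carry the exact factor $\mu_{1:h}^t$ needed for the telescoping. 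One must also dispatch the degenerate case $\mu_{1:h}^t(x_h,a_h)=0$, which is harmless: the indicator is then zero almost surely, so $\wt\ell_h^t(x_h,a_h)=0$ a.s.\ and the corresponding term may simply be dropped from the sum. Once these pieces are in place, the supermartingale-plus-Markov conclusion is routine.
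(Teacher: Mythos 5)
Your proposal is correct and follows essentially the same route as the paper's own proof: the identical IX-style exponential-moment (supermartingale) argument, built on the same pointwise bound via $z/(1+z/2)\le\log(1+z)$, the same upgrade $c\log(1+u)\le\log(1+cu)$ for $c\in[0,1]$ using $\alpha_h^t\le 2\gamma\mu^{\star,h}_{1:h}$, the same collapse of the product of $(1+\cdot)$ factors because only one $(x_h,a_h)$ is visited at layer $h$, and Markov's inequality at level $1/\delta$. The only cosmetic differences are that you verify the conditional-mean identity $\mathbb{E}[\indic{(x_h^t,a_h^t)=(x_h,a_h)}(1-r_h^t)\,|\,\mathcal{F}^{t-1}]=\mu^t_{1:h}(x_h,a_h)\ell_h^t(x_h,a_h)$ explicitly and dispatch the degenerate case $\mu^t_{1:h}(x_h,a_h)=0$, where the paper instead routes the same facts through the unbiased importance-sampling estimator $\widehat{\ell}_h^t$.
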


\begin{proof}
Define the unbiased importance sampling estimator 
$$
\hat{\ell}_{h}^{t}:=\frac{1-r_{h}^{t}}{\mu _{1:h}^{t}(x_{h}^{t},a_{h}^{t})}\cdot \ones\left\{ x_h=x_{h}^{t},a_h=a_{h}^{t} \right\} .
$$
We first have
\begin{align*}
    \widetilde{\ell }_{h}^{t}\left( x_h,a_h \right) =&\frac{1-r_{h}^{t}}{\mu _{1:h}^{t}(x_h,a_h)+\gamma \mu _{1:h}^{\star ,h}(x_h,a_h)}\cdot \ones\left\{ x_h=x_{h}^{t},a_h=a_{h}^{t} \right\} 
\\
\le& \frac{1-r_{h}^{t}}{\mu _{1:h}^{t}(x_h,a_h)+\gamma \mu _{1:h}^{\star ,h}(x_h,a_h)\left( 1-r_{h}^{t} \right)}\cdot \ones\left\{ x_h=x_{h}^{t},a_h=a_{h}^{t} \right\} 
\\
\le& \frac{1}{2\gamma \mu _{1:h}^{\star ,h}(x_h,a_h)}\frac{2\gamma \mu _{1:h}^{\star ,h}(x_h,a_h)\left( 1-r_{h}^{t} \right) \ones\left\{ x_h=x_{h}^{t},a_h=a_{h}^{t} \right\} /\mu _{1:h}^{t}(x_h,a_h)}{1+\gamma \mu _{1:h}^{\star ,h}(x_h,a_h)\left( 1-r_{h}^{t} \right) \ones\left\{ x_h=x_{h}^{t},a_h=a_{h}^{t} \right\} /\mu _{1:h}^{t}(x_h,a_h)}
\\
=&\frac{1}{2\gamma \mu _{1:h}^{\star ,h}(x_h,a_h)}\frac{2\gamma \mu _{1:h}^{\star ,h}(x_h,a_h)\hat{\ell}_{h}^{t}(x_h,a_h)}{1+\gamma \mu _{1:h}^{\star ,h}(x_h,a_h)\hat{\ell}_{h}^{t}(x_h,a_h)}
\\
\overset{\left( i \right)}{\le}&\frac{1}{2\gamma \mu _{1:h}^{\star ,h}(x_h,a_h)}\log \left( 1+2\gamma \mu _{1:h}^{\star ,h}(x_h,a_h)\hat{\ell}_{h}^{t}(x_h,a_h) \right) ,
\end{align*}
where $(i)$ is because for any $z \ge 0$, $\frac{z}{1+z/2}\le \log \left( 1+z \right) $.

As a result, we have the following bound on the moment generating function:
\begin{align*}
   & \mathbb{E}\left\{ \exp \left\{ \sum_{x_h,a_h}{\alpha _{h}^{t}\left( x_h,a_h \right) \widetilde{\ell }_{h}^{t}\left( x_h,a_h \right)} \right\} |\mathcal{F}^{t-1} \right\} 
\\
\le& \mathbb{E}\left\{ \exp \left\{ \sum_{x_h,a_h}{\frac{\alpha _{h}^{t}\left( x_h,a_h \right)}{2\gamma \mu _{1:h}^{\star ,h}(x_h,a_h)}\log \left( 1+2\gamma \mu _{1:h}^{\star ,h}(x_h,a_h)\hat{\ell}_{h}^{t}(x_h,a_h) \right)} \right\} |\mathcal{F}^{t-1} \right\} 
\\
\overset{\left( i \right)}{\le}&\mathbb{E}\left\{ \exp \left\{ \sum_{x_h,a_h}{\log \left( 1+\alpha _{h}^{t}\left( x_h,a_h \right) \hat{\ell}_{h}^{t}(x_h,a_h) \right)} \right\} |\mathcal{F}^{t-1} \right\} 
\\
=&\mathbb{E}\left\{ \prod_{x_h,a_h}{\left( 1+\alpha _{h}^{t}\left( x_h,a_h \right) \hat{\ell}_{h}^{t}(x_h,a_h) \right)}|\mathcal{F}^{t-1} \right\} 
\\
\overset{\left( ii \right)}{=}&\mathbb{E}\left\{ 1+\sum_{x_h,a_h}{\alpha _{h}^{t}\left( x_h,a_h \right) \hat{\ell}_{h}^{t}(x_h,a_h)}|\mathcal{F}^{t-1} \right\} 
\\
=&1+\sum_{x_h,a_h}{\alpha _{h}^{t}\left( x_h,a_h \right) \ell _{h}^{t}(x_h,a_h)}
\\
\le& \mathbb{E}\left\{ \exp \left\{ \sum_{x_h,a_h}{\alpha _{h}^{t}\left( x_h,a_h \right) \ell _{h}^{t}(x_h,a_h)} \right\} |\mathcal{F}^{t-1} \right\} ,
\end{align*}
where $(i)$ is because $z\log \left( 1+z' \right) \le \log \left( 1+zz' \right) $ for any $0\le z \le 1$ and $z'>-1$, and $(ii)$ follows from the fact that for any $h$, at most one of $\hat{\ell}_{h}^{t}(x_h,a_h)$ is non-zero, so the cross terms disappear.

Repeating the above argument, 
\begin{align*}
    &\mathbb{E}\left\{ \exp \left\{ \sum_{t=1}^T{\sum_{x_h,a_h}{\alpha _{h}^{t}\left( x_h,a_h \right) \left( \widetilde{\ell }_{h}^{t}\left( x_h,a_h \right) -\ell _{h}^{t}\left( x_h,a_h \right) \right)}} \right\} \right\} 
\\
\le& \mathbb{E}\left\{ \exp \left\{ \sum_{t=1}^{T-1}{\sum_{x_h,a_h}{\alpha _{h}^{t}\left( x_h,a_h \right) \left( \widetilde{\ell }_{h}^{t}\left( x_h,a_h \right) -\ell _{h}^{t}\left( x_h,a_h \right) \right)}} \right\} \mathbb{E}\left\{ \exp \left\{ \sum_{x_h,a_h}{\alpha _{h}^{T}\left( x_h,a_h \right) \left( \widetilde{\ell }_{h}^{T}\left( x_h,a_h \right) -\ell _{h}^{T}\left( x_h,a_h \right) \right)} \right\} |\mathcal{F}^{T-1} \right\} \right\} 
\\
\le& \mathbb{E}\left\{ \exp \left\{ \sum_{t=1}^{T-1}{\sum_{x_h,a_h}{\alpha _{h}^{t}\left( x_h,a_h \right) \left( \widetilde{\ell }_{h}^{t}\left( x_h,a_h \right) -\ell _{h}^{t}\left( x_h,a_h \right) \right)}} \right\} \right\} 
\\
\le& \cdots \le 1.
\end{align*}
Therefore, we can apply the Markov inequality and get
\begin{align*}
    &\mathbb{P}\left\{ \sum_{t=1}^T{\sum_{x_h,a_h}{\alpha _{h}^{t}\left( x_h,a_h \right) \left( \widetilde{\ell }_{h}^{t}\left( x_h,a_h \right) -\ell _{h}^{t}\left( x_h,a_h \right) \right)}}>\log \left( 1/\delta \right) \right\} 
\\
=&\mathbb{P}\left\{ \exp \left\{ \sum_{t=1}^{T-1}{\sum_{x_h,a_h}{\alpha _{h}^{t}\left( x_h,a_h \right) \left( \widetilde{\ell }_{h}^{t}\left( x_h,a_h \right) -\ell _{h}^{t}\left( x_h,a_h \right) \right)}} \right\} >1/\delta \right\} 
\\
\le& \delta \cdot \mathbb{E}\left\{ \exp \left\{ \sum_{t=1}^T{\sum_{x_h,a_h}{\alpha _{h}^{t}\left( x_h,a_h \right) \left( \widetilde{\ell }_{h}^{t}\left( x_h,a_h \right) -\ell _{h}^{t}\left( x_h,a_h \right) \right)}} \right\} \right\} \le \delta .
\end{align*}
This is the desired result.
\end{proof}

\begin{corollary}
  \label{cor:explore_no_more}
  We have
  \begin{enumerate}[label=(\alph*)]
  \item For some fixed $h \in [H]$ and $(x_h,a_h)$, let $\alpha _{h}^{t}\left( x_h,a_h \right) \in \left[ 0,2\gamma \mu _{1:h}^{\star ,h}\left( x_h,a_h \right) \right] $ be $\mathcal{F}^{t-1}$-measurable random variable. Then with probability $1-\delta$,
    $$
    \sum_{t=1}^T{\alpha _{h}^{t}\left( x_h,a_h \right) \left( \widetilde{\ell }_{h}^{t}\left( x_h,a_h \right) -\ell _{h}^{t}\left( x_h,a_h \right) \right)}\le \log \left( 1/\delta \right) .
    $$
  \item For some fixed $h \in [H]$ and $x_h$, let $\alpha _{h}^{t}\left( x_h,a_h \right) \in \left[ 0,2\gamma \mu _{1:h}^{\star ,h}\left( x_h,a_h \right) \right] $ be $\mathcal{F}^{t-1}$-measurable random variable for each $a_h  \in  \mathcal{A}$. Then with probability $1-\delta$,
    $$
    \sum_{t=1}^T\sum_{a_h\in\cA}{\alpha _{h}^{t}\left( x_h,a_h \right) \left( \widetilde{\ell }_{h}^{t}\left( x_h,a_h \right) -\ell _{h}^{t}\left( x_h,a_h \right) \right)}\le \log \left( 1/\delta \right) .
    $$
  \end{enumerate}
\end{corollary}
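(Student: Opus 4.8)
The plan is to derive both parts directly from Lemma~\ref{lem:explore_no_more} by specializing the family of coefficients $\{\alpha_h^t(x_h, a_h)\}$. The crucial observation is that Lemma~\ref{lem:explore_no_more} permits \emph{arbitrary} $\mathcal{F}^{t-1}$-measurable coefficients lying in the prescribed interval $[0, 2\gamma \mu_{1:h}^{\star ,h}(x_h, a_h)]$, and in particular allows setting any subset of them to zero. Thus both claims reduce to instantiating the lemma with a coefficient family that vanishes outside the index set appearing in the target sum.

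For part (a), I fix the pair $(x_h, a_h)$ in the statement and define a new family $\{\wt{\alpha}_h^t(x_h', a_h')\}_{(x_h', a_h')\in\cX_h\times\cA}$ by setting $\wt{\alpha}_h^t(x_h, a_h) = \alpha_h^t(x_h, a_h)$ and $\wt{\alpha}_h^t(x_h', a_h') = 0$ for every other pair. Each $\wt{\alpha}_h^t(x_h', a_h')$ is $\mathcal{F}^{t-1}$-measurable and lies in the required interval (the zero coefficients trivially, since $0 \in [0, 2\gamma \mu_{1:h}^{\star ,h}(\cdot)]$), so Lemma~\ref{lem:explore_no_more} applies to this family. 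Its double sum over $\cX_h \times \cA$ then collapses to the single term indexed by $(x_h, a_h)$, yielding exactly the claimed bound.

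For part (b), I fix the infoset $x_h$ and define $\wt{\alpha}_h^t(x_h, a_h') = \alpha_h^t(x_h, a_h')$ for all $a_h' \in \cA$, and $\wt{\alpha}_h^t(x_h', a_h') = 0$ whenever $x_h' \ne x_h$. Again every coefficient is $\mathcal{F}^{t-1}$-measurable and in range, so Lemma~\ref{lem:explore_no_more} applies, and its double sum reduces to the sum over $a_h \in \cA$ at the fixed $x_h$, giving the stated inequality.

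I do not anticipate any real obstacle here: both statements are strict specializations of Lemma~\ref{lem:explore_no_more}, obtained by zeroing out the coefficients not appearing in the target sum. The only point requiring a moment's care is verifying that the zero choice is admissible under the lemma's hypotheses (measurability and membership in the interval), which is immediate.
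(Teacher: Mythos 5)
Your proof is correct and is exactly the paper's argument: the paper likewise applies Lemma~\ref{lem:explore_no_more} to the coefficient family $(\alpha_h^t)'(x_h', a_h') = \alpha_h^t(x_h', a_h')\,\indic{x_h' = x_h,\, a_h' = a_h}$ for part (a), and notes part (b) follows similarly. Your added check that the zero coefficients satisfy the measurability and interval hypotheses is a minor elaboration the paper leaves implicit.
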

\begin{proof}
For (a), using Lemma~\ref{lem:explore_no_more} with $\left( \alpha _{h}^{t} \right) '\left( x'_h,a'_h \right) =\alpha _{h}^{t}\left( x'_h,a'_h \right) \ones\left\{ x'_h=x_h,a'_h=a_h \right\} 
$,
\begin{align*}
    &\sum_{t=1}^T{\alpha _{h}^{t}\left( x_h,a_h \right) \left( \widetilde{\ell }_{h}^{t}\left( x_h,a_h \right) -\ell _{h}^{t}\left( x_h,a_h \right) \right)}
\\
=&\sum_{t=1}^T{\sum_{x'_h,a'_h}{\alpha _{h}^{t}\left( x_h,a_h \right) \ones\left\{ x'_h=x_h,a'_h=a_h \right\} \left[ \widetilde{\ell }^t(x'_h,a'_h)-\ell ^t(x'_h,a'_h) \right]}}\le \log \left( 1/\delta \right) .
\end{align*}
Claim (b) can proved similarly.
\end{proof}

\subsection{Proof of Lemma~\ref{lem:bias_1}}
\label{appendix:proof-bias1}
We further decompose $\textrm{BIAS}^1$ to two terms by 
$$
{\rm BIAS}^1 = \sum_{t=1}^T{\left< \mu ^t,\ell ^t-\widetilde{\ell }^t \right>}=\underset{\left( A \right)}{\underbrace{\sum_{t=1}^T{\left< \mu ^t,\ell ^t-\mathbb{E}\left\{ \widetilde{\ell }^t|\mathcal{F}^{t-1} \right\} \right>}}}+\underset{\left( B \right)}{\underbrace{\sum_{t=1}^T{\left< \mu ^t,\mathbb{E}\left\{ \widetilde{\ell }^t|\mathcal{F}^{t-1} \right\} -\widetilde{\ell }^t \right>}}}.
$$

To bound $(A)$, plug in the definition of loss estimator, 
\begin{align*}
    &\sum_{t=1}^T{\left< \mu ^t,\ell ^t-\mathbb{E}\left\{ \widetilde{\ell }^t|\mathcal{F}^{t-1} \right\} \right>}\\
    =&\sum_{t=1}^T{\sum_{h=1}^H{\sum_{x_h,a_h}{\mu _{1:h}^{t}(x_h,a_h)\left[ \ell_h ^t(x_h,a_h)-\frac{\mu _{1:h}^{t}(x_h,a_h)\ell_h ^t(x_h,a_h)}{\mu _{1:h}^{t}(x_h,a_h)+\gamma \mu _{1:h}^{\star ,h}(x_h,a_h)} \right]}}}
\\
=&\sum_{t=1}^T{\sum_{h=1}^H{\sum_{x_h,a_h}{\mu _{1:h}^{t}(x_h,a_h)\ell_h ^t(x_h,a_h)\left[ \frac{\gamma \mu _{1:h}^{\star ,h}(x_h,a_h)}{\mu _{1:h}^{t}(x_h,a_h)+\gamma \mu _{1:h}^{\star ,h}(x_h,a_h)} \right]}}}
\\
\le& \sum_{t=1}^T{\sum_{h=1}^H{\sum_{x_h,a_h}{\gamma \mu _{1:h}^{\star ,h}(x_h,a_h)\ell_h ^t(x_h,a_h)}}}
\\
\overset{\left( i \right)}{\le}& \sum_{t=1}^T{\sum_{h=1}^H{\gamma}=}\gamma HT,
\end{align*}
where $(i)$ is by using Corollary~\ref{cor:averge_loss_bound} with policy $\mu ^{\star ,h}$ for each layer $h$.

To bound $(B)$, first notice 
\begin{align*}
    \left< \mu ^t,\widetilde{\ell }^t \right> =&\sum_{h=1}^H{\sum_{x_h,a_h}{\mu _{1:h}^{t}(x_h,a_h)\frac{\left( 1-r_{h}^{t} \right) \ones\left\{ x_h=x_{h}^{t},a_h=a_{h}^{t} \right\}}{\mu _{1:h}^{t}(x_h,a_h)+\gamma \mu _{1:h}^{\star ,h}(x_h,a_h)}}}
\\
\le& \sum_{h=1}^H{\sum_{x_h,a_h}{\ones\left\{ x_h=x_{h}^{t},a_h=a_{h}^{t} \right\}}}=\sum_{h=1}^H{1}=H.
\end{align*}
Then by Azuma-Hoeffding, with probability at least $1-\delta/3$,
$$
\sum_{t=1}^T{\left< \mu ^t,\mathbb{E}\left\{ \widetilde{\ell }^t|\mathcal{F}^{t-1} \right\} -\widetilde{\ell }^t \right>}\le H\sqrt{2T\log(3/\delta)} \le H\sqrt{2T\iota}.
$$
Combining the bounds for (A) and (B) gives the desired result.
\qed

\subsection{Proof of Lemma~\ref{lem:bias_2}}
\label{appendix:proof-bias2}
We have
\begin{align*}
    & {\rm BIAS}^2 = \max_{\mu^\dagger\in\Pi_{\max}} \sum_{t=1}^T{\left< \mu ^{\dagger},\widetilde{\ell }^t-\ell ^t \right>}
\\
=& \max_{\mu^\dagger\in\Pi_{\max}} \sum_{t=1}^T{\sum_{h=1}^H{\sum_{x_h,a_h}{\mu _{1:h}^{\dagger}(x_h,a_h)\left[ \wt{\ell}_h^t(x_h,a_h)-\ell_h ^t(x_h,a_h) \right]}}}
\\
=& \max_{\mu^\dagger\in\Pi_{\max}}  \sum_{t=1}^T{\sum_{h=1}^H{\sum_{x_h,a_h}{\frac{\mu _{1:h}^{\dagger}(x_h,a_h)}{\gamma \mu _{1:h}^{\star ,h}(x_h,a_h)}\gamma \mu _{1:h}^{\star ,h}(x_h,a_h)\left[ \wt{\ell}_h^t(x_h,a_h)-\ell_h ^t(x_h,a_h) \right]}}}
\\
=& \max_{\mu^\dagger\in\Pi_{\max}}  \sum_{h=1}^H{\sum_{x_h,a_h}{\frac{\mu _{1:h}^{\dagger}(x_h,a_h)}{\gamma \mu _{1:h}^{\star ,h}(x_h,a_h)}\sum_{t=1}^T{\gamma \mu _{1:h}^{\star ,h}(x_h,a_h)\left[ \wt{\ell}_h^t(x_h,a_h)-\ell_h ^t(x_h,a_h) \right]}}}
\\
\overset{\left( i \right)}{\le} &\frac{\log \left( XA/\delta \right)}{\gamma}\sum_{h=1}^H \max_{\mu^\dagger\in\Pi_{\max}}  {\sum_{x_h,a_h}{\frac{\mu _{1:h}^{\dagger}(x_h,a_h)}{\mu _{1:h}^{\star ,h}(x_h,a_h)}}}
\\
\overset{\left( ii \right)}{\le}&\frac{\iota}{\gamma}\sum_{h=1}^H{X_hA}=XA\iota/\gamma ,
\end{align*}
where $(i)$ is by applying Corollary~\ref{cor:explore_no_more} for each $(x_h,a_h)$ pair and taking union bound, and $(ii)$ is by Lemma~\ref{lemma:balancing}.
\qed

\subsection{Proof of Lemma~\ref{lem:regret_sample}}
\label{appendix:proof-regret_sample}

We begin by stating the following lemma, which roughly speaking relates the task of bounding the regret to bounding the term $\left< \mu ,\widetilde{\ell}^{t} \right> + \frac{1}{ \eta\mu _{1:1}^{\star ,1}(x^t_1,a_1)}\log Z_{1}^{t}$.
\begin{lemma}
\label{lem:decompose_sample}
  For any policy $\mu \in \Pi _{\max} $, 
  $$
  \D( \mu \|\mu ^{t+1} ) -\D( \mu \|\mu ^t ) = \eta\left< \mu ,\widetilde{\ell}^{t} \right> + \frac{1}{ \mu _{1:1}^{\star ,1}(x^t_1,a_1)}\log Z_{1}^{t} .
$$
\end{lemma}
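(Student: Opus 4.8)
The plan is to expand the left-hand side directly from the definition of $\Dbal$, substitute the closed-form update of Algorithm~\ref{algorithm:balanced-omd-implement}, and then show that the two families of normalization ($\log Z$) terms telescope across layers, leaving only the single boundary term at layer~$1$.

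First I would note that the two divergences $\Dbal(\mu\|\mu^{t+1})$ and $\Dbal(\mu\|\mu^t)$ share the same $\log\mu_h(a_h|x_h)$ factors (coming from the fixed comparator $\mu$), so upon subtraction these cancel and
\[
\Dbal(\mu \| \mu^{t+1}) - \Dbal(\mu \| \mu^t) = \sum_{h=1}^H \sum_{x_h, a_h} \frac{\mu_{1:h}(x_h, a_h)}{\mu_{1:h}^{\star, h}(x_h, a_h)} \log \frac{\mu_h^t(a_h | x_h)}{\mu_h^{t+1}(a_h | x_h)}.
\]
Since Algorithm~\ref{algorithm:balanced-omd-implement} modifies the conditional policy only at the visited infosets $x_h=x_h^t$ and sets $\mu_h^{t+1}(\cdot|x_h)=\mu_h^t(\cdot|x_h)$ for all $x_h\neq x_h^t$, the inner sum over $x_h$ collapses to the single term $x_h=x_h^t$. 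I would then plug in the explicit update, splitting into the cases $a_h=a_h^t$ and $a_h\neq a_h^t$. This produces, at each layer $h$, three kinds of contributions: a loss term equal to $\eta\,\mu_{1:h}(x_h^t,a_h^t)\,\widetilde{\ell}_h^t(x_h^t,a_h^t)$, a term proportional to $\log Z_{h+1}^t$, and a term proportional to $\log Z_h^t$.

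Summing the loss terms over $h$ immediately yields $\eta\langle\mu,\widetilde{\ell}^t\rangle$, using that $\widetilde{\ell}^t$ is supported only on the visited pairs $(x_h^t,a_h^t)$. The crux of the argument, which I expect to be the main obstacle, is to show that the $\log Z$ terms telescope. Here I would invoke two structural facts: (i) at layer $h$ the balanced policy $\mu^{\star,h}$ plays uniformly, so $\mu_{1:h}^{\star,h}(x_h^t,a_h)$ is independent of $a_h$ and can be pulled out of the sum over actions; and (ii) the sequence-form flow constraint $\sum_{a_{h+1}}\mu_{1:h+1}(x_{h+1}^t,a_{h+1})=\mu_{1:h}(x_h^t,a_h^t)$ relating a child infoset to its parent on the trajectory. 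Combining these, the coefficient of $\log Z_{h+1}^t$ produced by the cross-layer term at layer $h$ is exactly $-\mu_{1:h}(x_h^t,a_h^t)/\mu_{1:h+1}^{\star,h+1}(x_{h+1}^t,a_{h+1})$, which precisely cancels the aggregated coefficient $\sum_{a_{h+1}}\mu_{1:h+1}(x_{h+1}^t,a_{h+1})/\mu_{1:h+1}^{\star,h+1}(x_{h+1}^t,a_{h+1})$ of $\log Z_{h+1}^t$ coming from the $\log Z_{h+1}^t$ term at layer $h+1$. Thus consecutive layers cancel pairwise.

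Finally I would dispose of the two boundary terms. The cross-layer contribution at layer $H$ carries $\log Z_{H+1}^t$, which vanishes because the algorithm initializes $Z_{H+1}^t=1$. The only survivor is the $\log Z_1^t$ term at layer $1$, whose coefficient is $\sum_{a_1}\mu_{1:1}(x_1^t,a_1)/\mu_{1:1}^{\star,1}(x_1^t,a_1)=1/\mu_{1:1}^{\star,1}(x_1^t,a_1)$, using $\sum_{a_1}\mu_{1:1}(x_1^t,a_1)=1$ together with the $a_1$-independence of $\mu_{1:1}^{\star,1}(x_1^t,\cdot)$. Assembling the loss term with this single surviving boundary term yields exactly the claimed identity.
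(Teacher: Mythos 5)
Your proposal is correct and follows essentially the same route as the paper's proof: expand $\Dbal(\mu\|\mu^{t+1})-\Dbal(\mu\|\mu^t)$, collapse the sum to the visited infosets $x_h^t$, substitute the explicit update of Algorithm~\ref{algorithm:balanced-omd-implement}, and telescope the $\log Z$ terms using exactly the two facts you cite (the $a_h$-independence of $\mu^{\star,h}_{1:h}(x_h^t,\cdot)$ from uniformity at layer $h$, and the sequence-form flow constraint), with the boundary terms handled via $Z_{H+1}^t=1$ and $\sum_{a_1}\mu_1(a_1|x_1^t)=1$. The paper carries out this identical computation in a single chain of equalities, so there is nothing to add.
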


\begin{proof}
By definition of $\D$ and the conditional form update rule in Algorithm~\ref{alg:IXOMD},
\begin{align*}
    &\D( \mu \|\mu ^{t+1} ) -\D( \mu \|\mu ^t ) 
\\
=&\sum_{h=1}^H{\sum_{x_h,a_h}{\frac{\mu _{1:h}(x_h,a_h)}{\mu _{1:h}^{\star ,h}(x_h,a_h)}\log \frac{\mu _{h}^{t}(a_h|x_h)}{\mu _{h}^{t+1}(a_h|x_h)}}}
\\
=&\sum_{h=1}^H{\sum_{a_h}{\frac{\mu _{1:h}(x_{h}^{t},a_h)}{\mu _{1:h}^{\star ,h}(x_{h}^{t},a_h)}\log \frac{\mu _{h}^{t}(a_h|x_{h}^{t})}{\mu _{h}^{t+1}(a_h|x_{h}^{t})}}}
\\
=&\sum_{h=1}^H{\frac{\mu _{1:h}(x_{h}^{t},a_{h}^{t})}{\mu _{1:h}^{\star ,h}(x_{h}^{t},a_{h}^{t})}\left[ \eta \mu _{1:h}^{\star ,h}(x_{h}^{t},a_{h}^{t})\widetilde{\ell }_{h}^{\,t}-\frac{\mu _{1:h}^{\star ,h}(x_{h}^{t},a_{h}^{t})}{\mu _{1:h+1}^{\star ,h+1}(x_{h+1}^{t},a_{h+1}^{t})}\log Z_{h+1}^{t} \right] +}\sum_{h=1}^H{\sum_{a_h}{\frac{\mu _{1:h}(x_{h}^{t},a_h)}{\mu _{1:h}^{\star ,h}(x_{h}^{t},a_h)}\log Z_{h}^{t}}}
\\
=&\eta\sum_{h=1}^H{\mu _{1:h}(x_{h}^{t},a_{h}^{t})\widetilde{\ell }_{h}^{\,t}(x_{h}^{t},a_{h}^{t})}-\sum_{h=1}^H{\frac{\mu _{1:h}(x_{h}^{t},a_{h}^{t})}{\mu _{1:h+1}^{\star ,h+1}(x_{h+1}^{t},a_{h+1}^{t})}\log Z_{h+1}^{t}}+\sum_{h=1}^H{\frac{\mu _{1:h-1}(x_{h-1}^{t},a_{h-1}^{t})}{\mu _{1:h}^{\star ,h}(x_{h}^{t},a_{h}^{t})}\log Z_{h}^{t}}
\\
=&\eta \left< \mu ,\widetilde{\ell }^t \right> +\frac{1}{\mu _{1:1}^{\star ,1}(x_{1}^{t},a_1)}\log Z_{1}^{t}.
\end{align*}
\end{proof}

\paragraph{Additional notation}
We introduce the following notation for convenience throughout the rest of this subsection. Define
\begin{align*}
  \beta _{h}^{t}\defeq \eta \mu _{1:h}^{\star ,h}(x_{h}^{t},a_{h}^{t}).
\end{align*}
For simplicity, when there is no confusion, we write 
$$\mu _{h}^{t}\defeq \mu _{h}^{t}(a_h^t|x_h^t),\,\,\,\, \mu _{h:h'}^{t}\defeq \prod_{h''=h}^{h'}{\mu _{h''}^{t}}, $$
and 
$$\widetilde{\ell}_h^{t}:=\widetilde{\ell }_{h}^{t}\left( x_{h}^{t},a_{h}^{t} \right) =\frac{1-r_{h}^{t}}{\mu _{1:h}^{t}(x_{h}^{t},a_{h}^{t})+\gamma\mu^\star_{1:h}(x_{h}^{t},a_{h}^{t})}.$$

Define the normalized log-partition function as
\begin{align}
\label{equation:xiht}
  & \quad \Xi _{h}^{t}:= \frac{1}{ \beta _{h}^{t}}\log Z_{h}^{t} =\frac{1}{\beta _{h}^{t}}\log \left( 1-\mu _{h}^{t}+\mu _{h}^{t}\exp \left[ \beta _{h}^{t}\left( \Xi _{h+1}^{t}-\widetilde{\ell }_{h}^{t} \right) \right] \right).
\end{align}
Note that this value can be seen as an $H$-variate function of the loss estimator $\set{\widetilde{\ell}_h^{t}}_{h\in[H]}$. To make this dependence more clear, for any $\widetilde{\ell }\in [0, \infty)^H$, we define the function $\left\{ \Xi^t _h\left( \cdot \right) \right\} _{h=1}^{H}$ recursively by (overloading notation)
\begin{align*}
  \Xi^t _h\left( \widetilde{\ell } \right) = \Xi^t _h\left( \widetilde{\ell }_{h:H} \right)  \defeq \left\{
  \begin{aligned}
    & \log \left( 1-\mu _{h}^{t}+\mu _{h}^{t}\exp \left[ -\beta _{h}^{t}\widetilde{\ell }_h \right] \right) /\beta _{h}^{t}\,\,\,\, & {\rm if}~~h=H,\\
    & \log \left( 1-\mu _{h}^{t}+\mu _{h}^{t}\exp \left[ \beta _{h}^{t}\left( \Xi _{h+1}\left( \widetilde{\ell}_{h+1:H} \right) -\widetilde{\ell }_{h} \right) \right] \right) /\beta _{h}^{t}\,\,\,\, & {\rm otherwise}.\\
  \end{aligned}
  \right.
\end{align*}
With this definition, we have $\Xi _{h}^{t}=\Xi _h^t\left( \widetilde{\ell }^{t} \right)$ where $\wt{\ell}^t$ is the actual loss estimator. Note that, importantly, $\Xi^t_h(\wt{\ell}_{h:H})$ has a \emph{compositional structure}: It is a function of $\wt{\ell}_h$ ($h$-th entry of the loss) and $\Xi^t_{h+1}$ (which is itself a function of $\wt{\ell}_{h+1:H}$). This compositional structure is key to proving bounds on its gradients and Hessians.

The rest of this subsection is organized as follows. In Section~\ref{appendix:bound-xi1t}, we bound the gradients and Hessians of the function $\Xi_1^t(\cdot)$ in an entry-wise fashion, and then use the Mean-Value Theorem to give a bound on $\Xi_1^t=\Xi_1^t(\wt{\ell}^t)$ (Lemma~\ref{lem:bound_second_order_sample}). We then combine this result with Lemma~\ref{lem:decompose_sample} to prove the main lemma that bounds ${\rm REGRET}$ (Section~\ref{appendix:regret-main}).

\subsubsection{Bounding $\Xi_1^t$}
\label{appendix:bound-xi1t}
\begin{lemma}
\label{lem:Xi_negative_sample}
  For $\widetilde{\ell }\in [0,\infty) ^H$ and any $h \in [H]$,  $\Xi^t _h\left( \widetilde{\ell } \right)  \le 0$. Furthermore, $\Xi^t _h\left( 0 \right) =0$.
\end{lemma}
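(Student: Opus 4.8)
The plan is to prove both claims by backward induction on $h$, from $h=H$ down to $h=1$, treating $\Xi_h^t(\cdot)$ as the function of $\widetilde{\ell}_{h:H}$ defined by the recursion. The single structural fact driving the argument is that, since $\mu_h^t=\mu_h^t(a_h^t\vert x_h^t)\in[0,1]$ and $\beta_h^t=\eta\mu_{1:h}^{\star,h}(x_h^t,a_h^t)>0$, the quantity inside the logarithm,
\begin{align*}
1-\mu_h^t+\mu_h^t\exp\!\big[\beta_h^t(\Xi_{h+1}^t-\widetilde{\ell}_h)\big],
\end{align*}
is a convex combination (with weights $1-\mu_h^t$ and $\mu_h^t$) of $1$ and an exponential term. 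Hence it is bounded above by $1$ precisely when that exponential term is at most $1$, i.e. when its exponent is nonpositive; and it equals $1$ exactly when the exponent is $0$. Since $1/\beta_h^t>0$, the sign of $\Xi_h^t$ is then read off directly from $\log(\cdot)\le 0$ (resp. $=0$).

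For the base case $h=H$, the exponent is $-\beta_H^t\widetilde{\ell}_H\le 0$ because $\widetilde{\ell}_H\ge 0$ and $\beta_H^t>0$, so the argument of the log is $\le 1$ and $\Xi_H^t(\widetilde{\ell})\le 0$; at $\widetilde{\ell}=0$ the exponent vanishes, the argument equals $1$, and $\Xi_H^t(0)=0$. For the inductive step I would assume $\Xi_{h+1}^t(\widetilde{\ell}_{h+1:H})\le 0$ for all nonnegative arguments and $\Xi_{h+1}^t(0)=0$. Then for any $\widetilde{\ell}\in[0,\infty)^{H}$ the exponent $\beta_h^t(\Xi_{h+1}^t-\widetilde{\ell}_h)$ is nonpositive, because $\Xi_{h+1}^t\le 0$ by the induction hypothesis and $\widetilde{\ell}_h\ge 0$; the convex-combination bound then gives $\Xi_h^t(\widetilde{\ell})\le 0$. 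Setting $\widetilde{\ell}=0$ makes $\Xi_{h+1}^t(0)=0$ and $\widetilde{\ell}_h=0$, so the exponent is exactly $0$, the argument is exactly $1$, and $\Xi_h^t(0)=0$, closing the induction.

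I do not anticipate a genuine obstacle here: the only points requiring care are bookkeeping ones, namely that the recursion is resolved in the correct (backward) direction so that the bound on $\Xi_{h+1}^t$ is available when analyzing $\Xi_h^t$, and that $\mu_h^t\in[0,1]$ is used to guarantee the coefficients $(1-\mu_h^t,\mu_h^t)$ are valid convex weights (so the argument of the log is both positive, hence the log is well defined, and $\le 1$). Both facts are immediate from the definitions, so the whole argument reduces to a short two-line induction once the convex-combination observation is made explicit.
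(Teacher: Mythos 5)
Your proof is correct and follows essentially the same route as the paper's: backward induction from $h=H$, using $\widetilde{\ell}_h\ge 0$ and the induction hypothesis $\Xi_{h+1}^t\le 0$ to bound the argument of the logarithm by the convex combination $1-\mu_h^t+\mu_h^t=1$, with equality throughout at $\widetilde{\ell}=0$. Your explicit convex-combination framing and the inductive treatment of the equality case are slightly more detailed than the paper's one-line remark, but there is no substantive difference.
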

\begin{proof}
  We show the first claim by backward induction. For $h=H$, 
  $$
  \Xi^t _H\left( \widetilde{\ell }_H \right) =\log \left( 1-\mu _{H}^{t}+\mu _{H}^{t}\exp \left[ -\beta _{H}^{t}\widetilde{\ell }_H \right] \right) /\beta _{H}^{t}\le \log \left( 1-\mu _{H}^{t}+\mu _{H}^{t} \right) /\beta _{H}^{t}\le 0,
  $$
  because $\widetilde{\ell }_{H}^{t}  \ge 0$.
  
  Assume $\Xi^t _{h+1}\left( \widetilde{\ell } \right)  \le 0$, then for the previous step $h$,
  $$
  \Xi^t _h\left( \widetilde{\ell }_{h:H} \right) =\log \left( 1-\mu _{h}^{t}+\mu _{h}^{t}\exp \left[ \beta _{h}^{t}\left( \Xi^t _{h+1}\left( \widetilde{\ell }_{h+1:H} \right) -\widetilde{\ell }_h \right) \right] \right) /\beta _{h}^{t}\le \log \left( 1-\mu _{h}^{t}+\mu _{h}^{t} \right) /\beta _{h}^{t}\le 0.
  $$
  
  The second claim follows as all inequalities become equalities at $\wt{\ell}=0$.
\end{proof}

\begin{lemma}[Bounds on first derivatives]
 \label{lem:Xi_first_order_sample}
 For $\widetilde{\ell }\in [0,1] ^H$ and any $h  \in [H]$, the derivatives are bounded by 
 $$
  0\le \frac{\partial \Xi^t _h}{\partial \Xi^t _{h+1}}\le \mu _{h}^{t}~~~{\rm and}~~~-\mu _{h}^{t}\le \frac{\partial \Xi^t _h}{\partial \widetilde{\ell }_h}\le 0.
$$
 Furthermore, 
 $$
 \left. \frac{\partial \Xi^t _h}{\partial \widetilde{\ell }_{h'}} \right|_{\widetilde{\ell }=0}=\left\{
   \begin{aligned}
     & -\mu _{h:h'}^{t} & {\rm if}~h'\ge h, \\
     & 0 & \mathrm{otherwise}.
   \end{aligned}
   \right.
$$
\end{lemma}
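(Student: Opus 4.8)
The plan is to differentiate the outer function defining $\Xi^t_h$ directly and then propagate through the compositional structure. I would write the recursion as $\Xi^t_h = F_h(\widetilde{\ell}_h, \xi)$ evaluated at $\xi = \Xi^t_{h+1}$, where
$$
F_h(\widetilde{\ell}_h, \xi) \defeq \frac{1}{\beta^t_h}\log\paren{1 - \mu^t_h + \mu^t_h \exp\brac{\beta^t_h(\xi - \widetilde{\ell}_h)}},
$$
with the convention $\Xi^t_{H+1}\equiv 0$ for the base layer $h=H$. Setting $g \defeq \exp[\beta^t_h(\xi - \widetilde{\ell}_h)]$ and $w \defeq 1 - \mu^t_h + \mu^t_h g > 0$, logarithmic differentiation gives the two partials
$$
\frac{\partial \Xi^t_h}{\partial \Xi^t_{h+1}} = \frac{\mu^t_h g}{w}, \qquad \frac{\partial \Xi^t_h}{\partial \widetilde{\ell}_h} = -\frac{\mu^t_h g}{w},
$$
so that $\partial \Xi^t_h/\partial \widetilde{\ell}_h = -\,\partial \Xi^t_h/\partial \Xi^t_{h+1}$; it then suffices to bound the single quantity $\phi(g) \defeq \mu^t_h g/(1 - \mu^t_h + \mu^t_h g)$.

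Next I would establish the range $0 \le \phi(g) \le \mu^t_h$. The key input is Lemma~\ref{lem:Xi_negative_sample}: since $\Xi^t_{h+1}\le 0$ and $\widetilde{\ell}_h\ge 0$ (using $\widetilde{\ell}\in[0,1]^H$), we have $\xi - \widetilde{\ell}_h \le 0$, and because $\beta^t_h>0$ this forces $g\in(0,1]$. Nonnegativity of $\phi(g)$ is immediate from $\mu^t_h, g, w > 0$; the upper bound $\phi(g)\le \mu^t_h$ reduces, after clearing the positive denominator, to $g(1-\mu^t_h)\le 1-\mu^t_h$, which holds since $g\le 1$. This yields both stated bounds, the one on $\partial \Xi^t_h/\partial \widetilde{\ell}_h$ following from the sign flip above.

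For the exact value at $\widetilde{\ell}=0$, I would use the second part of Lemma~\ref{lem:Xi_negative_sample}, namely $\Xi^t_{h+1}(0)=0$, which forces $g=1$ and hence $\partial\Xi^t_h/\partial\Xi^t_{h+1}\big|_{0} = \mu^t_h$ and $\partial\Xi^t_h/\partial\widetilde{\ell}_h\big|_{0} = -\mu^t_h = -\mu^t_{h:h}$. The case $h'<h$ is trivial since $\Xi^t_h$ depends only on $\widetilde{\ell}_{h:H}$. For $h'>h$ I would run a backward induction on $h$: by the chain rule through the compositional structure, $\partial\Xi^t_h/\partial\widetilde{\ell}_{h'} = (\partial\Xi^t_h/\partial\Xi^t_{h+1})\cdot(\partial\Xi^t_{h+1}/\partial\widetilde{\ell}_{h'})$, which at $\widetilde{\ell}=0$ equals $\mu^t_h\cdot(-\mu^t_{(h+1):h'}) = -\mu^t_{h:h'}$ by the inductive hypothesis, with the base case $h=H$ already handled.

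The computations here are routine; the only substantive obstacle is ensuring the two inputs from Lemma~\ref{lem:Xi_negative_sample} (namely $\Xi^t_{h+1}\le 0$ for the range bounds and $\Xi^t_{h+1}(0)=0$ for the exact evaluation) are invoked correctly, and keeping the chain-rule bookkeeping through the compositional structure consistent so that the products $\mu^t_{h:h'}$ telescope as claimed.
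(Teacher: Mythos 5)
Your proposal is correct and follows essentially the same route as the paper's proof: you compute the same two partial derivatives of the recursion, invoke Lemma~\ref{lem:Xi_negative_sample} in exactly the two places the paper does (to get $\exp[\beta_h^t(\Xi^t_{h+1}-\widetilde{\ell}_h)]\in(0,1]$ for the range bounds, and $\Xi^t_{h+1}(0)=0$ for the evaluation at zero), and propagate via the chain rule through the compositional structure. The only cosmetic differences are that you verify the bound $\mu_h^t g/w\le \mu_h^t$ by clearing denominators rather than citing monotonicity of $z\mapsto \mu_h^t z/(1-\mu_h^t+\mu_h^t z)$, and you phrase the evaluation at $\widetilde{\ell}=0$ as an explicit backward induction instead of the paper's one-shot product formula.
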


\begin{proof}
  By chain rule and the compositional structure of the functions $\Xi_h^t(\cdot)$, for any $h' \ge h$,
  $$
  \frac{\partial \Xi^t _h}{\partial \widetilde{\ell }_{h'}}=\frac{\partial \Xi^t _h}{\partial \Xi^t _{h'}}\cdot \frac{\partial \Xi^t _{h'}}{\partial \widetilde{\ell }_{h'}}=\left( \prod_{h''=h}^{h'-1}{\frac{\partial \Xi^t _{h''}}{\partial \Xi^t _{h''+1}}} \right) \cdot \frac{\partial \Xi^t _{h'}}{\partial \widetilde{\ell }_{h'}}.
  $$

For any $h$, the derivatives are bounded by 
\begin{align*}
    \frac{\partial \Xi^t _h}{\partial \Xi^t _{h+1}}=&\frac{\mu _{h}^{t}\exp \left[ \beta _{h}^{t}\left( \Xi^t _{h+1}\left( \widetilde{\ell } \right) -\widetilde{\ell }_h \right) \right]}{1-\mu _{h}^{t}+\mu _{h}^{t}\exp \left[ \beta _{h}^{t}\left( \Xi^t _{h+1}\left( \widetilde{\ell } \right) -\widetilde{\ell }_h \right) \right]}\in \left[ 0,\mu _{h}^{t} \right], 
\\
\frac{\partial \Xi^t _h}{\partial \widetilde{\ell }_h}=-&\frac{\mu _{h}^{t}\exp \left[ \beta _{h}^{t}\left( \Xi^t _{h+1}\left( \widetilde{\ell } \right) -\widetilde{\ell }_h \right) \right]}{1-\mu _{h}^{t}+\mu _{h}^{t}\exp \left[ \beta _{h}^{t}\left( \Xi^t _{h+1}\left( \widetilde{\ell } \right) -\widetilde{\ell }_h \right) \right]}\in \left[ -\mu _{h}^{t},0 \right].
\end{align*}
 The inequalities hold because the function $f\left( z \right) =\frac{\mu _{h}^{t}z}{1-\mu _{h}^{t}+\mu _{h}^{t}z}=1-\frac{1-\mu _{h}^{t}}{1-\mu _{h}^{t}+\mu _{h}^{t}z}$ is increasing on $z \in [0,1]$, and $\exp \left[ \beta _{h}^{t}\left( \Xi^t _{h+1}\left( \widetilde{\ell } \right) -\widetilde{\ell }_h \right) \right] \in \left[ 0,1 \right] $ by Lemma~\ref{lem:Xi_negative_sample}.
 
 Putting them together, at $\widetilde{\ell }=0$, the derivative is just $\left. \frac{\partial \Xi^t _h}{\partial \widetilde{\ell }_{h'}} \right|_{\widetilde{\ell }^t=0}=-\mu _{h:h'}^{t}$ if $h' \ge h$. If $h'<h$, since $\Xi^t_h$ only depends on loss in the later layers, $\frac{\partial \Xi^t _h}{\partial \widetilde{\ell }_{h'}} |_{\widetilde{\ell }^t=0}=0$.
\end{proof}

\begin{lemma}[Bounds on second derivatives]
\label{lem:Xi_second_order_sample} 
For $\widetilde{\ell }\in [0,1] ^H$ and any $h\in [H]$, if $h' \ge h$ and $h'' \ge h$, the second-order derivatives are bounded by
$$
\frac{\partial ^2\Xi^t _{h}}{\partial \widetilde{\ell }_{h'}\partial \widetilde{\ell }_{h''}}\le \sum_{h'''=h}^{\min\{h', h''\}}{\beta _{h'''}^{t}\mu _{h:h'}^{t}\mu _{h'''+1:h''}^{t}}
= \sum_{h'''=h}^{\min\{h', h''\}}{\beta_{h'''}^{t}
\mu_{h:h'''}^t
\mu _{h'''+1:h'}^{t}\mu _{h'''+1:h''}^{t}}.
$$
Otherwise $\frac{\partial ^2\Xi^t _{h}}{\partial \widetilde{\ell }_{h'}\partial \widetilde{\ell }_{h''}} = 0$.
\end{lemma}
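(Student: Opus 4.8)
The plan is to prove the bound by backward induction on $h=H,H-1,\dots,1$, exploiting the compositional structure that $\Xi^t_h(\widetilde{\ell}_{h:H})$ is obtained from $\Xi^t_{h+1}(\widetilde{\ell}_{h+1:H})$ through a single outer map. The first observation is that the recursion defining $\Xi^t_h$ depends on $\widetilde{\ell}_h$ and $\Xi^t_{h+1}$ only through the combination $z_h := \Xi^t_{h+1}-\widetilde{\ell}_h$, so it suffices to study the scalar function (suppressing the fixed $t$) $\psi_h(z) := (\beta^t_h)^{-1}\log\bigl(1-\mu^t_h+\mu^t_h e^{\beta^t_h z}\bigr)$, with $\Xi^t_h=\psi_h(z_h)$. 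On the relevant range $z\le 0$ — which is where we stay, since $\Xi^t_{h+1}\le 0$ by Lemma~\ref{lem:Xi_negative_sample} and $\widetilde{\ell}_h\ge 0$ — a direct computation gives
\[
\psi_h'(z)=\frac{\mu^t_h e^{\beta^t_h z}}{1-\mu^t_h+\mu^t_h e^{\beta^t_h z}}\in[0,\mu^t_h],\qquad \psi_h''(z)=\beta^t_h(1-\mu^t_h)\frac{\mu^t_h e^{\beta^t_h z}}{(1-\mu^t_h+\mu^t_h e^{\beta^t_h z})^2}.
\]
These immediately yield the key inequality $0\le \psi_h''(z)\le \beta^t_h\psi_h'(z)\le \beta^t_h\mu^t_h$, where the middle step uses $1-\mu^t_h+\mu^t_h e^{\beta^t_h z}\ge 1-\mu^t_h$ and the last uses $\psi_h'(0)=\mu^t_h$ together with monotonicity of $\psi_h'$. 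This single estimate, which lets me convert a second derivative into a first derivative at the cost of one factor $\beta^t_h$, is the analytic engine of the whole proof.

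With this in hand I would run the induction, fixing $h\le h'\le h''$ (the case $h'>h''$ is symmetric, and the ``otherwise'' case is immediate because $\Xi^t_h$ does not depend on $\widetilde{\ell}_{h'}$ when $h'<h$). When $h'>h$, the variables $\widetilde{\ell}_{h'},\widetilde{\ell}_{h''}$ enter $\Xi^t_h=\psi_h(z_h)$ only through $\Xi^t_{h+1}$, and the chain/product rule gives
\[
\frac{\partial^2\Xi^t_h}{\partial\widetilde{\ell}_{h'}\partial\widetilde{\ell}_{h''}}=\psi_h'(z_h)\cdot\frac{\partial^2\Xi^t_{h+1}}{\partial\widetilde{\ell}_{h'}\partial\widetilde{\ell}_{h''}}+\psi_h''(z_h)\cdot\frac{\partial\Xi^t_{h+1}}{\partial\widetilde{\ell}_{h'}}\frac{\partial\Xi^t_{h+1}}{\partial\widetilde{\ell}_{h''}}.
\]
Into the first term I substitute the induction hypothesis for $\Xi^t_{h+1}$ together with $\psi_h'(z_h)\le\mu^t_h$; absorbing $\mu^t_h$ into $\mu^t_{h+1:h'}$ to form $\mu^t_{h:h'}$ reproduces exactly the summands $h'''=h+1,\dots,h'$. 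Into the second term I substitute $\psi_h''(z_h)\le \beta^t_h\mu^t_h$ and the first-order bounds $|\partial\Xi^t_{h+1}/\partial\widetilde{\ell}_{h'}|\le\mu^t_{h+1:h'}$ from Lemma~\ref{lem:Xi_first_order_sample}; this produces precisely the missing $h'''=h$ summand $\beta^t_h\mu^t_{h:h'}\mu^t_{h+1:h''}$. Together they give $\sum_{h'''=h}^{h'}\beta^t_{h'''}\mu^t_{h:h'}\mu^t_{h'''+1:h''}$, the claimed bound with $\min\{h',h''\}=h'$.

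The boundary situations follow from the same computation: the base case $h=H$ has only $h'=h''=H$, where $\partial^2\Xi^t_H/\partial\widetilde{\ell}_H^2=\psi_H''(z_H)\le\beta^t_H\mu^t_H$ matches with the conventions $\mu^t_{H:H}=\mu^t_h$, $\mu^t_{H+1:H}=1$; the case $h'=h<h''$ and the diagonal $h'=h''$ each use that differentiating $z_h=\Xi^t_{h+1}-\widetilde{\ell}_h$ in $\widetilde{\ell}_h$ contributes the extra $-1$ (rather than a $\partial\Xi^t_{h+1}/\partial\widetilde{\ell}_h$ term), so a factor $+\psi_h''(z_h)$ appears and again lands on the correct summand. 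I would also carry nonnegativity through the induction — every factor $\psi_h',\psi_h''$ is $\ge 0$ and the hypothesis is a sum of nonnegative terms — so that the inequalities above are genuine upper bounds and not merely bounds on absolute values. The main obstacle I anticipate is purely the index bookkeeping: checking that the products of $\mu^t_\bullet$ emerging from the chain rule telescope into exactly $\mu^t_{h:h'}\mu^t_{h'''+1:h''}$ for each $h'''$, and that the diagonal and $h'=h$ cases slot into the single closed-form sum. All the genuine mathematical content is compressed into the one-line inequality $\psi_h''\le\beta^t_h\psi_h'$.
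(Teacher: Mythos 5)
Your proposal is correct and takes essentially the same approach as the paper's proof: both exploit the compositional structure $\Xi^t_h = \psi_h\bigl(\Xi^t_{h+1} - \widetilde{\ell}_h\bigr)$ via the chain rule, reduce everything to the two scalar estimates $\psi_h' \in [0, \mu_h^t]$ and $\psi_h'' \le \beta_h^t \mu_h^t$ (which the paper computes inline rather than isolating the map $\psi_h$), and close by backward induction with the same special-case bookkeeping at $h'=h$ and on the diagonal. Your repackaging of the recursion through the single scalar function $\psi_h$, with induction on $h$ from $H$ downward instead of the paper's induction from $h'$ down to $h$, is a cosmetic reorganization rather than a genuinely different argument.
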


\begin{proof}
By symmetry of the second derivatives and the right-hand side with respect to $h'$ and $h''$, it suffices to prove the claim for $h''\ge h'$ only.

By chain rule and the compositional structure of the functions $\Xi_h^t(\cdot)$,
$$
\frac{\partial ^2\Xi^t _{h}}{\partial \widetilde{\ell }_{h'}\partial \widetilde{\ell }_{h''}}=\frac{\partial ^2\Xi^t _{h}}{\partial \Xi^t _{h'}\partial \widetilde{\ell }_{h''}}\cdot \frac{\partial \Xi^t _{h'}}{\partial \widetilde{\ell }_{h'}}+\frac{\partial \Xi^t _{h}}{\partial \Xi^t _{h'}}\cdot \frac{\partial ^2\Xi^t _{h'}}{\partial \widetilde{\ell }_{h'}\partial \widetilde{\ell }_{h''}}.
$$

If $h''=h'=h$, 
$$
\frac{\partial ^2\Xi^t _h}{\partial \widetilde{\ell }_{h}^{2}}=\beta _{h}^{t}\mu _{h}^{t}\exp \left[ \beta _{h}^{t}\left( \Xi^t _{h+1}\left( \widetilde{\ell } \right) -\widetilde{\ell }_h \right) \right] \frac{1-\mu _{h}^{t}}{\left\{ 1-\mu _{h}^{t}+\mu _{h}^{t}\exp \left[ \beta _{h}^{t}\left( \Xi^t _{h+1}\left( \widetilde{\ell } \right) -\widetilde{\ell }_h \right) \right] \right\} ^2}\le \beta _{h}^{t}\mu _{h}^{t}.
$$

If $h'=h, h''>h$,
$$
\frac{\partial ^2\Xi^t _h}{\partial \widetilde{\ell }_h\partial \widetilde{\ell }_{h''}}=-\frac{(1-\mu_h^t)\beta _{h}^{t}\mu _{h}^{t}\exp \left[ \beta _{h}^{t}\left( \Xi^t _{h+1}\left( \widetilde{\ell } \right) -\widetilde{\ell }_h \right) \right]}{  \paren{ 1-\mu _{h}^{t}+\mu _{h}^{t}\exp \left[ \beta _{h}^{t}\left( \Xi^t _{h+1}\left( \widetilde{\ell } \right) -\widetilde{\ell }_h \right) \right] }^2 }\cdot \frac{\partial \Xi^t _{h+1}}{\partial \widetilde{\ell }_{h''}}\le \beta _{h}^{t}\mu _{h:h''}^{t}.
$$

If $h<h'<h''$, we can compute the Hessian by induction. Notice once $h'>h$ we have 
$$
\frac{\partial \Xi^t _{h}}{\partial \widetilde{\ell }_{h'}}=\frac{\partial \Xi^t _{h}}{\partial \Xi^t _{h+1}}\cdot \frac{\partial \Xi^t _{h+1}}{\partial \widetilde{\ell }_{h'}}.
$$

Take second derivative, 
$$
\frac{\partial ^2\Xi^t _{h}}{\partial \widetilde{\ell }_{h'}\partial \widetilde{\ell }_{h''}}=\underset{\left( i \right)}{\underbrace{\frac{\partial \Xi^t _{h}}{\partial \Xi^t _{h+1}}\cdot \frac{\partial ^2\Xi^t _{h+1}}{\partial \widetilde{\ell }_{h'}\partial \widetilde{\ell }_{h''}}}}+\underset{\left( ii \right)}{\underbrace{\frac{\partial ^2\Xi^t _{h}}{\partial \Xi^t _{h+1}\partial \widetilde{\ell }_{h''}}\cdot \frac{\partial \Xi^t _{h+1}}{\partial \widetilde{\ell }_{h'}}}}.
$$

We first bound the second term, 
\begin{align*}
& \quad \left( ii \right) =\frac{ (1-\mu_h^t) \beta _{h}^{t}\mu _{h}^{t}\exp \left[ \beta _{h}^{t}\left( \Xi^t _{h+1}\left( \widetilde{\ell } \right) -\widetilde{\ell }_h \right) \right]}{ \paren{  1-\mu _{h}^{t}+\mu _{h}^{t}\exp \left[ \beta _{h}^{t}\left( \Xi^t _{h+1}\left( \widetilde{\ell } \right) -\widetilde{\ell }_h \right) \right] }^2 }\cdot \frac{\partial \Xi^t _{h+1}}{\partial \widetilde{\ell }_{h''}} \cdot \frac{\partial \Xi^t _{h+1}}{\partial \widetilde{\ell }_{h'}} \\
& \le \beta_h^t\mu_h^t \cdot \mu_{h+1:h''}^t \cdot \mu_{h+1:h'}^t \\
& \le \beta _{h}^{t}\mu _{h:h'}^{t}\mu _{h+1:h''}^{t}.
\end{align*}

The first term can be simplified to
$$
\left( i \right) \le \frac{\mu _{h}^{t}\exp \left[ \beta _{h}^{t}\left( \Xi^t _{h+1}\left( \widetilde{\ell } \right) -\widetilde{\ell }_h \right) \right]}{1-\mu _{h}^{t}+\mu _{h}^{t}\exp \left[ \beta _{h}^{t}\left( \Xi^t _{h+1}\left( \widetilde{\ell } \right) -\widetilde{\ell }_h \right) \right]}\frac{\partial ^2\Xi^t _{h+1}}{\partial \widetilde{\ell }_{h'}\partial \widetilde{\ell }_{h''}}\le \mu _{h}^{t}\frac{\partial ^2\Xi^t _{h+1}}{\partial \widetilde{\ell }_{h'}\partial \widetilde{\ell }_{h''}}.
$$

Now plug in $\frac{\partial ^2\Xi^t _{h'}}{\partial \widetilde{\ell }_{h'}\partial \widetilde{\ell }_{h''}}\le \beta _{h'}^{t}\mu _{h':h''}^{t}$ and backward induction from $h'$ to $h$ gives:
$$
\frac{\partial ^2\Xi^t _{h}}{\partial \widetilde{\ell }_{h'}\partial \widetilde{\ell }_{h''}}\le \sum_{h'''=h}^{h'}{\beta _{h'''}^{t}\mu _{h:h'}^{t}\mu _{h'''+1:h''}^{t}}.
$$

We can check this expression is also correct for the above special cases when $h'=h$. The second claim holds because $\Xi^t_h$ only depends on loss in the later layers.
\end{proof}

\begin{lemma}[Bound on $\Xi_1^t$]
  We have
  \label{lem:bound_second_order_sample}
 $$
 \Xi^t _{1}\le -\left< \mu ^t,\widetilde{\ell }^t \right> +\frac{\eta H}{2}\sum_{h=1}^H{\left( \sum_{h'=h}^H{\sum_{x_{h'},a_{h'}}{\mu _{1:h}^{\star ,h}\left( x_{h'}, a_{h'} \right) \mu _{h+1:h'}^{t}\left( x_{h'},a_{h'} \right) \widetilde{\ell }_{h'}^{t}\left( x_{h'},a_{h'} \right)}} \right)}.
 $$
\end{lemma}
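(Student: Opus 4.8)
The plan is to control $\Xi_1^t=\Xi_1^t(\wt{\ell}^t)$ through a second-order Taylor expansion of the map $\wt{\ell}\mapsto\Xi_1^t(\wt{\ell})$ about the origin, feeding in the derivative estimates already established in Lemmas~\ref{lem:Xi_negative_sample}--\ref{lem:Xi_second_order_sample}. By the mean-value form of Taylor's theorem, there is a point $\xi$ on the segment $[0,\wt{\ell}^t]$ with
\begin{align*}
  \Xi_1^t(\wt{\ell}^t) = \Xi_1^t(0) + \sum_{h'=1}^H \left.\frac{\partial\Xi_1^t}{\partial\wt{\ell}_{h'}}\right|_{\wt{\ell}=0}\wt{\ell}_{h'}^t + \frac12\sum_{h',h''=1}^H \left.\frac{\partial^2\Xi_1^t}{\partial\wt{\ell}_{h'}\partial\wt{\ell}_{h''}}\right|_{\xi}\wt{\ell}_{h'}^t\wt{\ell}_{h''}^t.
\end{align*}
The zeroth-order term is $0$ by Lemma~\ref{lem:Xi_negative_sample}. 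By the gradient values at the origin in Lemma~\ref{lem:Xi_first_order_sample} together with the sparsity of $\wt{\ell}^t$ (supported on the visited $(x_{h'}^t,a_{h'}^t)$), the first-order term equals $\sum_{h'}(-\mu_{1:h'}^t)\wt{\ell}_{h'}^t=-\<\mu^t,\wt{\ell}^t\>$ exactly, so everything reduces to upper bounding the Hessian remainder. Although Lemmas~\ref{lem:Xi_first_order_sample} and~\ref{lem:Xi_second_order_sample} are phrased for $\wt{\ell}\in[0,1]^H$, their proofs only invoke $\exp[\beta_h^t(\Xi_{h+1}^t-\wt{\ell}_h)]\in[0,1]$, which holds for all $\wt{\ell}\ge 0$ by Lemma~\ref{lem:Xi_negative_sample}; hence the second-derivative bound is valid at $\xi$, and since $\wt{\ell}^t\ge 0$ it may be applied entrywise.

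I would then substitute the factored Hessian bound of Lemma~\ref{lem:Xi_second_order_sample} (with $h=1$) and interchange the order of summation so that the innermost index $h'''$ is summed outermost. For fixed $h'''$ the constraints $h'\ge h'''$ and $h''\ge h'''$ decouple, so the two remaining sums factor into a perfect square: setting $g_h\defeq\sum_{h'=h}^H\mu_{h+1:h'}^t\wt{\ell}_{h'}^t$, the remainder is at most
\begin{align*}
  \frac12\sum_{h=1}^H \beta_h^t\,\mu_{1:h}^t\,g_h^2 = \frac{\eta}{2}\sum_{h=1}^H \mu^{\star,h}_{1:h}(x_h^t,a_h^t)\,\mu_{1:h}^t\,g_h^2,
\end{align*}
using $\beta_h^t=\eta\mu^{\star,h}_{1:h}(x_h^t,a_h^t)$.

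The one genuinely new ingredient is to collapse this square to a single power, which is where the factor $H$ in the statement enters. I would note that $\mu_{1:h}^t g_h=\sum_{h'=h}^H\mu_{1:h'}^t\wt{\ell}_{h'}^t\le H$, since each term obeys $\mu_{1:h'}^t\wt{\ell}_{h'}^t=\mu_{1:h'}^t(1-r_{h'}^t)/(\mu_{1:h'}^t+\gamma\mu^{\star,h'}_{1:h'})\le 1$ (the numerator is at most $\mu_{1:h'}^t$ and the denominator at least $\mu_{1:h'}^t$). Consequently $\mu_{1:h}^t g_h^2=(\mu_{1:h}^t g_h)\,g_h\le H g_h$, so the remainder is bounded by $\tfrac{\eta H}{2}\sum_{h}\mu^{\star,h}_{1:h}(x_h^t,a_h^t)\,g_h$; re-expanding $g_h$ into the full sum over $(x_{h'},a_{h'})$, which is harmless by the support of $\wt{\ell}^t$, reproduces the claimed expression. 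I expect the main obstacle to be purely organizational---correctly resumming the triple index to expose the square---after which the uniform bound $\mu_{1:h}^t g_h\le H$ closes the argument.
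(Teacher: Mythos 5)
Your proposal is correct and follows essentially the same route as the paper's proof: a mean-value (second-order Taylor) expansion of $\Xi_1^t(\cdot)$ at $\widetilde{\ell}=0$, with the zeroth- and first-order terms handled exactly as in Lemmas~\ref{lem:Xi_negative_sample} and~\ref{lem:Xi_first_order_sample}, the Hessian remainder bounded entrywise via Lemma~\ref{lem:Xi_second_order_sample} (correctly noting the derivative bounds hold for all nonnegative losses, not just $[0,1]^H$), and the sparsity of $\widetilde{\ell}^t$ used to re-expand into the sum over $(x_{h'},a_{h'})$. The only divergence is organizational: where the paper bounds $\sum_h \mu^t_{1:h}\widetilde{\ell}^t_h\le H$ and pulls out a max over $h$ of the remaining factor, you resum the triple index into the perfect square $\sum_h \beta^t_h\,\mu^t_{1:h}\,g_h^2$ with $g_h=\sum_{h'\ge h}\mu^t_{h+1:h'}\widetilde{\ell}^t_{h'}$ and linearize via $\mu^t_{1:h}g_h=\sum_{h'\ge h}\mu^t_{1:h'}\widetilde{\ell}^t_{h'}\le H$ --- the same underlying inequality, arriving at the identical quantity $\frac{\eta H}{2}\sum_h \mu^{\star,h}_{1:h}(x^t_h,a^t_h)\,g_h$ and hence the claimed bound.
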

\begin{proof}
We apply the Mean-value Theorem to function $\Xi^t _1\left( \widetilde{\ell } \right)$ at $\widetilde{\ell }=0$,
$$
\Xi^t _{1}=\Xi^t _1\left( \widetilde{\ell }^t \right) =\Xi^t _1\left( 0 \right) +\left< \left. \nabla _{\widetilde{\ell }}\Xi^t _1 \right|_{\widetilde{\ell }=0},\widetilde{\ell }^t \right> +\frac{1}{2} \left< \nabla ^2_{\widetilde{\ell }}\left. \Xi^t _1 \right|_{\widetilde{\ell }=\xi^t }\widetilde{\ell }^t,\widetilde{\ell }^t \right> ,
$$
where $\xi^t$ lies on the line segment between $0$ and $\widetilde{\ell }^t$.

By Lemma~\ref{lem:Xi_negative_sample}, the initial term is just zero. By Lemma~\ref{lem:Xi_first_order_sample}, the first-order term is just $-\left< \mu ^t,\widetilde{\ell }^t \right> $.

It thus remains to bound the second-order term. Applying the entry-wise upper bounds in Lemma~\ref{lem:Xi_second_order_sample} at $h=1$ (which hold uniformly at all nonnegative loss values, including $\xi^t$), we have
\begin{align*}
   \left< \nabla _{\widetilde{\ell }}^{2}\left. \Xi^t _1 \right|_{\widetilde{\ell }=\xi^t}\widetilde{\ell }^t,\widetilde{\ell }^t \right> =&\sum_{h=1}^H{\sum_{h'=1}^H{\left. \frac{\partial ^2\Xi^t _1}{\partial \widetilde{\ell }_h\partial \widetilde{\ell }_{h'}} \right|}}_{\widetilde{\ell }=\xi^t}\widetilde{\ell }_{h}^{t}\widetilde{\ell }_{h'}^{t}
\\
\overset{\left( i \right)}{\le}&\sum_{h=1}^H{\sum_{h'=1}^H{\sum_{h''=1}^{\min \{h,h'\}}{\beta _{h''}^{t}\mu _{1:h}^{t}\mu _{h''+1:h'}^{t}\widetilde{\ell }_{h}^{t}\widetilde{\ell }_{h'}^{t}}}}
\\
=&\sum_{h=1}^H{\mu _{1:h}^{t}\widetilde{\ell }_{h}^{t}\sum_{h'=1}^H{\sum_{h''=1}^{\min \{h,h'\}}{\beta _{h''}^{t}\mu _{h''+1:h'}^{t}\widetilde{\ell }_{h'}^{t}}}}
\\
\overset{\left( ii \right)}{\le} & H\underset{h\in \left[ H \right]}{\max}\sum_{h'=1}^H{\sum_{h''=1}^{\min \{h,h'\}}{\beta _{h''}^{t}\mu _{h''+1:h'}^{t}\widetilde{\ell }_{h'}^{t}}}
\\
=&H\sum_{h'=1}^H{\sum_{h''=1}^{h'}{\beta _{h''}^{t}\mu _{h''+1:h'}^{t}\widetilde{\ell }_{h'}^{t}}}
\\
=&H\sum_{h''=1}^H{\sum_{h'=h''}^{H}{\beta _{h''}^{t}\mu _{h''+1:h'}^{t}\widetilde{\ell }_{h'}^{t}}}
\\
=&\eta H\sum_{h''=1}^H{\left( \sum_{h'=h''}^H{\mu _{1:h''}^{\star ,h''}\left( x_{h'}^{t},a_{h'}^{t} \right) \mu _{h''+1:h'}^{t}\widetilde{\ell }_{h'}^{t}} \right)}
\\
\overset{\left( iii \right)}{=}&\eta H\sum_{h''=1}^H{\left( \sum_{h'=h''}^H{\sum_{x_{h'},a_{h'}}{\mu _{1:h''}^{\star ,h''}\left( x_{h'},a_{h'} \right) \mu _{h''+1:h'}^{t}\left( x_{h'},a_{h'} \right) \widetilde{\ell }_{h'}^{t}(x_{h'},a_{h'})}} \right)},
\end{align*}
where $(i)$ is by Lemma~\ref{lem:Xi_second_order_sample}; $(ii)$ follows from the bound
\begin{align*}
\sum_{h=1}^H \mu_{1:h}^t \wt{\ell}_h^t = \sum_{h=1}^H \mu^t_{1:h} \cdot \frac{1-r_h^t}{\mu_{1:h}^t + \gamma\mu^{\star,h}_{1:h}} \le H;
\end{align*}
and $(iii)$ is because $\wt{\ell}_{h'}^t(x_{h'}, a_{h'})=0$ at all $(x_{h'}, a_{h'})\neq (x_{h'}^t, a_{h'}^t)$. 
\end{proof}

\begin{lemma}
\label{lem:bound_Xi_sum}
 With probability at least $1-\delta/3$,
 $$
\sum_{t=1}^T{\Xi^t _{1}}\le -\sum_{t=1}^T{\left< \mu ^t,\widetilde{\ell }^t \right>}+\eta H^3T+\frac{\eta XAH^2\iota}{\gamma},
 $$
 where $\iota\defeq \log(H/\delta)$.
\end{lemma}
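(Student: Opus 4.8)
The plan is to start from the per-round bound of Lemma~\ref{lem:bound_second_order_sample} and sum over $t$, which gives
\begin{align*}
\sum_{t=1}^T \Xi_1^t \le -\sum_{t=1}^T \<\mu^t, \wt{\ell}^t\> + \frac{\eta H}{2}\cdot\mathrm{STAB}, \qquad \mathrm{STAB}\defeq \sum_{t=1}^T\sum_{h=1}^H\sum_{h'=h}^H\sum_{x_{h'},a_{h'}} \alpha^t_{h,h'}(x_{h'},a_{h'})\,\wt{\ell}^t_{h'}(x_{h'},a_{h'}),
\end{align*}
where $\alpha^t_{h,h'}(x_{h'},a_{h'})\defeq \mu^{\star,h}_{1:h}(x_{h'},a_{h'})\,\mu^t_{h+1:h'}(x_{h'},a_{h'})$. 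So it suffices to show $\mathrm{STAB}\le 2H^2 T + \tfrac{2XAH\iota}{\gamma}$. The key structural observation I would use is that, for $h'\ge h$, the coefficient $\alpha^t_{h,h'}$ is exactly the layer-$h'$ sequence form of the mixture policy $\mu^{t,(h)}\defeq(\mu^{\star,h}_{1:h}\mu^t_{h+1:H})\in\Pi_{\max}$, i.e.\ $\alpha^t_{h,h'}=\mu^{t,(h)}_{1:h'}$. I then split $\wt{\ell}^t=\ell^t+(\wt{\ell}^t-\ell^t)$ to write $\mathrm{STAB}=(\mathrm{I})+(\mathrm{II})$, where $(\mathrm{I})$ uses the true loss $\ell^t$ and $(\mathrm{II})$ collects the deviations.

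For $(\mathrm{I})$, I would apply Corollary~\ref{cor:averge_loss_bound} with the valid policy $\mu^{t,(h)}$: for each fixed $t,h,h'$ we have $\sum_{x_{h'},a_{h'}}\mu^{t,(h)}_{1:h'}(x_{h'},a_{h'})\ell^t_{h'}(x_{h'},a_{h'})\le 1$. Summing over $h'\in\{h,\dots,H\}$ and $h\in[H]$ contributes at most $H(H+1)/2\le H^2$ per round, so $(\mathrm{I})\le H^2T$.

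For $(\mathrm{II})$, the plan is to invoke the concentration result Lemma~\ref{lem:explore_no_more} once per layer $h'$. After exchanging the order of summation I would aggregate over $h\le h'$ into the $\mathcal{F}^{t-1}$-measurable coefficient $\bar\alpha^t_{h'}(x_{h'},a_{h'})\defeq\sum_{h=1}^{h'}\alpha^t_{h,h'}(x_{h'},a_{h'})$, which satisfies $0\le\bar\alpha^t_{h'}\le H$ since each of its $\le H$ summands is a product of conditional probabilities and hence $\le 1$. Because Lemma~\ref{lem:explore_no_more} requires the coefficient to be pointwise dominated by $2\gamma\mu^{\star,h'}_{1:h'}$, I would rescale by the \emph{deterministic} constant $c_{h'}\defeq HX_{h'}A/(2\gamma)$: the lower bound $\mu^{\star,h'}_{1:h'}(x_{h'},a_{h'})\ge 1/(X_{h'}A)$ from Corollary~\ref{cor:mu_star_lower_bound} gives $X_{h'}A\,\mu^{\star,h'}_{1:h'}\ge 1$, whence $\bar\alpha^t_{h'}\le H\le HX_{h'}A\,\mu^{\star,h'}_{1:h'}$ and therefore $\bar\alpha^t_{h'}/c_{h'}\le 2\gamma\mu^{\star,h'}_{1:h'}$ pointwise. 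Applying Lemma~\ref{lem:explore_no_more} to $\bar\alpha^t_{h'}/c_{h'}$ and taking a union bound over the $H$ layers (which is what yields the $\iota=\log(H/\delta)$ factor) gives $\sum_t\sum_{x_{h'},a_{h'}}\bar\alpha^t_{h'}(\wt{\ell}^t_{h'}-\ell^t_{h'})\le c_{h'}\iota$; summing over $h'$ and using $\sum_{h'}X_{h'}=X$ produces $(\mathrm{II})\le\sum_{h'=1}^H\tfrac{HX_{h'}A}{2\gamma}\iota=\tfrac{HXA\iota}{2\gamma}$.

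Combining, $\mathrm{STAB}\le H^2T+\tfrac{HXA\iota}{2\gamma}$, so $\tfrac{\eta H}{2}\,\mathrm{STAB}\le \tfrac{\eta H^3T}{2}+\tfrac{\eta H^2XA\iota}{4\gamma}$, comfortably inside the claimed $\eta H^3T+\tfrac{\eta XAH^2\iota}{\gamma}$. The step I expect to be the main obstacle is bounding $(\mathrm{II})$: the raw coefficient $\alpha^t_{h,h'}$ carries no $\gamma$ factor and its ratio to $\mu^{\star,h'}_{1:h'}$ is not a constant, so the argument hinges on (i) aggregating the $h$-sum into $\bar\alpha^t_{h'}$ to obtain the clean uniform bound $\bar\alpha^t_{h'}\le H$, and (ii) recasting this into the admissible form for Lemma~\ref{lem:explore_no_more} via the balanced-policy lower bound of Corollary~\ref{cor:mu_star_lower_bound}. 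Securing the $X=\sum_{h'}X_{h'}$ dependence (rather than a quadratic $\sum_{h'}X_{h'}^2$) is precisely where the balancing property is doing the work.
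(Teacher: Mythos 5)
Your proposal is correct, but it routes the concentration step differently from the paper. Both proofs share the forced skeleton (sum Lemma~\ref{lem:bound_second_order_sample} over $t$ and control the stability sum over pairs $h\le h'$ of the mixture-policy terms $\mu^{\star,h}_{1:h}\mu^t_{h+1:h'}\wt{\ell}^t_{h'}$), but the paper does \emph{not} split $\wt{\ell}^t$ into mean plus deviation: it treats each $\Delta^t_{h,h'}$ directly as a nonnegative adapted sequence and applies Freedman's inequality (Lemma~\ref{lemma:freedman}) with the almost-sure bound $\Delta^t_{h,h'}\le X_{h'}A/\gamma$ and conditional variance $\E[(\Delta^t_{h,h'})^2|\cF_{t-1}]\le X_{h'}A/\gamma$, both obtained from the full balancing identity (Lemma~\ref{lemma:balancing}) applied to the mixture policy, together with $\E[\Delta^t_{h,h'}|\cF_{t-1}]\le 1$ from Corollary~\ref{cor:averge_loss_bound}, union-bounding over all $O(H^2)$ pairs. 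You instead bound the true-loss part pathwise by Corollary~\ref{cor:averge_loss_bound} and handle the deviation with the IX-specific concentration Lemma~\ref{lem:explore_no_more} (which the paper reserves for ${\rm BIAS}^2$), after aggregating the $h$-sum into $\bar\alpha^t_{h'}\le H$ and rescaling into the admissible range via the pointwise bound $\mu^{\star,h'}_{1:h'}\ge 1/(X_{h'}A)$ of Corollary~\ref{cor:mu_star_lower_bound}; this checks out (measurability, the range condition, and the one-sided bound are all what Lemma~\ref{lem:explore_no_more} needs), avoids any variance computation, needs a union bound over only $H$ layers rather than $O(H^2)$ pairs, and yields slightly better constants. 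What the paper's Freedman route buys in exchange is genericity: it uses only moment information about the estimator rather than the multiplicative MGF structure specific to implicit exploration, and it does not need the pointwise lower bound on $\mu^{\star,h'}_{1:h'}$. Two cosmetic remarks: in your argument the balancing machinery enters only through Corollary~\ref{cor:mu_star_lower_bound} (a consequence of Lemma~\ref{lemma:balancing}), not the summed identity itself, so your closing sentence slightly misattributes where the work happens relative to your own proof; and your union bound technically produces $\log(3H/\delta)$ rather than $\iota=\log(H/\delta)$, which your factor-of-four slack absorbs---the same level of slop as the paper's ``$2\log(H/\delta)$''.
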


\begin{proof}
Using Lemma~\ref{lem:bound_second_order_sample} and take the summation with respect to $t \in [T]$ we have 
\begin{align}
\label{eq:xi_sum_decomposition}
    \sum_{t=1}^T{\Xi _{1}^{t}}\le -\sum_{t=1}^T{\left< \mu ^t,\widetilde{\ell }^t \right>}+\frac{\eta H}{2}\sum_{h=1}^H{ \sum_{h\prime=h}^H{\sum_{t=1}^T{\underset{\defeq\Delta _{h,h\prime}^{t}}{\underbrace{\sum_{x_{h\prime},a_{h\prime}}{\mu _{1:h}^{\star ,h}\left( x_{h\prime},a_{h\prime} \right) \mu _{h+1:h\prime}^{t}\left( x_{h\prime},a_{h\prime} \right) \widetilde{\ell } _{h\prime}^{t}\left( x_{h\prime},a_{h\prime} \right)}}}} }}.
\end{align}

Observe that the random variables $\Delta _{h,h\prime}^{t}$ satisfy the following:
\begin{itemize}
\item $\Delta _{h,h\prime}^{t}\le X_{h'}A/\gamma$ almost surely:
  \begin{align*}
    \Delta _{h,h\prime}^{t}=&\sum_{x_{h\prime},a_{h\prime}}{\mu _{1:h}^{\star ,h}\left( x_{h'},a_{h'} \right) \mu _{h+1:h\prime}^{t}\left( x_{h\prime},a_{h\prime} \right) \frac{\left( 1-r_{h'}^{t} \right) \ones\left\{ x_{h\prime}=x_{h'}^{t},a_{h\prime}=a_{h'}^{t} \right\}}{\mu _{1:h\prime}^{t}\left( x_{h\prime},a_{h\prime} \right) +\gamma \mu _{1:h'}^{\star ,h'}\left( x_{h\prime},a_{h\prime} \right)}}
\\
\le& \frac{1}{\gamma}\sum_{x_{h\prime},a_{h\prime}}{\frac{\mu _{1:h}^{\star ,h}\left( x_{h'},a_{h'} \right) \mu _{h+1:h\prime}^{t}\left( x_{h\prime},a_{h\prime} \right)}{\mu _{1:h'}^{\star ,h'}\left( x_{h\prime},a_{h\prime} \right)}}\overset{\left( i \right)}{\le}\frac{X_{h'}A}{\gamma},
  \end{align*}
where $(i)$ is by using Lemma~\ref{lemma:balancing} with the mixture of $\mu^{\star ,h}$ and $\mu^{t}$.
\item $\E[\Delta _{h,h\prime}^{t}|\cF_{t-1}]\le 1$, where $\cF_{t-1}$ is the $\sigma$-algebra containing all information after iteration $t-1$:
\begin{align*}
   \E[\Delta _{h,h\prime}^{t}|\cF_{t-1}] = & \sum_{x_{h\prime},a_{h\prime}}{\mu _{1:h}^{\star ,h}\left( x_{h'},a_{h'} \right) \mu _{h+1:h\prime}^{t}\left( x_{h\prime},a_{h\prime} \right) \ell _{h\prime}^{t}\left( x_{h\prime},a_{h\prime} \right)}\overset{\left( i \right)}{\le}1
,
\end{align*}
where $(i)$ is by using Corollary~\ref{cor:averge_loss_bound} with the mixture policy of $\mu^{\star ,h}$ and $\mu^t$.
\item The conditional variance $\E[(\Delta _{h,h\prime}^{t})^2|\cF_{t-1}]$ can be bounded as
\begin{align*}
    \E[(\Delta _{h,h\prime}^{t})^2|\cF_{t-1}] \overset{\left( i \right)}{=}&\sum_{x_{h\prime},a_{h\prime}}{\left[ \left( \mu _{1:h}^{\star ,h}\left( x_{h'},a_{h'} \right) \mu _{h+1:h\prime}^{t}\left( x_{h\prime},a_{h\prime} \right) \frac{\left( 1-r_{h'}^{t} \right) \ones\left\{ x_{h\prime}=x_{h'}^{t},a_{h\prime}=a_{h'}^{t} \right\}}{\mu _{1:h\prime}^{t}\left( x_{h\prime},a_{h\prime} \right) +\gamma \mu _{1:h'}^{\star ,h'}\left( x_{h\prime},a_{h\prime} \right)} \right) ^2 \right]}
\\
\le& \sum_{x_{h\prime},a_{h\prime}}{\left( \frac{\mu _{1:h}^{\star ,h}\left( x_{h'},a_{h'} \right) \mu _{h+1:h\prime}^{t}\left( x_{h\prime},a_{h\prime} \right)}{\mu _{1:h\prime}^{t}\left( x_{h\prime},a_{h\prime} \right) +\gamma \mu _{1:h'}^{\star ,h'}\left( x_{h\prime},a_{h\prime} \right)} \right)}^2\mu _{1:h\prime}^{t}\left( x_{h\prime},a_{h\prime} \right) 
\\
\le& \frac{1}{\gamma}\sum_{x_{h\prime},a_{h\prime}}{\frac{\mu _{1:h}^{\star ,h}\left( x_{h'},a_{h'} \right) \mu _{h+1:h\prime}^{t}\left( x_{h\prime},a_{h\prime} \right)}{\mu _{1:h'}^{\star ,h'}\left( x_{h\prime},a_{h\prime} \right)}}\overset{\left( ii \right)}{\le}\frac{X_{h'}A}{\gamma},
\end{align*}
where $(i)$ follows from the fact that for any $h$, at most one of indicators is non-zero, so the cross terms disappear and $(ii)$ is  using Corollary~\ref{cor:averge_loss_bound} with the mixture policy of $\mu^{\star ,h}$ and $\mu^t$.
\end{itemize}

Therefore, we can apply Freedman's inequality (Lemma~\ref{lemma:freedman}) and union bound to get that, with probability at least $1-\delta/3$, for some fixed $\lambda_{h,h'}\in(0, \gamma/X_{h'}A]$,
the following holds simultaneously for all $h,h'$:
$$
\sum_{t=1}^T{\Delta _{h,h\prime}^{t}}\le \frac{\lambda_{h,h\prime} X_{h'}AT}{\gamma}+\frac{2\log(H/\delta)}{\lambda_{h,h\prime}}+T,
$$
Take $\lambda_{h,h\prime} = \gamma/X_{h'}A$, we have 
$$
\sum_{t=1}^T{\Delta _{h,h\prime}^{t}}\le \frac{X_{h'}A\cdot 2\log(H/\delta)}{\gamma}+2T.
$$

Plug into equation~\eqref{eq:xi_sum_decomposition}, we have 
$$
\sum_{t=1}^T{\Xi _{1}^{t}}\le -\sum_{t=1}^T{\left< \mu ^t,\widetilde{\ell }^t \right>}+\eta H^3T+\frac{\eta H^2XA\iota}{\gamma},
$$
where $\iota\defeq \log(H/\delta)$ is a log factor.
\end{proof}

\subsubsection{Proof of main lemma}
\label{appendix:regret-main}
By Lemma~\ref{lem:decompose_sample}, for any policy $\mu ^{\dagger}\in\Pi_{\max}$, 
$$
\frac{1}{\eta}\left( \D( \mu ^{\dagger}\|\mu ^{t+1} ) -\D(\mu ^{\dagger}\|\mu ^t) \right) =\left< \mu ^{\dagger},\widetilde{\ell }^t \right> +\Xi _{1}^{t}.
$$
Taking the summation w.r.t. $t\in[T]$ and using Lemma~\ref{lem:bound_Xi_sum}, we have with probability at least $1-\delta/3$, the following holds simultaneously over all $\mu^\dagger\in\Pi_{\max}$:
\begin{align*}
    \frac{1}{\eta}\left( \D( \mu ^{\dagger}\|\mu ^T) -\D( \mu ^{\dagger}\|\mu ^1 \right)) =&\sum_{t=1}^T{\left< \mu ^{\dagger},\widetilde{\ell }^t \right>}+\sum_{t=1}^T{\Xi _{1}^{t}}
\\
\le& \sum_{t=1}^T{\left< \mu ^{\dagger}-\mu ^t,\widetilde{\ell }^t \right>}+\eta H^3T+\frac{\eta H^2XA\iota}{\gamma}.
\end{align*}
Rerranging the terms we have 
\begin{align*}
    \max_{\mu^\dagger \in\Pi_{\max}}\sum_{t=1}^T{\left< \mu ^t-\mu ^{\dagger},\widetilde{\ell }^t \right>}\le& \max_{\mu^\dagger\in\Pi_{\max}} \frac{1}{\eta}\left( \D( \mu ^{\dagger}\|\mu ^1 ) -\D( \mu ^{\dagger}\|\mu ^T ) \right) + \eta H^3T+\frac{\eta H^2XA\iota}{\gamma}
\\
\le& \max_{\mu^\dagger\in\Pi_{\max}} \frac{1}{\eta}\D( \mu ^{\dagger}\|\mu ^1
     ) +\eta H^3T+\frac{\eta H^2XA\iota}{\gamma}\\
  \le & \frac{XA\log A}{\eta} + \eta H^3T+\frac{\eta H^2XA\iota}{\gamma},
\end{align*}
where the last inequality above follows by recalling that $\mu^1$ is taken to be the uniform policy ($\mu_h^1 (a_h|x_h) = 1  / A$ for all $(h, x_h, a_h)$) in Algorithm~\ref{alg:IXOMD}, and applying the bound on the balanced dilated KL (Lemma~\ref{lemma:bound-balanced-dilated-kl}). This proves Lemma~\ref{lem:regret_sample}.

\subsection{Proof of Theorem~\ref{theorem:lower-bound}}
\label{appendix:proof-lower-bound}

Both the regret and PAC lower bounds follow from a direct reduction to stochastic multi-armed bandits. For completeness, we first state the lower bound for stochastic bandits~\citep[Exercise 15.4 \& Exercise 33.1]{lattimore2020bandit} as follows. Below, $c$ is an absolute constant. 
\begin{proposition}[Lower bound for stochastic bandits]
  \label{proposition:bandit-lower-bound}
  Let $K\ge 2$ denote the number of arms.
  \begin{enumerate}[label=(\alph*), topsep=0pt, itemsep=0pt]
  \item (Regret lower bound) Suppose $T\ge K$. For any bandit algorithm that plays policy $\mu^t\in\Delta([K])$ (either deterministic or random) in round $t\in[T]$, there exists some $K$-armed stochastic bandit problem with Bernoulli rewards with mean vector $r\in[0,1]^K$, on which the algorithm suffers from the following lower bound on the expected regret:
    \begin{align*}
      \E\brac{ \max_{\mu^\dagger\in \Delta([K])}\sum_{t=1}^T \<\mu^\dagger - \mu^t, r\> } \ge c\cdot \sqrt{KT}.
    \end{align*}
  \item (PAC lower bound) For any bandit algorithm that plays for $t$ rounds and outputs some policy $\what{\mu}\in\Delta([K])$, there exists some $K$-armed stochastic bandit problem with Bernoulli rewards with some mean vector $r\in[0,1]^K$, on which policy $\what{\mu}$ is at least $\eps$ away from optimal:
    \begin{align*}
      \E\brac{ \max_{\mu^\dagger\in\Delta([K])}\<\mu^\dagger - \what{\mu}, r\> } \ge \eps,
    \end{align*}
    unless $T\ge cK/\eps^2$.
  \end{enumerate}  
\end{proposition}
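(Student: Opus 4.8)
The plan is to establish both parts by the classical information-theoretic (change-of-measure) technique, reducing each to a ``needle-in-a-haystack'' family of Bernoulli instances. Fix the number of arms $K$ and a gap $\Delta>0$ to be tuned. Let $\nu_0$ be the instance with all means equal to $1/2$, and for each $i\in[K]$ let $\nu_i$ be the instance with mean $1/2+\Delta$ on arm $i$ and $1/2$ on every other arm, so that arm $i$ is the unique optimum of $\nu_i$ with suboptimality gap $\Delta$. Writing $N_i=\sum_{t=1}^T \mu^t_i$ for the (expected) total mass the algorithm places on arm $i$, the key structural identity I would invoke is the divergence decomposition $\mathrm{KL}(\P_{\nu_0}\,\|\,\P_{\nu_i}) = \E_{\nu_0}[N_i]\cdot \mathrm{KL}(\mathrm{Ber}(1/2)\,\|\,\mathrm{Ber}(1/2+\Delta))$ together with the second-order estimate $\mathrm{KL}(\mathrm{Ber}(1/2)\,\|\,\mathrm{Ber}(1/2+\Delta))\le c_1\Delta^2$ valid for $\Delta\le 1/4$. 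Crucially, the total-budget constraint $\sum_{i\in[K]}\E_{\nu_0}[N_i]=T$ forces the average divergence to be small, $\tfrac1K\sum_i \mathrm{KL}(\P_{\nu_0}\|\P_{\nu_i})\le c_1\Delta^2 T/K$.

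For part (a), the pseudo-regret on $\nu_i$ is exactly $\Delta\,\E_{\nu_i}[T-N_i]$ (since the true mean vector is fixed, the inner $\max_{\mu^\dagger}$ is deterministic), so it suffices to show that on a constant fraction of instances the algorithm places at most $T/2$ mass on the optimal arm. I would apply the Bretagnolle--Huber inequality to the event $\{N_i\le T/2\}$, giving $\P_{\nu_i}(N_i\le T/2)\ge \tfrac12\exp(-\mathrm{KL}(\P_{\nu_0}\|\P_{\nu_i})) - \P_{\nu_0}(N_i>T/2)$; Markov's inequality and the budget constraint bound $\tfrac1K\sum_i\P_{\nu_0}(N_i>T/2)\le 2/K$. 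Averaging over $i$ and using Jensen ($\tfrac1K\sum_i e^{-\mathrm{KL}_i}\ge e^{-\overline{\mathrm{KL}}}$) yields a constant lower bound on $\tfrac1K\sum_i \P_{\nu_i}(N_i\le T/2)$ once the gap is tuned to $\Delta=\Theta(\sqrt{K/T})$ (so that $\overline{\mathrm{KL}}=O(1)$; here $T\ge K$ guarantees $\Delta\le 1/4$). Consequently the average regret is at least $\Delta\cdot(T/2)\cdot c_3 = \Omega(\sqrt{KT})$, and since the maximum over the family dominates the average, some $\nu_i$ achieves the claimed $c\sqrt{KT}$ bound.

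For part (b), I would keep the same family with gap $\Delta=2\eps$, and note that on $\nu_i$ the suboptimality of the returned policy is $2\eps\,(1-\widehat{\mu}_i)$, which exceeds $\eps$ precisely when $\E_{\nu_i}[\widehat{\mu}_i]\le 1/2$. Here I would control $\E_{\nu_i}[\widehat{\mu}_i]$ by the cruder total-variation route: since $\widehat{\mu}_i\in[0,1]$, Pinsker gives $\E_{\nu_i}[\widehat{\mu}_i]\le \E_{\nu_0}[\widehat{\mu}_i]+\sqrt{\tfrac12\mathrm{KL}(\P_{\nu_0}\|\P_{\nu_i})}$. Averaging over $i$, using $\sum_i\E_{\nu_0}[\widehat{\mu}_i]=1$ (a probability vector), the decomposition above, and concavity of $\sqrt{\cdot}$ to bound $\tfrac1K\sum_i\sqrt{\E_{\nu_0}[N_i]}\le\sqrt{T/K}$, I obtain $\tfrac1K\sum_i\E_{\nu_i}[\widehat{\mu}_i]\le \tfrac1K + \eps\sqrt{c_1/2}\,\sqrt{T/K}$. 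If $T< c\,K/\eps^2$ for a small enough constant $c$, this average is at most $1/2$, so some $\nu_i$ has $\E_{\nu_i}[\widehat{\mu}_i]\le 1/2$ and hence expected suboptimality at least $\eps$, which is the contrapositive of the claim.

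The main obstacle is the change-of-measure step: correctly coupling the single global constraint $\sum_i\E_{\nu_0}[N_i]=T$ (measured under the \emph{reference} environment $\nu_0$) with the per-instance distinguishability bounds (which concern $\nu_i$), and then tuning $\Delta$ to balance exploration cost against per-step penalty. For the regret bound this requires the sharper Bretagnolle--Huber inequality rather than Pinsker, since Pinsker alone loses the needed constant when $\overline{\mathrm{KL}}=\Theta(1)$. Minor care is also needed at the boundary: the additive $O(1/K)$ terms and the requirement $\Delta\le 1/4$ force a separate trivial check for the smallest values of $K$, which can be absorbed into the absolute constant $c$ since $K\ge 2$.
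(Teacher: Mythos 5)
The paper does not actually prove Proposition~\ref{proposition:bandit-lower-bound}: it is invoked as a known result with a citation to \citet{lattimore2020bandit} (Exercises 15.4 and 33.1), and the paper's real work is the reduction in Appendix~\ref{appendix:proof-lower-bound}. Your proposal therefore supplies exactly the standard change-of-measure proof that the paper outsources to the textbook, and it is essentially correct: the divergence decomposition $\mathrm{KL}(\P_{\nu_0}\|\P_{\nu_i}) = \E_{\nu_0}[N_i]\cdot\mathrm{KL}(\mathrm{Ber}(1/2)\|\mathrm{Ber}(1/2+\Delta))$ is valid here because arms are sampled from $\mu^t$, so expected pull counts equal $\E_{\nu_0}[\sum_t \mu_i^t]$; Bretagnolle--Huber (rather than Pinsker) is indeed the right tool for part (a), since the tuned regime has $\overline{\mathrm{KL}}=\Theta(1)$; and the Pinsker-plus-$\sum_i \E_{\nu_0}[\hat\mu_i]=1$ route for part (b) is the standard argument. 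Two bookkeeping points deserve more than the passing mention you give them. First, your averaged ``needle-in-a-haystack'' argument carries additive losses of $2/K$ in (a) and $1/K$ in (b), which make the displayed constants vacuous for the smallest $K$: in (a) you need roughly $K\ge 5$ for $\tfrac12 e^{-\overline{\mathrm{KL}}}-2/K$ to be a positive constant, and in (b) the bound $\tfrac1K+C\varepsilon\sqrt{T/K}\le \tfrac12$ fails outright at $K=2$ where $1/K=1/2$. Both are repaired by routine constant adjustments --- e.g., the two-instance variant (compare the all-$1/2$ instance against a perturbation of a least-explored arm, giving $\E_{\nu_0}[N_i]\le T/(K-1)$ directly) for (a), and taking $\Delta=3\varepsilon$ with deviation threshold $\E_{\nu_i}[\hat\mu_i]\le 2/3$ for (b) --- so your claim that small $K$ is absorbed into $c$ is right, but as written the fix is not ``trivial check'' so much as a slightly different instance pairing. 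Second, in part (b) your construction requires $\Delta=2\varepsilon\le 1/4$ (both for the means to behave and for the quadratic KL estimate), i.e.\ $\varepsilon\le 1/8$; for larger $\varepsilon$ the claim holds only after shrinking the absolute constant $c$, which matches how the cited statement is meant to be read and how the paper uses it (Theorem~\ref{theorem:lower-bound} only needs $\varepsilon\in(0,1]$ up to constants). With those caveats handled, the proof is sound and is the canonical argument behind the cited exercises.
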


We now construct a class of IIEFGs with $X_H=A^{H-1}$ (the minimal possible number of infosets), and show that any algorithm that solves this class of games will imply an algorithm for stochastic bandits with $A^H$ arms with the same regret/PAC bounds, from which Theorem~\ref{theorem:lower-bound} follows.

Our construction is as follows: For any $A\ge 2$ and $H\ge 1$, we let $S_h=A^{h-1}$ for all $h\in[H]$ (in particular, $S_1=1$) and $B=1$ (so that there is no opponent effectively). By the tree structure, each state is thus uniquely determined by all past actions $s_h=(a_1,\dots,a_{h-1})$, and the transition is deterministic: $((a_1,\dots,a_{h-1}), a_h)\in\cS_h\times\cA$ transits to $(a_1,\dots, a_h)\in\cS_{h+1}$ with probability one. Further, we let $x_h=x(s_h)=s_h$, so that there is no partial observability, and thus $\cX_h=\cS_h$ for all $h$. Only the $H$-th layer yields a Bernoulli reward with some mean $r_{a_{1:H}}\defeq \E[r_H(a_{1:H-1}, a_H)]\in[0,1]$, for all $a_{1:H}\in\cX_H$. The reward is zero within all previous layers.

Under this model, the expected reward under any policy $\mu\in\Pi_{\max}$ can be succinctly written as
\begin{align*}
  \<\mu, r\> = \sum_{(x_H, a_H)\in\cX_H\times\cA} \mu_{1:H}(x_H, a_H) \E[r_H(x_h, a_H)] = \sum_{a_{1:H}\in \cA^H} \mu_{1:H}(a_{1:H}) r_{a_{1:H}}.
\end{align*}
This expression coincides with the expression for the expected reward of an $A^H$-armed stochastic bandit problem.

Now, for any algorithm $\Alg$ achieving regret $\Reg^T$ on IIEFGs, we claim we can use it to design an algorithm for solving any $A^H$-armed stochastic bandit problem with Bernoulli rewards, and achieve the same regret. Indeed, given any $A^H$-armed bandit problem, we rename its arms as a sequence $a_{1:H}=(a_1,\dots,a_H)\in\cA^H$. Now, we instantiate an instance of $\Alg$ on a simulated IIEFG with the above structure. Whenever $\Alg$ plays policy $\mu^t\in\Pi_{\max}$, we query an arm $a_{1:H}$ using policy $\mu^t_{1:H}(\cdot)\in\Delta(\cA^H)$ in the bandit problem. Then, upon receiving the reward $r^t$ from the bandit problem, we give the feedback that the game transitted to infoset $a_{1:H}$ and yielded reward $r^t$. By the above equivalence, the regret $\Reg^T$ within this simulated game is exactly the same as the regret for the bandit problem.

Therefore, for $T\ge A^H$, we can apply Proposition~\ref{proposition:bandit-lower-bound}(a) to show that for any such $\Alg$, there exists one such IIEFG, on which
\begin{align*}
  \E\brac{\Reg^T} \ge c\cdot \sqrt{A^HT} = c\sqrt{X_HAT} \ge c\sqrt{XAT},
\end{align*}
where the last inequality follows from the fact that $X\le X_H(1+1/A+1/A^2+\cdots)\le X_H/ (1-1/A)\le 2X_H$ by perfect recall. This shows part (a).

Part (b) (PAC lower bound) follows similarly from Proposition~\ref{proposition:bandit-lower-bound}(b). Using the same reduction, we can show for any algorithm that controls both players and outputs policy $(\what{\mu}, \what{\nu})\in\Pi_{\max}\times\Pi_{\min}$, there exists one such game of the above form (where only the max player affects the game) where the algorithm suffers from the PAC lower bound
\begin{align*}
  \E\brac{ \negap(\what{\mu}, \what{\nu}) } = \E\brac{ \max_{\mu\in\Pi_{\max}} V^{\mu^\dagger, \what{\nu}} - V^{\what{\mu}, \what{\nu}} } \ge \epsilon
\end{align*}
unless $T\ge cXA/\eps^2$. The symmetrical construction for the min player implies that there exists some game on which $\E\brac{\negap(\what{\mu}, \what{\nu})}\ge \epsilon$ unless $T\ge cYB/\eps^2$.

Therefore, if $T< c(XA+YB)/(2\eps^2)$, at least one of $T\ge cXA/\eps^2$ and $T\ge cYB/\eps^2$ has to be false, for which we obtain a game where the expected duality gap is at least $\eps$. This shows part (b).
\qed
\section{Proofs for Section~\ref{section:cfr}}
\label{appendix:proof-cfr}

\subsection{Counterfactual regret decomposition}\label{sec:CFR_decomposition}

Define the immediate counterfactual regret at any $x_h\in\cX_h$, $h\in[H]$ as
\begin{align}
  \label{equation:immediate-counterfactual-regret}
  \Reg_{h}^{\imm, T}(x_{h}) = \max_{\mu_{h}^\dagger(\cdot|x_{h})} \sum_{t=1}^T \< \mu_{h}^t(\cdot | x_{h}) - \mu_{h}^\dagger(\cdot | x_{h}) , \L_{h}^t(x_{h}, \cdot)\>,
\end{align}
where $L_h^t(\cdot, \cdot)$ is the counterfactual loss function defined in~\eqref{equation:counterfactual-loss}:
\begin{align*}
  \L_h^t(x_h, a_h) \defeq \l_h^t(x_h, a_h) + \sum_{h'=h+1}^H \sum_{(x_{h'}, a_{h'})\in \cC_{h'}(x_h, a_h)\times \cA} \mu^t_{(h+1):h'}(x_{h'}, a_{h'}) \l_{h'}^t(x_{h'}, a_{h'}).
\end{align*}

\begin{lemma}[Counterfactual regret decomposition]
  \label{lemma:cfr-regret-decomposition}
  We have $\wt{\Reg}^T\le \sum_{h=1}^H \Reg_{h}^T$, where
  \begin{align*}
    & \quad \Reg_{h}^T \defeq \sum_{x_{1}\in\cX_{1}} \max_{a_{1}\in\cA} \cdots \sum_{x_{h-1}\in \cC(x_{h-2}, a_{h-2})} \max_{a_{h-1}\in\cA} \sum_{x_{h}\in \cC(x_{h-1}, a_{h-1})} \Reg_{h}^{\imm, T}(x_{h}), \\
    & = \max_{\mu\in\Pi_{\max}} \sum_{x_{h}\in\cX_{h}} \mu_{1:(h-1)}(x_{h-1}, a_{h-1}) \cdot \Reg_{h}^{\imm, T}(x_{h}).
  \end{align*}
\end{lemma}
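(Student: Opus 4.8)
The plan is to establish an \emph{exact} decomposition of the regret against any fixed comparator and then take a maximum. Fix $\mu^\dagger\in\Pi_{\max}$ and write $R^T(\mu^\dagger)\defeq \sum_{t=1}^T\<\mu^t-\mu^\dagger,\ell^t\>$, so that $\wt{\Reg}^T=\max_{\mu^\dagger\in\Pi_{\max}}R^T(\mu^\dagger)$. For each round $t$ and each $k\in\{0,\dots,H\}$ I would introduce the hybrid policy $\pi^{(k)}$ that plays $\mu^\dagger$ on layers $1,\dots,k$ and plays the maintained policy $\mu^t$ on layers $k+1,\dots,H$, so that $\pi^{(0)}=\mu^t$ and $\pi^{(H)}=\mu^\dagger$. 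Since $\<\cdot,\ell^t\>$ is linear in the sequence form, a telescoping sum gives $\<\mu^t-\mu^\dagger,\ell^t\>=\sum_{h=1}^H\<\pi^{(h-1)}-\pi^{(h)},\ell^t\>$, reducing the problem to identifying each layer-$h$ difference.

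The policies $\pi^{(h-1)}$ and $\pi^{(h)}$ differ only in their layer-$h$ conditional ($\mu^t_h$ versus $\mu^\dagger_h$), so their sequence-form weights share the common ancestor factor $\mu^\dagger_{1:h-1}(x_{h-1},a_{h-1})$ (on layers $<h$) and the common descendant factor $\mu^t_{(h+1):h'}$ (on layers $h'\ge h$, with the empty product at $h'=h$). Factoring these out, I expect $\<\pi^{(h-1)}-\pi^{(h)},\ell^t\>=\sum_{x_h,a_h}\mu^\dagger_{1:h-1}(x_{h-1},a_{h-1})\,[\mu^t_h(a_h|x_h)-\mu^\dagger_h(a_h|x_h)]\,B^t_h(x_h,a_h)$, where $B^t_h(x_h,a_h)$ gathers the losses $\ell^t_{h'}$ over all $(x_{h'},a_{h'})$ with $x_{h'}\in\cC_{h'}(x_h,a_h)$, $h'\ge h$, weighted by $\mu^t_{(h+1):h'}$. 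The crux --- and the step I expect to demand the most careful bookkeeping --- is recognizing that $B^t_h(x_h,a_h)$ is \emph{exactly} the counterfactual loss $\L_h^t(x_h,a_h)$ of~\eqref{equation:counterfactual-loss}. Summing over $a_h$ and then over $t,h$ then yields the exact identity
\begin{align*}
  R^T(\mu^\dagger)=\sum_{h=1}^H\sum_{x_h}\mu^\dagger_{1:h-1}(x_{h-1},a_{h-1})\sum_{t=1}^T\<\mu^t_h(\cdot|x_h)-\mu^\dagger_h(\cdot|x_h),\L_h^t(x_h,\cdot)\>.
\end{align*}

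Finally I would take $\max_{\mu^\dagger\in\Pi_{\max}}$ and bound $\max\sum\le\sum\max$ over $h$, obtaining $\wt{\Reg}^T\le\sum_{h=1}^H\Reg_h^T$ with $\Reg_h^T=\max_{\mu^\dagger}\sum_{x_h}\mu^\dagger_{1:h-1}(x_{h-1},a_{h-1})\sum_t\<\mu^t_h(\cdot|x_h)-\mu^\dagger_h(\cdot|x_h),\L_h^t(x_h,\cdot)\>$. By perfect recall the comparator's conditionals at distinct infosets are independent simplex variables; since $\mu^\dagger_h(\cdot|x_h)$ enters only the bracket at $x_h$ while the nonnegative weight $\mu^\dagger_{1:h-1}(\cdot)$ depends only on ancestor conditionals, the maximization factorizes: optimizing each $\mu^\dagger_h(\cdot|x_h)$ pointwise turns its bracket into $\Reg_h^{\imm,T}(x_h)$, giving the second ($\max_\mu$) form of $\Reg_h^T$. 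To recover the first (nested-max) form, I would note that the remaining objective $\sum_{x_h}\mu_{1:h-1}(\cdot)\Reg_h^{\imm,T}(x_h)$ is linear over the polytope $\Pi_{\max}$ and hence maximized at a deterministic policy; since by the tree structure the subtrees rooted at distinct infosets are disjoint, the optimal deterministic policy is obtained by backward induction, selecting at each infoset the action maximizing the total of $\Reg_h^{\imm,T}$ over its reachable layer-$h$ descendants, which is precisely the alternation of $\sum_{x_k}$ and $\max_{a_k}$ in the statement.
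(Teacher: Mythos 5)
Your proposal is correct and follows essentially the same route as the paper's proof: the same telescoping through hybrid policies $\mu^\dagger_{1:h-1}\mu^t_{h:H}$, the same $\max\sum\le\sum\max$ step, the same per-infoset separability of the comparator's conditionals under perfect recall, and the same vertex/backward-induction argument equating the $\max_\mu$ form with the nested sum-max form. The only (cosmetic) difference is that you identify the layer-$h$ hybrid difference with the counterfactual loss $\L_h^t$ by directly factoring the sequence form, whereas the paper reaches the same identity via a performance-difference argument over trajectory expectations; both are the same computation.
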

\begin{proof}
  The bound $\wt{\Reg}^T\le \sum_{h=1}^H \Reg_h^T$ with the sum-max form expression for $\wt{\Reg}_h^T$ has already implicitly appeared in the proof of~\citep[Theorem 3]{zinkevich2007regret}, albeit with their slightly different formulation of extensive-form games (turn-based games with reward only in the last round). For completeness, here we provide a proof under our formulation.

  We first show the bound with the $\mu$ form expression for $\Reg_h^T$, which basically follows by a performance decomposition argument. We have
  \begin{align*}
    & \quad \wt{\Reg}^T = \max_{\mu^\dagger\in\Pi_{\max}}\sum_{t=1}^T \<\mu^t - \mu^\dagger, \l^t\> \\
    & = \max_{\mu^\dagger\in\Pi_{\max}} \sum_{t=1}^T \sum_{h=1}^H \<\mu^\dagger_{1:h-1}\mu^t_{h:H} - \mu^\dagger_{1:h}\mu^t_{h+1:H}, \l^t\> \\
    & \le \sum_{h=1}^H \underbrace{\max_{\mu^\dagger\in\Pi_{\max}} \sum_{t=1}^T \<\mu^\dagger_{1:h-1}\mu^t_{h:H} - \mu^\dagger_{1:h}\mu^t_{h+1:H}, \l^t\>}_{\defeq \Reg_h^T}.
  \end{align*}
  Note that each term $\Reg_h^T$ measures the performance difference between $\mu^\dagger_{1:h-1}\mu^t_{h:H}$ and $\mu^\dagger_{1:h}\mu^t_{h+1:H}$:
  \begin{align*}
    & \quad \Reg_h^T = \max_{\mu^\dagger\in\Pi_{\max}} \sum_{t=1}^T \E_{s_h\sim \mu^\dagger_{1:h-1}\times \nu^t} \brac{ \E_{a_h\sim \mu^t(\cdot|x_h)}\brac{ \sum_{h'=1}^H r_{h'}  }  - \E_{a_h\sim \mu^\dagger(\cdot|x_h)}\brac{ \sum_{h'=1}^H r_{h'}  } } \\
    & \stackrel{(i)}{=} \max_{\mu^\dagger\in\Pi_{\max}} \sum_{t=1}^T \E_{s_h\sim \mu^\dagger_{1:h-1}\times \nu^t} \brac{ \E_{a_h\sim \mu^t(\cdot|x_h)}\brac{ \sum_{h'=h}^H r_{h'}  }  - \E_{a_h\sim \mu^\dagger(\cdot|x_h)}\brac{ \sum_{h'=h}^H r_{h'}  } } \\
    & \stackrel{(ii)}{=} \max_{\mu^\dagger\in\Pi_{\max}} \sum_{t=1}^T \sum_{x_h\in\cX_h} \mu^\dagger_{1:h-1}(x_{h-1}, a_{h-1}) \cdot \<\mu^t_h(\cdot|x_h) - \mu^\dagger_h(\cdot|x_h), L_h^t(x_h, \cdot)\> \\
    & = \max_{\mu^\dagger\in\Pi_{\max}} \sum_{x_h\in\cX_h} \mu^\dagger_{1:h-1}(x_{h-1}, a_{h-1}) \cdot \Reg^{\imm, T}_h(x_h).
  \end{align*}
  Above, (i) follows as the rewards for the first $h-1$ steps are the same for the two expectations; (ii) follows by definition of the counterfactual loss function (cumulative loss multiplied by the opponent and environment's policy / transition probabilities, as well as the max player's own policy from step $h$ onward). The claim  (with the $\mu$ form expression) thus follows by renaming the dummy variable $\mu^\dagger$ as $\mu$.

  To verify that the second expression is equivalent to the first expression, it suffices to notice that the max over $\mu_{1:h-1}\in\Pi_{\max}$ consists of separable optimization problems over $\mu_{h'}(\cdot|x_{h'})$ over all $x_{h'}\in\cX_{h'}$, $h'\le h-1$, due to the perfect recall assumption (different $(x_{h'}, a_{h'})$ leads to disjoint subtrees). Therefore, we can rewrite the above as
  \begin{align*}
    &  \Reg_h^T = \sum_{x_1\in\cX_1} \max_{\mu_1(\cdot|x_1)\in\Delta(\cA)} \sum_{a_1\in\cA} \mu_1(a_1|x_1) \sum_{x_2\in \cC(x_1, a_1)} \cdots \\
    & \qquad \sum_{x_{h-1}\in\cC(x_{h-2}, a_{h-2})} \max_{\mu_{h-1}(\cdot|x_{h-1})\in\Delta(\cA)} \sum_{a_{h-1}\in\cA} \mu_{h-1}(a_{h-1}|x_{h-1})  \sum_{x_h\in\cC(x_{h-1}, a_{h-1})} \Reg^{\imm, T}_h(x_h).
  \end{align*}
  Further noticing (backward recursively) that each max over the action distribution is achieved at a single action yields the claimed sum-max form expression.
\end{proof}

\subsection{Proof of Theorem~\ref{theorem:cfr}}
\label{appendix:proof-theorem-cfr}
We now prove our main theorem on the regret of the CFR algorithm.

By Lemma~\ref{lemma:cfr-regret-decomposition}, we have $\wt{\Reg}^T\le \sum_{h=1}^H \Reg_{h}^T$, where for any $h\in[H]$ we have
\begin{align*}
  & \quad \Reg_{h}^T = \max_{\mu\in\Pi_{\max}} \sum_{x_{h}\in\cX_{h}} \mu_{1:(h-1)}(x_{h-1}, a_{h-1}) \Reg_{h}^{\imm, T}(x_{h}) \\
  & = \max_{\mu\in\Pi_{\max}} \sum_{x_{h}\in\cX_{h}} \mu_{1:(h-1)}(x_{h-1}, a_{h-1}) \max_{\mu_{h}^\dagger(\cdot|x_{h})} \sum_{t=1}^T \< \mu_{h}^t(\cdot | x_{h}) - \mu_{h}^\dagger(\cdot | x_{h}), \L_{h}^t(x_{h}, \cdot)\> \\
  & \le \max_{\mu\in\Pi_{\max}} \sum_{x_{h}\in\cX_{h}} \mu_{1:(h-1)}(x_{h-1}, a_{h-1}) \underbrace{\max_{\mu_{h}^\dagger(\cdot|x_{h})} \sum_{t=1}^T \< \mu_{h}^t(\cdot | x_{h}) - \mu_{h}^\dagger(\cdot | x_{h}), \wt{\L}_{h}^t(x_{h}, \cdot)\>}_{\defeq \wt{\Reg}_{h}^{\imm, T}(x_{h})} \\
  & \qquad + \max_{\mu\in\Pi_{\max}} \sum_{x_{h}\in\cX_{h}} \mu_{1:(h-1)}(x_{h-1}, a_{h-1}) \sum_{t=1}^T \< \mu_{h}^t(\cdot | x_{h}), \L_{h}^t(x_{h}, \cdot) - \wt{\L}_{h}^t(x_{h}, \cdot) \>  \\
  & \qquad + \max_{\mu\in\Pi_{\max}} \sum_{x_{h}\in\cX_{h}} \mu_{1:(h-1)}(x_{h-1}, a_{h-1}) \max_{\mu_{h}^\dagger(\cdot|x_{h})} \sum_{t=1}^T \< \mu_{h}^\dagger(\cdot | x_{h}), \wt{\L}_{h}^t(x_{h}, \cdot) - \L_{h}^t(x_{h}, \cdot) \>  \\
  & \stackrel{(i)}{=} \underbrace{\max_{\mu\in\Pi_{\max}} \sum_{x_{h}\in\cX_{h}} \mu_{1:(h-1)}(x_{h-1}, a_{h-1}) \wt{\Reg}_{h}^{\imm, T}(x_{h})}_{\defeq {\rm REGRET}_h} \\
  & \qquad + \underbrace{\max_{\mu\in\Pi_{\max}}  \sum_{(x_{h}, a_{h})\in\cX_{h}\times \cA} \mu_{1:(h-1)}(x_{h-1}, a_{h-1}) \sum_{t=1}^T \mu_{h}^t(a_{h} | x_{h})\brac{ \L_{h}^t(x_{h}, a_{h}) - \wt{\L}_{h}^t(x_{h}, a_{h})} }_{\defeq {\rm BIAS}_h^1} \\
  & \qquad + \underbrace{\max_{\mu\in\Pi_{\max}} \sum_{(x_{h}, a_{h})\in\cX_{h}\times \cA} \mu_{1:h}(x_{h}, a_{h}) \sum_{t=1}^T \brac{\wt{\L}_{h}^t(x_{h}, a_{h}) - \L_{h}^t(x_{h}, a_{h})} }_{\defeq {\rm BIAS}_h^2} \\
  & = {\rm REGRET}_h + {\rm BIAS}^1_h + {\rm BIAS}^2_h.
\end{align*}
Above, the simplification of the ${\rm BIAS}^2_h$ part in (i) uses the fact that the inner max over $\mu_{h}^\dagger(\cdot|x_{h})$ and the outer max over $\mu_{1:(h-1)}$ are separable and thus can be merged into a single max over $\mu_{1:h}$.

We now state three lemmas that bound each term above. Their proofs are deferred to Sections~\ref{appendix:proof-cfr-bias1}-\ref{appendix:proof-cfr-regret}.

\begin{lemma}[Bound on ${\rm BIAS}^1_h$]
  \label{lemma:cfr-bias1}
  For any sequence of opponents' policies $\nu^t\in\cF_{t-1}$, using the estimator $\wt{\L}_{h}$ in~\eqref{equation:L-estimator}, with probability $1-\delta/10$, we have
  \begin{align*}
    \sum_{h=1}^H {\rm BIAS}^1_h \le 2\sqrt{H^3XA T \iota} + HX \iota,
  \end{align*}
  where $\iota=\log(10X/\delta)$.
\end{lemma}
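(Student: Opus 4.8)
The plan is to exploit that $\wt{\L}_h^t$ is an \emph{unbiased} estimator of the true counterfactual loss $\L_h^t$, so that ${\rm BIAS}^1_h$ is a maximum over competitors of a sum of martingale differences, and then to apply Freedman's inequality per infoset. First I would verify unbiasedness: under the sampling policy $\mu^{t,(h)}=(\mu^{\star,h}_{1:h}\mu^t_{h+1:H})$ the pair $(x_h,a_h)$ is reached with probability $\mu^{\star,h}_{1:h}(x_h,a_h)\,p^{\nu^t}_{1:h}(x_h)$, so the importance weight $1/\mu^{\star,h}_{1:h}(x_h,a_h)$ in \eqref{equation:L-estimator} exactly cancels the reaching probability, and the conditional expectation of the loss-to-go $H-h+1-\sum_{h'\ge h}r_{h'}^{t,(h)}$ reproduces $\L_h^t(x_h,a_h)$ as defined in \eqref{equation:counterfactual-loss}. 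Writing $D_h^t(x_h)\defeq\sum_{a_h}\mu_h^t(a_h|x_h)\bigl(\L_h^t(x_h,a_h)-\wt{\L}_h^t(x_h,a_h)\bigr)$, each $D_h^t(x_h)$ is an $\cF_{t-1}$-measurable mean-zero increment, and since the weight $\mu_{1:h-1}(x_{h-1},a_{h-1})$ depends only on $x_h$, we have ${\rm BIAS}^1_h=\max_{\mu\in\Pi_{\max}}\sum_{x_h}\mu_{1:h-1}(x_{h-1},a_{h-1})\sum_{t=1}^T D_h^t(x_h)$.

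Next I would apply Freedman's inequality (Lemma~\ref{lemma:freedman}) to $\sum_t D_h^t(x_h)$ for each of the $X$ infosets $(h,x_h)$ and take a union bound, so that $\iota=\log(10X/\delta)$ enters. The almost-sure upper bound is $D_h^t(x_h)\le\sum_{a_h}\mu_h^t(a_h|x_h)\L_h^t(x_h,a_h)\le(H-h+1)$ by Lemma~\ref{lemma:counterfactual-loss-bound}(b) and $p^{\nu^t}_{1:h}(x_h)\le1$. For the predictable quadratic variation $V_h^T(x_h)\defeq\sum_t\E[(D_h^t(x_h))^2\mid\cF_{t-1}]$, only the visited action contributes, so $\E[\wt{\L}_h^t(x_h,a_h)^2\mid\cF_{t-1}]\le(H-h+1)^2 p^{\nu^t}_{1:h}(x_h)/\mu^{\star,h}_{1:h}(x_h,a_h)$; using $\mu^{\star,h}_{1:h}=\mu^{\star,h}_{1:h-1}/A$ and $\sum_{a_h}\mu_h^t(a_h|x_h)^2\le1$ yields
\[
V_h^T(x_h)\le (H-h+1)^2\,A\sum_{t=1}^T\frac{p^{\nu^t}_{1:h}(x_h)}{\mu^{\star,h}_{1:h-1}(x_{h-1},a_{h-1})}.
\]
Optimizing $\lambda$ (handling the data-dependent variance through a standard geometric grid over $\lambda\in(0,1/(H-h+1)]$, which only costs an extra $\log$ absorbed into $\iota$) gives the square-root form $\sum_t D_h^t(x_h)\le 2\sqrt{V_h^T(x_h)\,\iota}+(H-h+1)\iota$.

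The crux is the substitution and a carefully chosen Cauchy--Schwarz. After splitting the maximum, ${\rm BIAS}^1_h\le 2\sqrt{\iota}\,\max_\mu\sum_{x_h}\mu_{1:h-1}\sqrt{V_h^T(x_h)}+(H-h+1)\iota\,\max_\mu\sum_{x_h}\mu_{1:h-1}$. The second term is $\le(H-h+1)X_h\iota$ since $\sum_{x_h}\mu_{1:h-1}\le X_h$. For the first, bound $\sqrt{V_h^T(x_h)}\le(H-h+1)\sqrt{A}\sqrt{\sum_t p^{\nu^t}_{1:h}(x_h)/\mu^{\star,h}_{1:h-1}}$ and split the competitor weight as $\mu_{1:h-1}=\sqrt{\mu_{1:h-1}/\mu^{\star,h}_{1:h-1}}\cdot\sqrt{\mu_{1:h-1}}$, so that Cauchy--Schwarz gives
\[
\sum_{x_h}\mu_{1:h-1}\sqrt{V_h^T(x_h)}\le (H-h+1)\sqrt{A}\,\Bigl(\sum_{x_h}\tfrac{\mu_{1:h-1}}{\mu^{\star,h}_{1:h-1}}\Bigr)^{1/2}\Bigl(\sum_{x_h}\mu_{1:h-1}\sum_t p^{\nu^t}_{1:h}(x_h)\Bigr)^{1/2}.
\]
Here the first factor equals $\sqrt{X_h}$ by the balancing identity $\sum_{x_h}\mu_{1:h-1}(x_{h-1},a_{h-1})/\mu^{\star,h}_{1:h-1}(x_{h-1},a_{h-1})=X_h$ (the first reduction step in the proof of Lemma~\ref{lemma:balancing}), and the second factor equals $\sqrt{T}$ by $\sum_{x_h}\mu_{1:h-1}p^{\nu^t}_{1:h}(x_h)=1$ (Lemma~\ref{lemma:pnu}(a)). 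Crucially, both identities hold for \emph{every} $\mu$, so the resulting bound $(H-h+1)\sqrt{ATX_h}$ is uniform in $\mu$ and no union bound over competitors is needed. Thus ${\rm BIAS}^1_h\le 2(H-h+1)\sqrt{ATX_h\,\iota}+(H-h+1)X_h\iota$; summing over $h$ via Cauchy--Schwarz ($\sum_h(H-h+1)^2\le H^3$, $\sum_h X_h=X$) and $(H-h+1)\le H$ yields $\sum_h{\rm BIAS}^1_h\le 2\sqrt{H^3XAT\iota}+HX\iota$.

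The main obstacle is precisely this Cauchy--Schwarz step: the naive route (bounding $p^{\nu^t}_{1:h}\le1$ and $\sum_{x_h}\mu_{1:h-1}/\mu^{\star,h}_{1:h-1}\le X_h$ separately, or applying Freedman in the linear $\lambda V+\iota/\lambda$ form) produces a spurious factor $X_h$ per layer and loses a $\sqrt{X/H}$ factor overall. What makes $\sqrt{X}$ appear instead of $X$ is the interplay of three ingredients that must be kept intact simultaneously: the square-root form of Freedman (so that $\sqrt{V_h^T(x_h)}$ sits inside the infoset sum), the retention of $p^{\nu^t}_{1:h}(x_h)$ inside the variance, and the weight-splitting that lets the balancing property supply one $\sqrt{X_h}$ while the reaching-probability normalization supplies $\sqrt{T}$. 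A secondary technical point to handle cleanly is the data-dependence of $V_h^T(x_h)$ in choosing $\lambda$, which the grid/peeling argument resolves at the cost of logarithmic factors.
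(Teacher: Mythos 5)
Your proof is correct, but it takes a genuinely different route from the paper's. You apply Freedman per infoset in its square-root form $\sum_t D_h^t(x_h)\le 2\sqrt{V_h^T(x_h)\iota}+(H-h+1)\iota$ (paying a grid/peeling argument over $\lambda$ for the data-dependent variance), and then recover the $\sqrt{X_h}$ via a weight-splitting Cauchy--Schwarz, $\mu_{1:h-1}=\sqrt{\mu_{1:h-1}/\mu^{\star,h}_{1:h-1}}\cdot\sqrt{\mu_{1:h-1}}$, combining the balancing identity $\sum_{x_h}\mu_{1:h-1}/\mu^{\star,h}_{1:h-1}=X_h$ with $\sum_{x_h}\mu_{1:h-1}\,p^{\nu^t}_{1:h}(x_h)=1$. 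The paper instead \emph{pre-scales the martingale increments} by $\mu^{\star,h}_{1:(h-1)}(x_{h-1},a_{h-1})$ before applying Freedman---writing ${\rm BIAS}^1_h=\max_\mu\sum_{x_h}\frac{\mu_{1:h-1}}{\mu^{\star,h}_{1:h-1}}\sum_t\wt{\Delta}_t^{x_h}$ with the scale factor absorbed into $\wt{\Delta}_t^{x_h}$---so that the conditional variance carries $\mu^{\star,h}_{1:h-1}\mu^t_h(a_h|x_h)p^{\nu^t}_{1:h}(x_h)$, the weighted variance aggregates \emph{exactly} to $T$ per round by Lemma~\ref{lemma:pnu}(a), and the linear form $\lambda V+\iota/\lambda$ with a \emph{single deterministic} $\lambda=\min\{\sqrt{X\iota/(H^3AT)},1/H\}$ suffices: the layer sum becomes $\lambda H^3AT+X\iota/\lambda$ and the optimization happens at the aggregated, deterministic level. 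What the paper's route buys is the absence of any grid over $\lambda$ (so $\iota=\log(10X/\delta)$ holds verbatim, whereas your peeling inflates $\iota$ by an extra $\log$ factor---harmless for the $\tO$ statement but not literally matching the displayed constant); what your route buys is a per-infoset bound that makes the variance structure explicit and localizes where each factor ($\sqrt{X_h}$, $\sqrt{T}$, $\sqrt{A}$) comes from.

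One point in your commentary is mistaken, though it does not affect your proof: you assert that applying Freedman in the linear $\lambda V+\iota/\lambda$ form loses a spurious $X_h$ per layer. The paper's proof is precisely a linear-form Freedman argument and loses nothing, because the reweighting by $\mu^{\star,h}_{1:h-1}$ moves the balancing ratio outside the martingale while leaving $p^{\nu^t}_{1:h}$ inside the variance; the variance term then pays $T$ (not $X_hT$) and only the $\iota/\lambda$ term pays $X_h$, and a single global $\lambda$ trades them off optimally. The genuinely essential ingredients---shared by both proofs---are retaining $p^{\nu^t}_{1:h}$ in the variance and invoking the two exact identities (balancing and reaching-probability normalization) for every competitor $\mu$ simultaneously; the square-root form of Freedman is not one of them.
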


\begin{lemma}[Bound on ${\rm BIAS}^2_h$]
  \label{lemma:cfr-bias2}
  For any sequence of opponents' policies $\nu^t\in\cF_{t-1}$, using the estimator $\wt{\L}_{h}$ in~\eqref{equation:L-estimator}, with probability $1-\delta/10$, we have
  \begin{align*}
    \sum_{h=1}^H {\rm BIAS}^2_h \le 2\sqrt{H^3XAT \iota} + HXA\iota,
  \end{align*}
  where $\iota=\log(10XA/\delta)$.
\end{lemma}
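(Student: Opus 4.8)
The plan is to mirror the treatment of ${\rm BIAS}^2$ in the Balanced OMD proof (Lemma~\ref{lem:bias_2}), but to replace the explore-no-more concentration (which there exploited the IX bonus) by Freedman's inequality, since the CFR loss estimator~\eqref{equation:L-estimator} carries no IX bonus. The central device is the \emph{scaled estimator}
$$
M_h^t(x_h, a_h) \defeq \mu^{\star, h}_{1:h}(x_h, a_h)\,\wt{\L}_h^t(x_h, a_h) = \indic{(x_h^\th, a_h^\th) = (x_h, a_h)}\paren{H - h + 1 - \sum_{h'=h}^H r_{h'}^\th} \in [0, H],
$$
which is bounded in $[0,H]$ and, by the unbiasedness of $\wt{\L}_h^t$ asserted after~\eqref{equation:L-estimator}, satisfies $\E[M_h^t(x_h,a_h)\mid\cF_{t-1}] = \mu^{\star,h}_{1:h}(x_h,a_h)\,\L_h^t(x_h,a_h)$.

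First I would rewrite, by inserting $1 = \mu^{\star,h}_{1:h}/\mu^{\star,h}_{1:h}$,
$$
{\rm BIAS}_h^2 = \max_{\mu\in\Pi_{\max}} \sum_{(x_h,a_h)} \frac{\mu_{1:h}(x_h,a_h)}{\mu^{\star,h}_{1:h}(x_h,a_h)} \sum_{t=1}^T D_h^t(x_h,a_h), \qquad D_h^t(x_h,a_h) \defeq M_h^t(x_h,a_h) - \E[M_h^t(x_h,a_h)\mid\cF_{t-1}].
$$
For each fixed triple $(h,x_h,a_h)$, the sequence $\{D_h^t(x_h,a_h)\}_t$ is a martingale difference obeying $D_h^t \le M_h^t \le H$ almost surely, with conditional second moment controlled by
$$
\E[(D_h^t(x_h,a_h))^2\mid\cF_{t-1}] \le \E[(M_h^t(x_h,a_h))^2\mid\cF_{t-1}] \le H^2\,\mu^{\star,h}_{1:h}(x_h,a_h)\,p^{\nu^t}_{1:h}(x_h),
$$
where the last inequality uses $(M_h^t)^2 \le H^2\indic{(x_h^\th,a_h^\th)=(x_h,a_h)}$ together with $\P((x_h^\th,a_h^\th)=(x_h,a_h)\mid\cF_{t-1}) = \mu^{\star,h}_{1:h}(x_h,a_h)\,p^{\nu^t}_{1:h}(x_h)$, valid because the sampling policy $\mu^{\th}$ plays exactly $\mu^{\star,h}$ through layer $h$.

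Next I would apply Freedman's inequality (Lemma~\ref{lemma:freedman}) with a fixed $\lambda\in(0,1/H]$ and union-bound over all $\sum_h X_hA = XA$ triples $(h,x_h,a_h)$, so that $\log(1/\delta') = \log(10XA/\delta) = \iota$ and, with probability at least $1-\delta/10$, every coordinate satisfies $\sum_{t=1}^T D_h^t(x_h,a_h) \le \lambda H^2 \sum_{t=1}^T \mu^{\star,h}_{1:h}(x_h,a_h)p^{\nu^t}_{1:h}(x_h) + \iota/\lambda$. Substituting into the display for ${\rm BIAS}_h^2$, the weight $1/\mu^{\star,h}_{1:h}$ cancels the $\mu^{\star,h}_{1:h}$ inside the variance term, and the two resulting sums are both \emph{independent of $\mu$}: $\sum_{(x_h,a_h)}\mu_{1:h}(x_h,a_h)p^{\nu^t}_{1:h}(x_h) = 1$ by Lemma~\ref{lemma:pnu}(a), and $\sum_{(x_h,a_h)}\mu_{1:h}(x_h,a_h)/\mu^{\star,h}_{1:h}(x_h,a_h) = X_hA$ by the balancing property (Lemma~\ref{lemma:balancing}). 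Hence the outer maximum is vacuous, giving ${\rm BIAS}_h^2 \le \lambda H^2 T + \iota X_hA/\lambda$. Summing over $h$ yields $\sum_h {\rm BIAS}_h^2 \le \lambda H^3 T + \iota XA/\lambda$; taking $\lambda = \min\{1/H,\sqrt{\iota XA/(H^3T)}\}$ gives $2\sqrt{H^3XAT\iota}$ in the uncapped regime, while in the capped regime $\lambda = 1/H$ the constraint forces $HT \le XA\iota$, whence $H^2 T \le 2\sqrt{H^3XAT\iota}$, so the claimed bound $2\sqrt{H^3XAT\iota} + HXA\iota$ holds in both cases.

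The main obstacle is precisely the decoupling in the third paragraph: it is what keeps the bound linear rather than quadratic in $X$. Because the comparator $\mu$ enters only through the two $\mu$-independent quantities $1$ and $X_hA$, the supremum over $\Pi_{\max}$ is free, so the balanced reweighting $1/\mu^{\star,h}_{1:h}$ turns a small per-coordinate Freedman bound (enabled by the $O(H^2\mu^{\star,h}_{1:h}p^{\nu^t}_{1:h})$ conditional-variance estimate) into the correct aggregate rate; using the crude Azuma bound per coordinate would instead cost an extra factor of $X$. A secondary point needing care is the conditional-second-moment computation, which depends on identifying the reaching probability of $(x_h,a_h)$ under the mixed sampling policy $\mu^{\th}$ as $\mu^{\star,h}_{1:h}p^{\nu^t}_{1:h}$ and on the unbiasedness of $\wt{\L}_h^t$.
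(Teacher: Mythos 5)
Your proposal is correct and takes essentially the same route as the paper's proof: your $D_h^t(x_h,a_h)$ is exactly the paper's martingale difference $\Delta_t^{x_h,a_h}$, with the identical conditional-variance bound $H^2\mu^{\star,h}_{1:h}(x_h,a_h)\,p^{\nu^t}_{1:h}(x_h)$, the same application of Freedman's inequality (Lemma~\ref{lemma:freedman}) with a union bound over all $XA$ triples, the same cancellation of the reweighting via Lemma~\ref{lemma:pnu}(a) and the balancing property (Lemma~\ref{lemma:balancing}), and the same choice $\lambda=\min\{1/H,\sqrt{XA\iota/(H^3T)}\}$. Your explicit case analysis of the capped regime $\lambda=1/H$ merely spells out a step the paper leaves implicit.
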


\begin{lemma}[Bound on ${\rm REGRET}_h$]
  \label{lemma:cfr-regret}
  Choosing $\eta=\sqrt{XA\iota/(H^3T)}$, we have that with probability at least $1-\delta/10$ (over the randomness within the loss estimator $\wt{\L}^t_{h}$),
  \begin{align*}
    \sum_{h=1}^H {\rm REGRET}_h \le 2\sqrt{H^3XAT \iota} + \sqrt{H X^3A^3 \iota^3/(4T)},
  \end{align*}
  where $\iota=\log(10XA/\delta)$.
\end{lemma}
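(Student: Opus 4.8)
The plan is to reduce ${\rm REGRET}_h$ to the per-infoset regret of the local Hedge minimizers and then control the resulting first- and second-order terms using the balancing property. Since the effective learning rate $\eta\mu^{\star,h}_{1:h}(x_h,a)$ of the Hedge update~\eqref{equation:cfr-hedge} does not depend on $a$ (call it $\eta_{x_h}=\eta\mu^{\star,h}_{1:h}(x_h)$), Lemma~\ref{lemma:md} gives, for every $x_h$,
\[
\wt{\Reg}_h^{\imm, T}(x_h)\le \frac{\log A}{\eta_{x_h}} + \frac{\eta_{x_h}}{2}\sum_{t=1}^T\sum_{a\in\cA}\mu_h^t(a|x_h)\wt{\L}_h^t(x_h, a)^2.
\]
Plugging this into ${\rm REGRET}_h=\max_{\mu\in\Pi_{\max}}\sum_{x_h}\mu_{1:(h-1)}(x_{h-1},a_{h-1})\wt{\Reg}_h^{\imm,T}(x_h)$ splits it into a first-order term $T_1^{(h)}$ (the $\log A/\eta_{x_h}$ contribution) and a second-order term $T_2^{(h)}$.

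First I would handle $\sum_h T_1^{(h)}$. Writing $\eta_{x_h}=\eta\mu^{\star,h}_{1:h}(x_h)$ and using $\mu^{\star,h}_{1:h}(x_h,a)=\mu^{\star,h}_{1:h-1}(x_{h-1},a_{h-1})/A$, the same telescoping as in Lemma~\ref{lemma:balancing} yields the identity $\sum_{x_h}\mu_{1:(h-1)}(x_{h-1},a_{h-1})/\mu^{\star,h}_{1:h}(x_h)=X_hA$ for \emph{every} $\mu\in\Pi_{\max}$; this is $\mu$-independent, so the max is trivial and $T_1^{(h)}=X_hA\log A/\eta$, hence $\sum_h T_1^{(h)}=XA\log A/\eta\le\sqrt{H^3XAT\iota}$ after plugging in $\eta$.

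The crux is $\sum_h T_2^{(h)}$. The key observation is a cancellation: since $\mu^{\star,h}_{1:h}(x_h)$ is $a$-independent and $\wt{\L}_h^t(x_h,a)^2\le\frac{H-h+1}{\mu^{\star,h}_{1:h}(x_h,a)}\wt{\L}_h^t(x_h,a)$ (using $0\le H-h+1-\sum_{h'}r_{h'}^\th\le H-h+1$ in the estimator~\eqref{equation:L-estimator}), the weight $\eta_{x_h}$ exactly cancels the $1/\mu^{\star,h}_{1:h}(x_h,a)$, giving $\eta_{x_h}\wt{\L}_h^t(x_h,a)^2\le\eta(H-h+1)\wt{\L}_h^t(x_h,a)$. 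Recombining $\mu_{1:(h-1)}(x_{h-1},a_{h-1})\mu_h^t(a|x_h)$ into the sequence form of an auxiliary policy, this bounds $T_2^{(h)}$ by $\frac{\eta(H-h+1)}{2}\max_\mu\sum_t\sum_{(x_h,a_h)}\mu_{1:(h-1)}(x_{h-1},a_{h-1})\mu_h^t(a_h|x_h)\wt{\L}_h^t(x_h,a_h)$. I then split $\wt{\L}$ into its conditional mean $\L$ plus a fluctuation. The mean part is bounded uniformly in $\mu$ by $T(H-h+1)$ via Lemma~\ref{lemma:counterfactual-loss-bound}(a). For the fluctuation $\max_\mu\sum_{x_h}\mu_{1:(h-1)}\sum_t\<\mu_h^t(\cdot|x_h),\wt{\L}_h^t(x_h,\cdot)-\L_h^t(x_h,\cdot)\>$, I would apply Freedman's inequality (Lemma~\ref{lemma:freedman}) to the martingale $Z_t^{(x_h)}=\<\mu_h^t(\cdot|x_h),(\wt{\L}-\L)_h^t(x_h,\cdot)\>$ at each infoset, with the non-random parameter $\lambda_{x_h}=\mu^{\star,h}_{1:h}(x_h)/H$. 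The range bound $Z_t^{(x_h)}\le(H-h+1)/\mu^{\star,h}_{1:h}(x_h)$ makes $\lambda_{x_h}$ admissible and yields a deviation term $H\iota/\mu^{\star,h}_{1:h}(x_h)$, while the conditional variance is bounded by $\frac{(H-h+1)^2}{\mu^{\star,h}_{1:h}(x_h)}p^{\nu^t}_{1:h}(x_h)$. Taking a union bound over all $X$ infosets and then the max over $\mu$, the deviation part sums (via the same balancing identity) to $H\iota X_hA$, and the variance part sums (via $\sum_{x_h}\mu_{1:(h-1)}p^{\nu^t}_{1:h}(x_h)=1$, a rearrangement of Lemma~\ref{lemma:pnu}(a)) to $(H-h+1)T$ --- crucially both bounds are $\mu$-independent, so the max is again trivial. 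Combining gives $\sum_h T_2^{(h)}\le\eta H^3T+\frac{\eta H^2XA\iota}{2}$, which with $\eta=\sqrt{XA\iota/(H^3T)}$ produces exactly $\sqrt{H^3XAT\iota}+\sqrt{HX^3A^3\iota^3/(4T)}$; adding $\sum_h T_1^{(h)}$ gives the claimed $2\sqrt{H^3XAT\iota}+\sqrt{HX^3A^3\iota^3/(4T)}$.

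The main obstacle is the fluctuation term: getting the range and conditional-variance estimates for $Z_t^{(x_h)}$ with the correct $1/\mu^{\star,h}_{1:h}(x_h)$ scaling, choosing $\lambda_{x_h}$ to balance them, and --- most importantly --- arranging the two resulting sums so that the max over $\mu\in\Pi_{\max}$ collapses to an exact $\mu$-independent identity (the balancing identity for the deviation term and the $p^\nu$ normalization for the variance term). Everything else (the Hedge bound, the cancellation, the mean bound) is routine once these identities are in place.
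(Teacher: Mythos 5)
Your proposal is correct and follows essentially the same route as the paper's proof: the per-infoset Hedge bound with the balanced learning rate $\eta\mu^{\star,h}_{1:h}(x_h,a)$, the balancing identity (Lemma~\ref{lemma:balancing}) making the $\log A/\eta$ term collapse to $X_hA\log A/\eta$, and Freedman's inequality with a union bound over infosets combined with the normalization $\sum_{x_h}\mu_{1:(h-1)}(x_{h-1},a_{h-1})\,p^{\nu^t}_{1:h}(x_h)=1$ (Lemma~\ref{lemma:pnu}(a)) for the second-order term, arriving at the same intermediate bound $XA\iota/\eta+\eta H^3T+\eta H^2XA\iota/2$ and hence the identical final expression. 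The only differences are cosmetic rescalings: the paper bounds $\wt{\L}_h^t(x_h,a_h)^2\le H^2\indic{\cdot}/\mu^{\star,h}_{1:h}(x_h,a_h)^2$ and applies Freedman to the indicator martingale $\wb{\Delta}_t^{x_h}\in[0,1]$ with a uniform $\lambda=1$, whereas you keep the $1/\mu^{\star,h}_{1:h}$ factor inside the martingale, compensate with the infoset-dependent $\lambda_{x_h}=\mu^{\star,h}_{1:h}(x_h)/H$, and handle the conditional mean via Lemma~\ref{lemma:counterfactual-loss-bound}(a) --- an equivalent packaging of the same argument.
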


Combining Lemma~\ref{lemma:cfr-bias1},~\ref{lemma:cfr-bias2}, and~\ref{lemma:cfr-regret}, we obtain the following: Choosing $\eta=\sqrt{XA\iota/(H^3T)}$, with probability at least $1-3\delta/10\ge 1-\delta$, we have
\begin{align*}
  & \quad \wt{\Reg}^T \le \sum_{h=1}^H \Reg_{h}^T \le \sum_{h=1}^H {\rm REGRET}_h + \sum_{h=1}^H {\rm BIAS}^1_h + \sum_{h=1}^H {\rm BIAS}^2_h \\
  & \le 6\sqrt{H^3XAT\iota} + 2HXA\iota + \sqrt{HX^3A^3\iota^3/(4T)}.  
\end{align*}
Additionally, recall the naive bound $\wt{\Reg}^T\le HT$ on the regret (which follows as $\<\mu^t,\ell^t\>\in [0, H]$ for any $\mu\in\Pi_{\max}$, $t\in[T]$), we get
\begin{align*}
    & \quad \wt{\Reg}^T \le \min\set{ 6\sqrt{H^3XA T\iota} + 2HXA\iota + \sqrt{HX^3A^3\iota^3/4T}, HT } \\
    & \le HT\cdot \min\set{6\sqrt{HXA \iota/T} + 2XA\iota/T + \sqrt{X^3A^3\iota^3/(4HT^3)}, 1 }.
\end{align*}
For $T>HXA \iota$, the min above is upper bounded by $9\sqrt{HXA \iota/T}$. For $T\le HXA\iota$, the min above is upper bounded by $1\le 9\sqrt{HXA\iota/T}$. Therefore, we always have
\begin{align*}
    \wt{\Reg}^T \le HT\cdot 9\sqrt{HXA \iota/T} = 9\sqrt{H^3XA T\iota}.
\end{align*}
This is the desired result.
\qed

\subsection{Proof of Lemma~\ref{lemma:cfr-bias1}}
\label{appendix:proof-cfr-bias1}

Rewrite ${\rm BIAS}^1_h$ as
\begin{align}
  & \quad {\rm BIAS}^1_h = \max_{\mu\in\Pi_{\max}} \sum_{(x_{h}, a_{h})\in\cX_{h}\times \cA} \frac{\mu_{1:(h-1)}(x_{h-1}, a_{h-1})}{\mu^{\star, h}_{1:(h-1)}(x_{h-1}, a_{h-1})} \nonumber \\
  & \qquad\qquad\qquad  \cdot \sum_{t=1}^T \mu^{\star, h}_{1:(h-1)}(x_{h-1}, a_{h-1}) \mu_{h}^t(a_{h}|x_{h}) \cdot \brac{ \L_{h}^t(x_{h}, a_{h}) - \wt{\L}_{h}^t(x_{h}, a_{h}) } \nonumber \\
  & = \max_{\mu\in\Pi_{\max}} \sum_{x_{h} \in\cX_{h}} \frac{\mu_{1:(h-1)}(x_{h-1}, a_{h-1})}{\mu^{\star, h}_{1:(h-1)}(x_{h-1}, a_{h-1})} \nonumber \\
  & \cdot \sum_{t=1}^T \underbrace{\sum_{a_{ h} \in \cA} \frac{\mu_{h}^t(a_{h}|x_{h})}{\mu^{\star, h}_{ h}(a_{h}|x_{h})} \brac{ \mu^{\star, h}_{1:h}(x_{h}, a_{h}) \L_{h}^t(x_{h}, a_{h}) - \paren{ H-h+1 - \sum_{h'=h}^H r_{h'}^{\th} }\indic{ (x_{h}^{\th}, a_{h}^{\th})=(x_{h}, a_{h})} }}_{\defeq \wt{\Delta}_t^{x_{h}}}. \label{equation:cfr-bias1}
\end{align}
Observe that the random variables $\wt{\Delta}_t^{x_{h}}$ satisfy the following:
\begin{itemize}
\item $\wt{\Delta}_t^{x_{h}}\le H$ almost surely:
  \begin{align*}
    & \quad \wt{\Delta}_t^{x_{h}} \le \sum_{a_{ h} \in \cA} \frac{\mu_{h}^t(a_{h}|x_{h})}{\mu^{\star, h}_{ h}(a_{h}|x_{h})} \cdot \mu^{\star, h}_{1:h}(x_{h}, a_{h}) \L_{h}^t(x_{h}, a_{h}) \\
    & = \sum_{a_{ h} \in \cA} \mu_{h}^t(a_{h}|x_{h}) \mu^{\star, h}_{1:(h-1)}(x_{h-1}, a_{h-1}) \L_{h}^t(x_{h}, a_{h}) \le H.
  \end{align*}
  Above, the last bound follows from Lemma~\ref{lemma:pnu}(a).
\item $\E[\wt{\Delta}_t^{x_{h}}|\cF_{t-1}]=0$, where $\cF_{t-1}$ is the $\sigma$-algebra containing all information after iteration $t-1$;
\item The conditional variance $\E[(\wt{\Delta}_t^{x_{h}})^2|\cF_{t-1}]$ can be bounded as
  \begin{align*}
    & \quad \E\brac{ \paren{\wt{\Delta}_t^{x_{h}}}^2 \Big| \cF_{t-1}} \\
    & \le \E\brac{ \sum_{a_{ h} \in \cA} \paren{ \frac{\mu_{h}^t(a_{h}|x_{h})}{\mu^{\star, h}_{ h}(a_{h}|x_{h})} }^2 \cdot \paren{ H-h+1 - \sum_{h'=h}^H r_{h'}^{\th} }^2 \indic{ (x_{h}^{\th}, a_{h}^{\th})=(x_{h}, a_{h})} \Big| \cF_{t-1} }\\
    & \le H^2 \sum_{a_{ h} \in \cA} \paren{ \frac{\mu_{h}^t(a_{h}|x_{h})}{\mu^{\star, h}_{ h}(a_{h}|x_{h})} }^2 \cdot \P^{\mu^{\star, h}_{1:h}, \nu^t}\paren{ (x_{h}^{\th}, a_{h}^{\th})=(x_{h}, a_{h}) } \\
    & = H^2 \sum_{a_{ h} \in \cA} \paren{ \frac{\mu_{h}^t(a_{h}|x_{h})}{\mu^{\star, h}_{ h}(a_{h}|x_{h})} }^2 \cdot \mu^{\star,h}_{1:h}(x_{h}, a_{h}) \cdot p^{\nu^t}_{1:h}(x_{h}) \\
    & = H^2 \sum_{a_{ h} \in \cA} \underbrace{ \paren{ \frac{\mu_{h}^t(a_{h}|x_{h})}{\mu^{\star, h}_{ h}(a_{h}|x_{h})}} }_{\le A} \cdot \mu^{\star,h}_{1:h-1}(x_{h-1}, a_{h-1}) \cdot \mu_{h}^t(a_{h}|x_{h})  p^{\nu^t}_{1:h}(x_{h}) \\
    & \le H^2A \cdot \sum_{a_{ h} \in \cA} \mu^{\star,h}_{1:h-1}(x_{h-1}, a_{h-1}) \cdot \mu_{h}^t(a_{h}|x_{h})  p^{\nu^t}_{1:h}(x_{h}).
  \end{align*}
\end{itemize}
Therefore, we can apply Freedman's inequality (Lemma~\ref{lemma:freedman}) and union bound to get that, for any fixed $\lambda\in(0, 1/H]$, with probability at least $1-\delta/10$, the following holds simultaneously for all $(h, x_{h})$:
\begin{align*}
  \sum_{t=1}^T \wt{\Delta}_t^{x_{h}} \le \lambda H^2A \sum_{a_{ h} \in \cA} \mu^{\star, h}_{1:h-1}(x_{h-1}, a_{h-1}) \cdot \sum_{t=1}^T \mu_{h}^t(a_{h}|x_{h}) p^{\nu^t}_{1:h}(x_{h}) + \frac{\iota}{\lambda},
\end{align*}
where $\iota\defeq \log(10X/\delta)$ is a log factor.
Plugging this bound into~\eqref{equation:cfr-bias1} yields that, for all $h\in[H]$,
\begin{align*}
  & \quad {\rm BIAS}^1_h = \max_{\mu\in\Pi_{\max}} \sum_{x_{h} \in\cX_{h} } \frac{\mu_{1:(h-1)}(x_{h-1}, a_{h-1})}{\mu^{\star, h}_{1:(h-1)}(x_{h-1}, a_{h-1})} \cdot \sum_{t=1}^T \wt{\Delta}_t^{x_{h}} \\
  & \le \max_{\mu\in\Pi_{\max}} \sum_{x_{h}\in\cX_{h}} \frac{\mu_{1:(h-1)}(x_{h-1}, a_{h-1})}{\mu^{\star, h}_{1:(h-1)}(x_{h-1}, a_{h-1})} \cdot \Bigg[ \lambda H^2A \sum_{a_{ h} \in \cA} \mu^{\star, h}_{1:h-1}(x_{h-1}, a_{h-1}) \cdot \sum_{t=1}^T \mu_{h}^t(a_{h}|x_{h}) p^{\nu^t}_{1:h}(x_{h}) + \frac{\iota}{\lambda} \Bigg] \\
  & \le \lambda H^2A \cdot \max_{\mu\in\Pi_{\max}} \sum_{(x_{h}, a_{h})\in\cX_{h}\times \cA} \mu_{1:h-1}(x_{h-1}, a_{h-1}) \sum_{t=1}^T \mu_{h}^t(a_{h}|x_{h})  p^{\nu^t}_{1:h}(x_{h}) \\
  & \qquad + \frac{\iota}{\lambda} \cdot \max_{\mu\in\Pi_{\max}} \sum_{x_{h} \in\cX_{h}} \frac{\mu_{1:(h-1)}(x_{h-1}, a_{h-1})}{\mu^{\star, h}_{1:(h-1)}(x_{h-1}, a_{h-1})} \\
  & \stackrel{(i)}{=} \lambda H^2 AT + \frac{\iota}{\lambda} \cdot \frac{1}{A} \max_{\mu\in\Pi_{\max}} \sum_{(x_{h}, a_{h})\in\cX_{h}\times \cA} \frac{(\mu_{1:(h-1)}\mu^{\rm unif}_h)(x_{h}, a_{h})}{\mu^{\star, h}_{1:h}(x_{h}, a_{h})} \\
  & \stackrel{(ii)}{=} \lambda H^2 AT + \frac{\iota}{\lambda} \cdot X_{h}.
\end{align*}
Above, (i) used the fact that $\sum_{(x_{h},a_{h})\in\cX_{h}\times\cA} \mu_{1:h-1}(x_{h-1}, a_{h-1}) \mu_{h}^t(a_{h}|x_{h}) p^{\nu^t}_{1:h}(x_{h}) = 1$ for any $\mu\in\Pi_{\max}$ and any $t\in[T]$ (Lemma~\ref{lemma:pnu}(a)), as well as the fact that $\mu^{\star, h}_{h}(a_{h}|x_{h})=\mu^{\rm unif}_h(a_{h}|x_{h})\defeq 1/A$; (ii) used the balancing property of $\mu^{\star, h}_{1:h}$ (Lemma~\ref{lemma:balancing}).
Combining the bounds for all $h\in[H]$, we get that with probability at least $1-\delta/10$,
\begin{align*}
  \sum_{h=1}^H {\rm BIAS}^1_h \le \lambda H^3AT + \frac{X \iota}{\lambda}.
\end{align*}
Choosing
\begin{align*}
  \lambda=\min\set{ \sqrt{\frac{X\iota}{H^3AT}}, \frac{1}{H}} \le \frac{1}{H},
\end{align*}
we obtain the bound
\begin{align*}
  \sum_{h=1}^H {\rm BIAS}^1_h \le 2\sqrt{H^3 XAT \iota} + H X\iota.
\end{align*}
This is the desired result.
\qed

\subsection{Proof of Lemma~\ref{lemma:cfr-bias2}}
\label{appendix:proof-cfr-bias2}

The proof strategy is similar to Lemma~\ref{lemma:cfr-bias1}. We can rewrite ${\rm BIAS}^2_h$ as
\begin{align}
  & \quad {\rm BIAS}^2_h = \max_{\mu\in\Pi_{\max}} \sum_{(x_{h}, a_{h})\in\cX_{h}\times \cA} \frac{\mu_{1:h}(x_{h}, a_{h})}{\mu^{\star, h}_{1:h}(x_{h}, a_{h})} \cdot \sum_{t=1}^T \mu^{\star, h}_{1:h}(x_{h}, a_{h}) \brac{ \wt{\L}_{h}^t(x_{h}, a_{h}) - \L_{h}^t(x_{h}, a_{h}) } \nonumber \\
  & = \max_{\mu\in\Pi_{\max}} \sum_{(x_{h}, a_{h})\in\cX_{h}\times \cA} \frac{\mu_{1:h}(x_{h}, a_{h})}{\mu^{\star, h}_{1:h}(x_{h}, a_{h})} \cdot \nonumber \\
  & \qquad \sum_{t=1}^T \underbrace{\brac{ \paren{ H-h+1 - \sum_{h'=h}^H r_{h'}^{\th} }\indic{ (x_{h}^{\th}, a_{h}^{\th})=(x_{h}, a_{h})} - \mu^{\star, h}_{1:h}(x_{h}, a_{h}) \L_{h}^t(x_{h}, a_{h}) }}_{\defeq \Delta_t^{x_{h}, a_{h}}}, \label{equation:cfr-bias2}
\end{align}
where the last equality used the definition of the loss estimator $\wt{\L}_h^t(x_h, a_h)$ in~\eqref{equation:L-estimator}.

Observe that the random variables $\Delta_t^{x_{h}, a_{h}}$ satisfy the following:
\begin{itemize}
\item $\Delta_t^{x_{h}, a_{h}}\le H$ almost surely.
\item $\E[\Delta_t^{(x_{h}, a_{h})}|\cF_{t-1}]=0$, where $\cF_{t-1}$ is the $\sigma$-algebra containing all information after iteration $t-1$. This follows as the episode was sampled using $\mu^{\th}=\mu^{\star, h}_{1:h}\mu^t_{h+1:H}$, as well as the definition of $\L_h^t(x_h, a_h)$ in~\eqref{equation:counterfactual-loss}.
\item The conditional variance $\E[(\Delta_t^{(x_{h}, a_{h})})^2|\cF_{t-1}]$ can be bounded as
  \begin{align*}
    & \quad \E\brac{ \paren{\Delta_t^{(x_{h}, a_{h})}}^2 \Big| \cF_{t-1}} \le \E\brac{ \paren{ H-h+1 - \sum_{h'=h}^H r_{h'}^{\th} }^2 \indic{ (x_{h}^{\th}, a_{h}^{\th})=(x_{h}, a_{h})} \Big| \cF_{t-1} }\\
    & \le H^2 \P^{\mu^{\star, h}_{1:h}, \nu^t}\paren{ (x_{h}^{\th}, a_{h}^{\th})=(x_{h}, a_{h}) } \\
    & = H^2 \mu^{\star,h}_{1:h}(x_{h}, a_{h}) \cdot p^{\nu^t}_{1:h}(x_{h}).
  \end{align*}
\end{itemize}
Therefore, we can apply Freedman's inequality (Lemma~\ref{lemma:freedman}) and union bound to get that, for any fixed $\lambda\in(0, 1/H]$, with probability at least $1-\delta/10$, the following holds simultaneously for all $(h, x_{h}, a_{h})$:
\begin{align*}
  \sum_{t=1}^T \Delta_t^{(x_{h}, a_{h})} \le \lambda H^2 \mu^{\star, h}_{1:h}(x_{h}, a_{h}) \cdot \sum_{t=1}^T p^{\nu^t}_{1:h}(x_{h}) + \frac{\iota}{\lambda},
\end{align*}
where $\iota\defeq \log(10XA/\delta)$ is a log factor. Plugging this bound into~\eqref{equation:cfr-bias2} yields that, for all $h\in[H]$, 
\begin{align*}
  & \quad {\rm BIAS}^2_h = \max_{\mu\in\Pi_{\max}} \sum_{(x_{h}, a_{h})\in\cX_{h}\times \cA} \frac{\mu_{1:h}(x_{h}, a_{h})}{\mu^{\star, h}_{1:h}(x_{h}, a_{h})} \cdot \sum_{t=1}^T \Delta_t^{x_{h}, a_{h}} \\
  & \le \max_{\mu\in\Pi_{\max}} \sum_{(x_{h}, a_{h})\in\cX_{h}\times \cA} \frac{\mu_{1:h}(x_{h}, a_{h})}{\mu^{\star, h}_{1:h}(x_{h}, a_{h})} \cdot \brac{ \lambda H^2 \mu^{\star, h}_{1:h}(x_{h}, a_{h}) \cdot \sum_{t=1}^T p^{\nu^t}_{1:h}(x_{h}) + \frac{\iota}{\lambda} } \\
  & \le \lambda H^2 \cdot \max_{\mu\in\Pi_{\max}} \sum_{(x_{h}, a_{h})\in\cX_{h}\times \cA} \mu_{1:h}(x_{h}, a_{h}) \sum_{t=1}^T p^{\nu^t}_{1:h}(x_{h}) + \frac{\iota}{\lambda} \cdot \max_{\mu\in\Pi_{\max}} \sum_{(x_{h}, a_{h})\in\cX_{h}\times \cA} \frac{\mu_{1:h}(x_{h}, a_{h})}{\mu^{\star, h}_{1:h}(x_{h}, a_{h})} \\
  & \stackrel{(i)}{=} \lambda H^2 T + \frac{\iota}{\lambda} \cdot X_{h} A.
\end{align*}
Above, (i) used the fact that $\sum_{(x_{h},a_{h})\in\cX_{h}\times\cA} \mu_{1:h}(x_{h}, a_{h}) p^{\nu^t}_{1:h}(x_{h}) = 1$ for any $\mu\in\Pi_{\max}$ and any $t\in[T]$ (Lemma~\ref{lemma:pnu}(a)), as well as the balancing property of $\mu^{\star, h}_{1:h}$ (Lemma~\ref{lemma:balancing}). Combining the bounds for all $h\in[H]$, we get that with probability at least $1-\delta/10$,
\begin{align*}
  \sum_{h=1}^H {\rm BIAS}^2_h \le \lambda H^3T + \frac{XA \iota}{\lambda}.
\end{align*}
Choosing
\begin{align*}
  \lambda=\min\set{ \sqrt{\frac{XA\iota}{H^3T}}, \frac{1}{H}} \le \frac{1}{H},
\end{align*}
we obtain the bound
\begin{align*}
  \sum_{h=1}^H {\rm BIAS}^2_h \le  2\sqrt{H^3XAT \iota} + HXA\iota.
\end{align*}
This is the desired result.
\qed

\subsection{Proof of Lemma~\ref{lemma:cfr-regret}}
\label{appendix:proof-cfr-regret}

Recall that for all $(h, x_h)$, we have implemented Line~\ref{line:cfr-md} of Algorithm~\ref{algorithm:cfr} as the \md~algorithm (Algorithm~\ref{algorithm:md}) with learning rate $\eta \mu^{\star,h}_{1:h}(x_{h}, a)$ and loss vector $\set{\wt{\L}_{h}^t(x_{h}, a)}_{a\in\cA}$ (cf.~\eqref{equation:cfr-hedge}). Therefore, applying the standard regret bound for \md~(Lemma~\ref{lemma:md}), we get (below $a\in\cA$ is arbitrary)
\begin{equation}\label{eqn:reg_imm_upper_bound_MD}
\begin{aligned}
  & \quad \wt{\Reg}_{h}^{\imm, T}(x_{h}) = \max_{\mu^\dagger_{h}(\cdot | x_{h})} \sum_{t=1}^T \< \mu_{h}^t(\cdot | x_{h}) - \mu_{h}^\dagger(\cdot | x_{h}), \wt{\L}_{h}^t(x_{h}, \cdot)\> \\
  & \le \frac{\log A}{\eta \mu^{\star, h}_{1:h}(x_{h}, a) } + \frac{\eta }{2} \cdot \sum_{t=1}^T \sum_{a_{h}\in\cA} \mu^{\star,h}_{1:h}(x_{h}, a_{h}) \cdot \mu_{h}^t(a_{h}|x_{h}) \paren{ \wt{\L}_{h}^t(x_{h}, a_{ h}) }^2 \\
  & \stackrel{(i)}{=} \frac{\log A}{\eta \mu^{\star, h}_{1:h}(x_{h}, a) } \\
  & \qquad + \frac{\eta }{2} \cdot \sum_{t=1}^T \sum_{a_{h}\in\cA} \mu^{\star,h}_{1:h}(x_{h}, a_{h})\mu_{h}^t(a_{h}|x_{h}) \cdot \frac{\paren{H-h+1 - \sum_{h'=h}^H r_{h'}^{\th} }^2 \indic{(x_{h}^{\th}, a_{h}^{\th}) = (x_{h}, a_{h}) }}{ \paren{\mu^{\star, h}_{1:h}(x_{h}, a_{h})}^2 } \\
  & \le \frac{\log A}{\eta \mu^{\star, h}_{1:h}(x_{h}, a) } + \frac{\eta H^2}{2} \cdot \sum_{t=1}^T \sum_{a_{h}\in\cA} \mu_{h}^t(a_{h}|x_{h}) \cdot \frac{ \indic{(x_{h}^{\th}, a_{h}^{\th}) = (x_{h}, a_{h}) } }{\mu^{\star, h}_{1:h}(x_{h}, a_{h})}.
\end{aligned}
\end{equation}
Above, (i) used the form of $\wt{\L}_{h}^t$ in~\eqref{equation:L-estimator}. Plugging this into the definition of ${\rm REGRET}_h$, we have
\begin{equation}\label{eqn:regret_h_bound_MD}
\begin{aligned}
  & \quad {\rm REGRET}_h = \max_{\mu\in\Pi_{\max}} \sum_{x_{h}\in\cX_{h}} \mu_{1:(h-1)}(x_{h-1}, a_{h-1}) \wt{\Reg}_{h}^{\imm, T} (x_{h}) \\
  & \le \underbrace{\max_{\mu\in\Pi_{\max}} \sum_{x_{h}\in\cX_{h}} \mu_{1:(h-1)}(x_{h-1}, a_{h-1}) \cdot \frac{\log A}{\eta \mu^{\star, h}_{1:h}(x_{h}, a)}}_{{\rm I}_h} \\
  & \qquad + \underbrace{\max_{\mu\in\Pi_{\max}} \sum_{x_{h}\in\cX_{h}} \mu_{1:(h-1)}(x_{h-1}, a_{h-1}) \cdot \frac{\eta H^2}{2} \cdot \sum_{t=1}^T \sum_{a_{h}\in\cA} \mu_{h}^t(a_{h}|x_{h}) \cdot \frac{ \indic{(x_{h}^{\th}, a_{h}^{\th}) = (x_{h}, a_{h}) } }{\mu^{\star, h}_{1:h}(x_{h}, a_{h})}}_{{\rm II}_h}.
\end{aligned}
\end{equation}

We first calculate term ${\rm I}_h$. We have
\begin{align*}
  & \quad {\rm I}_h \stackrel{(i)}{=} \frac{\log A}{\eta}\cdot \max_{\mu\in\Pi_{\max}} \sum_{(x_{h}, a_{h})\in\cX_{h}\times \cA} \frac{1}{A_h}\cdot \frac{\mu_{1:(h-1)}(x_{h-1}, a_{h-1})}{\mu^{\star, h}_{1:h}(x_{h}, a_{h})} \\
  & = \frac{\log A}{\eta}\cdot \max_{\mu\in\Pi_{\max}} \sum_{(x_{h}, a_{h})\in\cX_{h}\times \cA} \frac{(\mu_{1:(h-1)}\mu^{\rm unif}_h)(x_{h}, a_{h})}{\mu^{\star, h}_{1:h}(x_{h}, a_{h})} \\
  & \stackrel{(ii)}{=} \frac{\log A}{\eta} \cdot X_{h}A = \frac{X_{h}A\log A}{\eta},
\end{align*}
where (i) follows by splitting the sum over $a_{h}$ and using the fact that $\mu^{\star, h}_{ 1:h}(x_{h}, a)$ does not depend on $a$; (ii) follows from the balancing property of $\mu^{\star, h}_{ 1:h}$ (Lemma~\ref{lemma:balancing}).

Next, we bound term ${\rm II}_h$. We have
\begin{align}
  & \quad {\rm II}_h = \frac{\eta H^2}{2} \max_{\mu\in\Pi_{\max}} \sum_{x_{h}\in\cX_{h}} \mu_{1:(h-1)}(x_{h-1}, a_{h-1}) \cdot \sum_{t=1}^T \sum_{a_{h}\in\cA} \mu_{h}^t(a_{h}|x_{h}) \cdot \frac{ \indic{(x_{h}^{\th}, a_{h}^{\th}) = (x_{h}, a_{h}) } }{\mu^{\star, h}_{1:h}(x_{h}, a_{h})} \nonumber \\
  & = \frac{\eta H^2}{2} \max_{\mu\in\Pi_{\max}} \sum_{x_{h}\in\cX_{h}} \frac{\mu_{1:(h-1)}(x_{h-1}, a_{h-1})}{ \mu^{\star, h}_{1:h}(x_{h}, a) } \cdot \sum_{t=1}^T \underbrace{\sum_{a_{h}\in\cA} \mu_{h}^t(a_{h}|x_{h}) \cdot\indic{(x_{h}^{\th}, a_{h}^{\th}) = (x_{h}, a_{h}) } }_{\defeq \wb{\Delta}_t^{x_{h}}}. \label{equation:cfr-REGRET}
\end{align}
The last equality above used the fact that $\mu^{\star, h}_{1:h}(x_{h}, a_{h})$ does not depend on $a_{h}$ (cf.~\eqref{equation:balanced-policy}).

Observe that the random variables $\wb{\Delta}_t^{x_{h}}$ satisfy the following:
\begin{itemize}
\item $\wb{\Delta}_t^{x_{h}}\in [0, 1]$ almost surely;
\item $\E[\wb{\Delta}_t^{x_{h}}|\cF_{t-1}]=  \sum_{a_{h}\in\cA} \mu^{\star, h}_{1:h}(x_{h}, a_{h})\cdot \mu_{h}^t(a_{h}|x_{h}) p^{\nu^t}_{1:h}(x_{h})$, where $\cF_{t-1}$ is the $\sigma$-algebra containing all information after iteration $t-1$;
\item The conditional variance $\Var[\wb{\Delta}_t^{x_{h}}|\cF_{t-1}]$ can be bounded as
  \begin{align*}
    & \quad \Var\brac{\wb{\Delta}_t^{x_{h}} \Big| \cF_{t-1}} \le \E\brac{\paren{\wb{\Delta}_t^{x_{h}}}^2 \Big| \cF_{t-1}} \\
    & = \E\brac{ \sum_{a_{h}\in\cA} \paren{ \mu_{h}^t(a_{h}|x_{h}) }^2 \indic{ (x_{h}^{\th}, a_{h}^{\th})=(x_{h}, a_{h})} \Big| \cF_{t-1} }\\
    & = \sum_{a_{h}\in\cA} \paren{\mu_{h}^t(a_{h}|x_{h}) }^2 \cdot \P^{\mu^{\star, h}_{1:h}\times \nu^t}\paren{ (x_{h}^{\th}, a_{h}^{\th})=(x_{h}, a_{h}) } \\
    & = \sum_{a_{h}\in\cA} \mu^{\star,h}_{1:h}(x_{h}, a_{h}) \cdot \paren{\mu_{h}^t(a_{h}|x_{h}) }^2 \cdot p^{\nu^t}_{1:h}(x_{h}).
  \end{align*}
\end{itemize}
Therefore, we can apply Freedman's inequality (Lemma~\ref{lemma:freedman}) and a union bound to obtain that, for any $\lambda\in(0,1]$, with probability at least $1-\delta/10$, the following holds simultaneously for all $(h, x_{h})$:
\begin{align*}
  & \quad \sum_{t=1}^T \wb{\Delta}_t^{x_{h}} - \sum_{t=1}^T \sum_{a_{h}\in\cA} \mu^{\star, h}_{1:h}(x_{h}, a_{h})\cdot \mu_{h}^t(a_{h}|x_{h}) p^{\nu^t}_{1:h}(x_{h}) \\
  & \le \lambda \cdot \sum_{t=1}^T \sum_{a_{h}\in\cA} \mu^{\star,h}_{1:h}(x_{h}, a_{h}) \cdot \paren{\mu_{h}^t(a_{h}|x_{h}) }^2 \cdot p^{\nu^t}_{1:h}(x_{h}) + \frac{\iota}{\lambda},
\end{align*}
where $\iota\defeq \log(10X/\delta)$ is a log factor. Plugging this bound into~\eqref{equation:cfr-REGRET} yields that, for all $h\in[H]$, 
\begin{align*}
  & \quad {\rm II}_h \le \frac{\eta H^2}{2}\cdot \max_{\mu\in\Pi_{\max}} \sum_{x_{h}\in\cX_{h}} \frac{\mu_{1:(h-1)}(x_{h-1}, a_{h-1})}{ \mu^{\star, h}_{1:h}(x_{h}, a) } \cdot \sum_{t=1}^T \sum_{a_{h}\in\cA} \mu^{\star, h}_{1:h}(x_{h}, a_{h})\cdot \mu_{h}^t(a_{h}|x_{h}) p^{\nu^t}_{1:h}(x_{h}) \\
  & \qquad +  \frac{\eta H^2}{2}\cdot \max_{\mu\in\Pi_{\max}} \sum_{x_{h}\in\cX_{h}} \frac{\mu_{1:(h-1)}(x_{h-1}, a_{h-1})}{ \mu^{\star, h}_{1:h}(x_{h}, a) } \\
  & \qquad\qquad \qquad \qquad \qquad \cdot \brac{ \lambda \sum_{t=1}^T \sum_{a_{h}\in\cA} \mu^{\star,h}_{1:h}(x_{h}, a_{h}) \cdot \paren{\mu_{h}^t(a_{h}|x_{h}) }^2 \cdot p^{\nu^t}_{1:h}(x_{h}) + \frac{\iota}{\lambda} } \\
  & \stackrel{(i)}{\le} \frac{\eta H^2}{2}\cdot \max_{\mu\in\Pi_{\max}} \sum_{(x_{h}, a_{h})\in\cX_{h}\times \cA} \mu_{1:(h-1)}(x_{h-1}, a_{h-1})\cdot \sum_{t=1}^T \mu_{h}^t(a_{h}|x_{h}) p^{\nu^t}_{1:h}(x_{h}) \\
  & \qquad + \frac{\eta H^2}{2}\cdot \max_{\mu\in\Pi_{\max}} \sum_{(x_{h}, a_{h})\in\cX_{h}\times \cA} \mu_{1:(h-1)}(x_{h-1}, a_{h-1}) \cdot \lambda \sum_{t=1}^T \paren{\mu_{h}^t(a_{h}|x_{h}) }^2 \cdot p^{\nu^t}_{1:h}(x_{h}) \\
  & \qquad + \frac{\eta H^2}{2} \cdot \frac{\iota}{\lambda} \cdot \max_{\mu\in\Pi_{\max}} \sum_{x_{h}\in\cX_{h}} \frac{\mu_{1:(h-1)}(x_{h-1}, a_{h-1})}{ \mu^{\star, h}_{1:h}(x_{h}, a) } \\
  & \stackrel{(ii)}{\le} \frac{\eta H^2}{2}(1+\lambda) \cdot \max_{\mu\in\Pi_{\max}} \sum_{(x_{h}, a_{h})\in\cX_{h}\times \cA} \mu_{1:(h-1)}(x_{h-1}, a_{h-1})\cdot \sum_{t=1}^T \mu_{h}^t(a_{h}|x_{h}) p^{\nu^t}_{1:h}(x_{h}) \\
  & \qquad + \frac{\eta H^2}{2} \cdot \frac{\iota}{\lambda} \cdot \max_{\mu\in\Pi_{\max}} \sum_{(x_{h}, a_{h})\in\cX_{h}\times \cA} \frac{(\mu_{1:(h-1)}\mu^{\rm unif}_h)(x_{h}, a_{h})}{ \mu^{\star, h}_{1:h}(x_{h}, a_{h}) } \\
  & \stackrel{(iii)}{=} \frac{\eta H^2}{2}(1+\lambda)T + \frac{\eta H^2}{2} \cdot \frac{\iota}{\lambda} \cdot X_{h}A.
\end{align*}
Above, (i) used again the fact that $\mu^{\star, h}_{1:h}(x_{h}, a)=\mu^{\star, h}_{1:h}(x_{h}, a_{h})$ for any $a, a_{h}\in\cA$; (ii) used the fact that $\mu_{h}^t(a_{h}|x_{h})\le 1$; (iii) used the fact that $\sum_{(x_{h}, a_{h})\in \cX_{h}\times \cA} (\mu_{1:(h-1)}\mu_{h}^t)(x_{h}, a_{h}) p^{\nu^t}_{1:h}(x_{h})=1$ for any $\mu\in\Pi_{\max}$ and any $t\in[T]$ (Lemma~\ref{lemma:pnu}(a)), as well as the balancing property of $\mu^{\star, h}_{1:h}$ (Lemma~\ref{lemma:balancing}).

Combining the bounds for ${\rm I}_h$ and ${\rm II}_h$, we obtain that
\begin{align*}
  & \quad \sum_{h=1}^H {\rm REGRET}_h \le \sum_{h=1}^H ({\rm I}_h + {\rm II}_h) \\
  & \le \sum_{h=1}^H \brac{ \frac{X_{h}A\log A}{\eta} + \frac{\eta H^2}{2}(1+\lambda)T + \frac{\eta H^2X_{h}A\iota}{2\lambda} } \\
  & \le \frac{XA \iota}{\eta} + \frac{\eta H^3}{2}T + \frac{\eta H^2}{2}\brac{\lambda\cdot HT + \frac{XA \iota}{\lambda}},
\end{align*}
where we have redefined the log factor $\iota\defeq \log(10XA/\delta)$. Choosing $\lambda=1$,
the above can be upper bounded by 
\begin{align*}
  \frac{XA\iota}{\eta} + \eta H^3T + \frac{\eta H^2 XA\iota}{2}.
\end{align*}
Further choosing $\eta=\sqrt{XA\iota/(H^3T)}$, we obtain the bound
\begin{align*}
  \sum_{h=1}^H {\rm REGRET}_h \le 2\sqrt{H^3XAT\iota} + \sqrt{HX^3A^3\iota^3/(4T)}.
\end{align*}
This is the desired result.
\qed

\section{Results for full feedback setting}
\label{appendix:full-feedback}

We now consider the full feedback setting in which the algorithm can receive the true loss $\set{\L_h^t(x_h, a_h)}_{h, x_h, a_h}$ within each round. Define
\begin{align*}
    \lone{\Pi_{\max}} \defeq \max_{\mu\in\Pi_{\max}} \sum_{h=1}^H \sum_{(x_h, a_h)\in\cX_h\times\cA} \mu_{1:h}(x_h, a_h).
\end{align*}
Note that we have the bound $\lone{\Pi_{\max}}\le X$ (as $\sum_{a_h} \mu_{1:h}(x_h, a_h) =\mu_{1:h-1}(x_{h-1}, a_{h-1}) \le 1$ for all $(h, x_h)$), but $\lone{\Pi_{\max}}$ may in general be much smaller than $X$~\citep{zhou2020lazy}.

\subsection{Regret bound for CFR algorithm}

\begin{algorithm}[h]
  \small
  \caption{CFR with Hedge (full feedback setting)}
  \label{algorithm:cfr-full-feedback}
  \begin{algorithmic}[1]
    \REQUIRE Learning rate $\eta>0$.
    \STATE Initialize policies $\mu_{h}^1(a_{h}|x_{h})\setto 1/A$ for all $h\in[H]$, $(x_h, a_h)\in\cX_h\times \cA$.
    \FOR{iteration $t=1,\dots,T$}
    \FOR{all $h\in[H]$ and $x_{h}\in\mc{X}_{h}$}
    \STATE Receive loss $\L_h^t(x_h, a_h)$ for all $a_h\in\cA$. 
    \STATE Update policy at $x_h$ using Hedge:
    \begin{align*}
       \mu_h^{t+1}(a|x_h) \propto_a \mu_{h}^t(a|x_{h}) \cdot e^{-\eta \cdot \L^t_{h}(x_{h}, a)}.
    \end{align*}
    \label{line:cfr-full-feedback-md}
    \ENDFOR
    \ENDFOR
  \end{algorithmic}
\end{algorithm}

We present a ``vanilla" CFR algorithm for the full-feedback case in Algorithm~\ref{algorithm:cfr-full-feedback} (with regret minimizers are instantiated as Hedge). The algorithm is similar as Balanced CFR for the bandit-feedback case (Algorithm~\ref{algorithm:cfr}), yet is simpler as it does not need the sampling step, and uses a constant learning rate for all infosets. 

The following sharp regret bound for CFR in the full-feedback case has appeared in~\citep[Lemma 2]{zhou2020lazy}\footnote{modulo the (minor) difference of them using Regret Matching instead of Hedge}. For completeness, we provide a full statement and proof under our notation.

\begin{theorem}[Regret of CFR in full feedback setting]
  \label{theorem:cfr-full-feedback}
  For the full feedback setting, Algorithm~\ref{algorithm:cfr-full-feedback} with learning rate $\eta=\sqrt{2\lone{\Pi_{\max}}\log A/(H^3T)}$ achieves regret bound
  \begin{align*}
    \Reg^T \le \sqrt{2H^3\lone{\Pi_{\max}} T \log A}.
  \end{align*}
\end{theorem}

\begin{proof}
Note that Algorithm~\ref{algorithm:cfr-full-feedback} updates the policy at each $x_h$ using the Hedge algorithm (Algorithm~\ref{algorithm:md}) with loss vector $\set{L_h^t(x_h, a)}_{a\in\cA}$ and learning rate $\eta$. Therefore, plugging the standard regret bound for Hedge (Lemma~\ref{lemma:md}) into the counterfactual regret decomposition (Lemma~\ref{lemma:cfr-regret-decomposition}), we get 
\begin{align*}
  & \quad \Reg^T \stackrel{(i)}{\le} \max_{\mu\in\Pi_{\max}} \sum_{h=1}^H \sum_{x_h\in\cX_h} \mu_{1:h-1}(x_{h-1}, a_{h-1}) \Reg^{\imm, T}_h(x_h) \\
  & = \max_{\mu\in\Pi_{\max}} \sum_{h=1}^H \sum_{x_h\in\cX_h} \mu_{1:h-1}(x_{h-1}, a_{h-1}) \cdot \max_{\mu_{h}^\dagger(\cdot|x_{h})} \sum_{t=1}^T \< \mu_{h}^t(\cdot | x_{h}) - \mu_{h}^\dagger(\cdot | x_{h}), \L_{h}^t(x_{h}, \cdot)\> \\
  & \le \max_{\mu\in\Pi_{\max}} \sum_{h=1}^H \sum_{x_h\in\cX_h} \mu_{1:h-1}(x_{h-1}, a_{h-1}) \cdot \brac{ \frac{\log A}{\eta} + \frac{\eta}{2}\sum_{t=1}^T\sum_{a\in\cA_h} \mu_h^t(a|x_h) L_h^t(x_h, a)^2 } \\
  & \stackrel{(ii)}{\le} \max_{\mu\in\Pi_{\max}} \sum_{h=1}^H \sum_{x_h\in\cX_h} \mu_{1:h-1}(x_{h-1}, a_{h-1}) \cdot \brac{ \frac{\log A}{\eta} + \frac{\eta}{2}\sum_{t=1}^T\sum_{a\in\cA_h} \mu_h^t(a|x_h) \cdot p^{\nu^t}_{1:h}(x_h)^2 H^2 } \\
  & \stackrel{(iii)}{\le} \max_{\mu\in\Pi_{\max}} \sum_{h, x_h} \mu_{1:h-1}(x_{h-1}, a_{h-1}) \cdot \frac{\log A}{\eta} +   \frac{\eta H^2 }{2}\max_{\mu\in\Pi_{\max}} \sum_{h, x_h} \mu_{1:h-1}(x_{h-1}, a_{h-1}) \sum_{t=1}^T\sum_{a\in\cA_h} \mu_h^t(a|x_h) \cdot p^{\nu^t}_{1:h}(x_h) \\
  & \stackrel{(iv)}{\le} \frac{\lone{\Pi_{\max}}\log A}{\eta} +  \frac{\eta H^2 }{2}\max_{\mu\in\Pi_{\max}} \sum_{h=1}^H  \sum_{t=1}^T \sum_{(x_h, a)\in\cX_h\times\cA} \paren{\mu_{1:h-1} \mu_h^t}(x_h, a) \cdot p^{\nu^t}_{1:h}(x_h) \\
  & \stackrel{(v)}{=} \frac{\lone{\Pi_{\max}}\log A}{\eta} +  \frac{\eta H^3 T}{2}.
\end{align*}
Above, (i) uses a stronger form of Lemma~\ref{lemma:cfr-regret-decomposition} (with $\max_{\mu\in\Pi_{\max}}$ outside of the $\sum_{h=1}^H$, which could also be extracted from the proof of Lemma~\ref{lemma:cfr-regret-decomposition}); (ii) used the bound for $L_h^t(x_h, a)$ in Lemma~\ref{lemma:counterfactual-loss-bound}(b); (iii) pushed the max into each of the two parts, and used $p^{\nu^t}_{1:h}(x_h)\le 1$ (Lemma~\ref{lemma:pnu}(b)); (iv) used the fact that $\sum_{h, x_h}\mu_{1:h-1}(x_{h-1}, a_{h-1}) = \sum_{h, x_h, a_h} \mu_{1:h}(x_h, a_h)\le \lone{\Pi_{\max}}$ for any $\mu\in\Pi_{\max}$; (v) used Lemma~\ref{lemma:pnu}(a).

Choosing $\eta=\sqrt{2\lone{\Pi_{\max}}\log A/(H^3T)}$ yields the desired regret bound.

\end{proof}

\section{Balanced CFR with regret matching}
\label{appendix:cfr-rm}

In this section, we consider instantiating Line~\ref{line:cfr-md} of Algorithm~\ref{algorithm:cfr} using the following Regret Matching algorithm:
\begin{align}
  \label{equation:cfr-rm-update}
  \begin{aligned}
    & \mu_h^{t+1}(a|x_h) = \frac{\brac{R^t_{x_h}(a)}_+}{ \sum_{a'\in\cA} \brac{R^t_{x_h}(a')}_+ }, \\
    & {\rm where}~~R^t_{x_h}(a) \defeq \sum_{\tau=1}^t \<\mu_h^\tau(\cdot|x_h), \wt{\L}^\tau_h(x_h, \cdot)\> - \wt{\L}^\tau_h(x_h, a)~~~\textrm{for all}~a\in\cA.
  \end{aligned}
\end{align}

We now present the main theoretical guarantees for Balanced CFR with regret matching. The proof of Theorem~\ref{theorem:cfr-rm} can be found in Section~\ref{appendix:proof-cfr-rm}.
\begin{theorem}[``Regret'' bound for Balanced CFR with Regret Matching]
  \label{theorem:cfr-rm}
  Suppose the max player plays Algorithm~\ref{algorithm:cfr} where each $R_{x_h}$ is instantiated as the Regret Matching algorithm~\eqref{equation:cfr-rm-update}. Then the policies $\mu^t$ achieve the following regret bound with probability at least $1-\delta$:
  \begin{align*}
    \wt{\Reg}^T \defeq \max_{\mu^\dagger\in\Pi_{\max}} \sum_{t=1}^T \<\mu^t - \mu^\dagger, \ell^t \> \le \cO(\sqrt{H^3XA^2T\iota}),
  \end{align*}
  where $\iota=\log(10XA/\delta)$ is a log factor. Further, each round plays $H$ episodes against $\nu^t$ (so that the total number of episodes played is $HT$). 
\end{theorem}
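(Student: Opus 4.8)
The plan is to follow the exact same pipeline as the proof of Theorem~\ref{theorem:cfr} (the Hedge version in Appendix~\ref{appendix:proof-theorem-cfr}), changing only the treatment of the local regret term. Concretely, I would again invoke the counterfactual regret decomposition (Lemma~\ref{lemma:cfr-regret-decomposition}) to write $\wt{\Reg}^T \le \sum_{h=1}^H \Reg_h^T$, and then split $\Reg_h^T \le {\rm REGRET}_h + {\rm BIAS}^1_h + {\rm BIAS}^2_h$ exactly as before. The crucial observation is that the two bias bounds, Lemma~\ref{lemma:cfr-bias1} and Lemma~\ref{lemma:cfr-bias2}, make no reference whatsoever to the choice of the local regret minimizer $R_{x_h}$: they use only the unbiasedness and the second-moment structure of the estimator $\wt{\L}_h^t$ in~\eqref{equation:L-estimator}, together with Freedman's inequality and the balancing property. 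Hence I would reuse them verbatim, so that $\sum_h ({\rm BIAS}^1_h + {\rm BIAS}^2_h) \le \cO(\sqrt{H^3 XAT\iota} + HXA\iota)$ still holds.

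The only term that changes is ${\rm REGRET}_h$, where the Hedge regret bound (Lemma~\ref{lemma:md}) must be replaced by the Regret Matching bound. Since Regret Matching carries no learning rate but Lemma~\ref{lemma:regmatch}, and in particular Remark~\ref{rmk:regmatch}, holds for \emph{every} $\eta>0$ simultaneously, I would apply Remark~\ref{rmk:regmatch} to the local loss vector $\{\wt{\L}_h^t(x_h,\cdot)\}$ with the free parameter chosen to be $\eta\mu^{\star,h}_{1:h}(x_h,\cdot)$ (constant in the action), mirroring the scaled learning rate used in the Hedge analysis. This yields an immediate-regret bound of the form $\wt{\Reg}_h^{\imm,T}(x_h) \le \frac{1}{\eta\mu^{\star,h}_{1:h}(x_h,a)} + \frac{\eta\mu^{\star,h}_{1:h}(x_h,a)}{2}\sum_{t}\sum_{a_h} \big(A\bar\mu_h^t(a_h|x_h)\big)\,\wt{\L}_h^t(x_h,a_h)^2$, where $\bar\mu_h^t(a_h|x_h) = [\mu_h^t(a_h|x_h)+1/A]/2$. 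Relative to the Hedge bound~\eqref{eqn:reg_imm_upper_bound_MD}, two things differ: the constant $\log A$ is improved to $1$, but the weighting $\mu_h^t$ is replaced by $A\bar\mu_h^t$, i.e. enlarged by up to a factor $A$.

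From here the argument proceeds as in the proof of Lemma~\ref{lemma:cfr-regret}. Plugging in the $\wt{\L}_h^t$ form (nonzero only at the sampled $(x_h^{\th},a_h^{\th})$ and bounded by $H/\mu^{\star,h}_{1:h}$) and summing the $\mu_{1:(h-1)}$-weighted immediate regrets, the leading constant term gives $\sum_h {\rm I}_h = XA/\eta$ (with $\log A$ now absent), while the second-order term ${\rm II}_h$ acquires one extra factor of $A$ from the $A\bar\mu_h^t$ weighting. To make this rigorous I would split $A\bar\mu_h^t = (A\mu_h^t+1)/2$: the $A\mu_h^t$ half reproduces, up to the factor $A$, the Hedge computation, and the uniform half $\mu^{\rm unif}_h$ contributes a comparable-or-lower-order term, both controlled by Freedman's inequality plus a union bound over $(h,x_h)$, using Lemma~\ref{lemma:pnu}(a) and the balancing property (Lemma~\ref{lemma:balancing}) to collapse the maxima over $\mu\in\Pi_{\max}$. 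This gives $\sum_h {\rm REGRET}_h \le \cO(XA/\eta + \eta H^3 AT)$ after absorbing lower-order $\iota$-dependent terms. Optimizing the analysis parameter $\eta=\sqrt{X/(H^3 T)}$ balances the two terms at $\cO(\sqrt{H^3 XA^2 T})$, which is precisely the extra $\sqrt{A}$ relative to Theorem~\ref{theorem:cfr}. Combining with the reused bias bounds and the naive bound $\wt{\Reg}^T\le HT$ for small $T$ (to clean up the residual $\iota$ terms exactly as in Appendix~\ref{appendix:proof-theorem-cfr}) yields $\wt{\Reg}^T\le\cO(\sqrt{H^3 XA^2 T\iota})$.

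The main obstacle I expect is the re-derivation of the concentration for ${\rm II}_h$ with the $A\bar\mu_h^t$ weighting: one must verify that the extra uniform component $1/A$ inside $\bar\mu_h^t$ does not inflate either the almost-sure bound or the conditional variance of the relevant Freedman martingale beyond a single factor of $A$, and that after the $\mu_{1:(h-1)}/\mu^{\star,h}_{1:h}$ reweighting and the $\max_\mu$, the balancing property still applies cleanly to the mixed sequence form $\mu_{1:(h-1)}\mu^{\rm unif}_h$. Everything else is a routine adaptation of the Hedge proof.
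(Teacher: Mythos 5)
Your proposal is correct and follows essentially the same route as the paper's proof in Appendix~\ref{appendix:proof-cfr-rm}: reuse the counterfactual decomposition and the two bias lemmas verbatim (they are indeed independent of the local regret minimizer), replace the Hedge bound by Remark~\ref{rmk:regmatch} with free parameter $\eta\mu^{\star,h}_{1:h}(x_h,a)$, and track the single extra factor of $A$ through ${\rm II}_h$ via Freedman plus the balancing property before optimizing $\eta$ and applying the naive $HT$ bound. Your splitting $A\bar\mu_h^t=(A\mu_h^t+1)/2$ is a harmless variant of the paper's observation that $\bar\mu_h^t$ is itself a probability distribution over actions, so the Hedge concentration argument applies unchanged with $\bar\mu_h^t$ in place of $\mu_h^t$ and the factor $A$ pulled outside.
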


We then have the following corollary directly by the regret-to-Nash conversion (Proposition~\ref{proposition:online-to-batch}).
\begin{corollary}[Learning Nash using Balanced CFR with Regret Matching]
  \label{corollary:cfr-rm-pac}
  Letting both players play Algorithm~\ref{algorithm:cfr} in a self-play fashion against each other for $T$ rounds, where each $R_{x_h}$ is instantiated as the Regret Matching algorithm~\eqref{equation:cfr-rm-update}. Then, for any $\eps>0$, the average policy $(\wb{\mu}, \wb{\nu})=(\frac{1}{T}\sum_{t=1}^T\mu^t, \frac{1}{T}\sum_{t=1}^T\nu^t)$ achieves $\negap(\wb{\mu}, \wb{\nu})\le \eps$ with probability at least $1-\delta$, as long as
  \begin{align*}
    T \ge \cO(H^3(XA^2+YB^2) \iota/\eps^2),
  \end{align*}
  where $\iota\defeq \log(10(XA+YB)/\delta)$ is a log factor. The total amount of episodes played is at most
  \begin{align*}
    2H\cdot T = \cO(H^4(XA^2+YB^2) \iota/\eps^2).
  \end{align*}
\end{corollary}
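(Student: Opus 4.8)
The plan is to derive Corollary~\ref{corollary:cfr-rm-pac} as an immediate consequence of the ``regret'' bound in Theorem~\ref{theorem:cfr-rm} (which I take as given, since it is stated earlier) together with the regret-to-Nash conversion of Proposition~\ref{proposition:online-to-batch}; the real work at the level of the corollary is organizational, namely setting up the self-play correctly, transferring the bound to both players, and counting episodes. First I would run the self-play protocol described for Corollary~\ref{corollary:cfr-pac}: in each round the max player plays $\{\mu^{t,(h)}\}_{h=1}^H$ against the fixed maintained policy $\nu^t$ ($H$ episodes), and then symmetrically the min player plays $\{\nu^{t,(h)}\}_{h=1}^H$ against the fixed $\mu^t$ ($H$ episodes), for a total of $2H$ episodes per round. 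The point that lets Theorem~\ref{theorem:cfr-rm} apply to the max player is that the opponent's policy $\nu^t$ is $\cF_{t-1}$-measurable (it is the min player's maintained policy, fixed at the start of round $t$), which is exactly the ``arbitrary $\nu^t\in\cF_{t-1}$'' hypothesis of the theorem; the same holds symmetrically.

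Applying Theorem~\ref{theorem:cfr-rm} to each player and taking a union bound, with probability at least $1-\delta$ both $\wt{\Reg}_{\max}^T\le\cO(\sqrt{H^3XA^2T\iota})$ and $\wt{\Reg}_{\min}^T\le\cO(\sqrt{H^3YB^2T\iota})$ hold. I would then note that these ``regret'' quantities are exactly the true regrets of the maintained policies: by the linear-loss identity $V^{\mu^\dagger,\nu^t}-V^{\mu^t,\nu^t}=\<\mu^t-\mu^\dagger,\ell^t\>$ (with $\ell^t$ induced by $\nu^t$), we have $\Reg_{\max}^T=\wt{\Reg}_{\max}^T$ and likewise for the min player. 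Hence Proposition~\ref{proposition:online-to-batch} gives $\negap(\wb\mu,\wb\nu)=(\Reg_{\max}^T+\Reg_{\min}^T)/T\le\cO(\sqrt{H^3(XA^2+YB^2)\iota/T})$. Setting the right-hand side $\le\eps$ and solving yields $T\ge\cO(H^3(XA^2+YB^2)\iota/\eps^2)$, and multiplying by the $2H$ episodes-per-round gives the total $2HT=\cO(H^4(XA^2+YB^2)\iota/\eps^2)$.

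Because the corollary itself is this short deduction, I would devote the substance to the bound it invokes (Theorem~\ref{theorem:cfr-rm}), whose proof I would build by re-using the pipeline of Theorem~\ref{theorem:cfr} almost verbatim. Concretely, I would invoke the counterfactual regret decomposition (Lemma~\ref{lemma:cfr-regret-decomposition}) and the same three-way split $\Reg_h^T\le{\rm REGRET}_h+{\rm BIAS}^1_h+{\rm BIAS}^2_h$. The two bias terms (Lemma~\ref{lemma:cfr-bias1} and Lemma~\ref{lemma:cfr-bias2}) depend only on the estimators $\wt{\L}_h^t$ from~\eqref{equation:L-estimator} and on the maintained policies being valid and $\cF_{t-1}$-measurable, \emph{not} on how those policies are produced, so they transfer unchanged and contribute only the lower-order $\cO(\sqrt{H^3XAT\iota})$. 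Only ${\rm REGRET}_h$ changes: I would replace the Hedge bound (Lemma~\ref{lemma:md}) by the Regret Matching bound in the form of Remark~\ref{rmk:regmatch}, which has the identical structure except that $\log A$ becomes $1$ and the per-action weight $p_t(a)$ becomes $A\bar p_t(a)=[A\mu_h^t(a|x_h)+1]/2$. To align the powers of $\mu^{\star,h}_{1:h}$ with the Hedge analysis I would apply that bound with the \emph{analysis} parameter $\eta_{x_h}=\eta'\mu^{\star,h}_{1:h}(x_h,a)$ (legitimate since the bound holds for every $\eta>0$ and the update~\eqref{equation:cfr-rm-update} itself uses no learning rate), then follow the ${\rm I}_h$/${\rm II}_h$ decomposition of Lemma~\ref{lemma:cfr-regret} using the balancing property (Lemma~\ref{lemma:balancing}) and Lemma~\ref{lemma:pnu}(a) exactly as there.

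The main obstacle, and the sole source of the extra $A$ (hence $A^2$ inside the root), is the Regret Matching second-order term. Since $\sum_{a}A\bar\mu_h^t(a|x_h)=A$ rather than $\sum_a\mu_h^t(a|x_h)=1$, the conditional mean of the relevant martingale increment picks up a factor $A$, while its almost-sure bound and conditional variance now scale like $A$ and $A^2$, so the Freedman step (Lemma~\ref{lemma:freedman}) must be run with parameter $\lambda\asymp 1/A$. I would need to verify that this shrinkage still leaves the dominant terms at $\tfrac{XA}{\eta'}+\eta'H^3AT$, which optimize at $\eta'\asymp\sqrt{X/(H^3T)}$ to give $\cO(\sqrt{H^3XA^2T})$, and that the Freedman correction stays lower order; combining with the inherited bias terms (which carry the $\iota$ factor) and cleaning up via the naive $\wt{\Reg}^T\le HT$ bound as in the Hedge proof then yields the claimed $\cO(\sqrt{H^3XA^2T\iota})$, and thereby the corollary.
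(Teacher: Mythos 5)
Your proposal is correct and follows essentially the same route as the paper: the corollary is deduced exactly as you describe from Theorem~\ref{theorem:cfr-rm} via the self-play protocol and Proposition~\ref{proposition:online-to-batch}, and the paper proves Theorem~\ref{theorem:cfr-rm} by the same reuse of the Theorem~\ref{theorem:cfr} pipeline --- bias lemmas transferred unchanged, only ${\rm REGRET}_h$ re-bounded via Remark~\ref{rmk:regmatch} applied with the analysis parameter $\eta\,\mu^{\star,h}_{1:h}(x_h,a)$, which is where the extra factor of $A$ (hence $XA^2+YB^2$) enters. The only cosmetic difference is in the Freedman step: you run it directly on the $A$-scaled increments with $\lambda \asymp 1/A$, whereas the paper factors out $A$ first and keeps $\lambda = 1$; the two are equivalent and yield the same $\cO(\sqrt{H^3XA^2T\iota})$ bound.
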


\subsection{Proof of Theorem~\ref{theorem:cfr-rm}}
\label{appendix:proof-cfr-rm}

The proof is similar as Theorem \ref{theorem:cfr}, except for plugging in the regret bound for Regret Matching instead of Hedge.

First, by Lemma~\ref{lemma:cfr-regret-decomposition}, we have $\wt{\Reg}^T\le \sum_{h=1}^H \Reg_{h}^T$, where for any $h\in[H]$ we have
\begin{equation}\label{eqn:CFR_RM_R_h_T_decomposition}
\begin{aligned}
  & \quad \Reg_{h}^T \le \underbrace{\max_{\mu\in\Pi_{\max}} \sum_{x_{h}\in\cX_{h}} \mu_{1:(h-1)}(x_{h-1}, a_{h-1}) \wt{\Reg}_{h}^{\imm, T}(x_{h})}_{\defeq {\rm REGRET}_h} \\
  & \qquad + \underbrace{\max_{\mu\in\Pi_{\max}}  \sum_{(x_{h}, a_{h})\in\cX_{h}\times \cA} \mu_{1:(h-1)}(x_{h-1}, a_{h-1}) \sum_{t=1}^T \mu_{h}^t(a_{h} | x_{h})\brac{ \L_{h}^t(x_{h}, a_{h}) - \wt{\L}_{h}^t(x_{h}, a_{h})} }_{\defeq {\rm BIAS}_h^1} \\
  & \qquad + \underbrace{\max_{\mu\in\Pi_{\max}} \sum_{(x_{h}, a_{h})\in\cX_{h}\times \cA} \mu_{1:h}(x_{h}, a_{h}) \sum_{t=1}^T \brac{\wt{\L}_{h}^t(x_{h}, a_{h}) - \L_{h}^t(x_{h}, a_{h})} }_{\defeq {\rm BIAS}_h^2} \\
  & = {\rm REGRET}_h + {\rm BIAS}^1_h + {\rm BIAS}^2_h,
\end{aligned}
\end{equation}
where the definition of $\wt{\Reg}_{h}^{\imm, T}(x_{h})$, $\L_{h}^t(x_{h}, a_{h})$ are at the beginning of Section \ref{sec:CFR_decomposition} and the definition of $\wt{\L}_{h}^t(x_{h}, a_{h})$ are given by Algorithm \ref{algorithm:cfr}.  

To upper bound ${\rm BIAS}^1_h$ and ${\rm BIAS}^2_h$, we use the same strategy as the proof of Lemma \ref{lemma:cfr-bias1} and \ref{lemma:cfr-bias2} (whose proofs are independent of the regret minimizer), so that we have the same bound as in Lemma \ref{lemma:cfr-bias1} and \ref{lemma:cfr-bias2}: with probability at least $1 - \delta/5$, we have
\begin{equation}\label{eqn:BIAS_12_RM}
\sum_{h = 1}^H {\rm BIAS}^1_h \le 2 \sqrt{H^3 X A T \iota} + H X \iota, ~~~~  \sum_{h = 1}^H {\rm BIAS}^2_h \le 2 \sqrt{H^3 X A T \iota} + H X A \iota,
\end{equation}
where $\iota = \log(10 X A / \delta)$. 

To upper bound ${\rm REGRET}_h$, we use the same strategy as the proof of Lemma \ref{lemma:cfr-regret} as in Section \ref{appendix:proof-cfr-regret}. First, applying the regret bound for Regret Matching (Lemma~\ref{lemma:regmatch} \& Remark~\ref{rmk:regmatch}), we get (below $a\in\cA$ is arbitrary, and $\eta > 0$ is also arbitrary)
\begin{equation}\label{eqn:reg_imm_upper_bound_RM}
\begin{aligned}
  & \quad \wt{\Reg}_{h}^{\imm, T}(x_{h}) = \max_{\mu^\dagger_{h}(\cdot | x_{h})} \sum_{t=1}^T \< \mu_{h}^t(\cdot | x_{h}) - \mu_{h}^\dagger(\cdot | x_{h}), \wt{\L}_{h}^t(x_{h}, \cdot)\> \\
  & \le \frac{1}{\eta \mu^{\star, h}_{1:h}(x_{h}, a) } + \frac{\eta }{2} \cdot \sum_{t=1}^T \sum_{a_{h}\in\cA} \mu^{\star,h}_{1:h}(x_{h}, a_{h}) \cdot A \bar \mu_{h}^t(a_{h}|x_{h}) \paren{ \wt{\L}_{h}^t(x_{h}, a_{ h}) }^2 \\
  & \le \frac{1}{\eta \mu^{\star, h}_{1:h}(x_{h}, a) } + \frac{\eta H^2}{2} \cdot \sum_{t=1}^T \sum_{a_{h}\in\cA} A \cdot \bar \mu_{h}^t(a_{h}|x_{h}) \cdot \frac{ \indic{(x_{h}^{\th}, a_{h}^{\th}) = (x_{h}, a_{h}) } }{\mu^{\star, h}_{1:h}(x_{h}, a_{h})},
\end{aligned}
\end{equation}
where $\bar \mu_h^t(a_h \vert x_h) = (\mu_h^t(a_h \vert x_h) + (1/A))/2$ is a probability distribution over $[A]$. Comparing the right hand side of Eq. (\ref{eqn:reg_imm_upper_bound_RM}) with the right hand side of Eq. (\ref{eqn:reg_imm_upper_bound_MD}), we can see that there is only one difference which is $A \cdot \bar \mu_{h}^t$ versus $\mu_h^t$. Plugging this into the definition of ${\rm REGRET}_h$, we have
\begin{equation}\label{eqn:regret_h_bound_RM}
\begin{aligned}
  & \quad {\rm REGRET}_h = \max_{\mu\in\Pi_{\max}} \sum_{x_{h}\in\cX_{h}} \mu_{1:(h-1)}(x_{h-1}, a_{h-1}) \wt{\Reg}_{h}^{\imm, T} (x_{h}) \\
  & \le \underbrace{\max_{\mu\in\Pi_{\max}} \sum_{x_{h}\in\cX_{h}} \mu_{1:(h-1)}(x_{h-1}, a_{h-1}) \cdot \frac{1}{\eta \mu^{\star, h}_{1:h}(x_{h}, a)}}_{{\rm I}_h} \\
  & \qquad + \underbrace{\max_{\mu\in\Pi_{\max}} \sum_{x_{h}\in\cX_{h}} \mu_{1:(h-1)}(x_{h-1}, a_{h-1}) \cdot \frac{\eta H^2}{2} \cdot \sum_{t=1}^T \sum_{a_{h}\in\cA} A \cdot \bar \mu_{h}^t(a_{h}|x_{h}) \cdot \frac{ \indic{(x_{h}^{\th}, a_{h}^{\th}) = (x_{h}, a_{h}) } }{\mu^{\star, h}_{1:h}(x_{h}, a_{h})}}_{{\rm II}_h}.
\end{aligned}
\end{equation}
Comparing Eq. (\ref{eqn:regret_h_bound_RM}) with Eq. (\ref{eqn:regret_h_bound_MD}), we can see that ${\rm I}_h$ in Eq. (\ref{eqn:regret_h_bound_RM}) is the same as ${\rm I}_h$ in Eq. (\ref{eqn:regret_h_bound_MD}), and ${\rm II}_h$ in Eq. (\ref{eqn:regret_h_bound_RM}) and (\ref{eqn:regret_h_bound_MD}) only have one difference which is also $A \cdot \bar \mu_h^t$ versus $\mu_h^t$. Using the same argument as in the former proof, we have 
\[
{\rm I}_h = \frac{X_h A}{\eta}. 
\]
Furthermore, using the same argument as in the former proof, we can show that the upper bound of ${\rm II}_h$ in Eq. (\ref{eqn:regret_h_bound_RM}) is at most $A$ times the upper bound of ${\rm II}_h$ in Eq. (\ref{eqn:regret_h_bound_MD}). This gives for any $\lambda \in (0, 1)$, with probability at least $1 - \delta / 10$, we have
\[
{\rm II}_h \le \frac{\eta H^2 A}{2}(1+\lambda)T + \frac{\eta H^2}{2} \cdot \frac{\iota}{\lambda} \cdot X_{h}A^2.
\]
Combining the bounds for ${\rm I}_h$ and ${\rm II}_h$, we obtain that
\begin{align*}
  & \quad \sum_{h=1}^H {\rm REGRET}_h \le \sum_{h=1}^H ({\rm I}_h + {\rm II}_h) \le \frac{XA}{\eta} + \frac{\eta H^3 A}{2}T + \frac{\eta H^2 A}{2}\brac{\lambda\cdot HT + \frac{XA \iota}{\lambda}},
\end{align*}
Choosing $\lambda=1$ and choosing $\eta=\sqrt{X\iota/(H^3T)}$, with probability at least $1 - \delta / 10$, we obtain the bound
\begin{align}\label{eqn:REGRET_h_bound_RM}
  \sum_{h=1}^H {\rm REGRET}_h \le 2\sqrt{H^3XA^2T\iota} + \sqrt{HX^3A^4\iota^3/(4T)}.
\end{align}
This bound is $\sqrt{A}$ times larger than the bound of $\sum_{h=1}^H {\rm REGRET}_h $ as in Lemma \ref{lemma:cfr-regret}. 

Combining Eq. (\ref{eqn:CFR_RM_R_h_T_decomposition}), (\ref{eqn:BIAS_12_RM}) and (\ref{eqn:REGRET_h_bound_RM}), we obtain the following: with probability at least $1-3\delta/10\ge 1-\delta$, we have
\begin{align*}
  & \quad \wt{\Reg}^T \le \sum_{h=1}^H \Reg_{h}^T \le \sum_{h=1}^H {\rm REGRET}_h + \sum_{h=1}^H {\rm BIAS}^1_h + \sum_{h=1}^H {\rm BIAS}^2_h \\
  & \le 6\sqrt{H^3XA^2T\iota} + 2HXA\iota + \sqrt{HX^3A^4\iota^3/(4T)}.  
\end{align*}
Additionally, recall the naive bound $\wt{\Reg}^T\le HT$ on the regret (which follows as $\<\mu^t,\ell^t\>\in [0, H]$ for any $\mu\in\Pi_{\max}$, $t\in[T]$), we get
\begin{align*}
    & \quad \wt{\Reg}^T \le \min\set{ 6\sqrt{H^3XA^2 T\iota} + 2HXA\iota + \sqrt{HX^3A^4\iota^3/4T}, HT } \\
    & \le HT\cdot \min\set{6\sqrt{HXA^2 \iota/T} + 2XA\iota/T + \sqrt{X^3A^4\iota^3/(4HT^3)}, 1 }.
\end{align*}
For $T>HXA^2 \iota$, the min above is upper bounded by $9\sqrt{HXA^2 \iota/T}$. For $T\le HXA^2\iota$, the min above is upper bounded by $1\le 9\sqrt{HXA^2\iota/T}$. Therefore, we always have
\begin{align*}
    \wt{\Reg}^T \le HT\cdot 9\sqrt{HXA^2 \iota/T} = 9\sqrt{H^3XA^2 T\iota}.
\end{align*}
This is the desired result.
\qed

\section{Results for multi-player general-sum games}
\label{appendix:multi-player}

We introduce multi-player general-sum games with imperfect information and show when all the players run Algorithm~\ref{alg:IXOMD} or Algorithm~\ref{algorithm:cfr} independently, the average policy is an approximate Coarse Correlated Equilibrium policy.

\subsection{Multi-player general-sum games}
\label{appendix:multi-player-definition}

Here we define an $m$-player general-sum IIEFG with tree structure and perfect recall. Our definition is parallel to the POMG formulation for two-player zero-sum IIEFGs in Section~\ref{sec:prelim}. 

\paragraph{Partially observable Markov games}
We consider finite-horizon, tabular, $m$-player general-sum Markov Games with partial observability. Formally,  it can be described as a $\POMG(H, \cS, \{\cX_i\}_{i=1}^m, \{\cA_i\}_{i=1}^m, \P, \{r_i\}_{i=1}^m)$, where
\begin{itemize}[wide, itemsep=0pt, topsep=0pt]
\item $H$ is the horizon length;
\item $\cS=\bigcup_{h\in[H]} \cS_h$ is the (underlying) state space;
\item $\cX_i=\bigcup_{h\in[H]} \cX_{i,h}$ is the space of infosets for the \emph{$i$-th player} with $|\cX_{i,h}|=X_{i,h}$ and $X_i\defeq \sum_{h=1}^H X_{i,h}$. At any state $s_h\in\cS_h$, the $i$-th player only observes the infoset $x_{i,h}=x_i(s_h)\in\cX_{i,h}$, where $x_i:\cS\to\cX_i$ is the emission function for the $i$-th player;
\item $\cA_i$ is the action spaces for the $i$-th player with $\abs{\cA_i}=A_i$. For any $h$, we define the joint action of $m$ players by $\mathbf{a}_h:=(a_{1,h},\cdots, a_{m,h})$ and the set of joint actions by $\cA:=\cA_1 \times \cdots \times \cA_m$.
\item $\P=\{p_1(\cdot)\in\Delta(\cS_1)\} \cup \{p_h(\cdot|s_h, \mathbf{a}_h)\in \Delta(\cS_{h+1})\}_{(s_h, \mathbf{a}_h)\in \cS_h\times \cA,~h\in[H-1]}$ are the transition probabilities, where $p_1(s_1)$ is the probability of the initial state being $s_1$, and $p_h(s_{h+1}|s_h, \mathbf{a}_h)$ is the probability of transitting to $s_{h+1}$ given state-action $(s_h, a_h, b_h)$ at step $h$;
\item $r_i=\set{r_{i,h}(s_h, \mathbf{a}_h)\in[0,1]}_{(s_h, \mathbf{a}_h)\in\cS_h\times\cA}$ are the (random) reward functions with mean $\wb{r}_{i,h}(s_h, \mathbf{a}_h)$.
\end{itemize}

\paragraph{Policies, value functions}
A policy for the $i$-th player is denoted by $\pi_i=\set{\pi_{i,h}(\cdot|x_{i,h})\in\Delta(\cA_i)}_{h\in[H], x_{i,h}\in\cX_{i,h}}$, where $\pi_{i,h}(a_{i,h}|x_{i,h})$ is the probability of taking action $a_{i,h}\in\cA_i$ at infoset $x_{i,h}\in\cX_{i,h}$. A trajectory for the $i$-th player takes the form $(x_{i,1}, a_{i,1}, r_{i,1}, x_{i,2}, \dots, x_{i,H}, a_{i,H}, r_{i,H})$, where $a_{i,h}\sim \pi_{i,h}(\cdot|x_{i,h})$, which depends on both the other (unseen) players' policy and underlying state transition. 

We use $\pi$ to denote the joint policy. Notice although the marginals are $\pi_i$, $\pi$ is not necessarily a product policy. When $\pi$ is indeed a product policy, we have $\pi = \pi_1 \times \cdots \times \pi_m $. We also use $\pi_{-i}$ to denote the joint product policy excluding the $i$-th player. The overall game value of the $i$-th player for any joint policy $\pi$ is denoted by $V^{\pi}_i\defeq \E_{\pi}\brac{ \sum_{h=1}^H r_{i,h}(s_h,\mathbf{a}_h) }$.

\paragraph{Tree structure and perfect recall}
As before, we assume
\begin{itemize}
    \item \emph{Tree structure}: for any $h$ and $s_h\in\cS_h$, there exists a unique history $(s_1, \mathbf{a}_{1}, \dots, s_{h-1}, \mathbf{a}_{h-1})$ of past states and (joint) actions that leads to $h$.
    \item \emph{Perfect recall}: For any $h$ and any infoset $x_{i,h}\in\cX_{i,h}$ for the $i$-th player, there exists a unique history $(x_{i,1}, a_{i,1}, \dots, x_{i,h-1}, a_{i,h-1})$ of past infosets and $i$-th player's actions that leads to $x_{i,h}$.
\end{itemize}

Given above conditions, under any product policy $\pi$, the probability of reaching state-action $(s_h, \mathbf{a}_h)$ at step $h$ takes the form
\begin{align}
  \P^{\pi}(s_h, \mathbf{a}_h) = p_{1:h}(s_h)\prod_{i=1}^m{\pi _{i,1:h}\left( x_{i,h},a_{i,h} \right)},
\end{align}
where $\set{s_{h'}, \mathbf{a}_{h'}}_{h'\le h-1}$ are the histories uniquely determined from $s_h$ and $x_{i,h'}=x_i(s_{h'})$. We have also defined the sequence-form transition probability as
\begin{align*}
  p_{1:h}(s_h)\defeq p_1(s_1)\prod_{h\prime\le h-1}{p_{h\prime}}(s_{h\prime+1}|s_{h\prime},\mathbf{a}_{h\prime}),
\end{align*}
and the \emph{sequence-form policies} as
\begin{align*}
  \pi_{i,1:h}\left( x_{i,h},a_{i,h} \right) \defeq \prod_{h\prime=1}^h{\pi_{i,h'}\left( a_{i,h'}|x_{i,h'} \right)}.
\end{align*}

\paragraph{Regret and CCE}
Similar as how regret minimization in two-player zero-sum games leads to an approximate Nash equilibrium (Proposition~\ref{proposition:online-to-batch}), in multi-player general-sum games, regret minimization is known to lead to an approximate NFCCE. Let $\set{\pi^t}_{t=1}^T$ denote a sequence of joint policies (for all players) over $T$ rounds. The regret of the $i$-th player is defined by
\begin{align*}
  \Reg_{i}^T \defeq \max_{\pi_i^\dagger\in\Pi_{i}} \sum_{t=1}^T \paren{V_i^{\pi_i^\dagger, \pi_{-i}^t} - V_i^{\pi^t}}.
\end{align*}
where $\Pi_{i}$ denotes the set of all possible policies for the $i$-th player. 

Using online-to-batch conversion, it is a standard result that sub-linear regret for all the players ensures that the average policy $\wb{\pi}$ is an approximate NFCCE~\citep{celli2019computing}.

\begin{proposition}[Regret-to-CCE conversion for multi-player general-sum games]
  \label{proposition:online-to-batch-cce}
  Let the average policy $\wb{\pi}$ be defined as playing a policy within $\set{\pi^t}_{t=1}^T$ uniformly at random, then we have
  \begin{align*}
    \ccegap(\wb{\pi}) = \frac{\max_{i\in[m]} \Reg_{i}^T}{T}.
  \end{align*}
\end{proposition}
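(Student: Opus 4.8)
The plan is to exploit the multilinearity of the value function in the players' (sequence-form) policies, together with the precise semantics of the NFCCE deviation, to rewrite $\ccegap(\wb{\pi})$ directly in terms of the per-player regrets $\Reg_i^T$. The key observation is that $\wb{\pi}$ is a \emph{correlated} policy: to sample from it one draws $t\sim\mathrm{Unif}([T])$ and then lets all players jointly play the product policy $\pi^t=\prod_{i}\pi_i^t$. Consequently every relevant value is affine in the mixing distribution, and both $V_i^{\wb{\pi}}$ and $V_i^{\pi_i^\dagger,\wb{\pi}_{-i}}$ split into an average over $t\in[T]$.

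First I would compute the on-policy value. Since drawing $t\sim\mathrm{Unif}([T])$ and playing $\pi^t$ yields expected reward $V_i^{\pi^t}$ for player $i$, linearity of expectation over the mixing index gives
\[
V_i^{\wb{\pi}}=\frac1T\sum_{t=1}^T V_i^{\pi^t}.
\]
Second, I would compute the best-deviation value. Under the NFCCE deviation, player $i$ commits to a single fixed policy $\pi_i^\dagger\in\Pi_{i}$ \emph{before} the recommendation index $t$ is drawn, and then plays $\pi_i^\dagger$ while the remaining players follow their recommended marginal $\wb{\pi}_{-i}$, i.e.\ the uniform mixture over $\set{\pi_{-i}^t}_{t=1}^T$. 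Because $\pi_i^\dagger$ does not depend on $t$, the same linearity yields
\[
V_i^{\pi_i^\dagger,\wb{\pi}_{-i}}=\frac1T\sum_{t=1}^T V_i^{\pi_i^\dagger,\pi_{-i}^t}.
\]

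Third, I would subtract the two displays and simplify. The term $\frac1T\sum_t V_i^{\pi^t}$ is independent of $\pi_i^\dagger$, so the maximum over $\pi_i^\dagger$ can be pulled inside the sum, giving
\[
\max_{\pi_i^\dagger\in\Pi_{i}}V_i^{\pi_i^\dagger,\wb{\pi}_{-i}}-V_i^{\wb{\pi}}=\frac1T\max_{\pi_i^\dagger\in\Pi_{i}}\sum_{t=1}^T\paren{V_i^{\pi_i^\dagger,\pi_{-i}^t}-V_i^{\pi^t}}=\frac{\Reg_i^T}{T},
\]
by the definition of $\Reg_i^T$. Taking the maximum over $i\in[m]$ and comparing with the definition of $\ccegap$ then gives $\ccegap(\wb{\pi})=\max_{i\in[m]}\Reg_i^T/T$, as claimed.

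The only subtle point—and the step I would treat most carefully—is the decomposition of the deviation value $V_i^{\pi_i^\dagger,\wb{\pi}_{-i}}$ against the correlated (non-product) policy $\wb{\pi}$. What makes the clean per-round splitting valid is precisely the \emph{normal-form} nature of NFCCE: the deviating player fixes one policy against the sampled joint recommendation, rather than reacting to intermediate recommended actions along the game tree (as would be the case for EFCCE). This is why the marginal $\wb{\pi}_{-i}$ enters simply as the mixture of the product policies $\pi_{-i}^t$, the value stays affine in the mixing index, and no coupling terms across rounds appear. All remaining manipulations are linearity of expectation and the definition of regret, so the computation is routine once this semantic point is pinned down.
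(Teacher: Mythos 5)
Your proof is correct and takes essentially the same approach as the paper's, which compresses your two linearity-of-mixing computations into the single per-round identity $V_i^{\pi_i^\dagger, \wb{\pi}_{-i}} - V_i^{\wb{\pi}} = \frac{1}{T}\sum_{t=1}^T \paren{V_i^{\pi_i^\dagger, \pi_{-i}^t} - V_i^{\pi^t}}$ and then takes the maximum over $\pi_i^\dagger\in\Pi_i$ and $i\in[m]$. Your closing remark on the normal-form semantics of the deviation (the deviating policy $\pi_i^\dagger$ is fixed before the index $t$ is sampled, so the value stays affine in the mixing distribution) is precisely the point that justifies that identity.
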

We include a short justification for this standard result here for completeness.

\begin{proof}
By definition of $\wb{\pi}$, we have for any $i \in [m]$ and $\pi_i^\dagger \in \Pi_{i}$ that
$$
V_i^{\pi_i^\dagger, \wb{\pi}_{-i}} - V_i^{\wb{\pi}} = \frac{1}{T}\sum_{t=1}^T \paren{V_i^{\pi_i^\dagger, \pi_{-i}^t} - V_i^{\pi^t}}.
$$
Taking the max over $\pi_i^\dagger \in \Pi_{i}$ and $i\in[m]$ on both sides yields the desired result.
\end{proof}

\subsection{Proof of Theorem~\ref{theorem:nfcce}}
\label{appendix:proof-nfcce}

It is straightforward to see that the regret guarantees for Balanced OMD (Theorem~\ref{theorem:ixomd-regret}) and Balanced CFR (Theorem~\ref{theorem:cfr}) also hold in multi-player general-sum games (e.g. by modeling all other players as a single opponent). Therefore, the regret-to-CCE conversion in Proposition~\ref{proposition:online-to-batch-cce} directly implies that, letting $\wb{\pi}$ denote the joint policy of playing a uniformly sampled policy within $\set{\pi^t}_{t=1}^T$, we have for Balanced OMD that
\begin{align*}
  \ccegap(\wb{\pi}) \le \cO\paren{ \frac{\max_{i\in[m]} \sqrt{H^3X_iA_i\iota T}}{T} } = \cO\paren{ \sqrt{H^3\paren{\max_{i\in[m]}X_iA_i}\iota /T} },
\end{align*}
with probability at least $1-\delta$, where $\iota\defeq \log(3H\sum_{i=1}^m X_iA_i/\delta)$ is a log factor. Choosing $T\ge \tO\paren{H^3\paren{\max_{i\in[m]}X_iA_i}\iota/\eps^2}$ ensures that the right-hand side is at most $\eps$. This shows part (a). A similar argument can be done for the Balanced CFR algorithm to show part (b).
\qed

\end{document}